\theoremstyle{plain}
\newtheorem{theorem}{Theorem}[section]
\newtheorem{lemma}[theorem]{Lemma}
\theoremstyle{definition}
\theoremstyle{remark}
\DeclareMathOperator*{\argmin}{arg\,min}
\icmltitlerunning{Relational DNN Verification With Cross Executional Bound Refinement}
\begin{document}

\twocolumn[
\icmltitle{Relational DNN Verification With Cross Executional Bound Refinement}



\icmlsetsymbol{equal}{*}

\begin{icmlauthorlist}
\icmlauthor{Debangshu Banerjee}{yyy}
\icmlauthor{Gagandeep Singh}{yyy,comp}
\end{icmlauthorlist}

\icmlaffiliation{yyy}{Department of Computer Science, University of Illinois Urbana-Champaign, USA}
\icmlaffiliation{comp}{VMware Research, USA}

\icmlcorrespondingauthor{Debangshu Banerjee}{db21@illinois.edu}


\icmlkeywords{Machine Learning, ICML}

\vskip 0.3in]



\printAffiliationsAndNotice{\icmlEqualContribution} 

\newcommand{\Tool}{RACoon\xspace}
\newcommand{\Relu}{ReLU}
\newcommand{\inspecset}{\Phi}
\newcommand{\crossinspecset}{\Phi^{\delta}}
\newcommand{\crossinspec}{\phi^{\delta}_{i,j}}
\newcommand{\inspec}{\phi_{in}}
\newcommand{\inprop}{\phi}
\newcommand{\outprop}{\psi}
\newcommand{\outpropparam}[1]{\psi^{#1}}
\newcommand{\inreg}{\phi_{t}}
\newcommand{\inregparam}[1]{\phi^{#1}_{t}}
\newcommand{\outspecreg}{\psi_{t}}
\newcommand{\inspecsetreg}{\Phi_{t}}
\newcommand{\outspecset}{\Psi}
\newcommand{\acti}[1]{\sigma(#1)}
\newcommand{\actilower}[1]{\sigma_l(#1)}
\newcommand{\actiupper}[1]{\sigma_u(#1)}
\newcommand{\netlower}[1]{N_l(#1)}
\newcommand{\netupper}[1]{N_u(#1)}
\newcommand{\vect}[1]{\mathbf{#1}}
\newcommand{\linapprx}{\vect{L}}
\newcommand{\linapprxparam}[1]{\vect{L}_{#1}}
\newcommand{\linapprxalpha}[1]{\vect{L}_{#1}(\pmb{\alpha}_{#1})}
\newcommand{\linapprxalphaini}[1]{\vect{L}_{#1}(\pmb{\alpha}^{0}_{#1})}
\newcommand{\biasalpha}[1]{\vect{b}_{#1}(\pmb{\alpha}_{#1})}
\newcommand{\biasalphaini}[1]{\vect{b}_{#1}(\pmb{\alpha}^{0}_{#1})}
\newcommand{\constalpha}[1]{a_{#1}(\pmb{\alpha}_{#1})}
\newcommand{\param}[1]{\pmb{\alpha}_{#1}}
\newcommand{\paramini}[1]{\pmb{\alpha}^{0}_{#1}}
\newcommand{\dual}{G}
\newcommand{\naivedual}{\overline{G}}
\newcommand{\opt}[1]{#1^{*}}
\newcommand{\optappx}[1]{#1^{*}_{appx}}
\newcommand{\optappxparam}[2]{#1^{{appx}}_{#2}}
\newcommand{\optparams}{\pmb{\lambda}}
\newcommand{\Output}{\vect{M}_{0}}
\newcommand{\apprxOutput}{\vect{M}}
\newcommand{\linapprxoptalpha}[1]{\vect{L}_{#1}(\pmb{\opt{\alpha}}_{#1})}
\newcommand{\biasoptalpha}[1]{\vect{b}_{#1}(\pmb{\opt{\alpha}}_{#1})}
\newcommand{\linapprxoptalphamulti}[2]{\vect{L}_{#1}(\pmb{\opt{\alpha}}_{#2})}
\newcommand{\biasoptalphamulti}[2]{\vect{b}_{#1}(\pmb{\opt{\alpha}}_{#2})}
\newcommand{\constoptalpha}[1]{a_{#1}(\pmb{\opt{\alpha}}_{#1})}
\newcommand{\ol}[1]{\opt{#1}}
\definecolor{mgreen}{rgb}{0.0, 0.7, 0.1}
\newcommand{\linearmap}{\mathcal{L}}
\newcommand{\lirpa}{auto\_LiRPA\xspace}

\begin{abstract}
We focus on verifying relational properties defined over deep neural networks (DNNs) such as robustness against universal adversarial perturbations (UAP), certified worst-case hamming distance for binary string classifications, etc. Precise verification of these properties requires reasoning about multiple executions of the same DNN. However, most of the existing works in DNN verification only handle properties defined over single executions and as a result, are imprecise for relational properties. Though few recent works for relational DNN verification, capture linear dependencies between the inputs of multiple executions, they do not leverage dependencies between the outputs of hidden layers producing imprecise results. We develop a scalable relational verifier \Tool that utilizes cross-execution dependencies at all layers of the DNN gaining substantial precision over SOTA baselines on a wide range of datasets, networks, and relational properties. 
\end{abstract}
\section{Introduction}
\label{sec:intro}
Deep neural networks (DNNs) have gained widespread prominence across various domains, including safety-critical areas like autonomous driving~\cite{bojarski2016end_DUP} or medical diagnosis~\cite{AMATO201347}, etc. Especially in these domains, the decisions made by these DNNs hold significant importance, where errors can lead to severe consequences. However, due to the black-box nature and highly nonlinear behavior of DNNs, reasoning about them is challenging. Despite notable efforts in identifying and mitigating DNN vulnerabilities \cite{adversarial,pgdTraining, uap, potdevin2019empirical, wu2023toward, sotoudeh2020abstract}, these methods cannot guarantee safety. Consequently, significant research has been dedicated to formally verifying the safety properties of DNNs.
\noindent Despite advancements, current DNN verification techniques can not handle relational properties prevalent in practical scenarios. Most of the existing efforts focus on verifying the absence of input-specific adversarial examples within the local neighborhood of test inputs. 
However, recent studies \cite{music:19} highlight the impracticality of attacks targeting individual inputs. In practical attack scenarios \cite{rafa23, sticker:19, music:19}, there is a trend towards developing universal adversarial perturbations (UAPs) \cite{uap} designed to affect a significant portion of inputs from the training distribution.
Since the same adversarial perturbation is applied to multiple inputs, the executions on different perturbed inputs are related, and exploiting the relationship between different executions is important for designing precise relational verifiers. Existing DNN verifiers working on individual executions lack these capabilities and as a result, lose precision. Beyond UAP verification, other relevant relational properties include measuring the worst-case hamming distance for binary string classification and bounding the worst-case absolute difference between the original number and the number classified using a digit classifier where inputs perturbed with common perturbation \cite{nonLinSpec}.


\noindent\textbf{Key challenges:} For precise relational verification, we need scalable algorithms to track the relationship between DNN's outputs across multiple executions. Although it is possible to exactly encode DNN executions with piecewise linear activation functions (e.g. \Relu) over input regions specified by linear inequalities as MILP (Mixed Integer Linear Program), the corresponding MILP optimization problem is computationally expensive. For example, MILP encoding of $k$ executions of a DNN with $n_{r}$ ReLU activations in the worst case introduces $O(n_{r} \times k)$ integer variables. Considering the cost of MILP optimization grows exponentially with the number of integer variables, even verifying small DNNs w.r.t a relational property defined over $k$ execution with MILP is practically infeasible. For scalability, \cite{certifair} completely ignores the dependencies across executions and reduces relational verification over $k$ executions into $k$ individual verification problems solving them independently. SOTA relational verifier \cite{iclrUap} first obtains provably correct linear approximations of the DNN with existing non-relational verifier \cite{auto_lirpa} without tracking any cross-execution dependencies then adds linear constraints at the input layer capturing linear dependencies between inputs used in different executions. In this case, ignoring cross-execution dependencies while computing provably correct linear approximations of the DNN for each execution leads to the loss of precision (as confirmed by our experiments in Section~\ref{sec:expEval}). This necessitates developing scalable algorithms for obtaining precise approximations of DNN outputs over multiple executions that benefit from cross-execution dependencies.

\textbf{Our contributions: } We make the following contributions to improve the precision of relational DNN verification:
\begin{itemize}[noitemsep, nolistsep,leftmargin=*]
\item In contrast to the SOTA baselines, we compute a provably correct parametric linear approximation of the DNN for each execution using parametric bounds of activation functions (e.g. \Relu) as done in existing works \cite{alphaCrown, convex_barr}. Instead of learning the parameters for each execution independently as done in \cite{alphaCrown}, we refine the parametric bounds corresponding to multiple executions together. In this case, the bound refinement at the hidden layer takes into account the cross-execution dependencies so that the learned bounds are tailored for verifying the specific \textit{relational} property.
\item For scalable cross-executional bound refinement, we (a) formulate a linear programming-based relaxation of the relational property, (b) find a provably correct differentiable closed form of the corresponding Dual function that preserves dependencies between parameters from different executions while being suitable for scalable differentiable optimization techniques, (c) using the differentiable closed form refine the parametric bound with scalable differential optimization methods (e.g. gradient descent).
\item We develop \Tool (\textbf{R}elational DNN \textbf{A}nalyzer with \textbf{C}ross-Excuti\textbf{o}nal B\textbf{o}und Refi\textbf{n}ement) that formulates efficiently optimizable MILP instance with cross-executional bound refinement for precise relational verification.
\item We perform extensive experiments on popular datasets, multiple DNNs (standard and robustly trained), and multiple relational properties showcasing that \Tool significantly outperforms the current SOTA baseline.\footnote{Code at  https://github.com/Debangshu-Banerjee/RACoon} 
\end{itemize}

\vspace{-5mm}
\section{Related Works}
\vspace{-2mm}
\textbf{Non-relational DNN verifiers:}
DNN verifiers are broadly categorized into three main categories - (i) sound but incomplete verifiers which may not always prove property even if it holds~\cite{gehr2018ai2,DeepZ, deeppoly, singh2019beyond, crown, auto_lirpa, alphaCrown}, (ii) complete verifiers that can always prove the property if it holds ~\cite{wang2018neurify, gehr2018ai2, bunel2020branch, bunel2020efficient, DBLP:conf/cav/BakTHJ20, ehlers2017formal, ferrari2022complete, fromherz2021fast, wang2021beta, depalma2021scaling, anderson2020strong, zhang2022general} and (iii) verifiers with probabilistic guarantees \cite{randSmooth, lini_random}. 

\textbf{Relational DNN verifier: } Existing DNN relational verifiers can be grouped into two main categories - (i) verifiers for properties (UAP, fairness, etc.) defined over multiple executions of the same DNN, \cite{iclrUap, certifair}, (ii) verifiers for properties (local DNN equivalence \cite{reluDiff}) defined over multiple executions of different DNNs on the same input \cite{reluDiff, neuroDiff}.
For relational properties defined over multiple executions of the same DNN the existing verifiers \cite{certifair} reduce the verification problem into $L_{\infty}$ robustness problem by constructing product DNN with multiple copies of the same DNN. However, the relational verifier in \cite{certifair} treats all $k$ executions of the DNN as independent and loses precision as a result of this. The SOTA DNN relational verifier \cite{iclrUap} (referred to as I/O formulation in the rest of the paper) although tracks the relationship between inputs used in multiple executions at the input layer, does not track the relationship between the inputs fed to the subsequent hidden layers and can only achieve a limited improvement over the baseline verifiers that treat all executions independently as shown in our experiments. There exist, probabilistic verifiers, \cite{prob_relational1, prob_relational2} based on randomized smoothing \cite{randSmooth} for verifying relational properties. However, these works can only give probabilistic guarantees on smoothed models which have high inference costs.
Similar to \cite{certifair, iclrUap}, in this work, we focus on deterministic scalable incomplete relational verifiers that can serve as a building block for BaB (Branch and Bound) based complete verifiers \cite{wang2021beta} with popular branching strategies like input splitting \cite{anderson2020strong}, ReLU splitting \cite{wang2021beta}, etc.
We leave combining \Tool with branching strategies as future work.
In this work, we consider DNNs with ReLU activation.
\vspace{-4mm}
\section{Preliminaries}
\label{sec:prelims}
\vspace{-2mm}
We provide the necessary background on approaches for non-relational DNN verification, DNN safety properties that can be encoded as relational properties, and existing works on parametric bound refinement for individual executions.

\textbf{Non-relational DNN verification: } 
For individual execution, DNN verification involves proving that the network outputs $\vect{y}=N(\vect{x} + \pmb{\delta})$ corresponding to all perturbations $\vect{x} + \pmb{\delta}$ of an input $\vect{x}$ specified by $\inprop$, satisfy a logical specification $\outprop$. For common safety properties like local DNN robustness, the output specification ($\outprop$) is expressed as linear inequality (or conjunction of linear inequalities) over DNN output $\vect{y} \in \mathbb{R}^{n_l}$. e.g.  $\outprop(\vect{y}) = (\vect{c}^{T} \vect{y} \geq 0)$ where $\vect{c} \in \mathbb{R}^{n_{l}}$. In general, given a DNN $N: \mathbb{R}^{n_0} \rightarrow \mathbb{R}^{n_l}$ and a property specified by $(\inprop, \outprop)$, scalable sound but incomplete verifiers compute a linear approximation specified by $\linapprx \in \mathbb{R}^{n_0}, b \in \mathbb{R}$ such that for any input $\vect{x} \in \inreg \subseteq \mathbb{R}^{n_0}$ satisfying $\phi$ the following condition holds $\linapprx^T \vect{x} + b \leq \vect{c}^TN(\vect{x})$. To show $\vect{c}^TN(\vect{x}) \geq 0 $ for all $\vect{x} \in \inreg$ DNN verifiers prove for all $\vect{x} \in \inreg$,  $\linapprx^T \vect{x} + b \geq 0$ holds.

\textbf{DNN relational properties: } For a DNN $N: \mathbb{R}^{n_0} \rightarrow \mathbb{R}^{n_l}$, relational properties defined over $k$ executions of $N$ are specified by the tuple $(\inspecset, \outspecset)$ where the input specification $\inspecset: \mathbb{R}^{n_0 \times k} \rightarrow \{true, false\}$ encodes the input region $\Phi_{t} \subseteq \mathbb{R}^{n_0 \times k}$ encompassing all potential inputs corresponding to each of the $k$ executions of $N$ and the output specification $\outspecset: \mathbb{R}^{n_l \times k} \rightarrow \{true, false\}$ specifies the safety property we expect the outputs of all $k$ executions of $N$ to satisfy. Formally, in DNN relational verification, given $N$, an input specification $\inspecset$ and an output specification $\outspecset$ we require to prove whether $\forall \vect{x^{*}_1}, \dots, \vect{x^{*}_k} \in \mathbb{R}^{n_0}. \inspecset(\vect{x^{*}_1}, \dots, \vect{x^{*}_k}) \implies \outspecset(N(\vect{x^{*}_1}), \dots N(\vect{x^{*}_k})) $ or provide a counterexample otherwise. Here, $\vect{x^{*}_1}, \dots, \vect{x^{*}_k}$ are the inputs to the $k$ executions of $N$ and $N(\vect{x^{*}_1}), \dots, N(\vect{x^{*}_k})$ are the corresponding outputs. 
Commonly, the input region $\inregparam{i}$ for the $i$-th execution is a $L_{\infty}$ region around a fixed point $\vect{x_{i}} \in \mathbb{R}^{n_0}$ defined as $\inregparam{i} = \{\vect{x^{*}_{i}} \in \mathbb{R}^{n_0}\;|\;\|\vect{x^{*}_{i}} - \vect{x_{i}}\|_{\infty} \leq \epsilon\}$ while the corresponding output specification $\outpropparam{i}(N(\vect{x^{*}_i})) = \bigwedge_{j=1}^{m}(\vect{c_{i,j}}^{T}N(\vect{x^{*}_i}) \geq 0)$. Subsequently, 
$\inspecset(\vect{x^{*}_1,\dots, \vect{x^{*}_k}}) = \bigwedge_{i=1}^k (\vect{x^{*}_i} \in \inreg^{i}) \bigwedge \crossinspecset(\vect{x^{*}_1}, \dots, \vect{x^{*}_k})$ where $\crossinspecset(\vect{x^{*}_1}, \dots, \vect{x^{*}_k})$ encodes the relationship between the inputs used in different execution and $\outspecset(N(\vect{x^{*}_1}), \dots, N(\vect{x^{*}_k})) = \bigwedge_{i=1}^k \outpropparam{i}(N(\vect{x^{*}_i}))$. 
Next, we describe relational properties that can encode interesting DNN safety configurations over multiple executions.

\textbf{UAP verification: }Given a DNN $N$, in a UAP attack, the adversary tries to find an adversarial perturbation with a bounded $L_{\infty}$ norm that maximizes the misclassification rate of $N$ when the same adversarial perturbation is applied to all inputs drawn from the input distribution. Conversely, the UAP verification problem finds the provably correct worst-case accuracy of $N$ in the presence of a UAP adversary (referred to as UAP accuracy in the rest of the paper). \cite{iclrUap} showed that it is possible to statistically estimate (Theorem 2 in \cite{iclrUap}) UAP accuracy of $N$ w.r.t input distribution provided we can characterize the UAP accuracy of $N$ on $k$ randomly selected images e.g. the $k$-UAP problem. For the rest of the paper, we focus on the $k$-UAP verification problem as improving the precision of $k$-UAP verification directly improves UAP accuracy on the input distribution (see Appendix~\ref{appenexp:uapDistribution}). 
The $k$-UAP verification problem fundamentally differs from the commonly considered local $L_{\infty}$ robustness verification where the adversary can perturb each input independently. Since the adversarial perturbation is common across a set of inputs, the UAP verification problem requires a relational verifier that can exploit the dependency between perturbed inputs. We provide the input specification $\inspecset$ and the output specification $\outspecset$ of the UAP verification problem in Appendix~\ref{appendix:uapFormulation}. 

\textbf{Worst case hamming distance: } The hamming distance between two strings with the same length is the number of substitutions needed to turn one string into the other \cite{hamming}. Given a DNN $N$, a binary string (a list of images of binary digits), we want to formally verify the worst-case bounds on the hamming distance between the original binary string and binary string recognized by $N$ where a common perturbation can perturb each image of the binary digits. Common perturbations are a natural consequence of faulty input devices that uniformly distort the inputs already considered in verification problems in \cite{contextually_relevant_pertub}. The input specification $\inspecset$ and the output specification $\outspecset$ are in Appendix~\ref{appendix:hammingFormulation}. Beyond hamming distance and $k$-UAP, \Tool is a general framework capable of formally analyzing the worst-case performance of algorithms that rely on multiple DNN executions \cite{nonLinSpec}. For example, the absolute difference between the original and the number recognized by a digit classifier.

\textbf{Parametric bound refinement:} Common DNN verifiers \cite{crown, deeppoly} handle non-linear activations $\acti{x}$ in DNN by computing linear lower bound $\actilower{x}$ and upper bound $\actiupper{x}$ that contain all possible outputs of the activation w.r.t the input region $\phi_t$ i.e. for all possible input values $x$, $\actilower{x} \leq \acti{x} \leq \actiupper{x}$ holds. Common DNN verifiers including the SOTA relational verifier \cite{iclrUap} also compute the linear bounds $\actilower{x}$ and $\actiupper{x}$ statically without accounting for the property it is verifying. Recent works such as \cite{alphaCrown}, instead of static linear bounds, use parametric linear bounds and refine the parameters with scalable differential optimization techniques to facilitate verification of the property $(\inprop, \outprop)$. For example, for $\Relu(x)$, the parametric lower bound is $\Relu(x) \geq \alpha \times x$ where the parameter $\alpha \in [0, 1]$ decides the slope of the lower bound. Since for any $\alpha \in [0, 1]$, $\alpha \times x$ is a valid lower bound of $\Relu(x)$ it is possible to optimize over $\alpha$ while ensuring mathematical correctness. Alternatively, \cite{convex_barr} showed that optimizing $\alpha$ parameters is equivalent to optimizing the dual variables in the LP relaxed verification problem \cite{WongK18}.
However, existing works can only optimize the $\alpha$ parameters w.r.t individual executions independently making these methods sub-optimal for relational verification. The key challenge here is to develop techniques for \textbf{jointly} optimizing $\alpha$ parameters over multiple DNN executions while leveraging their inter-dependencies. 
\vspace{-5mm}
\section{Cross Executional Bound Refinement}
Before delving into the details, first, we describe why it is essential to leverage cross-execution dependencies for relational verification. For illustrative purposes, we start with the $k$-UAP verification problem on a pair of executions i.e. $k=2$.
Note that bound refinement for worst-case hamming distance can be handled similarly.
For 2-UAP, given a pair of unperturbed input $\vect{x_1}, \vect{x_2} \in \mathbb{R}^{n_0}$  first we want to prove whether there exists an adversarial perturbation $\pmb{\delta} \in \mathbb{R}^{n_0}$ with bounded $L_{\infty}$ norm $\|\pmb{\delta}\|_{\infty} \leq \epsilon$ such that $N$ misclassifies both $(\vect{x_1} + \pmb{\delta})$ and $(\vect{x_2} + \pmb{\delta})$. Now, consider the scenario where both $\vect{x_1}$ and $\vect{x_2}$ have valid adversarial perturbations $\pmb{\delta_1}$ and $\pmb{\delta_2}$ but no \textit{common} perturbation say $\pmb{\delta}$ that works for both $\vect{x_1}$ and $\vect{x_2}$. In this case, non-relational verification that does not account for cross-execution dependencies can never prove the absence of a common perturbation given that both $\vect{x_1}, \vect{x_2}$  have valid adversarial perturbations. This highlights the necessity of utilizing cross-execution dependencies. 
Next, we detail three key steps for computing a provably correct parametric linear approximation of $N$ over multiple executions. So that 
the parameters from different executions are \textit{jointly} optimized together to facilitate relational verification. 
Note that the SOTA relational verifier \cite{iclrUap} statically computes linear approximations of $N$ independently without leveraging any dependencies. 

\textbf{LP formulation: }Let, $N$ correctly classify $(\vect{x_1} + \pmb{\delta})$ if $\vect{c_1}^T N(\vect{x_1} + \pmb{\delta}) \geq 0$ and $(\vect{x_2} + \pmb{\delta})$ if $\vect{c_2}^T N(\vect{x_2} + \pmb{\delta}) \geq 0$ where $\vect{c_1}, \vect{c_2} \in \mathbb{R}^{n_{l}}$. Then $N$ does not have a common adversarial perturbation iff for all $\|\pmb{\delta}\|_{\infty} \leq \epsilon$ the outputs $\vect{y_1} = N(\vect{x_1} + \pmb{\delta})$ and $\vect{y_2} = N(\vect{x_2} + \pmb{\delta})$ satisfy $\outspecset(\vect{y_1}, \vect{y_2}) = (\vect{c_1}^T\vect{y_1} \geq 0) \vee (\vect{c_2}^T\vect{y_2} \geq 0)$. Any linear approximations specified with $\vect{L_1}, \vect{L_2} \in \mathbb{R}^{n_0}$ and $b_{1}, b_{2} \in \mathbb{R}$ of $N$ satisfying $\vect{L_1}^{T}(\vect{x_1} + \pmb{\delta}) + b_1 \leq \vect{c_1}^T\vect{y_1}$ and $\vect{L_2}^{T}(\vect{x_2} + \pmb{\delta}) + b_2 \leq \vect{c_2}^T\vect{y_2}$ for all $\pmb{\delta}$ with $\|\pmb{\delta}\|_{\infty} \leq \epsilon$ allow us to verify the absence of common adversarial perturbation with the following LP (linear programming) formulation. 
\begin{align}
& min\;\; t \;\;\text{ s.t. $\|\pmb{\delta}\|_{\infty} \leq \epsilon$} \nonumber \\
\label{eq:lpForm}
&\text{$ \vect{L_1}^{T}(\vect{x_1} + \pmb{\delta}) + b_1 \leq t $,  $ \vect{L_2}^{T}(\vect{x_2} + \pmb{\delta}) + b_2 \leq t $} 
\end{align}
Let $\opt{t}$ be the optimal solution of the LP formulation. Then $\opt{t} \geq 0$ proves the absence of a common perturbation. For fixed linear approximations $\{(\vect{L_1}, b_1), (\vect{L_2}, b_2)\}$ of $N$, the LP formulation is exact i.e. it always proves the absence of common adversarial perturbation if it can be proved with $\{(\vect{L_1}, b_1), (\vect{L_2}, b_2)\}$ (see Theorem~\ref{thm:lpoverPairExact}).
This ensures that we do not lose any precision with the LP formulation and the LP formulation is more precise than any non-relational verifier using the same $\{(\vect{L_1}, b_1), (\vect{L_2}, b_2)\}$. 
\begin{restatable}{theorem}{ExactLP}
\label{thm:lpoverPairExact}
$\vee_{i=1}^{2} (\vect{L_i}^{T} (\vect{x_i} + \pmb{\delta}) + b_i \geq 0)$ holds for all $\pmb{\delta} \in \mathbb{R}^{n_0}$ with $\|\pmb{\delta}\|_{\infty} \leq \epsilon$ if and only if $\opt{t} \geq 0$.
\end{restatable}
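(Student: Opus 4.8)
The plan is to put the LP value $\opt{t}$ into an explicit min--max form and then observe that the claimed disjunction is, word for word, the assertion that this min--max quantity is nonnegative.

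First I would fix an arbitrary $\pmb{\delta}$ with $\|\pmb{\delta}\|_{\infty} \leq \epsilon$ and note that the two constraints $\vect{L_1}^{T}(\vect{x_1}+\pmb{\delta})+b_1 \leq t$ and $\vect{L_2}^{T}(\vect{x_2}+\pmb{\delta})+b_2 \leq t$ hold simultaneously if and only if $t \geq \max_{i \in \{1,2\}}\big(\vect{L_i}^{T}(\vect{x_i}+\pmb{\delta})+b_i\big)$. Hence minimizing $t$ over the feasible set of~\eqref{eq:lpForm} is the same as
\[
\opt{t} \;=\; \min_{\|\pmb{\delta}\|_{\infty} \leq \epsilon}\; \max_{i \in \{1,2\}}\big(\vect{L_i}^{T}(\vect{x_i}+\pmb{\delta})+b_i\big).
\]
Since the set $\{\pmb{\delta} : \|\pmb{\delta}\|_{\infty} \leq \epsilon\}$ is compact and $\pmb{\delta} \mapsto \max_{i}\big(\vect{L_i}^{T}(\vect{x_i}+\pmb{\delta})+b_i\big)$ is continuous (a pointwise maximum of two affine maps), the outer minimum is attained, so $\opt{t}$ is a well-defined finite number.

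With this identity in hand, both implications follow by monotonicity. For the ``if'' direction, suppose $\opt{t} \geq 0$; then for every $\pmb{\delta}$ with $\|\pmb{\delta}\|_{\infty} \leq \epsilon$ we have $\max_{i}\big(\vect{L_i}^{T}(\vect{x_i}+\pmb{\delta})+b_i\big) \geq \opt{t} \geq 0$, so at least one of the two affine expressions is nonnegative, which is exactly $\vee_{i=1}^{2}(\vect{L_i}^{T}(\vect{x_i}+\pmb{\delta})+b_i \geq 0)$. For the ``only if'' direction, suppose the disjunction holds for every feasible $\pmb{\delta}$; then $\max_{i}\big(\vect{L_i}^{T}(\vect{x_i}+\pmb{\delta})+b_i\big) \geq 0$ for every such $\pmb{\delta}$, and taking the minimum over $\pmb{\delta}$ preserves this inequality, yielding $\opt{t} \geq 0$.

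I do not expect a genuine obstacle here. The only points needing care are making the passage from the LP in~\eqref{eq:lpForm} to the min--max form precise (i.e.\ that joint feasibility of the two linear constraints is equivalent to $t$ dominating the pointwise maximum of the two affine expressions), and recording that the outer minimum is actually achieved, so that the statement ``$\opt{t} \geq 0$'' refers to an attained optimum rather than a possibly-unattained infimum. Everything else is a one-line argument.
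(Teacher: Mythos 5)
Your proposal is correct and follows essentially the same route as the paper: the paper's Appendix Lemma~\ref{lem:pairCorrect} establishes exactly your min--max identity $\opt{t} = \min_{\|\pmb{\delta}\|_{\infty} \leq \epsilon}\max_{i}\left(\vect{L_i}^{T}(\vect{x_i}+\pmb{\delta})+b_i\right)$, and Appendix Theorem~\ref{thm:pairExact} then derives the equivalence by the same monotonicity argument (stated there for general $n$, with the ``only if'' direction phrased as a contrapositive). Your added remark about attainment of the minimum via compactness is a harmless extra that the argument does not actually require, since both directions go through with an infimum.
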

\vspace{-3mm}
\textbf{Proof: }The proof follows from Appendix Theorem~\ref{thm:pairExact}.

However, the LP formulation only works with fixed $\{(\vect{L_1}, b_1), (\vect{L_2}, b_2)\}$ and as a result, is not suitable for handling parametric linear approximations that can then be optimized to improve the relational verifier's precision. Instead, we use the equivalent Lagrangian Dual \cite{convexOpt} which retains the benefits of the LP formulation while facilitating joint optimation of parameters from multiple executions as detailed below.

\textbf{Dual with parametric linear approximations: } Let, for a list of parametric activation bounds specified by a parameter list $\pmb{\alpha} = [\alpha_1, \dots, \alpha_m]$ we denote corresponding parametric linear approximation of $N$ with the coefficient $\linapprxalpha{}$ and bias $\biasalpha{}$. First, for 2-UAP, we obtain $(\linapprxalpha{1}, \biasalpha{1})$ and $(\linapprxalpha{2}, \biasalpha{2})$ corresponding to the pair of executions using existing works \cite{alphaCrown}. For $i \in \{1, 2\}$, $\|\pmb{\delta}\|_{\infty} \leq \epsilon$ and $\vect{l}_{i} \preceq
\param{i} \preceq \vect{u}_{i}$ the parametric linear bounds satisfy $\linapprxalpha{i}^T(\vect{x_i} + \pmb{\delta}) + \biasalpha{i} \leq \vect{c_i}^T \vect{y_i}$ where $\vect{l}_i, \vect{u}_i$ are constant vectors defining valid range of the parameters $\param{i}$. For fixed $\param{i}$ the Lagrangian Dual of the LP formulation in Eq.~\ref{eq:lpForm} is as follows where $\lambda_1, \lambda_2 \in [0, 1]$ with $\lambda_1 + \lambda_2 = 1$ are the Lagrange multipliers relating linear approximations from different executions (details in Appendix~\ref{appendix:duallpPair}).
\begin{align*}
\max\limits_{0 \leq \lambda_i \leq 1}\min\limits_{\|\pmb{\delta}\|_{\infty} \leq \epsilon} \sum\nolimits_{i=1}^{2} \lambda_i \times \left(\linapprxalpha{i}^T(\vect{x_i} + \pmb{\delta}) + \biasalpha{i}\right)
\end{align*}
Let, for fixed $\param{1}, \param{2}$ the optimal solution of the dual formulation be $\opt{t}(\param{1}, \param{2})$. Then we can prove the absence of common perturbation provided the maximum value of $\opt{t}(\param{1}, \param{2})$ optimized over $\param{1}, \param{2}$ is $\geq 0$. This reduces the problem to the following: $\max \opt{t}(\param{1}, \param{2}) \text{ s.t. } \vect{l}_{1} \preceq \param{1} \preceq \vect{u}_{1} \;\; \vect{l}_{2} \preceq \param{2} \preceq \vect{u}_{2}$. However, the optimization problem involves a max-min formulation and the number of parameters in $\param{1}, \param{2}$ in the worst-case scales linearly with the number of activation nodes in $N$. This makes it hard to apply gradient descent-based techniques typically used for optimization \cite{alphaCrown}. Instead, we reduce the max-min formulation to a simpler maximization problem by finding an optimizable closed form of the inner minimization problem.

\textbf{Deriving optimizable closed form :} We want to characterize the closed form $\dual (\optparams) = \min\limits_{\|\pmb{\delta}\|_{\infty} \leq \epsilon} \sum\nolimits_{i=1}^{2} \lambda_i \times \left(\linapprxalpha{i}^T(\vect{x_i} + \pmb{\delta}) + \biasalpha{i}\right)$  where $\optparams = (\param{1}, \param{2}, \lambda_1, \lambda_2)$ and use it for formulating the maximization problem. Note, $\dual(\optparams)$ is related to the dual function from optimization literature \cite{convexOpt}. Naively, it is possible to solve the inner minimization problem for two different executions separately and then optimize them over $\overline{\param{}} = (\param{1}, \param{2})$ using $\naivedual(\overline{\param{}}) = max(\naivedual_1(\param{1}), \naivedual_2(\param{2}))$ as shown below. However, $\naivedual(\overline{\param{}})$ produces a suboptimal result since it ignores cross-execution dependencies and misses out on the benefits of jointly optimizing $(\param{1}, \param{2})$.
\begin{align}
\label{eq:dualNaive}
\naivedual_i(\param{i}) = \min\limits_{\|\pmb{\delta}\|_{\infty} \leq \epsilon} \linapprxalpha{i}^T(\vect{x_i} + \pmb{\delta}) + \biasalpha{i}
\end{align}
Since $\|\pmb{\delta}\|_{\infty}$ is bounded by $\epsilon$, it is possible to \textit{exactly} compute the closed form of $\dual(\optparams)$ as shown below where for $j \in [n_0]$,   $\linapprxalpha{i}[j] \in \mathbb{R}$ denotes the $j$-th component of $\linapprxalpha{i} \in \mathbb{R}^{n_0}$ and $\constalpha{i} = \linapprxalpha{i}^T\vect{x_i} + \biasalpha{i}$
\begin{align*}
\dual(\optparams) &= \min\limits_{\|\pmb{\delta}\|_{\infty} \leq \epsilon} \sum_{i=1}^{2} \lambda_i \times \left(\linapprxalpha{i}^T(\vect{x_i} + \pmb{\delta}) + \biasalpha{i}\right) \\
\dual (\optparams) &= \sum_{i=1}^{2} \lambda_i \times \constalpha{i} + \min\limits_{\|\pmb{\delta}\|_{\infty} \leq \epsilon} \sum_{i=1}^{2} \lambda_i \times \linapprxalpha{i}^T\pmb{\delta} \\
\dual (\optparams) &= \sum_{i=1}^{2} \lambda_i \times \constalpha{i} - \epsilon \times  \sum_{j=1}^{n_0}\left|\sum_{i=1}^{2} \lambda_i \times \linapprxalpha{i}[j]\right|
\end{align*}
Unlike $\naivedual(\overline{\param{}})$, $\dual(\optparams)$ relates linear approximations from two different executions using $(\lambda_{1}, \lambda_{2})$ enabling joint optimization over $(\param{1}, \param{2})$. With the closed form $\dual(\optparams)$, we can use projected gradient descent to optimize $max_{\optparams} G(\optparams)$ while ensuring the parameters in $\optparams$ satisfy the corresponding constraints. Next, we provide theoretical guarantees about the correctness and efficacy of the proposed technique. For efficacy, we show the optimal solution $\opt{t}(\dual)$ obtained with $\dual(\optparams)$ is always as good as $\opt{t}(\naivedual)$ i.e. $ \opt{t}(\dual) \geq \opt{t}(\naivedual)$ (Theorem~\ref{thm:optimalBounds}) and characterize sufficient condition where $\opt{t}(\dual)$ is strictly better i.e. $ \opt{t}(\dual) > \opt{t}(\naivedual)$ (Appendix Theorem~\ref{thm:strictBetter}). Experiments substantiating the improvement in the optimal values ($\opt{t}(\dual)$ vs. $\opt{t}(\naivedual)$) are in Section~\ref{sec:boundImprovement}.

\begin{restatable}{theorem}{optimalBounds}
\label{thm:optimalBounds}
If $\opt{t}(\dual) = \max_{\optparams} \dual(\optparams)$ and $\opt{t}(\naivedual) = \max_{\param{1}, \param{2}} \naivedual(\param{1}, \param{2})$ then $\opt{t}(\naivedual) \leq \opt{t}(\dual)$. 
\end{restatable}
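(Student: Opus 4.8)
The plan is to recognize that the naive dual objective $\naivedual$ is precisely the restriction of $\dual$ to the two vertices of the feasible simplex for $(\lambda_1, \lambda_2)$; once this is established the claimed inequality is immediate, since enlarging the set one maximizes over can only increase the optimum. So the whole argument is structural: exhibit $\naivedual$ as a special case of $\dual$, then lift the pointwise inequality through the maximization.

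First I would evaluate the closed form $\dual(\optparams) = \sum_{i=1}^{2} \lambda_i \constalpha{i} - \epsilon \sum_{j=1}^{n_0} \bigl| \sum_{i=1}^{2} \lambda_i \linapprxalpha{i}[j] \bigr|$ at the vertex $(\lambda_1, \lambda_2) = (1, 0)$. All terms indexed by $i = 2$ vanish and the expression collapses to $\constalpha{1} - \epsilon \sum_{j=1}^{n_0} |\linapprxalpha{1}[j]|$, which is exactly $\min_{\|\pmb{\delta}\|_{\infty} \le \epsilon} \linapprxalpha{1}^T(\vect{x_1} + \pmb{\delta}) + \biasalpha{1} = \naivedual_1(\param{1})$ from Eq.~\ref{eq:dualNaive} (using $\constalpha{1} = \linapprxalpha{1}^T\vect{x_1} + \biasalpha{1}$ and $\min_{\|\pmb{\delta}\|_\infty \le \epsilon} \linapprxalpha{1}^T\pmb{\delta} = -\epsilon\|\linapprxalpha{1}\|_1$). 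By the symmetric computation, $\dual(\param{1}, \param{2}, 0, 1) = \naivedual_2(\param{2})$. Both $(1,0)$ and $(0,1)$ satisfy $\lambda_i \in [0,1]$ and $\lambda_1 + \lambda_2 = 1$, so they are feasible choices of $\optparams$ for any fixed $(\param{1}, \param{2})$.

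Next, fixing $(\param{1}, \param{2})$ and maximizing only over $(\lambda_1, \lambda_2)$ gives $\max_{\lambda} \dual(\param{1}, \param{2}, \lambda_1, \lambda_2) \ge \max\{\dual(\param{1}, \param{2}, 1, 0), \dual(\param{1}, \param{2}, 0, 1)\} = \max\{\naivedual_1(\param{1}), \naivedual_2(\param{2})\} = \naivedual(\param{1}, \param{2})$. Since the box constraints $\vect{l}_i \preceq \param{i} \preceq \vect{u}_i$ decouple across executions, $\opt{t}(\dual) = \max_{\param{1}, \param{2}} \max_{\lambda} \dual(\optparams)$; taking $\max$ over $(\param{1}, \param{2})$ in the previous inequality then yields $\opt{t}(\dual) \ge \max_{\param{1}, \param{2}} \naivedual(\param{1}, \param{2}) = \opt{t}(\naivedual)$.

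There is no serious obstacle here; the only points needing a moment's care are (i) verifying that the vertices of the $\lambda$-simplex lie in its feasible set, and (ii) checking that the joint maximization over $(\param{1}, \param{2})$ coincides with the nested one — both hold because the parameter range constraints are separable across the two executions. The genuinely interesting work lies in the companion strict-improvement statement (Appendix Theorem~\ref{thm:strictBetter}), which would require exhibiting an instance where the optimal $(\lambda_1,\lambda_2)$ is forced into the interior of the simplex, but that is outside the scope of this lemma.
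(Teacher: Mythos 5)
Your proposal is correct and follows essentially the same argument as the paper: evaluate $\dual$ at the simplex vertices $(\lambda_1,\lambda_2)=(1,0)$ and $(0,1)$ to recover $\naivedual_1(\param{1})$ and $\naivedual_2(\param{2})$, then pass the pointwise inequality through the maximization over $(\param{1},\param{2})$. The extra care you note about the separability of the box constraints is implicit in the paper's proof but does not change the substance.
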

\textbf{Proof: }
For any $\vect{l}_{1} \preceq \param{1} \preceq \vect{u}_{1} \;\; \vect{l}_{2} \preceq \param{2} \preceq \vect{u}_{2}$, consider $\optparams_1 = (\param{1}, \param{2}, \lambda_1=1, \lambda_2=0)$ and $\optparams_2 = (\param{1}, \param{2}, \lambda_1=0, \lambda_2=1)$, then $\dual(\optparams_1) = \min\limits_{\|\pmb{\delta}\|_{\infty} \leq \epsilon} \linapprxalpha{1}^T(\vect{x_1} + \pmb{\delta}) + \biasalpha{1}$ and $\dual(\optparams_2) = \min\limits_{\|\pmb{\delta}\|_{\infty} \leq \epsilon} \linapprxalpha{2}^T(\vect{x_2} + \pmb{\delta}) + \biasalpha{2}$. Since, $\opt{t}(\dual) \geq \dual(\optparams_1)$ and $\opt{t}(\dual) \geq \dual(\optparams_2)$ then $\opt{t}(\dual) \geq \max\limits_{1 \leq i \leq 2}\dual(\optparams_i) = \naivedual(\param{1}, \param{2})$. Hence, $\opt{t}(\dual) \geq \max_{\param{1}, \param{2}} \naivedual(\param{1}, \param{2}) = \opt{t}(\naivedual)$.

The correctness proof for bound refinement between two executions is in Appendix~\ref{appendix:correctnessPair}. Note that correctness does not necessitate the optimization technique to identify the global maximum, especially since gradient-descent-based optimizers may not always find the global maximum.

\textbf{Genralization to multiple executions: } Instead of a pair of executions considered above, we now generalize the approach to any set of $n$ executions where $n \leq k$. With parametric linear approximations $\{(\linapprx_{1}, b_1), \dots, (\linapprx_{n}, b_{n})\}$ of $N$ for all $n$ executions, we formulate the following LP to prove the absence of common adversarial perturbation that works for \textit{all} $n$ executions. The proof of exactness of the LP formulation is in Appnedix~Theorem~\ref{thm:pairExact}.
\begin{align}
& min\;\; t \;\;\text{ s.t. $\|\pmb{\delta}\|_{\infty} \leq \epsilon$} \nonumber \\
\label{eq:lpFormnExecution}
&\vect{L_i}^{T}(\vect{x_i} + \pmb{\delta}) + b_i \leq t\;\;\;\; \textbf{  
  $\forall i \in [n]$}
\end{align}
Similar to a pair of executions, we first specify the Lagrangian dual of the LP (Eq.~\ref{eq:lpFormnExecution}) by introducing $n$ lagrangian multipliers $\lambda_1, \dots, \lambda_{n}$ that satisfy for all $i \in [n]$ $\lambda_i \in [0, 1]$ and $\sum_{i=1}^{n} \lambda_{i} = 1$. Subsequently, we obtain the closed form $\dual(\optparams)$ where $\optparams = (\param{1}, \dots, \param{n}, \lambda_{1}, \dots, \lambda_{n})$ and $\constalpha{i} = \linapprxalpha{i}^T\vect{x_i} + \biasalpha{i}$ as shown below. 
\begin{align*}
\dual (\optparams) &= \sum_{i=1}^{n} \lambda_i \times \constalpha{i} - \epsilon \times  \sum_{j=1}^{n_0}\left|\sum_{i=1}^{n} \lambda_i \times \linapprxalpha{i}[j]\right|    
\end{align*}
Theoretical results regarding the correctness and efficacy of bound computation over $n$ executions are in Appendix~\ref{sec:nTheorems}.

\textbf{Genralization to a conjunction of linear inequalities: } Until now, we assume for each execution the output specification is defined as a linear inequality i.e. $\vect{c_i}^TN(\vect{x_i} + \pmb{\delta}) \geq 0$. Next, we generalize our method to any output specification for each execution defined with conjunction of $m$ linear inequalities.   
For example, if $\vect{y_i}$ denotes the output of the $i$-th execution $\vect{y_i} = N(\vect{x_i} + \pmb{\delta})$ then the output specification $\outpropparam{i}(\vect{y_i})$ is given by $\outpropparam{i}(\vect{y_i}) = \bigwedge_{j=1}^{m} (\vect{c_{i,j}}^T\vect{y_i}\geq 0)$ where $\vect{c_{i,j}} \in \mathbb{R}^{n_l}$. In this case, $\outprop(\vect{y_i})$ is satisfied iff $(\min_{1 \leq j \leq m}\vect{c_{i,j}}^T\vect{y_i}) \geq 0$. Using this observation, we first reduce this problem to subproblems with a single linear inequality (see Appendix~Theorem~\ref{lem:multipleToSingle}) and subsequently characterize the closed form $\dual(\optparams)$ for each subproblem separately. However, the number of subproblems in the worst case can be $m^{n}$ which is practically intractable for large $m$ and $n$. Hence, we greedily select which subproblems to use for bound refinement to avoid exponential blow-up in the runtime while ensuring the bound refinement remains provably correct (see Appendix~\ref{sec:reductionToSingleLinearIneq}).
Since most of the common DNN output specification can be expressed as a conjunction of linear inequalities \cite{crown} \Tool generalizes to them. Moreover, cross-excution bound refinement is not restricted to $L_{\infty}$ input specification where $\|\pmb{\delta}\|_{\infty}$ is bounded and can work for any $\|\cdot\|_{p}$ norm bounded perturbation (see Appendix~\ref{sec:lPNorm}).

Next, we utilize the cross-executional bound refinement to formulate a MILP with at most $O(k \times n_{l})$ integer variables. Similar to \cite{iclrUap} we only use integer variables to encode the output specification $\outspecset$. Since the output dimension $n_{l}$ of $N$ is usually much smaller than the number of total \Relu~nodes $n_{l} << n_{r}$ in $N$, \Tool is more scalable than the naive MILP encoding that in the worst case introduces $O(k \times n_{r})$ integer variables. 
\vspace{-4mm}
\section{\Tool Algorithm}
\label{sec:algorithm}
\vspace{-2mm}
The cross-executional bound refinement learns parameters over any set of $n$ executions. However, for a relational property defined over $k$ executions, since there are $2^{k} - 1$ non-empty subsets of executions, refining bounds for all possible subsets is impractical. Instead, we design a greedy heuristic to pick the subsets of executions so that we only use a small number of subsets for bound refinement.

\textbf{Eliminating individually verified executions: } First, we run existing non-relational verifiers \cite{crown, deeppoly} without tracking any dependencies across executions. \Tool eliminates the executions already verified with the non-relational verifier and does not consider them for subsequent steps. (lines 5 -- 9 in Algo.~\ref{alg:Racoon}) For example, for the $k$-UAP property, we do not need to consider those executions that are proved to have no adversarial perturbation $\pmb{\delta}$ such that $\|\pmb{\delta}\|_{\infty} \leq \epsilon$. For relational properties considered in this paper, we formally prove the correctness of the elimination technique in Appendix Theorem~\ref{thm:eliminationCorrect} and showcase eliminating verified executions does not lead to any loss in precision of \Tool.

\textbf{Greedy selection of unverified executions: } For each execution that remains unverified with the non-relational verifier ($\mathcal{V}$), we look at $s_{i} = \min_{1 \leq j \leq m} \vect{c_{i, j}}^T\vect{y_i}$ estimated by $\mathcal{V}$ where $\vect{y_i}= N(\vect{x_i} + \pmb{\delta})$ and $\vect{c_{i, j}} \in \mathbb{R}^{n_l}$ defines the corresponding output specification $\outpropparam{i}(\vect{y_i}) = \bigwedge_{i=1}^{m} (\vect{c_{i, j}}^T\vect{y_i} \geq 0)$. Intuitively, for unverified executions, $s_{i}$ measures the maximum violation of the output specification $\outpropparam{i}(\vect{y_i})$ and thus leads to the natural choice of picking executions with smaller violations for cross-executional refinement. We sort the executions in decreasing order of $s_{i}$ and pick the first $k_{0}$ (hyperparameter) executions on input regions $\vect{X} = \{\inregparam{1}, \dots, \inregparam{{k_0}}\}$ having smaller violations $s_{i}$ where for all $i \in [k_0]$, $\inregparam{i} = \{\vect{x_i'} + \pmb{\delta}\;|\; \vect{x_i'},\;\pmb{\delta} \in \mathbb{R}^{n_0} \wedge \|\pmb{\delta}\|_{\infty} \leq \epsilon\}$ and $\vect{x_i'}$ is the unperturbed input. (line 11 of Algo.~\ref{alg:Racoon}) In general, $k_{0}$ is a small constant i.e. $k_{0} \leq 10$. Further, we limit the subset size to $k_{1}$ (hyperparameter) and do not consider any subset of $\vect{X}$ with a size more than $k_1$ for cross-executional bound refinement. (lines 12 -- 15 in Algo.~\ref{alg:Racoon})
Overall, we consider $\sum_{i=1}^{k_1} \binom{k_0}{i}$ subsets for bound refinement.
\begin{algorithm}[tb]
   \caption{\Tool}
   \label{alg:Racoon}
\begin{algorithmic}[1]
   \STATE {\bfseries Input:} $N$, $(\inspecset, \outspecset)$, $k$, $k_{0}$, $k_{1}$, non-relational verifier $\mathcal{V}$.
   \STATE {\bfseries Output: } sound approximation of worst-case $k$-UAP accuracy or worst-case hamming distance $\apprxOutput(\inspecset, \outspecset)$.
   \STATE $I \gets \{\}$. \hfill\COMMENT{Indices of executions not verified by $\mathcal{V}$} 
      \STATE $\linearmap \gets \{\}$ \hfill\COMMENT{Map storing linear approximations}
   \FOR{\text{$i \in [k]$}} \label{algline:nonrelStart}
   \STATE $(s_i, \linapprx_{i}, b_{i}) \gets$ \text{$\mathcal{V}(\phi^{i}, \outpropparam{i}$)}.
   \IF{$\mathcal{V}$ can not verify $(\phi^{i}, \outpropparam{i})$}
   \STATE $I \gets I \cup \{i\}$;\;\; $\linearmap[i] \gets \linearmap[i] \cup (\linapprx_{i}, b_{i})$.
    \ENDIF
   \ENDFOR \label{algline:nonrelEnd}
   \STATE $I_{0} \gets \text{top-$k_0$ executions from $I$ selected based on $s_i$}$.
   \FOR{\text{$\overline{I_{0}} \subseteq I_0$}, $\overline{I_{0}} \neq \{\}$ and $|\overline{I_{0}}| \leq k_1$}
  \STATE $\linearmap_{\overline{I_{0}}} \gets $ \text{CrossExcutionalRefinement$(\overline{I_0}, \inspecset, \outspecset)$}.
  \STATE $\linearmap \gets $ \text{Populate$(\linearmap, \linearmap_{\overline{I_{0}}})$}.\hfill\COMMENT{Storing $\linearmap_{\overline{I_{0}}}$ in $\linearmap$}
   \ENDFOR
   \STATE $\mathcal{M} \gets $\text{MILPFormulation($\linearmap, \inspecset, \outspecset$, $k$, $I$)}. 
   \STATE \textbf{return }{ $\text{Optimize}(\mathcal{M})$}.
\end{algorithmic}
\end{algorithm}

\textbf{MILP formulation: } \Tool MILP formulation involves two steps. First, we deduce linear constraints between the input and output of $N$ for each unverified execution using linear approximations of $N$  either obtained through cross-executional refinement or by applying the non-relational verifier. Secondly, similar to the current SOTA baseline \cite{iclrUap} we encode the output specification $\outspecset$ as MILP objective that only introduces $O(k \times n_{l})$ integer variables. Finally, we use an off-the-shelf MILP solver \cite{gurobi2018} to optimize the MILP. 

For the $i$-th unverified execution, let $\inregparam{i} = \{\vect{x_i'} + \pmb{\delta}\;|\; \vect{x_i'},\;\pmb{\delta} \in \mathbb{R}^{n_0} \wedge \|\pmb{\delta}\|_{\infty} \leq \epsilon\}$ be the input region and for $\vect{y_i} = N(\vect{x_{i}'} + \pmb{\delta})$, $\outpropparam{i}(\vect{y_i}) = \bigwedge_{i=1}^{m} (\vect{c_{i, j}}^T\vect{y_i} \geq 0)$ be the output specification. Subsequently for each clause $(\vect{c_{i, j}}^T\vect{y_i} \geq 0)$ in $\outpropparam{i}(\vect{y_i})$ let $\{(\linapprx^{1}_{i,j}, b^{1}_{i,j}), \dots, (\linapprx^{k'}_{i,j}, b^{k'}_{i,j}) \}$ be set of linear approximations. Then for each $l \in [k']$ we add the following linear constraints where $o_{i,j}$ is a real variable. 
\begin{align*}
\linapprx^{l}_{i,j}(\vect{x_i'} + \pmb{\delta}) + b^{l}_{i,j} \leq o_{i, j}\;;\;\|\pmb{\delta}\|_{\infty} \leq \epsilon
\end{align*}
Next, similar to \cite{iclrUap} we encode output specification ($\outpropparam{i}$) as $z_{i} = (\min_{1 \leq j \leq m} o_{i, j}) \geq 0$ where $z_{i} \in \{0, 1\}$ are binary variables and $z_{i} = 1$ implies $\outpropparam{i}(y_i) = True$. Encoding of each $\outpropparam{i}$ introduces $O(m)$ binary (integer) variables. Since for $k$-UAP and worst-case hamming distance, $m = n_{l}$ the total number of integer variables is in the worst case $O(k \times n_{l})$. MILP encoding for $k$-UAP and worst-case hamming distance verification are shown in Appendix~\ref{appendix:milPFormulationKUAPHamm}. We prove the correctness of \Tool in Appendix Theorem~\ref{thm:racoonSoundness} and show it is always at least as precise as \cite{iclrUap} (Appendix Theorem~\ref{thm:racoonBetter}). Worst-case time complexity analysis of \Tool is in Appendix~\ref{appendix:WorstcaseComplexity}.

\textbf{Limitation: } Similar to other deterministic (relational or non-relational) verifiers \Tool does not scale to DNNs trained on larger datasets (e.g.  ImageNet). \Tool is sound but incomplete and for some cases, \Tool may fail to prove a property even if the property holds. However, for piecewise linear activations like ReLU, it is possible to design a ``Branch and Bound" based complete relational verifiers by combining \Tool (as bounding algorithm) with branching algorithms like ReLU splitting \cite{wang2021beta}. We leave that as future work. Note that existing complete non-relational verifiers like \cite{wang2021beta} are incomplete for relational properties since they can only verify each execution in isolation. 
\vspace{-5mm}
\section{Experimental Evaluation}
\label{sec:expEval}
\begin{figure*}[htb]
\centering
\begin{minipage}[b]{.23\textwidth}
\includegraphics[width=\textwidth]{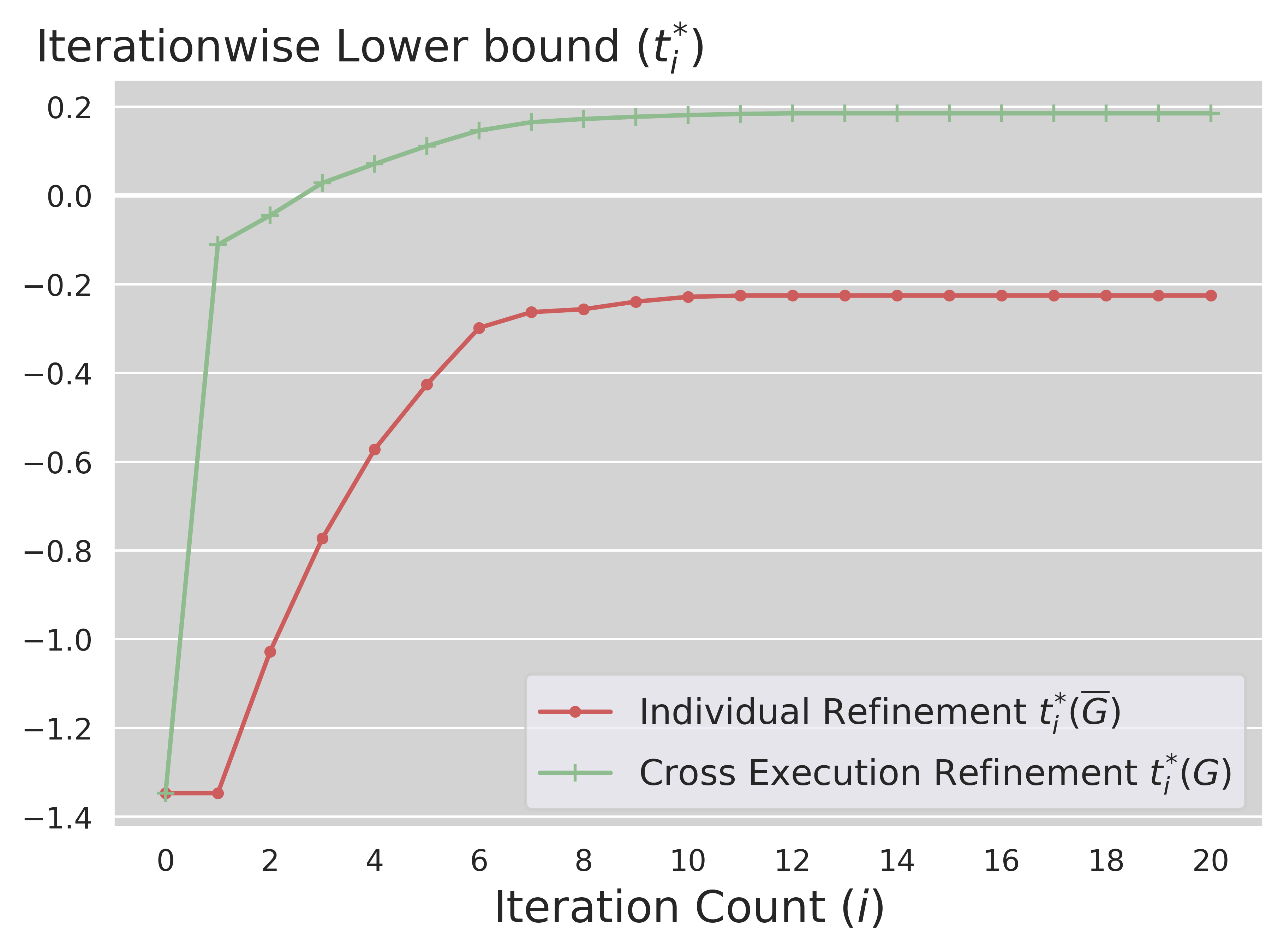}
\captionsetup{labelformat=empty}
\caption{(a) MNIST (PGD)}
\end{minipage}
\addtocounter{figure}{-1}
\begin{minipage}[b]{.23\textwidth}
\includegraphics[width=\textwidth]{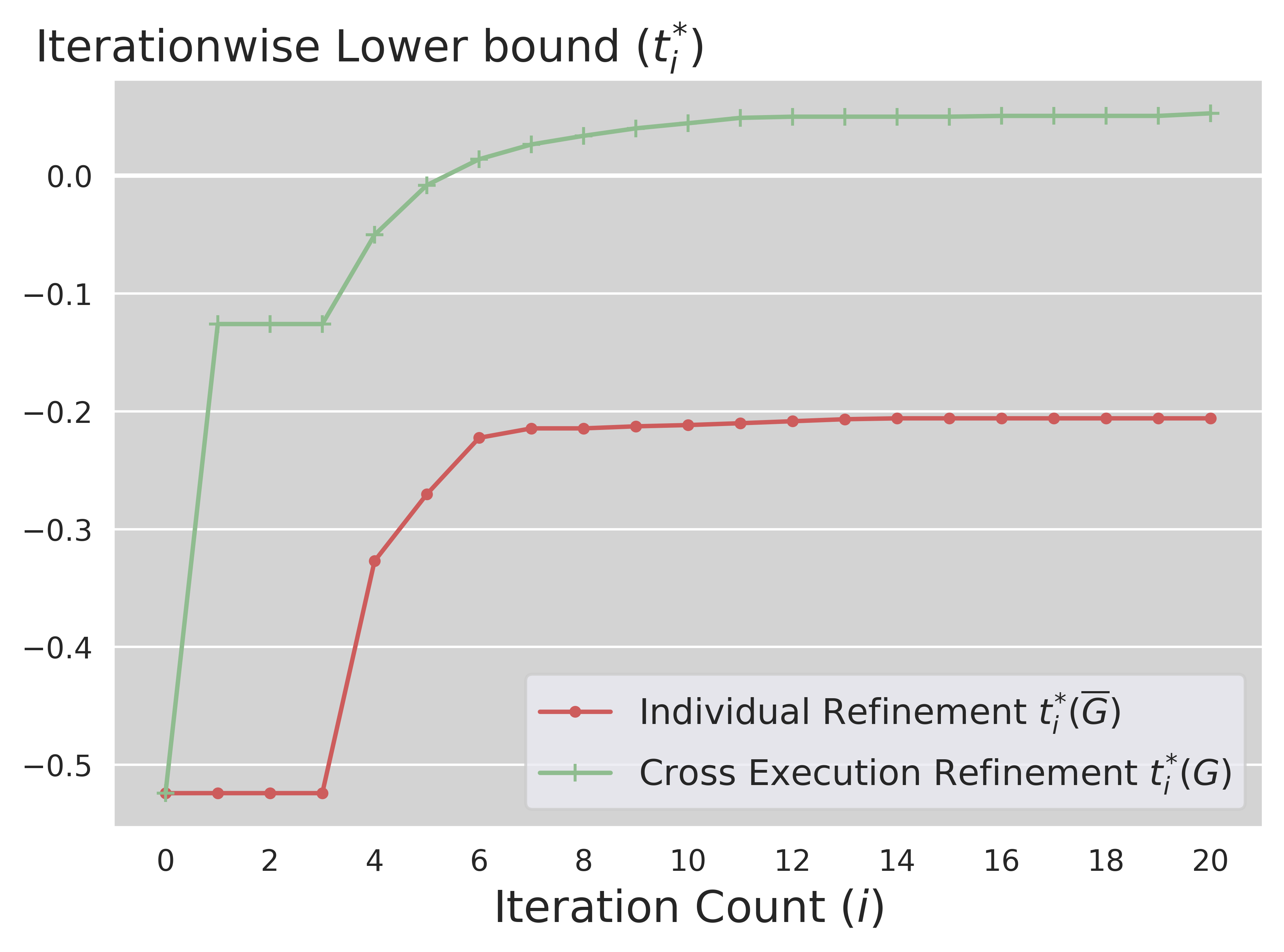}
\captionsetup{labelformat=empty}
\caption{(b) MNIST (DiffAI)}
\end{minipage}\qquad
\addtocounter{figure}{-1}
\begin{minipage}[b]{.23\textwidth}
\includegraphics[width=\textwidth]{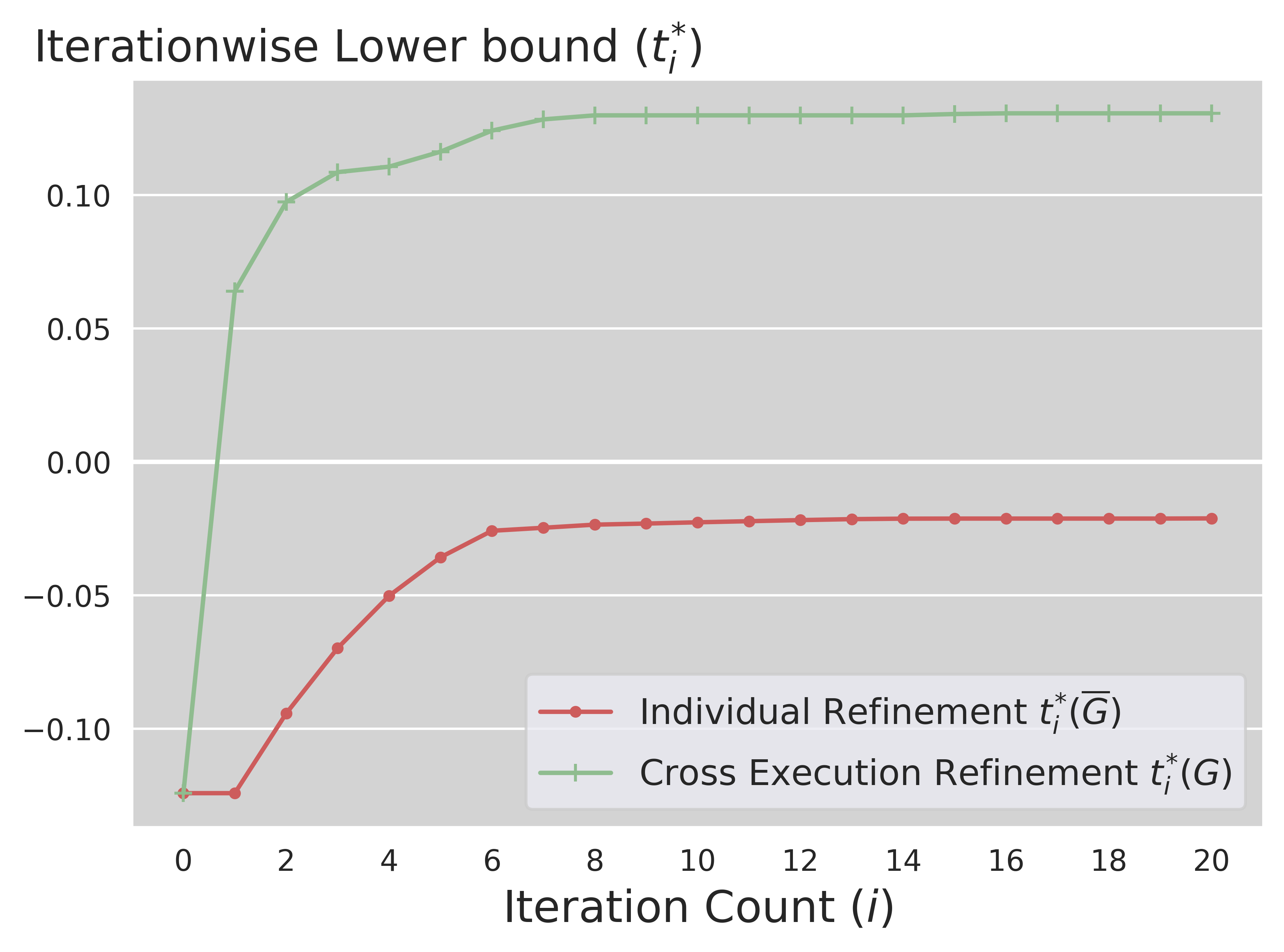}
\captionsetup{labelformat=empty}
\caption{(c) CIFAR10 (PGD)}
\end{minipage}
\addtocounter{figure}{-1}
\begin{minipage}[b]{.23\textwidth}
\includegraphics[width=\textwidth]{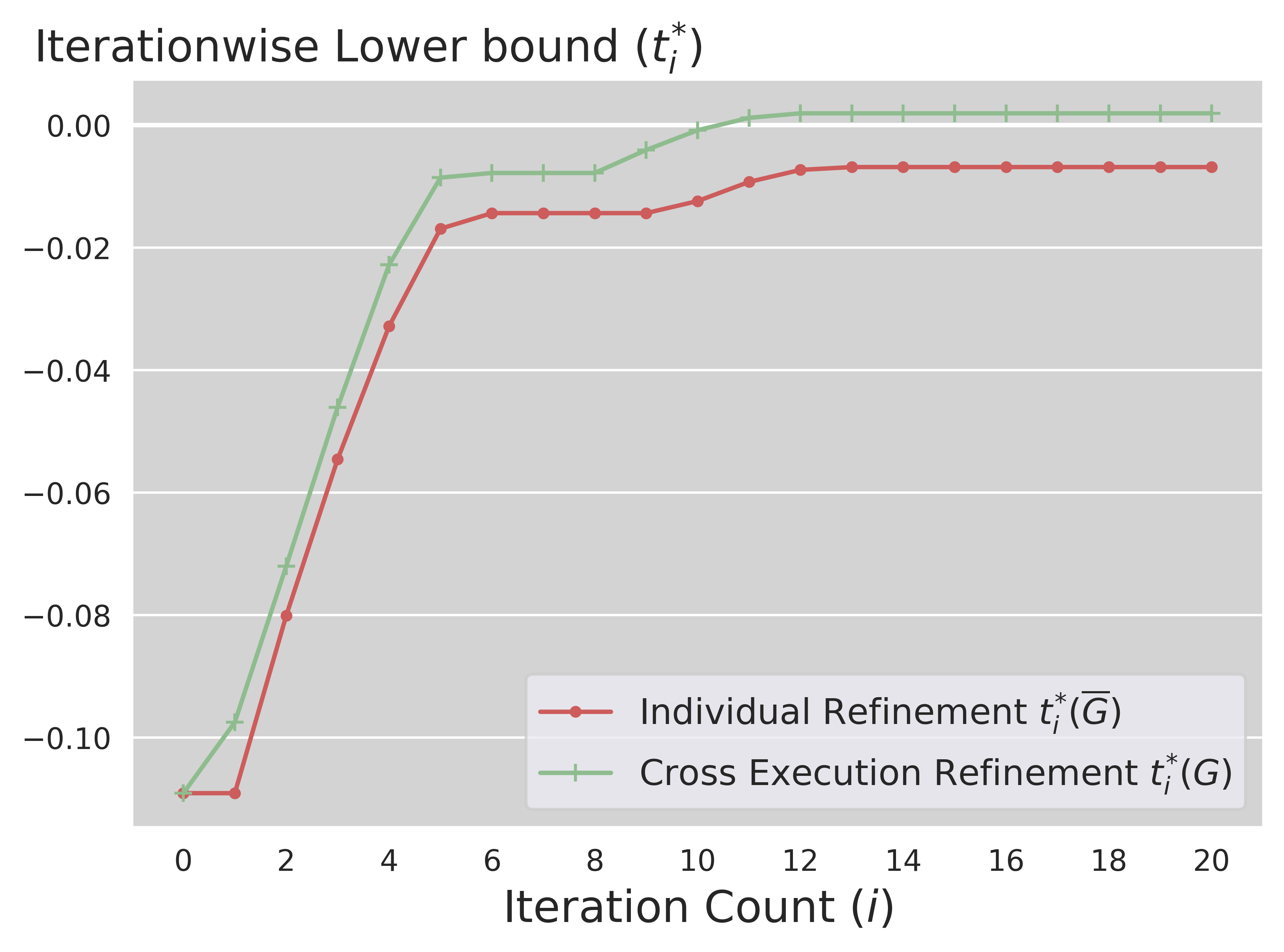}
\captionsetup{labelformat=empty}
\caption{(d) CIFAR10 (DiffAI)}
\end{minipage}
\addtocounter{figure}{-1}
\caption{Lower bound ($t$ in Eq.~\ref{eq:lpFormnExecution}) from individual vs. cross executional bound refinement over $2$ executions on ConvSmall networks.}
\vspace{-1mm}
\label{fig:bound_imporvement}
\end{figure*}
We evaluate the effectiveness of \Tool on a wide range of relational properties and a diverse set of DNNs and datasets. We consider the following relational properties: $k$-UAP, worst-case hamming distance as formally defined in Appendix~\ref{sec:FormalDefinition}. The baselines we consider are the SOTA relational verifier \cite{iclrUap} (referred to as I/O Formulation) and the non-relational verifier \cite{auto_lirpa} from the SOTA \lirpa toolbox \cite{auto_lirpa}. used by \cite{iclrUap}. We also analyze the efficacy of cross-executional bound refinement in learning parametric bounds that can facilitate relational verification (Section~\ref{sec:boundImprovement}). Note that we instantiate \Tool with the same non-relational verifier \cite{auto_lirpa} used in I/O formulation~\cite{iclrUap}.
The performance evaluation of different components of \Tool including individual bound refinement (i.e. refinement on execution set of size $1$), and individual bound refinement with MILP is in Appendix~Table~\ref{table:compareComponents}. Note that individual bound refinement uses SOTA non-relational bound refinement algorithm $\alpha$-CROWN \cite{alphaCrown}.
\vspace{-4mm}
\subsection{Experiment setup}
\vspace{-1mm}
\noindent \textbf{Networks}. We use standard convolutional architectures (ConvSmall, ConvBig, etc.) commonly seen in other neural network verification works \cite{crown, deeppoly} (see Table~\ref{table:compareoriginal}). Details of DNN architectures used in experiments are in Appendix~\ref{appenexp:archDetails}. We consider networks trained with standard training, robust training: DiffAI \cite{diffai}, CROWN-IBP \cite{zhang2019towards}, projected gradient descent (PGD) \cite{pgdTraining}, and COLT \cite{COLTRela}. We use pre-trained publically available DNNs: CROWN-IBP DNNs taken from the CROWN repository \cite{zhang2019towards} and all other DNNs are from the ERAN repository \cite{deeppoly}. 

\noindent \textbf{Implementation Details}. 
The details regarding the frameworks \Tool uses, the CPU and GPU information, and the hyperparameter ($k_0$, $k_1$) values are in Appendix~\ref{appenexp:implemenDetails}.  
\vspace{-2mm}
\subsection{Evaluating cross execution bound refinement}
\vspace{-1mm}
\label{sec:boundImprovement}
Fig.~\ref{fig:bound_imporvement} shows the values $\opt{t}_{i}(\dual)$ and $\opt{t}_{i}(\naivedual)$ after $i$-th iteration of Adam optimizer computed by cross-executional and individual refinement (using $\alpha$-CROWN) respectively over a pair of executions (i.e. $k=2$) on randomly chosen images. We used ConvSmall PGD and DiffAI DNNs trained on MNIST and CIFAR10 for this experiment. The $\epsilon$s used for MNIST PGD and DiffAI DNNs are $0.1$ and $0.12$ respectively while $\epsilon$s used for CIFAR10 PGD and DiffAI DNNs are $2.0/255$ and $6.0/255$ respectively. For each iteration $i$, $\opt{t}_{i}(\dual) > \opt{t}_{i}(\naivedual)$ shows that cross-executional refinement is more effective in learning parametric bounds that can facilitate relation verification. Since, for proving the absence of common adversarial perturbation, we need to show $\opt{t} \geq 0$, in all 4 cases in Fig.~\ref{fig:bound_imporvement} individual refinement fails to prove the absence of common adversarial perturbation while cross-executional refinement succeeds. Moreover, in all 4 cases, even the optimal solution of the LP (Eq.~\ref{eq:lpFormnExecution}) formulated with linear approximations from individual refinement remains negative. For example, for MNIST DiffAI DNN, with LP, $\opt{t}(\naivedual)$ improves to $-0.05$ from $-0.2$ but remains insufficient for proving the absence of common adversarial perturbation. This shows the importance of leveraging dependencies across executions during bound refinement.
\begin{table*}[htb]
\centering
\captionsetup{justification=centering}
\caption{\Tool Efficacy Analysis}
\label{table:compareoriginal}
\resizebox{0.98\textwidth}{!}{
\begin{tabular}{@{}c c c c c c c c c c c@{}}
\toprule
Dataset & Property & 
Network & Training &  Perturbation & \multicolumn{2}{c}{Non-relational Verifier} & \multicolumn{2}{c}{I/O Formulation} & \multicolumn{2}{c}{\Tool}  \\
\text{ } & \text{ } & 
Structure & \text{Method} \text{ } & \text{Bound ($\epsilon$) } & Avg. UAP Acc. (\%) & Avg. Time (sec.) & Avg. UAP Acc. (\%) & Avg. Time (sec.) & Avg. UAP Acc. (\%) & Avg. Time (sec.)  \\
\midrule
\text{ } & UAP &  ConvSmall & Standard & 0.08 & 38.5 & 0.01 & 48.0 & 2.65 & 54.0\;(\textcolor{mgreen}{+6.0}) & 5.20 \\
\text{ } & UAP &  ConvSmall & PGD & 0.10 & 70.5 & 0.21 & 72.0 & 0.92 & 77.0\;(\textcolor{mgreen}{+5.0}) & 4.33 \\
\text{ } &UAP &  IBPSmall & IBP & 0.13 & 74.5 & 0.02 & 75.0 & 1.01 & 89.0\;(\textcolor{mgreen}{+14.0}) & 2.01 \\
MNIST & UAP &  ConvSmall & DiffAI & 0.13 & 56.0 & 0.01 & 61.0 & 1.10 & 68.0\;(\textcolor{mgreen}{+7.0}) & 3.98 \\
\text{ } & UAP &  ConvSmall & COLT & 0.15 & 69.0 & 0.02 & 69.0 & 0.99 & 85.5\;(\textcolor{mgreen}{+16.5}) & 2.68 \\
\text{ } & UAP &  IBPMedium & IBP & 0.20 & 80.5 & 0.1 & 82.0 & 0.99 & 93.5\;(\textcolor{mgreen}{+11.5}) & 2.30 \\
\text{ } & UAP &  ConvBig & DiffAI & 0.20 & 81.5 & 1.85 & 81.5 & 2.23 & 91.5\;(\textcolor{mgreen}{+10.0}) & 7.60 \\
\midrule
\text{ } & UAP &  ConvSmall & Standard & 1.0/255 & 52.0 & 0.02 & 55.0 & 3.46 & 58.0\;(\textcolor{mgreen}{+3.0})& 7.22 \\ 
\text{ } & UAP &  ConvSmall & PGD & 3.0/255 & 21.0 & 0.01 & 26.0 & 1.57 & 29.0\;(\textcolor{mgreen}{+3.0})& 5.56 \\
\text{ } & UAP &  IBPSmall & IBP & 6.0/255 & 17.0 & 0.02 & 17.0 & 2.76 & 39.0\;(\textcolor{mgreen}{+22.0})& 6.76 \\
CIFAR10 & UAP &  ConvSmall & DiffAI & 8.0/255 & 16.0 & 0.01 & 20.0 & 2.49 & 30.0\;(\textcolor{mgreen}{+10.0})& 7.09 \\
\text{ } & UAP &  ConvSmall & COLT & 8.0/255 & 18.0 & 0.04 & 21.0 & 2.41 & 26.0\;(\textcolor{mgreen}{+5.0})& 11.02 \\
\text{ } & UAP &  IBPMedium & IBP & 3.0/255 & 46.0 & 0.15 & 50.0 & 2.13 & 71.0\;(\textcolor{mgreen}{+21.0})& 6.12 \\
\text{ } & UAP &  ConvBig & DiffAI & 3.0/255 & 17.0 & 1.33 & 20.0 & 3.42 & 25.0\;(\textcolor{mgreen}{+5.0})& 11.92 \\
\midrule
Dataset & Property & 
Network & Training &  Perturbation & \multicolumn{2}{c}{Non-relational Verifier} & \multicolumn{2}{c}{I/O Formulation} & \multicolumn{2}{c}{\Tool}  \\
\text{ } & \text{ } & 
Structure & \text{Method} \text{ } & \text{Bound ($\epsilon$) } & Avg. Hamming distance & Avg. Time (sec.) & Avg. Hamming distance & Avg. Time (sec.) & Avg. Hamming distance & Avg. Time (sec.)  \\
\midrule
\text{ } & Hamming &  ConvSmall & Standard & 0.10 & 19.0 & 0.01 & 18.0 & 2.68 & 16.0\;(\textcolor{mgreen}{-2.0})& 4.43 \\
\text{ } & Hamming &  ConvSmall & PGD & 0.12 & 17.0 & 0.01 & 16.0 & 0.99 & 14.0\;(\textcolor{mgreen}{-2.0})& 3.20 \\
\text{ } & Hamming &  ConvSmall & DiffAI & 0.15 & 16.0 & 0.01 & 16.0 & 0.98 & 14.0\;(\textcolor{mgreen}{-2.0})& 3.46 \\
MNIST & Hamming &  IBPSmall & IBP & 0.14 & 11.0 & 0.01 & 10.0 & 1.13 & 5.0\;(\textcolor{mgreen}{-5.0})& 2.56 \\
\text{ } & Hamming &  ConvSmall & COLT & 0.20 & 17.0 & 0.01 & 17.0 & 0.89 & 10.0\;(\textcolor{mgreen}{-7.0})& 1.88 \\
\text{ } & Hamming &  IBPMedium & IBP & 0.30 & 12.0 & 0.02 & 11.0 & 0.87 & 3.0\;(\textcolor{mgreen}{-8.0})& 1.75 \\
\midrule
\end{tabular}
}
\end{table*}
\vspace{-3mm}
\begin{figure*}[htb]
\centering
\begin{minipage}[b]{.23\textwidth}
\includegraphics[width=\textwidth]{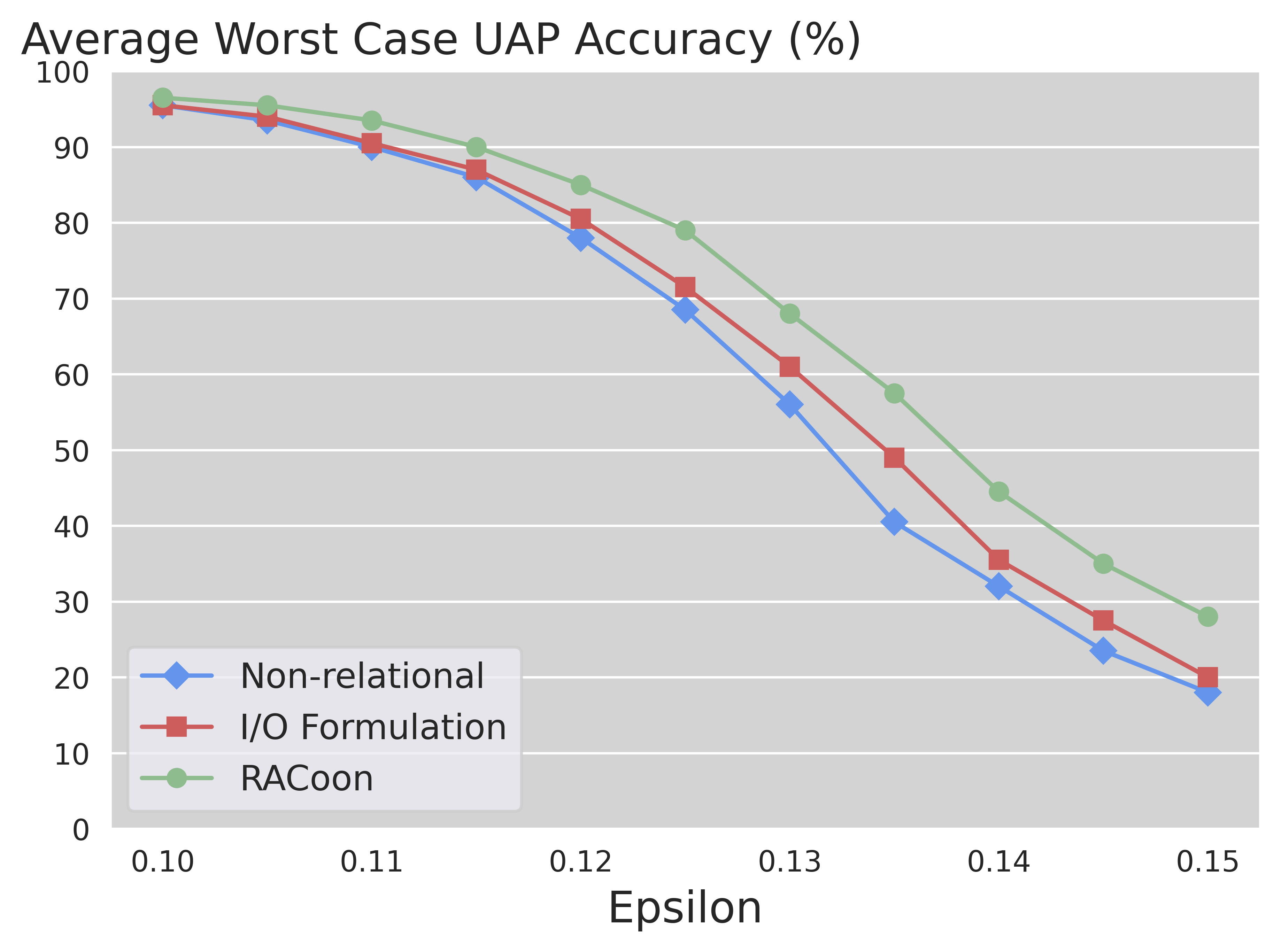}
\captionsetup{labelformat=empty}
\caption{(a) DiffAI (MNIST)}
\end{minipage}
\addtocounter{figure}{-1}
\begin{minipage}[b]{.23\textwidth}
\includegraphics[width=\textwidth]{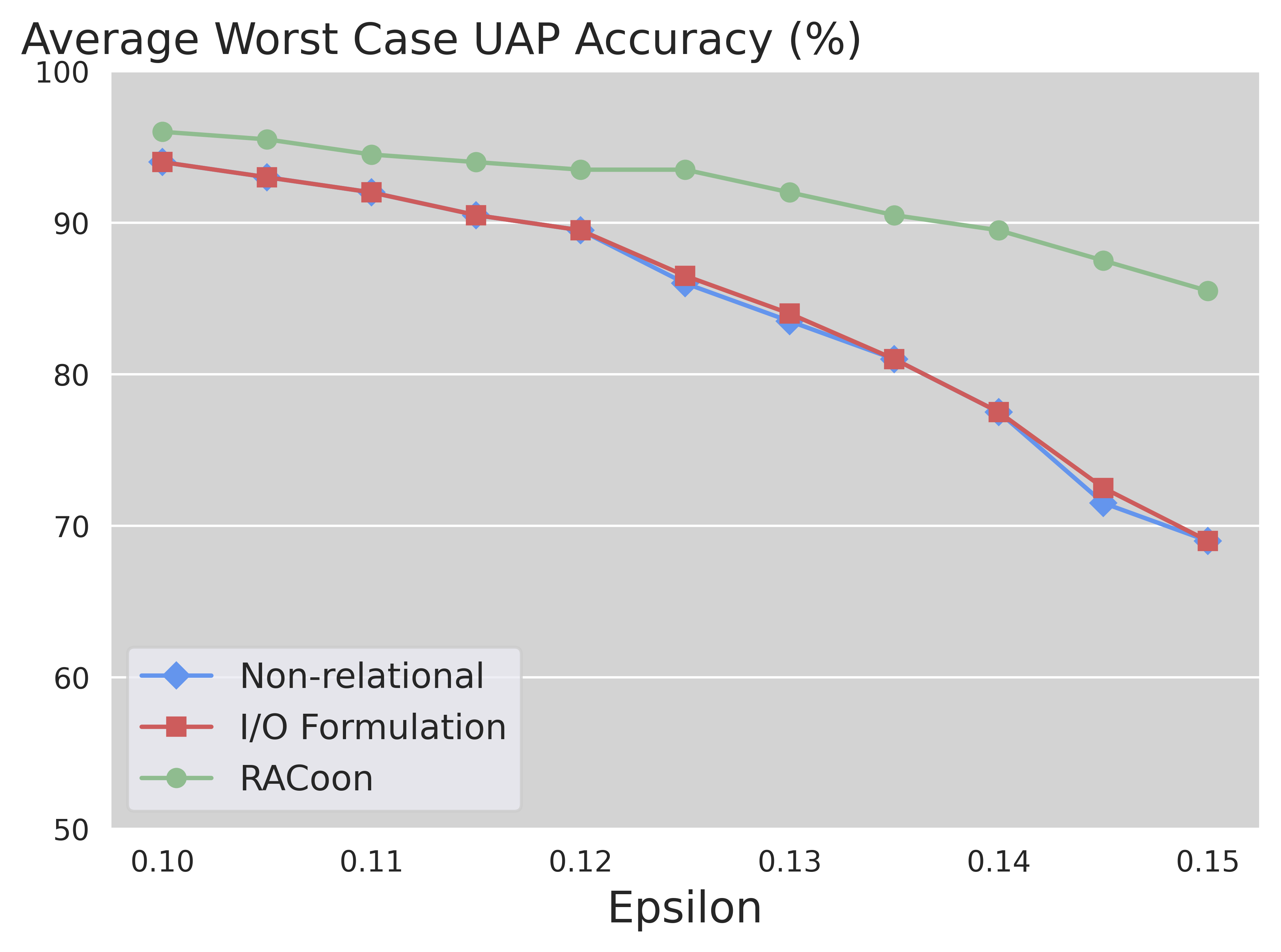}
\captionsetup{labelformat=empty}
\caption{(b) IBPSmall (MNIST)}
\end{minipage}\qquad
\addtocounter{figure}{-1}
\begin{minipage}[b]{.23\textwidth}
\includegraphics[width=\textwidth]{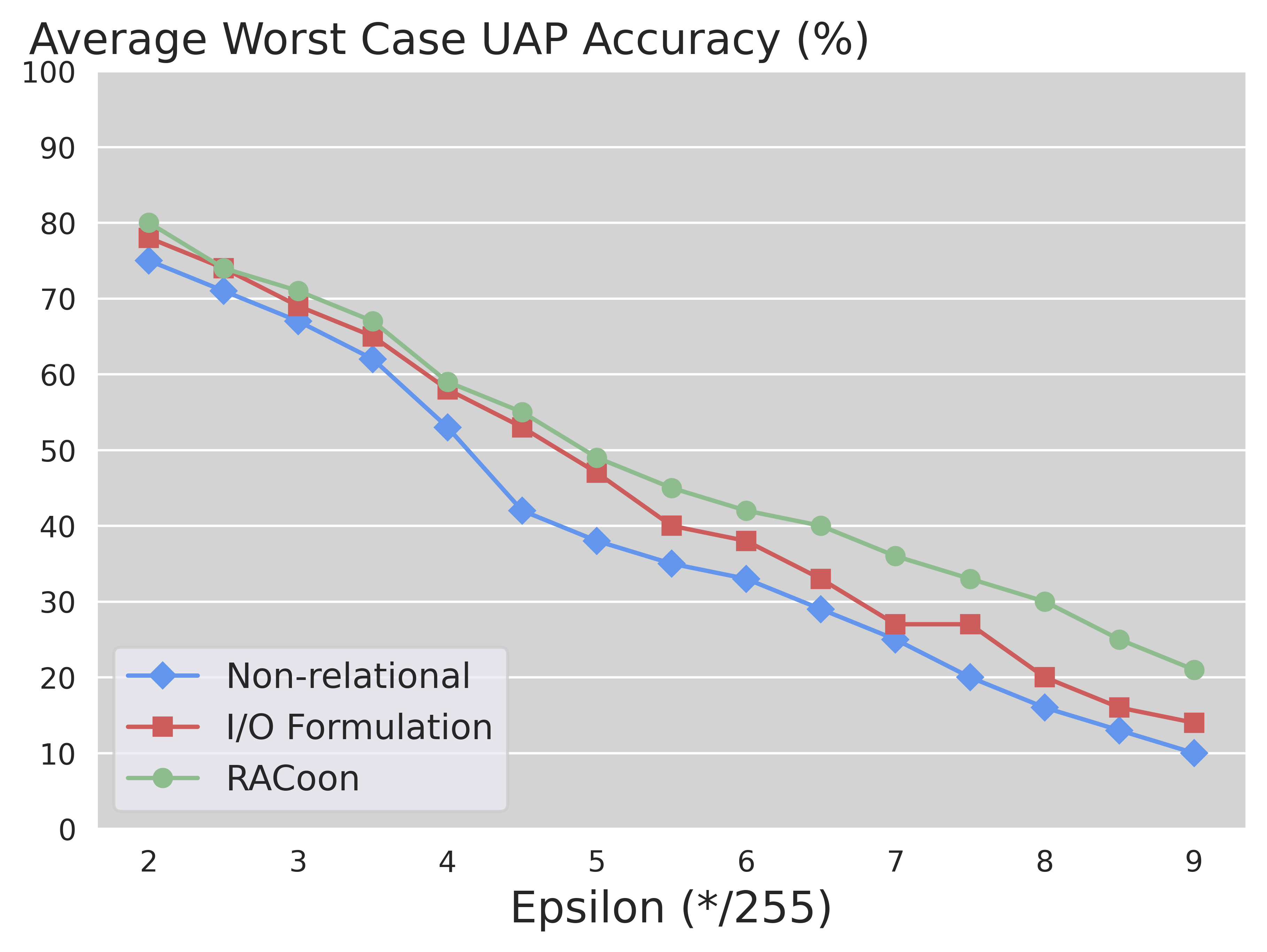}
\captionsetup{labelformat=empty}
\caption{(c) DiffAI (CIFAR10)}
\end{minipage}
\addtocounter{figure}{-1}
\begin{minipage}[b]{.23\textwidth}
\includegraphics[width=\textwidth]{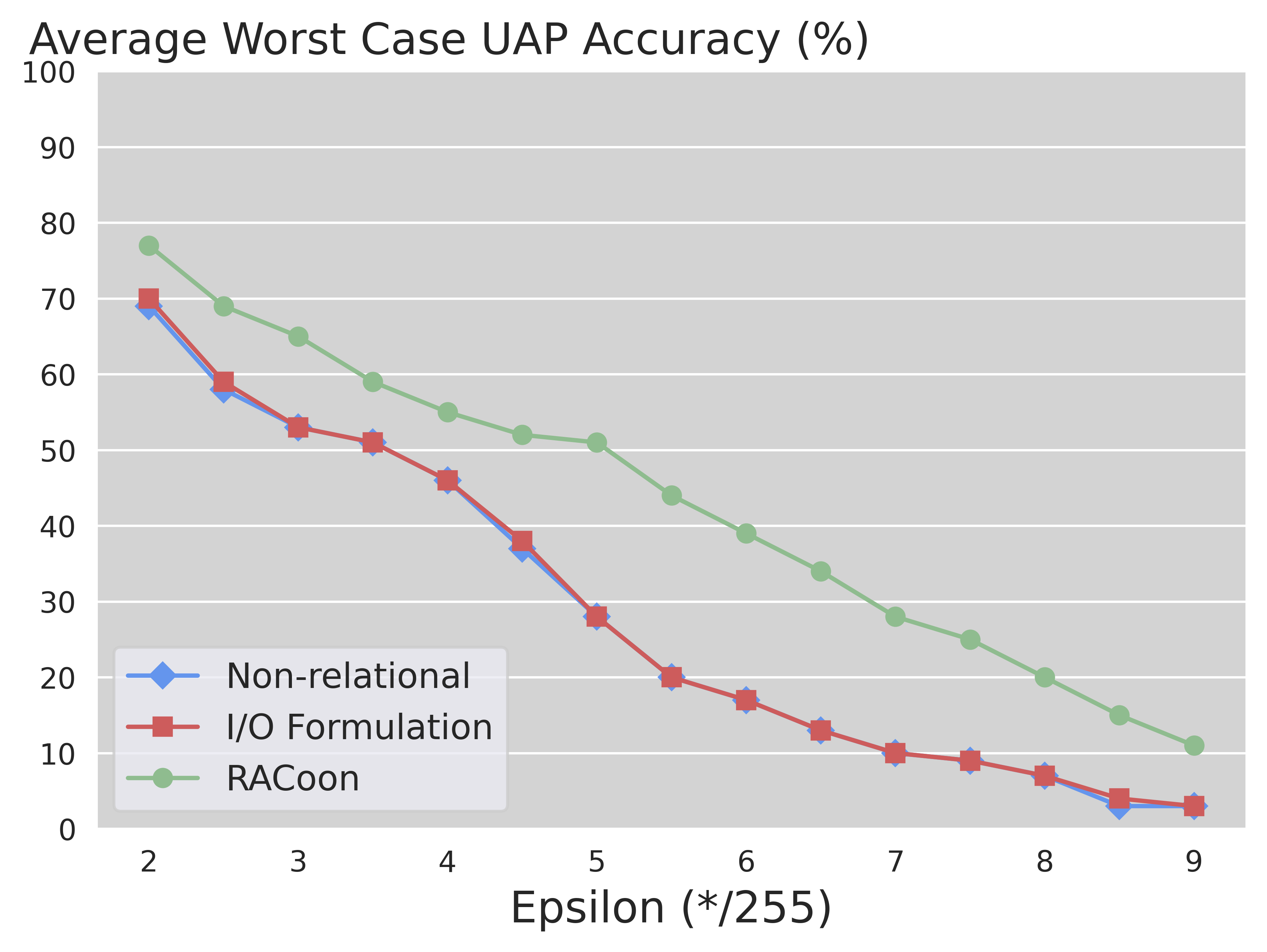}
\captionsetup{labelformat=empty}
\caption{(d) IBPSmall (CIFAR10)}
\end{minipage}
\addtocounter{figure}{-1}
\vspace{-1mm}
\caption{Average Worst case UAP accuracy for different $\epsilon$ values for ConvSmall (DiffAI) and IBPSmall DNNs.}
\label{fig:mnist_eps}
\end{figure*}

\textbf{Verification results: } For $k$-UAP, both the baselines: non-relational verifier \cite{auto_lirpa}, I/O formulation \cite{iclrUap} and \Tool computes a provably correct lower bound $\apprxOutput(\inspecset, \outspecset)$ on the worst-case UAP accuracy. In this case, larger $\apprxOutput(\inspecset, \outspecset)$ values produce a more precise lower bound tightly approximating the actual worst-case UAP accuracy. In contrast, for worst-case hamming distance $\apprxOutput(\inspecset, \outspecset)$ is a provably correct upper bound and smaller $\apprxOutput(\inspecset, \outspecset)$ values are more precise. Table~\ref{table:compareoriginal} shows the verification results on different datasets (column 1), DNN architectures (column 3) trained with different training methods (column 4) where $\epsilon$ values defining $L_{\infty}$ bound of $\pmb{\delta}$ are in column 5. The relational properties: $k$-UAP and worst-case hamming distance on MNIST DNNs use $k=20$ while $k$-UAP on CIFAR10 DNNs uses $k=10$. For each DNN and $\epsilon$, we run relational verification on $k$ randomly selected inputs and repeat the experiment $10$ times. We report worst-case UAP accuracy and worst-case hamming distance averaged over all $10$ runs. Results in Table~\ref{table:compareoriginal} substantiate that \Tool outperforms current SOTA baseline I/O formulation on all DNNs for both the relational properties. \Tool gains up to $+16.5\%$ and up to $+22\%$ improvement in the worst-case UAP accuracy (averaged over 10 runs) for MNIST and CIFAR10 DNNs respectively. Similarly, \Tool reduces the worst-case hamming distance (averaged over 10 runs) up to $8$ which is up to $40\%$ reduction for binary strings of size $20$.

\textbf{Runtime analysis: } Table~\ref{table:compareoriginal} shows that \Tool is slower than I/O formulation. However, even for ConvBig architectures, \Tool takes less than $8$ seconds (for $20$ executions) for MNIST and takes less than $12$ seconds (for $10$ executions) for CIFAR10. The timings are much smaller compared to the timeouts allotted for similar architectures in the SOTA competition for verification of DNNs (VNN-Comp \cite{vnncomp}) ($200$ seconds per execution).

\textbf{\Tool componentwise analysis: } In Appendix Table~\ref{table:compareComponents}, we show results for different components of \Tool including individual bound refinement using $\naivedual$ (Eq.~\ref{eq:dualNaive}), individual bound refinement with MILP formulation, and cross-executional bound refinement without MILP formulation. Note that only cross-executional bound refinement without MILP can prove the absence of common adversarial perturbation for a set of executions even if non-relational verification fails on all of them. Hence, even without MILP, cross-executional bound refinement serves as a promising approach for relational verification. Appendix Table~\ref{table:compareComponents} shows for some cases (i.e. MNIST and CIFAR10 standard DNNs) I/O formulation (static linear approximation with MILP) outperforms individual refinements while both individual refinement with MILP and cross-execution refinement are always more precise. As expected, \Tool (cross-execution refinement with MILP) yields the most precise results while cross-execution refinement without MILP achieves the second-best results with notably faster runtime. Componentwise runtime analysis is in Appendix~\ref{appenexp:additionalComponentRuntime}.

\textbf{Different $\epsilon$ and $k$ values: } Fig.~\ref{fig:mnist_eps} and Appendix Fig.~\ref{fig:mnist_eps_appen},~\ref{fig:cifar_eps_appen} show the results of \Tool and both the baselines on relational properties defined with different $\epsilon$ values on DNNs from Table~\ref{table:compareoriginal}. We also analyze the performance of \Tool for $k$-UAP verification defined with different $k$ and $\epsilon$ values in Appendix~\ref{appenexp:diffKEps} on DNNs from Table~\ref{table:compareoriginal}. For the MNIST DNNs, we consider up to $50$ executions, and for CIFAR10 DNNs we consider up to $25$ executions per property. For all $k$ and $\epsilon$ values \Tool is more precise than both baselines. In all cases, even for ConvBig MNIST and CIFAR10, \Tool takes less than $16$ and $25$ seconds respectively.

\textbf{Ablation on hyperparameters $k_0$ and $k_1$: } We analyze the impact of $k_0$ and $k_1$ on performance of \Tool in Appendix~\ref{appenexp:ablation_with_k0_k1}. As expected, with larger $k_0$ and $k_1$ \Tool's precision improves but it also increases \Tool's runtime.

\vspace{-4mm}
\section{Conclusion}
\vspace{-3mm}
In this work, we present \Tool, a general framework for improving the precision of relational verification of DNNs through cross-executional bound refinement.  
Our experiments, spanning various relational properties, DNN architectures, and training methods demonstrate the effectiveness of utilizing dependencies across multiple executions. Furthermore, \Tool with cross-executional bound refinement proves to exceed the capabilities of the current state-of-the-art relational verifier \cite{iclrUap}.
While our focus has been on relational properties within the same DNN across multiple executions, \Tool can be extended to properties involving different DNNs, such as local equivalence of DNN pairs \cite{reluDiff} or properties defined over an ensemble of DNNs. Additionally, \Tool can be leveraged for training DNNs on relational properties. We leave these extensions as future work.
\clearpage
\newpage
\section{Impact and Ethics}
\label{sec:impactStatement}
This paper introduces research aimed at advancing the field of Machine Learning. We do not identify any specific societal consequences of our work that need to be explicitly emphasized here.
\bibliography{main}
\bibliographystyle{icml2024}
\appendix
\onecolumn
\section{Formal encoding of relational properties}
\label{sec:FormalDefinition}
\subsection{k-UAP verification}
\label{appendix:uapFormulation}
Given a set of $k$ points $\vect{X} =\{ \vect{x_1}, ... ,\vect{x_k}\} $  where for all $i \in [k]$, $\vect{x_i} \in \mathbb{R}^{n_{0}}$ and $\epsilon \in \mathbb{R}$ we can first define individual input constraints used to define $L_{\infty}$ input region for each execution $\forall i \in [k].\phi_{in}^i(\vect{x^{*}_i}) = \|\vect{x^{*}_i} - \vect{x_i}\|_\infty \leq \epsilon$. We define $\inspecset^\delta(\vect{x^{*}_1}, \dots, \vect{x^{*}_k})$ as follows:

\begin{equation}\label{eq:UAPinsetdef}
    \inspecset^\delta(\vect{x^{*}_1}, \dots, \vect{x^{*}_k}) = \bigwedge_{(i, j \in [k]) \wedge (i < j)} (\vect{x^{*}_i} - \vect{x^{*}_j} = \vect{x_i} - \vect{x_j})
\end{equation}

Then, we have the input specification as $\inspecset(\vect{x^{*}_1}, \dots, \vect{x^{*}_k}) = \bigwedge_{i=1}^k \phi_{in}^i(\vect{x^{*}_i}) \wedge \crossinspecset(\vect{x^{*}_1}, \dots, \vect{x^{*}_k})$. 

Next, we define $\outspecset(\vect{x^{*}_1}, \dots, \vect{x^{*}_k})$ as conjunction of $k$ clauses each defined by $\outpropparam{i}(\vect{y_i})$ where $\vect{y_i} = N(\vect{x^{*}_i})$.
Now we define $\outpropparam{i}(\vect{y_i}) = \bigwedge_{j=1}^{n_{l}} (\vect{c_{i,j}}^T\vect{y_i} \geq 0)$ where $\vect{c_{i,j}} \in \mathbb{R}^{n_{l}}$ is defined as follows

\begin{equation}
    \forall a \in [n_l].c_{i,j,a} = \begin{cases}
        1 &\text{if } a \neq j \text{ and } a \text{ is the correct label for } \vect{y_i}  \\
        -1 &\text{if } a = j \text{ and } a \text{ is not the correct label for } \vect{y_i}\\
        0 &\text{otherwise}
    \end{cases}
\end{equation}
In this case, the tuple of inputs $(\vect{x^{*}_1}, \dots, \vect{x^{*}_k})$ satisfies the input specification $\inspecset(\vect{x^{*}_1}, \dots, \vect{x^{*}_k})$ iff for all $i \in [k]$, $\vect{x^{*}_i} = \vect{x_{i}} + \pmb{\delta}$ where $\pmb{\delta} \in \mathbb{R}^{n_0}$ and $\|\pmb{\delta}\|_{\infty} \leq \epsilon$.  Hence, the relational property $(\inspecset, \outspecset)$ defined above verifies whether there is an adversarial perturbation $\pmb{\delta} \in \mathbb{R}^{n_0}$ with $\|\pmb{\delta}\|_{\infty} \leq \epsilon$ that can misclassify \textbf{all} $k$ inputs. Next, we show the formulation for the worst-case UAP accuracy of the k-UAP verification problem as described in section~\ref{sec:prelims}. 
Let, for any $\pmb{\delta} \in \mathbb{R}^{n_0}$ and $\|\pmb{\delta}\|_{\infty} \leq \epsilon$, $\mu(\delta)$ denotes the number of clauses ($\outpropparam{i}$) in $\outspecset$ that are satisfied. Then $\mu(\delta)$ is defined as follows
\begin{align}
   & z_{i}(\pmb{\delta}) = \begin{cases}
        1 &\outpropparam{i}(N(\vect{x_{i}} + \pmb{\delta}))\text{ is $True$} \label{eq:zdef} \\
        0 &\text{otherwise}
    \end{cases} \\
    &\mu(\pmb{\delta}) = \sum_{i=1}^{k}z_{i}(\pmb{\delta}) \label{eq:muDef}
\end{align}
Since $\outpropparam{i}(N(\vect{x_{i}} + \pmb{\delta}))$  is $True$
iff the perturbed input $\vect{x_{i}} + \pmb{\delta}$ is correctly classified by $N$, for any $\pmb{\delta} \in \mathbb{R}^{n_0}$ and $\|\pmb{\delta}\|_{\infty} \leq \epsilon$, $\mu(\pmb{\delta})$ captures the number of correct classifications over the set of perturbed inputs $\{\vect{x_{1}} + \pmb{\delta}, \dots, \vect{x_{k}} + \pmb{\delta}\}$. The worst-case k-UAP accuracy $\Output(\inspecset, \outspecset)$ for $(\inspecset, \outspecset)$ is as follows
\begin{align}
\label{eq:kUAPprobFormulation}
\Output(\inspecset, \outspecset) = \min\limits_{\pmb{\delta} \in \mathbb{R}^{n_0},\;\|\pmb{\delta}\| \leq \epsilon} \mu(\pmb{\delta}) 
\end{align}
\subsection{Worst case Hamming distance verification}
\label{appendix:hammingFormulation}
We consider a set of $k$ unperturbed inputs $\vect{X} = \{\vect{x_1}, ... ,\vect{x_k}\}$ where for all $i \in [k]$, $\vect{x_{i}} \in \mathbb{R}^{n_0}$, a peturbation budget $\epsilon \in \mathbb{R}$, and a binary digit classifier neural network $N_2: \mathbb{R}^{n_0} \rightarrow \mathbb{R}^2$.
We can define a binary digit string $\vect{S^*} \in \{0,1\}^k$ as a sequence of binary digits where each input $\vect{x_i}$ to $N_2$ is an image of a binary digit. We are interested in bounding the worst-case hamming distance between $\vect{S}$, the binary digit string classified by $N_2$, and $\vect{S^*}$ the actual binary digit string corresponding to the list of perturbed images $\forall i \in [k]. \vect{x^{*}_i} = \vect{x_i} + \pmb{\delta}$ s.t. $\pmb{\delta} \in \mathbb{R}^{n_0}$ and $\|\pmb{\delta}\|_\infty \leq \epsilon$. Given these definitions, we can use the $\inspecset$, $\outspecset$ and $\mu(\pmb{\delta})$ defined in section~\ref{appendix:uapFormulation} defined for k-UAP verification. In this case, the worst case hamming distance $\Output(\inspecset, \outspecset)$ is defined as $\Output(\inspecset, \outspecset) = k - \min\limits_{\pmb{\delta} \in \mathbb{R}^{n_0},\;\|\pmb{\delta}\| \leq \epsilon} \mu(\pmb{\delta})$.

\section{Theorectical guarantees for cross-execution bound refinement}
We obtain the theoretical guarantees of cross-execution bound refinement over $n$ executions. Note that we do not show the theoretical guarantees for a pair of executions separately as it is just a special case with $n=2$.
\subsection{Theorectical guarantees for $n$ of executions}
\label{sec:nTheorems}
\subsubsection{Theorems for LP formulation}
\label{appendix:theoremlpPair}
First, we show the correctness of the LP formulation in Eq.~\ref{eq:lpFormnExecution} or for 
 pair of execution in Eq.~\ref{eq:lpForm} (Theorem~\ref{thm:pairCorrect}). We also show that for fixed linear approximations $\{(\vect{L_1}, b_1), \dots, (\vect{L_n}, b_n)\}$ of $N$,  the LP formulation is exact i.e. it always proves the absence of common adversarial perturbation if it does not exist (Theorem~\ref{thm:pairExact}). In this case,  $\outspecset(\vect{y_1}, \dots, \vect{y_n}) = \bigvee_{i=1}^{n}(\vect{c_i}^T\vect{y_i} \geq 0)$ where the outputs of $N$ are $\vect{y_i} = N(\vect{x_i} + \pmb{\delta})$.
Let, $\opt{t}$ be the optimal solution of the LP in Eq.~\ref{eq:lpFormnExecution}.
\begin{lemma}
\label{lem:pairCorrect}
$\opt{t} = \min\limits_{\pmb{\delta} \in \mathbb{R}^{n_0},\;\|\pmb{\delta}\|_{\infty} \leq \epsilon} \max\limits_{1 \leq i \leq n}\vect{L_i}^{T}(\vect{x_i} + \pmb{\delta}) + b_i$.
\end{lemma}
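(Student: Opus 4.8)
\textbf{Proof plan for Lemma~\ref{lem:pairCorrect}.}
The plan is to show that the LP in Eq.~\ref{eq:lpFormnExecution} is, after eliminating the auxiliary variable $t$, exactly a minimax expression. First I would observe that for any fixed $\pmb{\delta}$ with $\|\pmb{\delta}\|_{\infty} \leq \epsilon$, the constraints $\vect{L_i}^{T}(\vect{x_i} + \pmb{\delta}) + b_i \leq t$ for all $i \in [n]$ are jointly satisfiable iff $t \geq \max_{1 \leq i \leq n} \bigl(\vect{L_i}^{T}(\vect{x_i} + \pmb{\delta}) + b_i\bigr)$, and since the objective is $\min t$, the smallest feasible $t$ for that fixed $\pmb{\delta}$ is precisely $\max_{1 \leq i \leq n}\bigl(\vect{L_i}^{T}(\vect{x_i} + \pmb{\delta}) + b_i\bigr)$.

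Next I would take the minimum over the remaining free variable $\pmb{\delta}$ subject to $\|\pmb{\delta}\|_{\infty} \leq \epsilon$. Since the feasible region of the LP is exactly the set of pairs $(\pmb{\delta}, t)$ with $\|\pmb{\delta}\|_{\infty} \leq \epsilon$ and $t \geq \max_i(\vect{L_i}^{T}(\vect{x_i}+\pmb{\delta})+b_i)$, minimizing $t$ over this set amounts to first minimizing over $t$ for each fixed $\pmb{\delta}$ (done in the previous step) and then minimizing the resulting value over $\pmb{\delta}$. This gives
\begin{align*}
\opt{t} = \min\limits_{\|\pmb{\delta}\|_{\infty} \leq \epsilon}\; \max\limits_{1 \leq i \leq n} \bigl(\vect{L_i}^{T}(\vect{x_i} + \pmb{\delta}) + b_i\bigr),
\end{align*}
which is the claimed identity. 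A minor technical point worth noting is that the inner maximum over the finite index set $[n]$ is always attained, and the outer minimum over the compact set $\{\pmb{\delta} : \|\pmb{\delta}\|_{\infty} \leq \epsilon\}$ of a continuous (piecewise-linear convex) function is attained as well, so the optimum $\opt{t}$ is well-defined and finite.

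I do not anticipate a real obstacle here; the only thing to be careful about is the formal justification of swapping the order of optimization over $t$ and $\pmb{\delta}$, which is immediate because $t$ appears only in the constraints $t \geq (\cdot)$ and in the objective $\min t$, so for each $\pmb{\delta}$ the optimal $t$ is uniquely determined by the active constraint(s). This reduction is the crux, and everything else is bookkeeping. This lemma then serves as the basis for the exactness result (Theorem~\ref{thm:pairExact}), since $\opt{t} \geq 0$ is equivalent to $\bigvee_{i=1}^{n}\bigl(\vect{L_i}^{T}(\vect{x_i}+\pmb{\delta})+b_i \geq 0\bigr)$ holding for all admissible $\pmb{\delta}$.
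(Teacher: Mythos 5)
Your proposal is correct and follows essentially the same route as the paper: both arguments rest on the observation that for each fixed $\pmb{\delta}$ the optimal $t$ equals $\max_{1\leq i\leq n}\bigl(\vect{L_i}^{T}(\vect{x_i}+\pmb{\delta})+b_i\bigr)$, and then minimize over $\pmb{\delta}$ (the paper phrases this as two inequalities $\opt{t}\geq\opt{l}$ and $\opt{l}\geq\opt{t}$, while you phrase it as partial minimization). Your added remark on attainment of the outer minimum over the compact $\ell_\infty$ ball is a small point the paper leaves implicit.
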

\begin{proof}
$\opt{t} = \min\limits_{\pmb{\delta} \in \mathbb{R}^{n_0},\;\|\pmb{\delta}\| \leq \epsilon} t(\pmb{\delta})$ where if $\|\pmb{\delta}\|_{\infty} \leq \epsilon$ then $t(\delta)$ satisfies the following constraints $t(\delta) \geq \vect{L_i}^{T}(\vect{x_i} + \pmb{\delta}) + b_i$ for all $i \in [n]$  then $t(\delta) \geq \max\limits_{1 \leq i \leq n}\vect{L_i}^{T}(\vect{x_i} + \pmb{\delta}) + b_i$. 
Let, $\opt{l} = \min\limits_{\pmb{\delta} \in \mathbb{R}^{n_0},\;\|\pmb{\delta}\| \leq \epsilon} \max\limits_{1 \leq i \leq n}\vect{L_i}^{T}(\vect{x_i} + \pmb{\delta}) + b_i$.
\begin{align}
   \opt{t} \geq  \min\limits_{\pmb{\delta} \in \mathbb{R}^{n_0},\;\|\pmb{\delta}\|_{\infty} \leq \epsilon} \max_{1 \leq i \leq n}\vect{L_i}^{T}(\vect{x_i} + \pmb{\delta}) + b_i = \opt{l}\label{eq:optTgreater}
\end{align}
Next, we show that $\opt{l} \geq \opt{t}$. $\opt{l} = \max_{1 \leq i \leq n}\vect{L_i}^{T}(\vect{x_i} + \opt{\pmb{\delta}}) + b_i$ for some $\opt{\pmb{\delta}}$ where $\opt{\pmb{\delta}} \in \mathbb{R}^{n_0}$ and $\|\opt{\pmb{\delta}}\|_{\infty} \leq \epsilon$, then $\opt{l}$ satisfies the constraints $\opt{l} \geq \vect{L_i}^{T}(\vect{x_i} + \opt{\pmb{\delta}}) + b_i$ for all $i \in [n]$. Since $\opt{l}$ is a valid feasible solution of the LP in Eq.~\ref{eq:lpFormnExecution} then $\opt{l} \geq \opt{t}$ as $\opt{t}$ is the optimal solution of the LP.

$\opt{l} \geq \opt{t}$ and from Eq.~\ref{eq:optTgreater} $\opt{l} \leq \opt{t}$ implies $\opt{l} = \opt{t}$.
\end{proof}

\begin{theorem}
\label{thm:pairCorrect}
For all $\pmb{\delta} \in \mathbb{R}^{n_0}$ and $\|\pmb{\delta}\|_{\infty} \leq \epsilon$, if for all $i \in [n]$, $\vect{L_i}^{T} (\vect{x_i} + \pmb{\delta}) + b_i \leq \vect{c_i}^T\vect{y_i}$ then $(\opt{t} \geq 0) \implies \left(\forall \pmb{\delta} \in \mathbb{R}^{n_0}. (\|\pmb{\delta}\|_{\infty} \leq \epsilon) \implies \outspecset(\vect{y_1},\dots, \vect{y_n})\right)$ holds.  
\end{theorem}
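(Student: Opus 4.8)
The plan is to chain together Lemma~\ref{lem:pairCorrect} with the hypothesis that each $\vect{L_i}^T(\vect{x_i}+\pmb{\delta})+b_i$ lower-bounds $\vect{c_i}^T\vect{y_i}$. First I would unpack the goal: $\outspecset(\vect{y_1},\dots,\vect{y_n})$ is by definition $\bigvee_{i=1}^n(\vect{c_i}^T\vect{y_i}\geq 0)$, so I must show that for every fixed $\pmb{\delta}$ with $\|\pmb{\delta}\|_\infty\leq\epsilon$ there exists at least one index $i$ with $\vect{c_i}^T\vect{y_i}\geq 0$, where $\vect{y_i}=N(\vect{x_i}+\pmb{\delta})$. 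Equivalently, it suffices to show $\max_{1\leq i\leq n}\vect{c_i}^T\vect{y_i}\geq 0$.

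The key step is the sandwich: for the given $\pmb{\delta}$, the hypothesis gives $\vect{L_i}^T(\vect{x_i}+\pmb{\delta})+b_i \leq \vect{c_i}^T\vect{y_i}$ for each $i$, hence taking the max over $i$ on both sides,
\begin{align*}
\max_{1\leq i\leq n}\left(\vect{L_i}^T(\vect{x_i}+\pmb{\delta})+b_i\right) \leq \max_{1\leq i\leq n}\vect{c_i}^T\vect{y_i}.
\end{align*}
Then I would invoke Lemma~\ref{lem:pairCorrect}, which identifies $\opt{t}$ with $\min_{\|\pmb{\delta}\|_\infty\leq\epsilon}\max_{1\leq i\leq n}\vect{L_i}^T(\vect{x_i}+\pmb{\delta})+b_i$; since the particular $\pmb{\delta}$ under consideration is feasible for that minimization, we get $\opt{t}\leq \max_{1\leq i\leq n}\vect{L_i}^T(\vect{x_i}+\pmb{\delta})+b_i$. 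Combining with the displayed inequality and the assumption $\opt{t}\geq 0$ yields $0\leq\opt{t}\leq\max_{1\leq i\leq n}\vect{c_i}^T\vect{y_i}$, so some clause $\vect{c_i}^T\vect{y_i}\geq 0$ holds, i.e. $\outspecset(\vect{y_1},\dots,\vect{y_n})$ is satisfied. Since $\pmb{\delta}$ was arbitrary subject to $\|\pmb{\delta}\|_\infty\leq\epsilon$, the implication follows.

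I do not expect any serious obstacle here; the statement is essentially a soundness bookkeeping argument once Lemma~\ref{lem:pairCorrect} is in hand. The only mild subtlety is being careful that $\vect{y_i}$ depends on $\pmb{\delta}$ (it is $N(\vect{x_i}+\pmb{\delta})$), so the max-of-sound-bounds argument must be carried out pointwise in $\pmb{\delta}$ before quantifying, and that the lower-bound hypothesis is stated as holding for \emph{all} $\pmb{\delta}$ with $\|\pmb{\delta}\|_\infty\leq\epsilon$, which is exactly what makes the pointwise step legitimate. I would also note explicitly that the LP's value $\opt{t}$ uses a single shared $\pmb{\delta}$ across all executions — matching the relational (common-perturbation) semantics — which is precisely why this argument certifies absence of a common adversarial perturbation rather than merely per-execution robustness.
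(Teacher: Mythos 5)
Your proposal is correct and follows essentially the same route as the paper's proof: both rest on Lemma~\ref{lem:pairCorrect} together with the sandwich $\opt{t}\leq\max_{1\leq i\leq n}\left(\vect{L_i}^T(\vect{x_i}+\pmb{\delta})+b_i\right)\leq\max_{1\leq i\leq n}\vect{c_i}^T\vect{y_i}$. The only cosmetic difference is that you fix an arbitrary feasible $\pmb{\delta}$ and argue pointwise, whereas the paper takes the minimum over $\pmb{\delta}$ on both sides of the inequality first; these are equivalent.
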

\begin{proof}
 Since, for all $i \in [n]$, $\vect{L_i}^{T} (\vect{x_i} + \pmb{\delta}) + b_i \leq \vect{c_i}^T\vect{y_i}$, for all $\pmb{\delta} \in \mathbb{R}^{n_0}$ and $\|\pmb{\delta}\|_{\infty} \leq \epsilon$, then $\min\limits_{\pmb{\delta} \in \mathbb{R}^{n_0},\;\|\pmb{\delta}\|_{\infty} \leq \epsilon} \max\limits_{1 \leq i \leq n}\vect{L_i}^{T}(\vect{x_i} + \pmb{\delta}) + b_i  \leq \min\limits_{\pmb{\delta} \in \mathbb{R}^{n_0},\;\|\pmb{\delta}\|_{\infty} \leq \epsilon} \max\limits_{1 \leq i \leq n} \vect{c_i}^{T}\vect{y_i}$
\begin{align*}
    &\opt{t} = \min\limits_{\pmb{\delta} \in \mathbb{R}^{n_0},\;\|\pmb{\delta}\|_{\infty} \leq \epsilon} \max_{1 \leq i \leq n}\vect{L_i}^{T}(\vect{x_i} + \pmb{\delta}) + b_i  \leq \min\limits_{\pmb{\delta} \in \mathbb{R}^{n_0},\;\|\pmb{\delta}\|_{\infty} \leq \epsilon} \max_{1 \leq i \leq n} \vect{c_i}^{T}\vect{y_i} \;\;\;\text{Using lemma~\ref{lem:pairCorrect}}\\
    &(\opt{t} \geq 0) \implies \left(\min\limits_{\pmb{\delta} \in \mathbb{R}^{n_0},\;\|\pmb{\delta}\|_{\infty} \leq \epsilon} \max_{1 \leq i \leq n} \vect{c_i}^{T}\vect{y_i}\right) \geq 0 \\
    & (\opt{t} \geq 0) \implies \left(\forall \pmb{\delta} \in \mathbb{R}^{n_0}. (\|\pmb{\delta}\|_{\infty} \leq \epsilon) \implies \outspecset(\vect{y_1}, \dots, \vect{y_n})\right)
\end{align*}
\end{proof}

\begin{theorem}
\label{thm:pairExact}
$\left(\forall \pmb{\delta} \in \mathbb{R}^{n_0}. (\|\pmb{\delta}\|_{\infty} \leq \epsilon) \implies \bigvee_{i=1}^{n} (\vect{L_i}^{T} (\vect{x_i} + \pmb{\delta}) + b_i \geq 0)\right)$ holds if and only if $\opt{t} \geq 0$.
\end{theorem}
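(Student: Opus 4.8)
The plan is to obtain the biconditional as an immediate consequence of Lemma~\ref{lem:pairCorrect} once the disjunction is rewritten as a maximum. The first step is the elementary observation that, for any fixed $\pmb{\delta}$, the statement $\bigvee_{i=1}^{n}\bigl(\vect{L_i}^{T}(\vect{x_i}+\pmb{\delta})+b_i \ge 0\bigr)$ holds if and only if $\max_{1\le i\le n}\bigl(\vect{L_i}^{T}(\vect{x_i}+\pmb{\delta})+b_i\bigr)\ge 0$, since a finite collection of real numbers contains a nonnegative element exactly when its maximum is nonnegative. Hence the antecedent of the theorem --- that the disjunction holds for \emph{every} $\pmb{\delta}$ with $\|\pmb{\delta}\|_{\infty}\le\epsilon$ --- is logically equivalent to the assertion that the function $\pmb{\delta}\mapsto \max_{1\le i\le n}\bigl(\vect{L_i}^{T}(\vect{x_i}+\pmb{\delta})+b_i\bigr)$ is nonnegative everywhere on the feasible set $\{\pmb{\delta}:\|\pmb{\delta}\|_{\infty}\le\epsilon\}$, which in turn is equivalent to the minimum of that function over the feasible set being nonnegative (every value dominates the minimum, and if every value is $\ge 0$ so is the minimum).

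Next I would invoke Lemma~\ref{lem:pairCorrect}, which identifies precisely this min-max quantity with the LP optimum: $\opt{t}=\min_{\|\pmb{\delta}\|_{\infty}\le\epsilon}\max_{1\le i\le n}\bigl(\vect{L_i}^{T}(\vect{x_i}+\pmb{\delta})+b_i\bigr)$, with the minimum attained because the feasible set is compact and the inner maximum of affine functions is continuous. Composing the two equivalences yields the claim. For the write-up I would spell out both directions: for the forward direction, if the disjunction holds at all feasible $\pmb{\delta}$ then $\max_{1\le i\le n}(\cdots)\ge 0$ everywhere on the feasible set, so its minimum $\opt{t}$ is $\ge 0$; for the converse, if $\opt{t}\ge 0$ then for an arbitrary feasible $\pmb{\delta}$ we have $\max_{1\le i\le n}(\cdots)\ge \opt{t}\ge 0$, hence $\bigvee_{i=1}^{n}(\cdots\ge 0)$ holds at that $\pmb{\delta}$, and since $\pmb{\delta}$ was arbitrary the universally quantified statement follows. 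As a byproduct this also gives Theorem~\ref{thm:lpoverPairExact} as the special case $n=2$.

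This argument is essentially a one-line corollary of Lemma~\ref{lem:pairCorrect}, so I do not expect a genuine obstacle. The only points deserving a moment of care are (i) checking that the extremum in Lemma~\ref{lem:pairCorrect} ranges over exactly the compact feasible set $\{\pmb{\delta}:\|\pmb{\delta}\|_{\infty}\le\epsilon\}$ appearing in the quantifier of the theorem, and (ii) the purely propositional step that a disjunction of facts of the form $(\cdot\ge 0)$ is equivalent to $\max(\cdot)\ge 0$; both are routine. All the substantive work --- relating the LP value to the worst-case value and establishing that the minimum is attained --- has already been done in Lemma~\ref{lem:pairCorrect}, so this theorem merely repackages it as an ``iff''.
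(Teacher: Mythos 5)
Your proposal is correct and follows essentially the same route as the paper: both reduce the theorem to Lemma~\ref{lem:pairCorrect}'s identity $\opt{t}=\min_{\|\pmb{\delta}\|_{\infty}\le\epsilon}\max_{1\le i\le n}(\vect{L_i}^{T}(\vect{x_i}+\pmb{\delta})+b_i)$ and then apply the elementary equivalence between a disjunction of nonnegativity facts and nonnegativity of the maximum, quantified over the feasible set. The only cosmetic difference is that the paper proves the ``only if'' direction by contraposition (exhibiting a $\pmb{\delta}$ where all terms are negative when $\opt{t}<0$) whereas you argue it directly; both are valid.
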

\begin{proof}
From lemma~\ref{lem:pairCorrect}, $\opt{t} = \min\limits_{\pmb{\delta} \in \mathbb{R}^{n_0},\;\|\pmb{\delta}\|_{\infty} \leq \epsilon} \max\limits_{1 \leq i \leq n}\vect{L_i}^{T}(\vect{x_i} + \pmb{\delta}) + b_i$.
\begin{align}
    (\opt{t} \geq 0) &\implies \left(\min\limits_{\pmb{\delta} \in \mathbb{R}^{n_0},\;\|\pmb{\delta}\|_{\infty} \leq \epsilon} \max_{1 \leq i \leq n}\vect{L_i}^{T}(\vect{x_i} + \pmb{\delta}) + b_i\right) \geq 0 \nonumber \\
    &\implies \left(\forall \pmb{\delta} \in \mathbb{R}^{n_0}. (\|\pmb{\delta}\|_{\infty} \leq \epsilon) \implies \bigvee_{i=1}^{n} (\vect{L_i}^{T} (\vect{x_i} + \pmb{\delta}) + b_i \geq 0)\right) \label{eq:directPair}
\end{align}
\begin{align}
    (\opt{t} < 0) &\implies \left(\min\limits_{\pmb{\delta} \in \mathbb{R}^{n_0},\;\|\pmb{\delta}\|_{\infty} \leq \epsilon} \max_{1 \leq i \leq 2}\vect{L_i}^{T}(\vect{x_i} + \pmb{\delta}) + b_i\right) < 0 \nonumber \\
    &\implies \left(\exists \pmb{\delta} \in \mathbb{R}^{n_0}. \bigwedge_{i=1}^{n} (\vect{L_i}^{T} (\vect{x_i} + \pmb{\delta}) + b_i < 0) \bigwedge (\|\pmb{\delta}\|_{\infty} \leq \epsilon)\right) \nonumber \\
    \neg(\opt{t} \geq 0) &\implies \neg\left(\forall \pmb{\delta} \in \mathbb{R}^{n_0}. (\|\pmb{\delta}\|_{\infty} \leq \epsilon) \implies \bigvee_{i=1}^{n} (\vect{L_i}^{T} (\vect{x_i} + \pmb{\delta}) + b_i \geq 0)\right) \label{eq:contraposPair}
\end{align}
Using Eq.~\ref{eq:directPair} and Eq.~\ref{eq:contraposPair}, $(\opt{t} \geq 0) \iff \left(\forall \pmb{\delta} \in \mathbb{R}^{n_0}. (\|\pmb{\delta}\|_{\infty} \leq \epsilon) \implies \bigvee_{i=1}^{n} (\vect{L_i}^{T} (\vect{x_i} + \pmb{\delta}) + b_i \geq 0)\right)$.
\end{proof}

\subsubsection{Details for computing the Lagrangian Dual}
\label{appendix:duallpPair}
Next, we provide the details for computing the Lagrangian Dual of the LP formulation in Eq.~\ref{eq:lpFormnExecution}. The Lagrangian Dual is as follows where for all $i \in [n]$, $\lambda_i \geq 0$ are Lagrange multipliers.
\begin{align*}
    \max\limits_{0 \leq \lambda_i} \min\limits_{t \in \mathbb{R}, \|\pmb{\delta}\|_{\infty} \leq \epsilon} (1 - \sum_{i=1}^{n} \lambda_{i}) \times t + \sum_{i=1}^{n} \lambda_i \times \left(\linapprx_{i}^T(\vect{x_i} + \pmb{\delta}) + b_{i}\right)
\end{align*}
We set the coefficient of the unbounded variable $t$ to $0$ to avoid cases where $ \min\limits_{t \in \mathbb{R}, \|\pmb{\delta}\|_{\infty} \leq \epsilon} (1 - \sum_{i=1}^{n} \lambda_{i}) \times t + \sum_{i=1}^{n} \lambda_i \times \left(\linapprx_{i}^T(\vect{x_i} + \pmb{\delta}) + b_{i}\right) = -\infty$. This leads to the following Lagrangian Dual form
\begin{align*}
    \max\limits_{0 \leq \lambda_i} \min\limits_{\|\pmb{\delta}\|_{\infty} \leq \epsilon} \sum_{i=1}^{n} \lambda_i \times \left(\linapprx_{i}^T(\vect{x_i} + \pmb{\delta}) + b_{i}\right) \;\;\;\text{where $\sum_{i=1}^{n} \lambda_{i} = 1$}
\end{align*}
For all $i \in [n]$, let parametric linear approximations of $N$ are specified by $(\linapprxalpha{i}, \biasalpha{i})$ then the Lagrangian Dual is as follows
\begin{align*}
    \max\limits_{0 \leq \lambda_i} \min\limits_{\|\pmb{\delta}\|_{\infty} \leq \epsilon} \sum_{i=1}^{n} \lambda_i \times \left(\linapprxalpha{i}^T(\vect{x_i} + \pmb{\delta}) + \biasalpha{i}\right) \;\;\;\text{where $\sum_{i=1}^{n} \lambda_{i} = 1$}
\end{align*}
\subsubsection{Theorems for cross-execution bound refinement over $n$ of executions}
\label{appendix:correctnessPair}
Let, the $\optappx{t}(\dual)$ denote the solution obtained by the optimization technique and $\optappx{\optparams}$ denote the value of $\optparams$ corresponding to $\optappx{t}(\dual)$. Note that $\optappx{t}(\dual)$ can be different from global maximum $\opt{t}(\dual)$ with $\opt{t}(\dual) > \optappx{t}(\dual)$. 
We show that if $\optappx{t}(\dual) \geq 0$ then $\forall \pmb{\delta} \in \mathbb{R}^{n_0}. (\|\pmb{\delta}\|_{\infty} \leq \epsilon) \implies \outspecset(\vect{y_1}, \dots \vect{y_{n}})$ holds where $\vect{y_i} = N(\vect{x_i} + \pmb{\delta})$ for all $i \in [n]$.  
First, we prove the correctness of the characterization of $\dual(\optparams)$. 
\begin{lemma}
\label{lem:dualLemma}
For all $i \in [n]$, $0 \leq \lambda_i \leq 1$, $\sum_{i=1}^{n} \lambda_{i} = 1$, $\vect{l}_{i} \preceq \param{i} \preceq \vect{u}_{i}$, if $\optparams = (\param{1}, \dots, \param{n}, \lambda_1, \dots, \lambda_{n})$ then $\forall \pmb{\delta} \in \mathbb{R}^{n_0}. (\|\pmb{\delta}\|_{\infty} \leq \epsilon) \implies (\dual(\optparams) = \min\limits_{\pmb{\delta} \in \mathbb{R}^{n_0},\; \|\pmb{\delta}\|_{\infty} \leq \epsilon} \sum\limits_{i=1}^{n} \lambda_i \times \left(\linapprxalpha{i}^T(\vect{x_i} + \pmb{\delta}) + \biasalpha{i}\right)$ where $\dual(\optparams) = \sum\limits_{i=1}^{n} \lambda_i \times \constalpha{i} - \epsilon \times  \sum\limits_{j=1}^{n_0}\left|\sum\limits_{i=1}^{n} \lambda_i \times \linapprxalpha{i}[j]\right|$ and $\constalpha{i} = \linapprxalpha{i}^T\vect{x_i} + \biasalpha{i}$.
\end{lemma}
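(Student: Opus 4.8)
The plan is to prove Lemma~\ref{lem:dualLemma} by direct calculation, separating the part of the objective that does not depend on $\pmb{\delta}$ from the part that does, and then minimizing the $\pmb{\delta}$-dependent part coordinate-wise over the box $\|\pmb{\delta}\|_{\infty} \leq \epsilon$. First I would expand the linear form inside the minimization: for each $i \in [n]$, write $\linapprxalpha{i}^T(\vect{x_i} + \pmb{\delta}) + \biasalpha{i} = \linapprxalpha{i}^T\vect{x_i} + \biasalpha{i} + \linapprxalpha{i}^T\pmb{\delta} = \constalpha{i} + \linapprxalpha{i}^T\pmb{\delta}$, using the definition $\constalpha{i} = \linapprxalpha{i}^T\vect{x_i} + \biasalpha{i}$. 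Summing with weights $\lambda_i$ and pulling the constant term out of the minimization (it is independent of $\pmb{\delta}$), the objective becomes $\sum_{i=1}^{n}\lambda_i \times \constalpha{i} + \min_{\|\pmb{\delta}\|_{\infty}\leq\epsilon}\left(\sum_{i=1}^{n}\lambda_i\linapprxalpha{i}\right)^T\pmb{\delta}$.

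Next I would evaluate the remaining minimization. Writing $\vect{w} = \sum_{i=1}^{n}\lambda_i\linapprxalpha{i} \in \mathbb{R}^{n_0}$, we have $\vect{w}^T\pmb{\delta} = \sum_{j=1}^{n_0} \vect{w}[j]\,\pmb{\delta}[j]$, and since the constraint $\|\pmb{\delta}\|_{\infty}\leq\epsilon$ decomposes as $|\pmb{\delta}[j]|\leq\epsilon$ independently for each coordinate $j$, the minimization separates: $\min_{\|\pmb{\delta}\|_{\infty}\leq\epsilon}\vect{w}^T\pmb{\delta} = \sum_{j=1}^{n_0}\min_{|\pmb{\delta}[j]|\leq\epsilon}\vect{w}[j]\,\pmb{\delta}[j]$. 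Each one-dimensional minimum is attained at $\pmb{\delta}[j] = -\epsilon\cdot\mathrm{sign}(\vect{w}[j])$, giving value $-\epsilon|\vect{w}[j]|$. Summing over $j$ and substituting back $\vect{w}[j] = \sum_{i=1}^{n}\lambda_i\linapprxalpha{i}[j]$ yields $-\epsilon\sum_{j=1}^{n_0}\left|\sum_{i=1}^{n}\lambda_i\linapprxalpha{i}[j]\right|$. Combining with the constant term gives exactly the claimed closed form $\dual(\optparams) = \sum_{i=1}^{n}\lambda_i\times\constalpha{i} - \epsilon\times\sum_{j=1}^{n_0}\left|\sum_{i=1}^{n}\lambda_i\times\linapprxalpha{i}[j]\right|$, which establishes the equality in the lemma.

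The argument is essentially a routine computation — it is the standard fact that the support function of an $\ell_\infty$-ball of radius $\epsilon$ is $\epsilon$ times the $\ell_1$ norm, applied to the aggregated linear coefficient vector. I do not expect any real obstacle here; the only point requiring a little care is making explicit that the $\epsilon$-box constraint on $\pmb{\delta}$ is separable across coordinates so that the minimum of the sum equals the sum of the per-coordinate minima, and that the minimizing $\pmb{\delta}$ is always feasible (it lies on the boundary of the box). I would also note in passing that the hypotheses $0\leq\lambda_i\leq 1$, $\sum_i\lambda_i = 1$, and $\vect{l}_i \preceq \param{i}\preceq\vect{u}_i$ are not actually needed for this identity itself — they are carried along because they are the constraints under which $\linapprxalpha{i}$ and $\biasalpha{i}$ are valid parametric linear lower bounds of $\vect{c_i}^T\vect{y_i}$ (from the setup preceding Eq.~\ref{eq:lpFormnExecution}), which matters for the subsequent correctness theorems but not for the closed-form derivation in this lemma.
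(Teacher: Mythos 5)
Your proof is correct and follows essentially the same route as the paper's: separate the $\pmb{\delta}$-independent term, exploit the coordinate-wise separability of the $\ell_\infty$-ball, and evaluate each one-dimensional minimum as $-\epsilon$ times the absolute value of the aggregated coefficient. Your side remark that the constraints on $\lambda_i$ and $\param{i}$ are not needed for the identity itself is also accurate.
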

\begin{proof}
First we rewrite $\dual(\optparams)$ in Eq.~\ref{eq:pairBoundProof}
and find the closed form on $\min\limits_{\pmb{\delta} \in \mathbb{R}^{n_0},\; \|\pmb{\delta}\|_{\infty} \leq \epsilon} \sum\limits_{i=1}^{n} \lambda_i \times \linapprxalpha{i}^T\pmb{\delta}$ in Eq.~\ref{eq:pairFinalproof}.
\begin{align}
    \sum_{i=1}^{n} \lambda_i \times \left(\linapprxalpha{i}^T(\vect{x_i} + \pmb{\delta}) + \biasalpha{i}\right) &= \sum_{i=1}^{n} \lambda_i \times \constalpha{i} + \sum_{i=1}^{n} \lambda_i \times \linapprxalpha{i}^T\pmb{\delta} \nonumber \\
\min\limits_{\pmb{\delta} \in \mathbb{R}^{n_0},\; \|\pmb{\delta}\|_{\infty} \leq \epsilon} \sum_{i=1}^{n} \lambda_i \times \left(\linapprxalpha{i}^T(\vect{x_i} + \pmb{\delta}) + \biasalpha{i}\right) &= \sum_{i=1}^{n} \lambda_i \times \constalpha{i} + \min\limits_{\pmb{\delta} \in \mathbb{R}^{n_0},\; \|\pmb{\delta}\|_{\infty} \leq \epsilon} \sum_{i=1}^{n} \lambda_i \times \linapprxalpha{i}^T\pmb{\delta} \label{eq:pairBoundProof}
\end{align}
Now for fixed $\param{i}$, both $\linapprxalpha{i}, \pmb{\delta} \in \mathbb{R}^{n_0}$ are constant real vectors. Suppose for $j \in [n_0]$, $\linapprxalpha{i}[j]$ and $\pmb{\delta}[j]$ denotes the $j$-th component of $\linapprxalpha{i}$ and $ \pmb{\delta}$ respectively. Then,
\begin{align}
    \linapprxalpha{i}^T\pmb{\delta} &= \sum_{j=1}^{n_0}\linapprxalpha{i}[j] \times \pmb{\delta}[j] \nonumber \\
    \sum_{i=1}^{n} \lambda_i \times \linapprxalpha{i}^T\pmb{\delta} &= \sum_{j=1}^{n_0}\left(\sum_{i=1}^{n} \lambda_{i} \times \linapprxalpha{i}[j]\right) \times \pmb{\delta}[j] \nonumber \\
     -\epsilon \times \left|\sum_{i=1}^{n} \lambda_{i} \times \linapprxalpha{i}[j]\right| &= \min\limits_{-\epsilon \leq \pmb{\delta}[j] \leq \epsilon}\left(\sum_{i=1}^{n} \lambda_{i} \times \linapprxalpha{i}[j]\right) \times \pmb{\delta}[j] \label{eq:intermediatePair} \\
     \min\limits_{\pmb{\delta} \in \mathbb{R}^{n_0},\; \|\pmb{\delta}\|_{\infty} \leq \epsilon} \sum_{i=1}^{n} \lambda_i \times \linapprxalpha{i}^T\pmb{\delta} &= \sum_{j=1}^{n_0} \min\limits_{-\epsilon \leq \pmb{\delta}[j] \leq \epsilon}\left(\sum_{i=1}^{n} \lambda_{i} \times \linapprxalpha{i}[j]\right) \times \pmb{\delta}[j] \label{eq:penultimate} \\
     \min\limits_{\pmb{\delta} \in \mathbb{R}^{n_0},\; \|\pmb{\delta}\|_{\infty} \leq \epsilon} \sum_{i=1}^{n} \lambda_i \times \linapprxalpha{i}^T\pmb{\delta} &=-\epsilon \times \sum_{j=1}^{n_0} \left|\sum_{i=1}^{n} \lambda_{i} \times \linapprxalpha{i}[j]\right| \;\;\text{using Eq~\ref{eq:intermediatePair} and Eq.~\ref{eq:penultimate}} \label{eq:pairFinalproof}
\end{align}
Combing Eq.~\ref{eq:pairBoundProof} and Eq.~\ref{eq:pairFinalproof} 
\begin{align*}
\min\limits_{\pmb{\delta} \in \mathbb{R}^{n_0},\; \|\pmb{\delta}\|_{\infty} \leq \epsilon} \sum_{i=1}^{n} \lambda_i \times \left(\linapprxalpha{i}^T(\vect{x_i} + \pmb{\delta}) + \biasalpha{i}\right) =  \sum_{i=1}^{n} \lambda_i \times \constalpha{i} -\epsilon \times \sum_{j=1}^{n_0} \left|\sum_{i=1}^{n} \lambda_{i} \times \linapprxalpha{i}[j]\right| = \dual(\optparams)
\end{align*}
\end{proof}

\begin{theorem}[Correctness of bound refinement over $n$ executions] If $\optappx{t}(\dual) \geq 0$ then $\left(\forall \pmb{\delta} \in \mathbb{R}^{n_0}. (\|\pmb{\delta}\| \leq \epsilon) \implies \outspecset(\vect{y_1}, \dots \vect{y_{n}})\right)$ holds where $\vect{y_i} = N(\vect{x_i} + \pmb{\delta})$ for all $i \in [n]$.
\end{theorem}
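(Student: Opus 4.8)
The plan is to chain together the closed-form characterization of $\dual(\optparams)$ (Lemma~\ref{lem:dualLemma}), the fact that the dual lower-bounds the true worst-case margin, and the correctness of the LP formulation (Theorem~\ref{thm:pairCorrect}). The key observation is that for \emph{any} fixed feasible choice of parameters $\optparams = (\param{1}, \dots, \param{n}, \lambda_1, \dots, \lambda_n)$ with $\vect{l}_i \preceq \param{i} \preceq \vect{u}_i$, $0 \le \lambda_i \le 1$, and $\sum_i \lambda_i = 1$, the quantity $\dual(\optparams)$ is a valid lower bound on $\opt{t}$, the optimum of the LP in Eq.~\ref{eq:lpFormnExecution}. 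This is exactly the weak-duality direction: $\dual(\optparams)$ is the value of the inner minimization of the Lagrangian at a feasible multiplier vector, so it is $\le \opt{t}$ for the corresponding (fixed) linear approximations $(\linapprxalpha{i}, \biasalpha{i})$.

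First I would invoke Lemma~\ref{lem:dualLemma} to rewrite $\dual(\optappx{\optparams})$ as $\min_{\|\pmb{\delta}\|_\infty \le \epsilon} \sum_{i=1}^n \lambda_i^{appx} \big(\linapprxalpha{i}^T(\vect{x_i}+\pmb{\delta}) + \biasalpha{i}\big)$ evaluated at the returned parameters $\optappx{\optparams}$. Since each $\lambda_i^{appx} \ge 0$ and $\sum_i \lambda_i^{appx} = 1$, and since for every $\pmb{\delta}$ with $\|\pmb{\delta}\|_\infty \le \epsilon$ we have $\linapprxalpha{i}^T(\vect{x_i}+\pmb{\delta}) + \biasalpha{i} \le \vect{c_i}^T\vect{y_i}$ (validity of the parametric linear approximation at $\optappx{\param{i}}$, since $\vect{l}_i \preceq \optappx{\param{i}} \preceq \vect{u}_i$), a convex combination gives $\sum_i \lambda_i^{appx}\big(\linapprxalpha{i}^T(\vect{x_i}+\pmb{\delta}) + \biasalpha{i}\big) \le \sum_i \lambda_i^{appx} \vect{c_i}^T\vect{y_i} \le \max_{1\le i\le n} \vect{c_i}^T\vect{y_i}$, the last step because a convex combination never exceeds the maximum. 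Taking $\min$ over $\|\pmb{\delta}\|_\infty \le \epsilon$ on both sides yields $\dual(\optappx{\optparams}) \le \min_{\|\pmb{\delta}\|_\infty \le \epsilon} \max_{1\le i\le n}\vect{c_i}^T\vect{y_i}$.

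Then, combining with the hypothesis $\optappx{t}(\dual) = \dual(\optappx{\optparams}) \ge 0$, I get $\min_{\|\pmb{\delta}\|_\infty \le \epsilon} \max_{1\le i\le n} \vect{c_i}^T\vect{y_i} \ge 0$, which is precisely the statement that for all $\pmb{\delta}$ with $\|\pmb{\delta}\|_\infty \le \epsilon$ there exists $i$ with $\vect{c_i}^T\vect{y_i} \ge 0$, i.e. $\outspecset(\vect{y_1},\dots,\vect{y_n})$ holds. (Equivalently, one could route through Theorem~\ref{thm:pairCorrect} directly: $\dual(\optappx{\optparams}) \le \opt{t}$ by weak duality, so $\opt{t} \ge 0$, and then Theorem~\ref{thm:pairCorrect} gives the conclusion.) The main obstacle — really the only non-bookkeeping point — is making the convex-combination/weak-duality step airtight: one must be careful that the bound $\linapprxalpha{i}^T(\vect{x_i}+\pmb{\delta}) + \biasalpha{i} \le \vect{c_i}^T\vect{y_i}$ holds pointwise in $\pmb{\delta}$ for the specific learned $\optappx{\param{i}}$ (which it does, since the optimizer is projected to keep $\optappx{\param{i}}$ within $[\vect{l}_i,\vect{u}_i]$, the valid parameter range), and that the $\min$–$\max$ inequality survives because $\min$ is monotone. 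No appeal to global optimality of the optimizer is needed, exactly as the remark after Theorem~\ref{thm:optimalBounds} notes.
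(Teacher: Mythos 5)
Your proposal is correct and follows essentially the same route as the paper's proof: apply Lemma~\ref{lem:dualLemma} to express $\dual(\optappx{\optparams})$ as the inner minimization at the learned (feasible) parameters, use the pointwise validity of the parametric linear bounds together with the convex combination $\sum_i \lambda_i = 1$, $\lambda_i \ge 0$ to bound it by $\min_{\|\pmb{\delta}\|_\infty \le \epsilon}\max_i \vect{c_i}^T\vect{y_i}$, and conclude. The only cosmetic difference is that you pass through the intermediate quantity $\sum_i \lambda_i \vect{c_i}^T\vect{y_i}$ whereas the paper bounds each term by the max first; these are the same argument.
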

\begin{proof}
$\optappx{t}(\dual) = \dual(\optappx{\optparams})$ where $\optappx{\optparams} = (\opt{\param{1}}, \dots, \opt{\param{n}}, \opt{\lambda_{1}}, \dots, \opt{\lambda_{n}})$ and for all $i \in [n]$, $\vect{l_i} \preceq \opt{\param{i}} \preceq \vect{u_i}$, $0 \leq \opt{\lambda_i}\leq 1$, $\sum\limits_{i=1}^{n} \opt{\lambda_i} = 1$.  Then using lemma~\ref{lem:dualLemma} we get
\begin{align}
    &\dual(\optappx{\optparams}) = \min\limits_{\pmb{\delta} \in \mathbb{R}^{n_0},\; \|\pmb{\delta}\|_{\infty} \leq \epsilon} \sum_{i=1}^{n} \opt{\lambda_i} \times \left(\linapprxoptalpha{i}^T(\vect{x_i} + \pmb{\delta}) + \biasoptalpha{i}\right) \label{eq:apprxOptimalDual}
\end{align}
Next we show that $\dual(\optappx{\optparams}) \leq \min\limits_{\pmb{\delta} \in \mathbb{R}^{n_0}, \|\pmb{\delta}\|_{\infty} \leq \epsilon}\max\limits_{1 \leq i \leq n} \vect{c_i}^{T}\vect{y_i}$ where $\vect{y_i} = N(\vect{x_i} + \pmb{\delta})$.
\begin{align}
    &\left(\linapprxoptalpha{i}^T(\vect{x_i} + \pmb{\delta}) + \biasoptalpha{i}\right) \leq \vect{c_i}^{T}\vect{y_i} \;\;\;\;\text{$\forall i \in [n]$, $\vect{y_i} = N(\vect{x_i} + \pmb{\delta})$ and $\|\pmb{\delta}\|_{\infty} \leq \epsilon$} \nonumber \\
    &\left(\linapprxoptalpha{i}^T(\vect{x_i} + \pmb{\delta}) + \biasoptalpha{i}\right) \leq \max\limits_{1 \leq i \leq n}\vect{c_i}^{T}\vect{y_i} \;\;\;\;\;\;\;\;\text{$\forall i \in [n]$ and $\|\pmb{\delta}\|_{\infty} \leq \epsilon$} \nonumber\\
    &\sum_{i=1}^{n} \opt{\lambda_i} \times \left(\linapprxoptalpha{i}^T(\vect{x_i} + \pmb{\delta}) + \biasoptalpha{i}\right) \leq \max\limits_{1 \leq i \leq n}\vect{c_i}^{T}\vect{y_i} \times \sum_{i=1}^{n} \opt{\lambda_i}\;\;\;\;\;\;\;\;\text{since $\forall i \in [n]$, $\opt{\lambda_i} \geq 0$ and $\|\pmb{\delta}\|_{\infty} \leq \epsilon$} \nonumber\\
    &\sum_{i=1}^{n} \opt{\lambda_i} \times \left(\linapprxoptalpha{i}^T(\vect{x_i} + \pmb{\delta}) + \biasoptalpha{i}\right) \leq \max\limits_{1 \leq i \leq n}\vect{c_i}^{T}\vect{y_i} \;\;\;\;\;\;\;\;\text{since $\sum\limits_{i=1}^{n}\opt{\lambda_i} = 1$ and $\|\pmb{\delta}\|_{\infty} \leq \epsilon$} \nonumber\\
    &\dual(\optappx{\optparams}) = \min\limits_{\pmb{\delta} \in \mathbb{R}^{n_0},\; \|\pmb{\delta}\|_{\infty} \leq \epsilon} \sum_{i=1}^{n} \opt{\lambda_i} \times \left(\linapprxoptalpha{i}^T(\vect{x_i} + \pmb{\delta}) + \biasoptalpha{i}\right) \leq \min\limits_{\pmb{\delta} \in \mathbb{R}^{n_0}, \|\pmb{\delta}\|_{\infty} \leq \epsilon}\max\limits_{1 \leq i \leq n} \vect{c_i}^{T}\vect{y_i} 
    \label{eq:finalProofDual}
\end{align}
Using Eq.~\ref{eq:finalProofDual} we show that
\begin{align*}
(\optappx{t}(\dual) \geq 0) \implies (\dual(\optappx{\optparams}) \geq 0) \implies  \left(\min\limits_{\pmb{\delta} \in \mathbb{R}^{n_0}, \|\pmb{\delta}\|_{\infty} \leq \epsilon}\max\limits_{1 \leq i \leq n} \vect{c_i}^{T}\vect{y_i}\right) \geq 0  \\
\implies \left(\forall \pmb{\delta} \in \mathbb{R}^{n_0}. (\|\pmb{\delta}\|_{\infty} \leq \epsilon) \implies \outspecset(\vect{y_1}, \dots \vect{y_{n}})\right)
\end{align*}
\end{proof}
Similar to Theorem~\ref{thm:optimalBounds}, we show the optimal solution $\opt{t}(\dual)$ obtained with $\dual(\optparams)$ is always as good as $\opt{t}(\naivedual)$ i.e. $ \opt{t}(\dual) \geq \opt{t}(\naivedual)$ for $n$ executions.
\begin{theorem} 
If $\opt{t}(\dual) = \max_{\optparams} \dual(\optparams)$ and $\opt{t}(\naivedual) = \max\limits_{\param{1}, \dots, \param{n}} \naivedual(\param{1}, \dots, \param{n})$ then $\opt{t}(\naivedual) \leq \opt{t}(\dual)$.
\end{theorem}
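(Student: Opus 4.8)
The plan is to reuse the argument of Theorem~\ref{thm:optimalBounds} essentially verbatim, since the pairwise case $n=2$ treated there never exploited anything specific to having exactly two executions. The underlying observation is that $\naivedual$ arises from $\dual$ by restricting the Lagrange multipliers $(\lambda_1,\dots,\lambda_n)$ to the vertices of the probability simplex $\{\lambda_i\geq 0,\ \sum_{i=1}^{n}\lambda_i=1\}$; since $\opt{t}(\dual)$ is a maximum over the full simplex (together with the parameter box), it can only dominate the restricted maximum.

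Concretely, I would first fix an arbitrary feasible parameter choice $\param{1},\dots,\param{n}$, i.e. $\vect{l}_{i}\preceq\param{i}\preceq\vect{u}_{i}$ for all $i\in[n]$. For each $i\in[n]$, form the augmented tuple $\optparams_i=(\param{1},\dots,\param{n},\lambda_1,\dots,\lambda_n)$ that keeps these parameters but sets $\lambda_i=1$ and $\lambda_j=0$ for $j\neq i$. Each such $\optparams_i$ is feasible for the outer maximization defining $\opt{t}(\dual)$. Evaluating the closed form $\dual(\optparams)=\sum_{i=1}^{n}\lambda_i\,\constalpha{i}-\epsilon\sum_{j=1}^{n_0}|\sum_{i=1}^{n}\lambda_i\,\linapprxalpha{i}[j]|$ (or, equivalently, invoking Lemma~\ref{lem:dualLemma}) at $\optparams_i$ makes every term with index $\neq i$ vanish, so $\dual(\optparams_i)=\min_{\|\pmb{\delta}\|_\infty\leq\epsilon}\linapprxalpha{i}^{T}(\vect{x_i}+\pmb{\delta})+\biasalpha{i}=\naivedual_i(\param{i})$, which is precisely the quantity of Eq.~\ref{eq:dualNaive}.

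Next I would chain the inequalities. Since $\opt{t}(\dual)=\max_{\optparams}\dual(\optparams)\geq\dual(\optparams_i)$ for every $i$, taking the maximum over $i$ yields $\opt{t}(\dual)\geq\max_{1\leq i\leq n}\naivedual_i(\param{i})=\naivedual(\param{1},\dots,\param{n})$. Because the left-hand side does not depend on $\param{1},\dots,\param{n}$ while the right-hand side was derived for an arbitrary feasible choice, I may maximize the right-hand side over all feasible $\param{1},\dots,\param{n}$, obtaining $\opt{t}(\dual)\geq\max_{\param{1},\dots,\param{n}}\naivedual(\param{1},\dots,\param{n})=\opt{t}(\naivedual)$.

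I do not expect any genuine obstacle: the whole proof is a feasibility-plus-domination argument, and the only points worth double-checking are (a) that the simplex-vertex multiplier vectors indeed satisfy the constraints $\lambda_i\in[0,1]$ and $\sum_{i=1}^{n}\lambda_i=1$ used in the definition of $\dual$, and (b) that the closed form for $\dual$ collapses to $\naivedual_i$ at those vertices — both immediate from the formula, with no new computation beyond the $n=2$ case.
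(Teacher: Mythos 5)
Your proposal is correct and matches the paper's proof essentially verbatim: both fix arbitrary feasible parameters, instantiate the multiplier vector at each vertex of the simplex to recover $\naivedual_i(\param{i})$, and then chain $\opt{t}(\dual) \geq \max_i \dual(\optparams_i) = \naivedual(\param{1},\dots,\param{n})$ before maximizing over the parameters. No differences worth noting.
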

\begin{proof}
For any $(\param{1}, \dots, \param{n})$ satisfying $\vect{l}_{i} \preceq \param{i} \preceq \vect{u}_{i}$ for all $i \in [n]$, we consider $\optparams_i = (\param{1}, \dots, \param{n}, \lambda_1=0, \dots, \lambda_{i} = 1, \dots, \lambda_n=0)$. 
Then $\dual(\optparams_i) = \min\limits_{\|\pmb{\delta}\|_{\infty} \leq \epsilon} \linapprxalpha{i}^T(\vect{x_i} + \pmb{\delta}) + \biasalpha{i}$. Since, $\opt{t}(\dual) \geq \dual(\optparams_i)$ for all $i \in [n]$ then $\opt{t}(\dual) \geq \max\limits_{1 \leq i \leq n}\dual(\optparams_i) = \naivedual(\param{1}, \dots, \param{n})$. Hence, $\opt{t}(\dual) \geq \max\limits_{\param{1}, \dots \param{n}} \naivedual(\param{1}, \param{2}) = \opt{t}(\naivedual)$.
\end{proof}

Next, we characterize one sufficient condition where $\opt{t}(\dual)$ is strictly better i.e. $\opt{t}(\dual) > \opt{t}(\naivedual)$. Note that Theorem~\ref{thm:strictBetter} shows one possible case where $\opt{t}(\dual)$ is strictly better and not the only possible condition where $\opt{t}(\dual) > \opt{t}(\naivedual)$ i.e. it is not necessary hold if $\opt{t}(\dual) > \opt{t}(\naivedual)$. Let, $(\opt{\param{1}}, \dots, \opt{\param{n}})$ be the optimal parameters corresponding to $\opt{t}(\naivedual)$.
\begin{theorem}
\label{thm:strictBetter}
If for all $i \in [n]$ there exists $j \in [n]$ such that $(\constoptalpha{j} - \constoptalpha{i})> \epsilon \times (\|\linapprxoptalpha{j}\|_{1} - \|\linapprxoptalpha{i}\|_{1})$ or $2\times\|\linapprxoptalpha{i}\|_{1} - \|\linapprxoptalpha{i} + \linapprxoptalpha{j}\|_{1} >  \frac{\constoptalpha{i}}{\epsilon} - \frac{\constoptalpha{j}}{\epsilon} $  holds then $\opt{t}(\dual) > \opt{t}(\naivedual)$.
\end{theorem}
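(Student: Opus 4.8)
The plan is to freeze the parameter blocks at the naive optimum $(\opt{\param{1}},\dots,\opt{\param{n}})$ and exhibit a single explicit choice of Lagrange multipliers for which $\dual$ already strictly exceeds $\opt{t}(\naivedual)$. Write $\vect{e}_i \in \mathbb{R}^{n}$ for the $i$-th standard basis vector, let $\Delta_{n} = \{\pmb{\lambda} \in \mathbb{R}^{n} : \lambda_i \geq 0,\ \sum_{i=1}^{n}\lambda_i = 1\}$, and for $\pmb{\lambda} \in \Delta_{n}$ set
\[
g(\pmb{\lambda}) \;=\; \sum_{i=1}^{n}\lambda_i\,\constoptalpha{i} \;-\; \epsilon\sum_{k=1}^{n_0}\Big|\sum_{i=1}^{n}\lambda_i\,\linapprxoptalpha{i}[k]\Big|,
\]
so that $g(\pmb{\lambda})$ is $\dual$ evaluated at the feasible argument whose parameter blocks are $\opt{\param{i}}$ and whose multiplier block is $\pmb{\lambda}$; this is feasible because the naive optimizers satisfy $\vect{l}_i \preceq \opt{\param{i}} \preceq \vect{u}_i$. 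Hence $\opt{t}(\dual) \geq g(\pmb{\lambda})$ for every $\pmb{\lambda} \in \Delta_{n}$, while $g(\vect{e}_i) = \constoptalpha{i} - \epsilon\|\linapprxoptalpha{i}\|_{1} = \naivedual_i(\opt{\param{i}})$, so $\opt{t}(\naivedual) = \max_{i \in [n]} g(\vect{e}_i)$; fix $i^{\star}$ attaining this maximum.

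The next step is to invoke the hypothesis at the index $i = i^{\star}$. Its first alternative, $\constoptalpha{j} - \constoptalpha{i^{\star}} > \epsilon(\|\linapprxoptalpha{j}\|_{1} - \|\linapprxoptalpha{i^{\star}}\|_{1})$, rearranges to $\naivedual_j(\opt{\param{j}}) > \naivedual_{i^{\star}}(\opt{\param{i^{\star}}})$, which is impossible since $i^{\star}$ maximizes $\naivedual_i(\opt{\param{i}})$ over $i \in [n]$. Therefore the hypothesis forces the second alternative, $2\|\linapprxoptalpha{i^{\star}}\|_{1} - \|\linapprxoptalpha{i^{\star}} + \linapprxoptalpha{j}\|_{1} > \tfrac{\constoptalpha{i^{\star}}}{\epsilon} - \tfrac{\constoptalpha{j}}{\epsilon}$, to hold for some witness $j \in [n]$ (necessarily $j \neq i^{\star}$, else the inequality reads $0 > 0$). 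Note that only the behaviour at the maximizing index is used; quantifying over all $i$ merely makes the hypothesis checkable without first locating $i^{\star}$.

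Finally I would test $g$ at the midpoint $\pmb{\lambda}^{m} = \tfrac12 \vect{e}_{i^{\star}} + \tfrac12 \vect{e}_{j}$. Since $\sum_{i}\lambda^{m}_i \linapprxoptalpha{i}[k] = \tfrac12(\linapprxoptalpha{i^{\star}}[k] + \linapprxoptalpha{j}[k])$ for every coordinate $k \in [n_0]$, the closed form gives $g(\pmb{\lambda}^{m}) = \tfrac12(\constoptalpha{i^{\star}} + \constoptalpha{j}) - \tfrac{\epsilon}{2}\|\linapprxoptalpha{i^{\star}} + \linapprxoptalpha{j}\|_{1}$, and a one-line rearrangement (multiply by $2$, then divide by $\epsilon > 0$) shows that $g(\pmb{\lambda}^{m}) > \constoptalpha{i^{\star}} - \epsilon\|\linapprxoptalpha{i^{\star}}\|_{1} = \opt{t}(\naivedual)$ is \emph{equivalent} to exactly the second-alternative inequality secured above. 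Combining these gives $\opt{t}(\dual) \geq g(\pmb{\lambda}^{m}) > \opt{t}(\naivedual)$, which is the claim. The main obstacle is locating the right certificate: realising that it suffices to perturb only the multipliers along the simplex edge $[\vect{e}_{i^{\star}}, \vect{e}_{j}]$, that the midpoint of that edge is precisely the point whose value of $\dual$ matches the stated second alternative, and --- a small bookkeeping point --- that optimality of the naive parameters excludes the first alternative at $i^{\star}$. The remaining work is the routine algebraic identity above.
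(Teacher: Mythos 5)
Your proposal is correct and follows essentially the same route as the paper's proof: the paper also fixes the parameter blocks at the naive optimum, picks the index $i_0$ attaining $\opt{t}(\naivedual)$, notes that maximality of $i_0$ rules out the first alternative so the second must hold for some witness $j_0$, and then evaluates $\dual$ at the multiplier assignment $\lambda_{i_0}=\lambda_{j_0}=\tfrac12$ (your simplex-edge midpoint) to get the strict inequality. The only cosmetic difference is that you make the exclusion of the first alternative and the case $j\neq i^{\star}$ explicit, which the paper leaves implicit.
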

\begin{proof}
Since $\opt{t}(\naivedual) = \max\limits_{1\leq i \leq k} \min\limits_{\pmb{\delta} \in \mathbb{R}^{n_0}, \|\pmb{\delta}\|_{\infty} \leq \epsilon} \linapprxoptalpha{i}^{T}\pmb{{\delta}} + \constoptalpha{i} = \max\limits_{1\leq i \leq k} -\epsilon \times \left(\sum_{j =1}^{n_0} \left|\linapprxoptalpha{i}[j]\right|\right) + \constoptalpha{i}$. This implies
$\opt{t}(\naivedual) = \max\limits_{1\leq i \leq k} - \epsilon \times \|\linapprxoptalpha{i}\|_{1} + \constoptalpha{i}$. Now for any $i_0 \in [n]$ if $\opt{t}(\naivedual) = - \epsilon \times \|\linapprxoptalpha{i_0}\|_{1} + \constoptalpha{i_0}$ (there exists at least one such $i_0$) then
\begin{align}
    &-\epsilon \times \|\linapprxoptalpha{i_0}\|_{1} + \constoptalpha{i_0} \geq - \epsilon \times \|\linapprxoptalpha{j}\|_{1} + \constoptalpha{j} \;\;\;\text{$\forall j \in [n]$} \nonumber \\
  &2\times\|\linapprxoptalpha{i_0}\|_{1} - \|\linapprxoptalpha{i_0} + \linapprxoptalpha{j_0}\|_{1} >  \frac{\constoptalpha{i_0}}{\epsilon} - \frac{\constoptalpha{j_0}}{\epsilon}  \;\;\text{for some $j_0 \in [n]$} \nonumber\\
  &\frac{1}{2}\times\left(- \epsilon \times (\|\linapprxoptalpha{i_0} + \linapprxoptalpha{j_0}\|_{1}) + \constoptalpha{i_0} + \constoptalpha{j_0} \right) > - \epsilon \times \|\linapprxoptalpha{i_0}\|_{1} + \constoptalpha{i_0} = \opt{t}(\naivedual) \label{eq:intermediateStrict}
\end{align}
$\opt{t}(\dual) = \max\limits_{\optparams} \dual(\optparams)$ now consider $\overline{\optparams} = (\opt{\param{1}}, \dots, \opt{\param{m}}, \lambda_1=0, \dots, \lambda_{i_0} = \frac{1}{2}, \dots, \lambda_{j_0} = \frac{1}{2}, \dots \lambda_n=0)$
\begin{align*}
\opt{t}(\dual) \geq \dual(\overline{\optparams}) = \frac{1}{2}\times\left(- \epsilon \times (\|\linapprxoptalpha{i_0} + \linapprxoptalpha{j_0}\|_{1}) + \constoptalpha{i_0} + \constoptalpha{j_0} \right) \\
\opt{t}(\dual) > - \epsilon \times \|\linapprxoptalpha{i_0}\|_{1} + \constoptalpha{i_0} = \opt{t}(\naivedual) \;\;\;\text{Using Eq.~\ref{eq:intermediateStrict}}
\end{align*}
\end{proof}
One simple example where this sufficient condition holds is $\constoptalpha{i} = \constoptalpha{j} = 0$ and $\linapprxoptalpha{i_0} = -\linapprxoptalpha{j_0}$ and $-\linapprxoptalpha{i_0}$ and $-\linapprxoptalpha{j_0}$ are non-zero vectors. 

\subsection{Cross-execution bound refinement for conjunction of linear inequalities}
We consider $n$ executions of $N$ on perturbed inputs given by $\{\vect{x_1} + \pmb{\delta}, \dots, \vect{x_n} + \pmb{\delta}\}$. In this case, to prove the absence of \textbf{common} adversarial perturbation we need to show for all $i \in [n]$ the outputs $\vect{y_i} = N(\vect{x_i} + \pmb{\delta})$ satisfy $\outspecset(\vect{y_1}, \dots, \vect{y_n}) = \bigvee_{i=1}^{n} \outpropparam{i}(\vect{y_i})$. Here, $\outpropparam{i}(\vect{y_i}) = \bigwedge_{j=1}^{m} (\vect{c_{i, j}}^{T}\vect{y_{i}} \geq 0)$ and $\vect{c_{i, j}} \in \mathbb{R}^{n_l}$. First, we prove lemmas necessary for characterizing the optimizable closed form that can be used for bound refinement. 
\begin{lemma}
\label{lem:multiple1}
$\forall \pmb{\delta} \in \mathbb{R}^{n_0}. \left( (\|\pmb{\delta}\|_{\infty} \leq \epsilon)\implies \outspecset(\vect{y_1, \dots, \vect{y_n}})\right)$ if and only if $\left(\min\limits_{\pmb{\delta} \in \mathbb{R}^{n_0},  (\|\pmb{\delta}\|_{\infty} \leq \epsilon)}\max\limits_{1 \leq i \leq n} \min\limits_{1 \leq j \leq m} \vect{c_{i,j}}^{T}\vect{y_i}\right) \geq 0$ where for all $i \in [n]$, $\vect{y_i} = N(\vect{x_i} + \pmb{\delta})$, $\outspecset(\vect{y_1}, \dots, \vect{y_n}) = \bigvee_{i=1}^{n} \outpropparam{i}(\vect{y_i})$ and $\outpropparam{i}(\vect{y_i}) = \bigwedge_{j=1}^{m} (\vect{c_{i, j}}^{T}\vect{y_{i}} \geq 0)$.
\end{lemma}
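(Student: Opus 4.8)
The plan is to prove Lemma~\ref{lem:multiple1} by reducing the conjunction-of-inequalities output specification to the single-inequality case already handled, and then invoking the $\min$--$\max$ reformulation from Lemma~\ref{lem:pairCorrect}. The crucial observation is that for a fixed execution $i$, the clause $\outpropparam{i}(\vect{y_i}) = \bigwedge_{j=1}^{m}(\vect{c_{i,j}}^{T}\vect{y_i} \geq 0)$ holds if and only if $\min_{1 \leq j \leq m} \vect{c_{i,j}}^{T}\vect{y_i} \geq 0$, since a conjunction of real inequalities $\bigwedge_j (r_j \geq 0)$ is equivalent to $(\min_j r_j) \geq 0$. This collapses each $\outpropparam{i}$ into a single scalar quantity $s_i(\pmb{\delta}) := \min_{1 \leq j \leq m}\vect{c_{i,j}}^{T}N(\vect{x_i}+\pmb{\delta})$.

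First I would handle the inner rewriting: for any fixed $\pmb{\delta}$ with $\|\pmb{\delta}\|_{\infty} \leq \epsilon$, the disjunction $\outspecset(\vect{y_1},\dots,\vect{y_n}) = \bigvee_{i=1}^{n}\outpropparam{i}(\vect{y_i})$ is equivalent to $\bigvee_{i=1}^{n}(s_i(\pmb{\delta}) \geq 0)$, which in turn is equivalent to $(\max_{1 \leq i \leq n} s_i(\pmb{\delta})) \geq 0$, again because a disjunction of real inequalities $\bigvee_i(r_i \geq 0)$ holds iff $\max_i r_i \geq 0$. Substituting the definition of $s_i$ gives that $\outspecset(\vect{y_1},\dots,\vect{y_n})$ holds iff $\max_{1 \leq i \leq n}\min_{1 \leq j \leq m}\vect{c_{i,j}}^{T}\vect{y_i} \geq 0$.

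Next I would quantify over $\pmb{\delta}$. The statement $\forall \pmb{\delta}. (\|\pmb{\delta}\|_{\infty} \leq \epsilon \implies \outspecset(\dots))$ is, by the previous paragraph, equivalent to: for all $\pmb{\delta}$ with $\|\pmb{\delta}\|_{\infty} \leq \epsilon$, the quantity $g(\pmb{\delta}) := \max_{1 \leq i \leq n}\min_{1 \leq j \leq m}\vect{c_{i,j}}^{T}\vect{y_i}$ satisfies $g(\pmb{\delta}) \geq 0$. A universally-quantified nonnegativity statement $\forall \pmb{\delta} \in S.\, g(\pmb{\delta}) \geq 0$ is equivalent to $\inf_{\pmb{\delta} \in S} g(\pmb{\delta}) \geq 0$ (one direction is immediate; the other uses that if the infimum is $\geq 0$ then each value is $\geq 0$). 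Here $S = \{\pmb{\delta} : \|\pmb{\delta}\|_{\infty} \leq \epsilon\}$ is compact and, since $N$ is continuous, $g$ is continuous, so the infimum is attained and can be written as a $\min$. This yields exactly $\min_{\pmb{\delta}, \|\pmb{\delta}\|_{\infty}\leq\epsilon}\max_{1\leq i\leq n}\min_{1\leq j\leq m}\vect{c_{i,j}}^{T}\vect{y_i} \geq 0$, which is the right-hand side of the lemma. Both directions follow since every step above is an ``if and only if''.

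I expect the main obstacle — really a bookkeeping subtlety rather than a deep one — to be making the $\min$/$\max$-over-inequalities equivalences airtight, particularly the step exchanging a universal quantifier over $\pmb{\delta}$ with an infimum and then arguing the infimum is attained as a minimum. This needs either a compactness/continuity argument (the $L_\infty$ ball is compact, $N$ is continuous, and $\min$/$\max$ of finitely many continuous functions is continuous) or, if one wants to avoid attainment, a careful statement in terms of $\inf$; since the rest of the paper freely writes $\min$ over this region (e.g.\ in Lemma~\ref{lem:pairCorrect}), I would follow that convention and simply note attainment once. Apart from that, the proof is a routine chain of logical equivalences, so I would write it compactly as a two-direction argument or as a single displayed chain of ``$\iff$'' steps.
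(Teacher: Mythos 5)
Your proposal is correct and follows essentially the same route as the paper's proof: both translate the conjunction into a $\min$ over $j$, the disjunction into a $\max$ over $i$, and then exchange the universal quantifier over $\pmb{\delta}$ with the minimization over the $L_\infty$ ball (the paper writes this as two one-directional implication chains, one by contraposition, rather than a single chain of equivalences). Your added remark on attainment of the minimum via compactness and continuity is a minor tightening the paper leaves implicit, not a different argument.
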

\begin{proof}
We first show if $\left(\min\limits_{\pmb{\delta} \in \mathbb{R}^{n_0},  (\|\pmb{\delta}\|_{\infty} \leq \epsilon)}\max\limits_{1 \leq i \leq n} \min\limits_{1 \leq j \leq m} \vect{c_{i,j}}^{T}\vect{y_i}\right) \geq 0$ then $\forall \pmb{\delta} \in \mathbb{R}^{n_0}. \left( (\|\pmb{\delta}\|_{\infty} \leq \epsilon)\implies \outspecset(\vect{y_1, \dots, \vect{y_n}})\right)$.
\begin{align}
\left(\min\limits_{\pmb{\delta} \in \mathbb{R}^{n_0},  (\|\pmb{\delta}\|_{\infty} \leq \epsilon)}\max\limits_{1 \leq i \leq n} \min\limits_{1 \leq j \leq m} \vect{c_{i,j}}^{T}\vect{y_i}\right) \geq 0 &\implies (\forall \pmb{\delta} \in \mathbb{R}^{n_0}. (\|\pmb{\delta}\|_{\infty} \leq \epsilon)\implies (\max\limits_{1 \leq i \leq n} \min\limits_{1 \leq j \leq m} \vect{c_{i,j}}^{T}\vect{y_i}) \geq 0)  \nonumber\\
&\implies (\forall \pmb{\delta} \in \mathbb{R}^{n_0}. (\|\pmb{\delta}\|_{\infty} \leq \epsilon)\implies \vee_{i=1}^{n} ((\min\limits_{1 \leq j \leq m} \vect{c_{i,j}}^{T}\vect{y_i}) \geq 0))
 \nonumber\\
&\implies \left(\forall \pmb{\delta} \in \mathbb{R}^{n_0}. (\|\pmb{\delta}\|_{\infty} \leq \epsilon)\implies \vee_{i=1}^{n} \wedge_{j=1}^{m}(\vect{c_{i,j}}^{T}\vect{y_i} \geq 0)
\right) \nonumber\\
&\implies\left(\forall \pmb{\delta} \in \mathbb{R}^{n_0}. \left( (\|\pmb{\delta}\|_{\infty} \leq \epsilon)\implies \outspecset(\vect{y_1, \dots, \vect{y_n}})\right)\right) 
\end{align}
Next, we show if $\forall \pmb{\delta} \in \mathbb{R}^{n_0}. \left( (\|\pmb{\delta}\|_{\infty} \leq \epsilon)\implies \outspecset(\vect{y_1, \dots, \vect{y_n}})\right)$ then $\left(\min\limits_{\pmb{\delta} \in \mathbb{R}^{n_0},  (\|\pmb{\delta}\|_{\infty} \leq \epsilon)}\max\limits_{1 \leq i \leq n} \min\limits_{1 \leq j \leq m} \vect{c_{i,j}}^{T}\vect{y_i}\right) \geq 0$.
\begin{align}
\left(\min\limits_{\pmb{\delta} \in \mathbb{R}^{n_0},  (\|\pmb{\delta}\|_{\infty} \leq \epsilon)}\max\limits_{1 \leq i \leq n} \min\limits_{1 \leq j \leq m} \vect{c_{i,j}}^{T}\vect{y_i}\right) < 0 &\implies  (\exists \pmb{\delta} \in \mathbb{R}^{n_0}. (\|\pmb{\delta}\|_{\infty} \leq \epsilon) \wedge ((\max\limits_{1 \leq i \leq n} \min\limits_{1 \leq j \leq m} \vect{c_{i,j}}^{T}\vect{y_i}) < 0))\nonumber \\
&\implies (\exists \pmb{\delta} \in \mathbb{R}^{n_0}. (\|\pmb{\delta}\|_{\infty} \leq \epsilon) \wedge \neg (\vee_{i=1}^{n} \outpropparam{i}(\vect{y_i})))\nonumber \\
&\implies \neg (\forall \pmb{\delta} \in \mathbb{R}^{n_0}.  (\|\pmb{\delta}\|_{\infty} \leq \epsilon)\implies \outspecset(\vect{y_1, \dots, \vect{y_n}})) \label{eq:finalLemmam}
\end{align}
Eq.~\ref{eq:finalLemmam} is equivalent to showing the following
\begin{align*}
\left(\forall \pmb{\delta} \in \mathbb{R}^{n_0}. \left( (\|\pmb{\delta}\|_{\infty} \leq \epsilon)\implies \outspecset(\vect{y_1, \dots, \vect{y_n}})\right)\right) \implies \left(\min\limits_{\pmb{\delta} \in \mathbb{R}^{n_0},  (\|\pmb{\delta}\|_{\infty} \leq \epsilon)}\max\limits_{1 \leq i \leq n} \min\limits_{1 \leq j \leq m} \vect{c_{i,j}}^{T}\vect{y_i}\right) \geq 0    
\end{align*}
\end{proof}

\begin{lemma}
\label{lem:multiple2}
$\min\limits_{\pmb{\delta} \in \mathbb{R}^{n_0},  (\|\pmb{\delta}\|_{\infty} \leq \epsilon)}\max\limits_{1 \leq i \leq n} \min\limits_{1 \leq j \leq m} \vect{c_{i,j}}^{T}\vect{y_i} = \min\limits_{j_1 \in [m], \dots, j_{n} \in [m]}S(j_1, \dots, j_{n})$
where for all $i \in [n]$ and $j_{i} \in [m]$ $S(j_1, \dots, j_{n})$ is defined as $S(j_1, \dots, j_{n}) = \min\limits_{\pmb{\delta} \in \mathbb{R}^{n_0},  (\|\pmb{\delta}\|_{\infty} \leq \epsilon)}\max\limits_{1 \leq i \leq n} \vect{c_{i,j_{i}}}^{T}\vect{y_i}$.
\end{lemma}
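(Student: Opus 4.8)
The plan is to prove the identity first \emph{pointwise in} $\pmb{\delta}$ and then take the minimum over $\pmb{\delta}$, using that the tuple index set $[m]^{n}$ is finite so the two resulting minimizations commute. Throughout, fix any $\pmb{\delta}\in\mathbb{R}^{n_0}$ with $\|\pmb{\delta}\|_{\infty}\leq\epsilon$, write $\vect{y_i}=N(\vect{x_i}+\pmb{\delta})$, and set $a_i=\min_{1\leq j\leq m}\vect{c_{i,j}}^{T}\vect{y_i}$, which is attained at some $j_i^{*}\in\argmin_{1\leq j\leq m}\vect{c_{i,j}}^{T}\vect{y_i}$.

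\textbf{Step 1 (pointwise identity).} I would show $\max_{1\leq i\leq n}a_i=\min_{(j_1,\dots,j_n)\in[m]^{n}}\max_{1\leq i\leq n}\vect{c_{i,j_i}}^{T}\vect{y_i}$ by two inequalities. For ``$\leq$'': for every tuple $(j_1,\dots,j_n)$ and every $i$ we have $\vect{c_{i,j_i}}^{T}\vect{y_i}\geq a_i$, hence $\max_i\vect{c_{i,j_i}}^{T}\vect{y_i}\geq\max_i a_i$, and taking the minimum over all tuples preserves this bound. For ``$\geq$'': instantiate the tuple at $(j_1^{*},\dots,j_n^{*})$, for which $\max_i\vect{c_{i,j_i^{*}}}^{T}\vect{y_i}=\max_i a_i$, so the minimum over all tuples is at most $\max_i a_i$. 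Combining the two gives the claimed equality for this $\pmb{\delta}$; since $\pmb{\delta}$ was arbitrary, the identity $\max_{1\leq i\leq n}\min_{1\leq j\leq m}\vect{c_{i,j}}^{T}\vect{y_i}=\min_{(j_1,\dots,j_n)\in[m]^{n}}\max_{1\leq i\leq n}\vect{c_{i,j_i}}^{T}\vect{y_i}$ holds as functions of $\pmb{\delta}$.

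\textbf{Step 2 (minimize over $\pmb{\delta}$, commute the minima).} Applying $\min_{\pmb{\delta}\in\mathbb{R}^{n_0},\,\|\pmb{\delta}\|_{\infty}\leq\epsilon}$ to both sides of the pointwise identity yields
\begin{align*}
\min_{\pmb{\delta}}\max_{1\leq i\leq n}\min_{1\leq j\leq m}\vect{c_{i,j}}^{T}\vect{y_i}
= \min_{\pmb{\delta}}\;\min_{(j_1,\dots,j_n)\in[m]^{n}}\;\max_{1\leq i\leq n}\vect{c_{i,j_i}}^{T}\vect{y_i}.
\end{align*}
On the right-hand side the minimization over $(j_1,\dots,j_n)$ ranges over the finite set $[m]^{n}$, so it may be pulled outside the minimization over $\pmb{\delta}$; swapping the order gives $\min_{(j_1,\dots,j_n)\in[m]^{n}}\min_{\pmb{\delta}}\max_{1\leq i\leq n}\vect{c_{i,j_i}}^{T}\vect{y_i}$, which is exactly $\min_{(j_1,\dots,j_n)\in[m]^{n}}S(j_1,\dots,j_n)$ by the definition of $S$. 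This is the statement of the lemma.

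\textbf{Expected difficulty.} There is essentially no obstacle: the argument is a routine manipulation of nested $\min/\max$. The only point needing care is keeping the two kinds of minima straight — the inner $\min$ over labels $j$ in the original expression collapses to evaluation at the minimizing label $j_i^{*}$, and this is precisely what lets the finite tuple-min be dominated by $\max_i a_i$ on one side while dominating it on the other. Finiteness of $[m]^{n}$ (hence the legality of reordering that minimization past $\min_{\pmb{\delta}}$) should be stated explicitly, though it is immediate, and no compactness or continuity assumption on $N$ is needed for this step.
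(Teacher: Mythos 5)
Your proof is correct and rests on the same two ingredients as the paper's: the domination $\vect{c_{i,j_i}}^{T}\vect{y_i}\geq\min_{1\leq j\leq m}\vect{c_{i,j}}^{T}\vect{y_i}$ for one inequality, and instantiation of the tuple at the componentwise argmin for the other. The organization differs in one respect worth noting: the paper argues directly on the full nested expression and, for its ``$\leq$'' direction, selects an optimal $\pmb{\delta}^{*}$ attaining the outer minimum (an attainment it asserts without comment; it is justified by compactness of the $L_\infty$ ball and continuity of $N$, but is not free). You instead establish the identity pointwise in $\pmb{\delta}$ and then commute the minimization over the finite set $[m]^{n}$ with the minimization over $\pmb{\delta}$, which sidesteps any attainment question for the $\pmb{\delta}$-minimum and needs only that the inner minimum over the finite label set is attained. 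Both arguments are valid; yours is marginally more robust and equally short.
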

\begin{proof}
First, we show $\min\limits_{\pmb{\delta} \in \mathbb{R}^{n_0},  (\|\pmb{\delta}\|_{\infty} \leq \epsilon)}\max\limits_{1 \leq i \leq n} \min\limits_{1 \leq j \leq m} \vect{c_{i,j}}^{T}\vect{y_i} \leq \min\limits_{j_1 \in [m], \dots, j_{n} \in [m]}S(j_1, \dots, j_{n})$.
\begin{align}
&\vect{c_{i,j_{i}}}^{T}\vect{y_i} \geq \min\limits_{1 \leq j \leq m} \vect{c_{i,j}}^{T}\vect{y_i} \;\;\;\;\text{$\forall i \in [n]$ and $\forall j_{i} \in [m]$} \nonumber \\
&S(j_1, \dots, j_{n}) = \min\limits_{\pmb{\delta} \in \mathbb{R}^{n_0},  (\|\pmb{\delta}\|_{\infty} \leq \epsilon)}\max\limits_{1 \leq i \leq n} \vect{c_{i,j_{i}}}^{T}\vect{y_i} \geq \min\limits_{\pmb{\delta} \in \mathbb{R}^{n_0},  (\|\pmb{\delta}\|_{\infty} \leq \epsilon)}\max\limits_{1 \leq i \leq n} \min\limits_{1 \leq j \leq m} \vect{c_{i,j}}^{T}\vect{y_i} \;\;\;\;\text{$\forall j_{1} \in [m], \dots, j_{n} \in [m]$} \nonumber\\
&\min\limits_{j_1 \in [m], \dots, j_{n} \in [m]}S(j_1, \dots, j_{n}) \geq \min\limits_{\pmb{\delta} \in \mathbb{R}^{n_0},  (\|\pmb{\delta}\|_{\infty} \leq \epsilon)}\max\limits_{1 \leq i \leq n} \min\limits_{1 \leq j \leq m} \vect{c_{i,j}}^{T}\vect{y_i}
\label{eq:greaterthan}
\end{align}

Next, we show $\min\limits_{\pmb{\delta} \in \mathbb{R}^{n_0},  (\|\pmb{\delta}\|_{\infty} \leq \epsilon)}\max\limits_{1 \leq i \leq n} \min\limits_{1 \leq j \leq m} \vect{c_{i,j}}^{T}\vect{y_i} \geq \min\limits_{j_1 \in [m], \dots, j_{n} \in [m]}S(j_1, \dots, j_{n})$. 
There exists $\opt{\pmb{\delta}} \in \mathbb{R}^{n_0}$ such that $\|\opt{\pmb{\delta}}\|_{\infty} \leq \epsilon$, $\opt{\vect{y_i}} = N(\vect{x_i} + \opt{\pmb{\delta}})$ and $\max\limits_{1 \leq i \leq n} \min\limits_{1 \leq j \leq m} \vect{c_{i,j}}^{T}\opt{\vect{y_i}} = \min\limits_{\pmb{\delta} \in \mathbb{R}^{n_0},  (\|\pmb{\delta}\|_{\infty} \leq \epsilon)}\max\limits_{1 \leq i \leq n} \min\limits_{1 \leq j \leq m} \vect{c_{i,j}}^{T}\vect{y_i}$.
Let, $\opt{j_i} = \argmin\limits_{1 \leq j \leq m} \vect{c_{i,j}}^{T}\opt{\vect{y_i}}$ then

\begin{align}
&S(\opt{j_1}, \dots, \opt{j_{n}}) = \min\limits_{\pmb{\delta} \in \mathbb{R}^{n_0},  (\|\pmb{\delta}\|_{\infty} \leq \epsilon)}\max\limits_{1 \leq i \leq n} \vect{c_{i,\opt{j_{i}}}}^{T}\vect{y_i} \nonumber\\
& S(\opt{j_1}, \dots, \opt{j_{n}})\leq \max\limits_{1 \leq i \leq n} \vect{c_{i,\opt{j_{i}}}}^{T}\opt{\vect{y_i}} = \max\limits_{1 \leq i \leq n} \min\limits_{1 \leq j \leq m} \vect{c_{i,j}}^{T}\opt{\vect{y_i}} \;\;\;\;\text{since $\opt{j_i} = \argmin\limits_{1 \leq j \leq m} \vect{c_{i,j}}^{T}\opt{\vect{y_i}}$} \nonumber \\
&\min\limits_{j_1 \in [m], \dots, j_{n} \in [m]}S(j_1, \dots, j_{n}) \leq S(\opt{j_1}, \dots, \opt{j_{n}}) \leq \min\limits_{\pmb{\delta} \in \mathbb{R}^{n_0},  (\|\pmb{\delta}\|_{\infty} \leq \epsilon)}\max\limits_{1 \leq i \leq n} \min\limits_{1 \leq j \leq m} \vect{c_{i,j}}^{T}\vect{y_i} \label{eq:lessthan}
\end{align}
Combining Eq.~\ref{eq:greaterthan} and Eq.~\ref{eq:lessthan} we show $\min\limits_{\pmb{\delta} \in \mathbb{R}^{n_0},  (\|\pmb{\delta}\|_{\infty} \leq \epsilon)}\max\limits_{1 \leq i \leq n} \min\limits_{1 \leq j \leq m} \vect{c_{i,j}}^{T}\vect{y_i} = \min\limits_{j_1 \in [m], \dots, j_{n} \in [m]}S(j_1, \dots, j_{n})$.
\end{proof}

\begin{theorem}
\label{lem:multipleToSingle}
$\forall \pmb{\delta} \in \mathbb{R}^{n_0}. \left( (\|\pmb{\delta}\|_{\infty} \leq \epsilon)\implies \outspecset(\vect{y_1, \dots, \vect{y_n}})\right)$ if and only if   $\left(\min\limits_{j_1 \in [m], \dots, j_{n} \in [m]}S(j_1, \dots, j_{n})\right) \geq 0$ where for all $i \in [n]$, $\vect{y_i} = N(\vect{x_i} + \pmb{\delta})$, $\outspecset(\vect{y_1}, \dots, \vect{y_n}) = \bigvee_{i=1}^{n} \outpropparam{i}(\vect{y_i})$, $\outpropparam{i}(\vect{y_i}) = \bigwedge_{j=1}^{m} (\vect{c_{i, j}}^{T}\vect{y_{i}} \geq 0)$ and $S(j_1, \dots, j_{n}) = \min\limits_{\pmb{\delta} \in \mathbb{R}^{n_0},  (\|\pmb{\delta}\|_{\infty} \leq \epsilon)}\max\limits_{1 \leq i \leq n} \vect{c_{i,j_{i}}}^{T}\vect{y_i}$.
\end{theorem}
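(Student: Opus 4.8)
The plan is to obtain the statement simply by chaining the two preceding lemmas, so the work has essentially already been done and this theorem is a corollary. First I would invoke Lemma~\ref{lem:multiple1}, which establishes the equivalence
\[
\left(\forall \pmb{\delta} \in \mathbb{R}^{n_0}.\ (\|\pmb{\delta}\|_{\infty} \leq \epsilon)\implies \outspecset(\vect{y_1},\dots,\vect{y_n})\right)
\iff
\left(\min\limits_{\pmb{\delta} \in \mathbb{R}^{n_0},\ \|\pmb{\delta}\|_{\infty} \leq \epsilon}\max\limits_{1 \leq i \leq n} \min\limits_{1 \leq j \leq m} \vect{c_{i,j}}^{T}\vect{y_i}\right) \geq 0,
\]
where $\vect{y_i} = N(\vect{x_i} + \pmb{\delta})$. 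This reduces the logical relational specification to a single scalar nonnegativity condition on a min-max-min quantity.

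Next I would apply Lemma~\ref{lem:multiple2}, which gives the identity
\[
\min\limits_{\pmb{\delta} \in \mathbb{R}^{n_0},\ \|\pmb{\delta}\|_{\infty} \leq \epsilon}\max\limits_{1 \leq i \leq n} \min\limits_{1 \leq j \leq m} \vect{c_{i,j}}^{T}\vect{y_i} = \min\limits_{j_1 \in [m], \dots, j_{n} \in [m]}S(j_1, \dots, j_{n}).
\]
Substituting this equality into the right-hand side of the equivalence from Lemma~\ref{lem:multiple1} immediately yields
\[
\left(\forall \pmb{\delta} \in \mathbb{R}^{n_0}.\ (\|\pmb{\delta}\|_{\infty} \leq \epsilon)\implies \outspecset(\vect{y_1},\dots,\vect{y_n})\right)
\iff
\left(\min\limits_{j_1 \in [m], \dots, j_{n} \in [m]}S(j_1, \dots, j_{n})\right) \geq 0,
\]
which is exactly the claimed statement.

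Since both Lemma~\ref{lem:multiple1} and Lemma~\ref{lem:multiple2} are already proved in the excerpt, there is no remaining obstacle for this theorem itself; the substantive content lives in those two lemmas. The only point to state carefully is that the quantities match up syntactically: the inner minimization over $\pmb{\delta}$ in the definition of $S(j_1,\dots,j_n)$ is over the same $\ell_\infty$ ball, and the distinguished output coordinate families $\{\vect{c_{i,j}}\}$ are the same in both lemmas, so the composition is legitimate without any side conditions. If one wanted to be fully self-contained, the main effort would be re-deriving Lemma~\ref{lem:multiple2}, whose nontrivial direction relies on picking, for an optimal perturbation $\opt{\pmb{\delta}}$, the index $\opt{j_i} = \argmin_{j} \vect{c_{i,j}}^T N(\vect{x_i}+\opt{\pmb{\delta}})$ to witness the reverse inequality; but here we are entitled to cite it directly.
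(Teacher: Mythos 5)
Your proposal is correct and matches the paper's own proof exactly: the paper also derives this theorem by chaining Lemma~\ref{lem:multiple1} (equivalence of the relational specification with nonnegativity of the min-max-min quantity) and Lemma~\ref{lem:multiple2} (the identity rewriting that quantity as $\min_{j_1,\dots,j_n} S(j_1,\dots,j_n)$). No gaps.
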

\begin{proof}
Follows from lemma~\ref{lem:multiple1} and lemma~\ref{lem:multiple2}.
\end{proof}

\subsubsection{Reduction to bound refinement with single linear inequality}
\label{sec:reductionToSingleLinearIneq}
Theorem~\ref{lem:multipleToSingle} allows us to learn parameters for each $S(j_1, \dots, j_{n})$ separately so that $S(j_1, \dots, j_{n}) \geq 0$ for each $(j_1, \dots, j_{n})$ where each $j_i \in [m]$. For $S(j_1, \dots, j_{n})$, let $\{(\linapprxalpha{{j_1}}, \biasalpha{j_1}), \dots,  (\linapprxalpha{{j_n}}, \biasalpha{j_n})\}$ denote the linear approximations satisfying $\linapprxalpha{{j_i}}^{T}(\vect{x_i} + \pmb{\delta}) + \biasalpha{j_i} \leq \vect{c_{i, j_i}}^{T}\vect{y_i}$ for any $\pmb{\delta} \in \mathbb{R}^{n_0}$  such that $\|\pmb{\delta}\|_{\infty} \leq \epsilon$ and $\vect{l_{j_i}} \preceq \param{{j_i}} \preceq \vect{u_{j_i}}$. Then we can use cross-execution bound refinement for $n$ executions to learn the parameters $(\param{j_1}, \dots, \param{j_{n}})$. We repeat this process for all $(j_1, \dots, j_{n})$. However, the number of possible choices for $(j_1, \dots, j_{n})$ is $m^{n}$ and learning parameters $(\param{j_1}, \dots, \param{j_{n}})$ for all possible $(j_1, \dots, j_{n})$ is only practically feasible when both $(m, n)$ are small constants. For larger values of $(m, n)$ we greedily pick $(j_1, \dots, j_{n})$ for learning parameters to avoid the exponential blowup as detailed below.

\textbf{Avoiding exponential blowup: } Instead of learning parameters for all possible $(j_1, \dots, j_{n})$ we greedily select only single tuple $(\opt{j_1}, \dots, \opt{j_{n}})$. For the $i$-th execution with $\outpropparam{i}(\vect{y_{i}}) = \wedge_{i=1}^{m} (\vect{c_{i, j}}^{T}\vect{y_i} \geq 0)$, let $\{(\linapprxalphaini{{i,1}}, \biasalphaini{{i,1}}), \dots, (\linapprxalphaini{{i,m}}, \biasalphaini{{i,m}})\}$ dentoes linear approximations satisfying $\linapprxalphaini{{i,j}}^{T}(\vect{x_i} + \pmb{\delta}) + \biasalphaini{{i, j}} \leq \vect{c_{i, j}}^{T}\vect{y_i}$ for all $j \in [m]$ and for all $\pmb{\delta} \in \mathbb{R}^{n_0}$ and $\|\pmb{\delta}\| \leq \epsilon$. Note that for all $j \in [m]$, $\vect{l_i} \preceq \paramini{i, j} \preceq \vect{u_i}$ are the initial values of the parameters $\param{i, j}$. 
Now, for we select $\opt{j_i}$ for each execution as $\opt{j_i} = \argmin\limits_{j \in [m]} \min\limits_{\pmb{\delta} \in \mathbb{R}^{n_0}, \|\pmb{\delta}\|_{\infty} \leq \epsilon} \linapprxalphaini{{i,j}}^{T}(\vect{x_i} + \pmb{\delta}) + \biasalphaini{{i, j}}$.

Intuitively, we use $\opt{j_i}$ to determine the linear inequality $\vect{c_{i, \opt{j_i}}}^Ty_i \geq 0$ that is likely to be violated. For the tuple $(\opt{j_1}, \dots, \opt{j_{n}})$, let $\optappx{\optparams} = (\opt{\param{\opt{j_1}}}, \dots, \opt{\param{\opt{j_n}}}, \opt{\lambda_{\opt{j_1}}}, \dots, \opt{\lambda_{\opt{j_n}}})$ denote the learned parameters (which may not correspond to global optimum). Then we use the same parameters across all $m$ linear approximations for the $i$-th execution i.e. $\{(\linapprxoptalphamulti{{i, 1}}{\opt{j_i}}, \biasoptalphamulti{{i, 1}}{\opt{j_i}}), \dots, (\linapprxoptalphamulti{{i, m}}{\opt{j_i}}, \biasoptalphamulti{{i, m}}{\opt{j_i}})\}$. 
In this case, $\optappx{t}(\dual)$ is defined as $\optappx{t}(G) = \min\limits_{\pmb{\delta} \in \mathbb{R}^{n_0},  (\|\pmb{\delta}\|_{\infty} \leq \epsilon)}\max\limits_{1 \leq i \leq n} \min\limits_{1 \leq j \leq m} \linapprxoptalphamulti{{i, j}}{\opt{j_i}}^{T}(\vect{x_i} + \pmb{\delta}) + \biasoptalphamulti{{i, j}}{\opt{j_i}}$. Next, we prove the correctness of the bound refinement. 

\begin{theorem}[Correctness of bound refinement for a conjunction of linear inequalities]
\label{thm:correctnessOvermultipleineq}
If $\optappx{t}(\dual) \geq 0$ then $\forall \pmb{\delta} \in \mathbb{R}^{n_0}. \left( (\|\pmb{\delta}\|_{\infty} \leq \epsilon)\implies \outspecset(\vect{y_1, \dots, \vect{y_n}})\right)$ where $\optappx{t}(G) = \min\limits_{\pmb{\delta} \in \mathbb{R}^{n_0},  (\|\pmb{\delta}\|_{\infty} \leq \epsilon)}\max\limits_{1 \leq i \leq n} \min\limits_{1 \leq j \leq m} \linapprxoptalphamulti{{i, j}}{\opt{j_i}}^{T}(\vect{x_i} + \pmb{\delta}) + \biasoptalphamulti{{i, j}}{\opt{j_i}}$ and for all $i \in [n]$, $\vect{y_i} = N(\vect{x_i} + \pmb{\delta})$.
\end{theorem}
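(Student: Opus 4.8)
The plan is to reduce the statement to Lemma~\ref{lem:multiple1} (equivalently Theorem~\ref{lem:multipleToSingle}), which already shows that $\forall \pmb{\delta}\in\mathbb{R}^{n_0}.\,(\|\pmb{\delta}\|_{\infty}\leq\epsilon \implies \outspecset(\vect{y_1},\dots,\vect{y_n}))$ holds if and only if $\min\limits_{\pmb{\delta}\in\mathbb{R}^{n_0},\|\pmb{\delta}\|_{\infty}\leq\epsilon}\max\limits_{1\leq i\leq n}\min\limits_{1\leq j\leq m}\vect{c_{i,j}}^{T}\vect{y_i}\geq 0$. Hence it suffices to prove that $\optappx{t}(\dual)$, as defined in the statement, is a lower bound on this min--max--min quantity; then $\optappx{t}(\dual)\geq 0$ immediately yields the relational conclusion.

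First I would record the pointwise soundness of the reused linear approximations. For each execution $i\in[n]$, the greedily chosen parameter vector $\opt{\param{\opt{j_i}}}$ is obtained by refining an initial value $\paramini{i,\opt{j_i}}$ inside the box $\vect{l_i}\preceq\cdot\preceq\vect{u_i}$, and projected gradient descent keeps it inside that box, so $\vect{l_i}\preceq\opt{\param{\opt{j_i}}}\preceq\vect{u_i}$. Since membership in this box is exactly the validity condition for the parametric bound, \emph{independently of which coefficient vector $\vect{c_{i,j}}$ is being bounded}, we get, for every $j\in[m]$ and every $\pmb{\delta}$ with $\|\pmb{\delta}\|_{\infty}\leq\epsilon$, that $\linapprxoptalphamulti{{i,j}}{\opt{j_i}}^{T}(\vect{x_i}+\pmb{\delta})+\biasoptalphamulti{{i,j}}{\opt{j_i}}\leq\vect{c_{i,j}}^{T}\vect{y_i}$ where $\vect{y_i}=N(\vect{x_i}+\pmb{\delta})$. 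This is the only slightly delicate step: reusing a single learned parameter vector across all $m$ clauses of execution $i$ is sound precisely because parameter validity is governed only by the box $[\vect{l_i},\vect{u_i}]$, not by the choice of clause.

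From there the argument is a routine chain of monotone operations. Taking $\min_{1\leq j\leq m}$ on both sides of the clause-wise inequalities preserves order, then applying $\max_{1\leq i\leq n}$ and finally $\min_{\|\pmb{\delta}\|_{\infty}\leq\epsilon}$ (both order preserving) gives $\optappx{t}(\dual)=\min\limits_{\pmb{\delta}}\max\limits_{i}\min\limits_{j}\bigl[\linapprxoptalphamulti{{i,j}}{\opt{j_i}}^{T}(\vect{x_i}+\pmb{\delta})+\biasoptalphamulti{{i,j}}{\opt{j_i}}\bigr]\leq\min\limits_{\pmb{\delta}}\max\limits_{i}\min\limits_{j}\vect{c_{i,j}}^{T}\vect{y_i}$. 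Therefore $\optappx{t}(\dual)\geq 0$ forces the right-hand side to be nonnegative, and Lemma~\ref{lem:multiple1} then delivers the conclusion. One may alternatively route the monotonicity step through $S(\opt{j_1},\dots,\opt{j_n})$ and Lemma~\ref{lem:multiple2}, but the direct chain above is shorter.

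The main obstacle is conceptual rather than computational: justifying rigorously that substituting the parameters optimized for the single subproblem $S(\opt{j_1},\dots,\opt{j_n})$ into the remaining $m-1$ clauses of each execution still produces valid lower bounds (second paragraph). Everything afterwards is bookkeeping with $\min$/$\max$ monotonicity plus the already-established Lemma~\ref{lem:multiple1} from Appendix~\ref{sec:reductionToSingleLinearIneq}.
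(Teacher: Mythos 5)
Your proposal is correct and follows essentially the same route as the paper's proof: establish clause-wise soundness of the reused linear approximations, chain the monotone $\min_j$, $\max_i$, $\min_{\pmb{\delta}}$ operations to get $\optappx{t}(\dual) \leq \min_{\pmb{\delta}}\max_i\min_j \vect{c_{i,j}}^T\vect{y_i}$, and invoke Lemma~\ref{lem:multiple1}. Your second paragraph makes explicit a soundness point (validity of the learned parameters across all $m$ clauses is governed only by the box $[\vect{l_i},\vect{u_i}]$) that the paper asserts in the surrounding setup rather than inside the proof, but this does not change the argument.
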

\begin{proof}
First we show that $\optappx{t}(\dual) \leq  \min\limits_{\pmb{\delta} \in \mathbb{R}^{n_0},  (\|\pmb{\delta}\|_{\infty} \leq \epsilon)}\max\limits_{1\leq i \leq n}\min\limits_{1 \leq j \leq m}\vect{c_{i, j}}^{T}\vect{y_i}$
\begin{align}
&\linapprxoptalphamulti{{i, j}}{\opt{j_i}}^{T}(\vect{x_i} + \pmb{\delta}) + \biasoptalphamulti{{i, j}}{\opt{j_i}} \leq \vect{c_{i, j}}^{T}\vect{y_i} \;\;\;\;\text{$\forall i \in [n]$, $\forall j \in [m]$ and for all $\pmb{\delta} \in \mathbb{R}^{n_0}$ s.t $\|\pmb{\delta}\|_{\infty} \leq \epsilon$}\nonumber\\
&\min\limits_{1 \leq j \leq m}\linapprxoptalphamulti{{i, j}}{\opt{j_i}}^{T}(\vect{x_i} + \pmb{\delta}) + \biasoptalphamulti{{i, j}}{\opt{j_i}} \leq \min\limits_{1 \leq j \leq m}\vect{c_{i, j}}^{T}\vect{y_i} \;\;\;\;\text{$\forall i \in [n]$ and for all $\pmb{\delta} \in \mathbb{R}^{n_0}$ s.t $\|\pmb{\delta}\|_{\infty} \leq \epsilon$}\nonumber\\
&\max\limits_{1\leq i \leq n}\min\limits_{1 \leq j \leq m}\linapprxoptalphamulti{{i, j}}{\opt{j_i}}^{T}(\vect{x_i} + \pmb{\delta}) + \biasoptalphamulti{{i, j}}{\opt{j_i}} \leq \max\limits_{1\leq i \leq n}\min\limits_{1 \leq j \leq m}\vect{c_{i, j}}^{T}\vect{y_i} \;\;\;\;\text{for all $\pmb{\delta} \in \mathbb{R}^{n_0}$ s.t $\|\pmb{\delta}\|_{\infty} \leq \epsilon$}\nonumber\\
&\min\limits_{\pmb{\delta} \in \mathbb{R}^{n_0},  (\|\pmb{\delta}\|_{\infty} \leq \epsilon)}\max\limits_{1\leq i \leq n}\min\limits_{1 \leq j \leq m}\linapprxoptalphamulti{{i, j}}{\opt{j_i}}^{T}(\vect{x_i} + \pmb{\delta}) + \biasoptalphamulti{{i, j}}{\opt{j_i}} \leq \min\limits_{\pmb{\delta} \in \mathbb{R}^{n_0},  (\|\pmb{\delta}\|_{\infty} \leq \epsilon)}\max\limits_{1\leq i \leq n}\min\limits_{1 \leq j \leq m}\vect{c_{i, j}}^{T}\vect{y_i} \nonumber \\
&\optappx{t}(\dual) \leq \min\limits_{\pmb{\delta} \in \mathbb{R}^{n_0},  (\|\pmb{\delta}\|_{\infty} \leq \epsilon)}\max\limits_{1\leq i \leq n}\min\limits_{1 \leq j \leq m}\vect{c_{i, j}}^{T}\vect{y_i} \label{eq:multiThmFinalEq}
\end{align}
Using lemma~\ref{lem:multiple1} and Eq~\ref{eq:multiThmFinalEq}
\begin{align}
(\optappx{t}(\dual) \geq 0) &\implies \left(\min\limits_{\pmb{\delta} \in \mathbb{R}^{n_0},  (\|\pmb{\delta}\|_{\infty} \leq \epsilon)}\max\limits_{1\leq i \leq n}\min\limits_{1 \leq j \leq m}\vect{c_{i, j}}^{T}\vect{y_i}\right) \geq 0 \nonumber\\
&\implies \forall \pmb{\delta} \in \mathbb{R}^{n_0}. \left( (\|\pmb{\delta}\|_{\infty} \leq \epsilon)\implies \outspecset(\vect{y_1, \dots, \vect{y_n}})\right)\nonumber
\end{align}
\end{proof}

\subsection{Handling general $\|\cdot\|_{p}$ norm}
\label{sec:lPNorm}
For general $\|\cdot\|_{p}$ norm we can generalize the dual formulation $\dual(\optparams)$ in the following way. Since, $\|\pmb{\delta}\|_{p} \leq \epsilon$ and $\constalpha{i} = \linapprxalpha{i}^T\vect{x_i} + \biasalpha{i}$ then
\begin{align}
\dual(\optparams) &= \min\limits_{\|\pmb{\delta}\|_{p} \leq \epsilon} \sum_{i=1}^{n} \lambda_i \times \left(\linapprxalpha{i}^T(\vect{x_i} + \pmb{\delta}) + \biasalpha{i}\right) \nonumber\\
\dual(\optparams) &= \sum_{i=1}^{n} \lambda_i \times \constalpha{i} + \min\limits_{\pmb{\delta} \in \mathbb{R}^{n_0},\; \|\pmb{\delta}\|_{p} \leq \epsilon} \sum_{i=1}^{n} \lambda_i \times \linapprxalpha{i}^T\pmb{\delta} \nonumber \\
\dual(\optparams) &= \sum_{i=1}^{n} \lambda_i \times \constalpha{i} - \epsilon \times \left\|\sum_{i=1}^{n}\lambda_i \times \linapprxalpha{i}\right\|_{q} \;\;\; \text{Using H\"{o}lder's Inequality with $\frac{1}{q} = 1 - \frac{1}{p}$}\nonumber 
\end{align}

\subsection{MILP formulations and correctness}
In this section, we show the MILP formulations for the k-UAP and worst-case hamming distance verification and present the theoretical results corresponding to the correctness and efficacy of the MILP formulations.

Let $I = \{i \;|\; \text{non-relational verifier does not verify ($\inprop^{i}, \outpropparam{i})$}\}$ denotes the executions that remain unverified by the non-relational verifier. For all $i \in I$, $j \in [m]$ let $(\linapprx_{i, j}^{k'}, b_{i, j}^{k'})$ denote the linear approximations satisfying $\linapprx_{i, j}^{k'}(\vect{x_i} + \pmb{\delta}) + b_{i, j}^{k'} \leq \vect{c_{i, j}}^{T}\vect{y_i}$ for all $\pmb{\delta} \in \mathbb{R}^{n_0}$ and $\|\pmb{\delta}\|_{\infty} \leq \epsilon$ where $k' \leq \sum_{i=1}^{k_1} \binom{k_0}{i} + 1$ and $\vect{y_i} = N(\vect{x_i} + \pmb{\delta})$. Note that each linear approximations $(\linapprx_{i, j}^{k'}, b_{i, j}^{k'})$ are obtained by the non-relational verifier or by the cross-execution bound refinement. 

\subsubsection{MILP formulations}
\label{appendix:milPFormulationKUAPHamm}
\textbf{MILP formulation for k-UAP: }
\begin{align}
    &\min\;\; M \nonumber\\
    \nonumber\\
    & \|\pmb{\delta}\|_{\infty} \leq \epsilon \nonumber\\
    & \linapprx_{i, j}^{k'}(\vect{x_i} + \pmb{\delta}) + b_{i, j}^{k'} \leq o_{i, j} \;\;\;\;\text{$\forall i \in I$, $\forall j \in [m]$ $\forall k'$} \nonumber\\
    & z_{i} = \left(\left(\min\limits_{j \in [m]} o_{i,j}\right) \geq 0\right) \;\;\;\;\text{for all $i \in I$ $z_{i} \in \{0, 1\}$} \nonumber\\
    & \overline{k} = k - |I| \;\;\;\;\text{[number of executions verified by non-relational verifier]} \nonumber\\
    &M = \sum_{i \in I} z_{i} + \overline{k} \label{eq:kUAPMILP}
\end{align}

\textbf{MILP formulation for worst-case hamming distance: }
\begin{align*}
    &\max\;\; M \\
    \\
    & \|\pmb{\delta}\|_{\infty} \leq \epsilon \\
    & \linapprx_{i, j}^{k'}(\vect{x_i} + \pmb{\delta}) + b_{i, j}^{k'} \leq o_{i, j} \;\;\;\;\text{$\forall i \in I$, $\forall j \in [m]$ $\forall k'$} \\
    & z_{i} = \left(\left(\min\limits_{j \in [m]} o_{i,j}\right) \geq 0\right) \;\;\;\;\text{for all $i \in I$ $z_{i} \in \{0, 1\}$} \\
    &M = |I| - \sum_{i \in I} z_{i}
\end{align*}
\textbf{Correctness for eliminating individually verified executions: } We formally prove that eliminating individually verified executions is correct and does not lead to precision loss. 
\begin{theorem}
\label{thm:eliminationCorrect}
$\Output(\inspecset, \outspecset) = (k - |I|) + \min\limits_{\pmb{\delta} \in \mathbb{R}^{n_0}, \|\pmb{\delta}\|_{\infty} \leq \epsilon}\sum\limits_{i \in I} z_{i}(\pmb{\delta})$ where $z_{i}(\pmb{\delta})$ is defined in Eq.~\ref{eq:zdef}, $\Output(\inspecset, \outspecset)$ is defined in Eq.~\ref{eq:kUAPprobFormulation} and for all $j \in [k] \setminus I$,  $\forall \pmb{\delta} \in \mathbb{R}^{n_0}. (\|\pmb{\delta}\|_{\infty} \leq \epsilon) \implies (z_j(\delta) = 1)$ holds.
\end{theorem}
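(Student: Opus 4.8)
# Proof Proposal for Theorem~\ref{thm:eliminationCorrect}

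The plan is to unfold the definition of $\Output(\inspecset, \outspecset)$ from Eq.~\ref{eq:kUAPprobFormulation}, split the summation $\mu(\pmb{\delta}) = \sum_{i=1}^{k} z_i(\pmb{\delta})$ over the index set into the part indexed by $[k] \setminus I$ (the individually verified executions) and the part indexed by $I$ (the unverified executions), and then argue that the first part is a constant equal to $k - |I|$ independent of $\pmb{\delta}$.

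First I would write
\begin{align*}
\Output(\inspecset, \outspecset) &= \min\limits_{\pmb{\delta} \in \mathbb{R}^{n_0},\; \|\pmb{\delta}\|_{\infty} \leq \epsilon} \mu(\pmb{\delta}) = \min\limits_{\pmb{\delta} \in \mathbb{R}^{n_0},\; \|\pmb{\delta}\|_{\infty} \leq \epsilon} \sum_{i=1}^{k} z_i(\pmb{\delta}) \\
&= \min\limits_{\pmb{\delta} \in \mathbb{R}^{n_0},\; \|\pmb{\delta}\|_{\infty} \leq \epsilon} \left( \sum_{j \in [k] \setminus I} z_j(\pmb{\delta}) + \sum_{i \in I} z_i(\pmb{\delta}) \right).
\end{align*}
By hypothesis, for every $j \in [k] \setminus I$ and every $\pmb{\delta}$ with $\|\pmb{\delta}\|_{\infty} \leq \epsilon$ we have $z_j(\pmb{\delta}) = 1$; this is exactly the statement that the non-relational verifier certified execution $j$, combined with the definition of $z_j$ in Eq.~\ref{eq:zdef} as the indicator that $\outpropparam{j}(N(\vect{x_j} + \pmb{\delta}))$ holds. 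Hence $\sum_{j \in [k] \setminus I} z_j(\pmb{\delta}) = |[k] \setminus I| = k - |I|$, a constant that can be pulled out of the minimization. This yields
\begin{align*}
\Output(\inspecset, \outspecset) = (k - |I|) + \min\limits_{\pmb{\delta} \in \mathbb{R}^{n_0},\; \|\pmb{\delta}\|_{\infty} \leq \epsilon} \sum_{i \in I} z_i(\pmb{\delta}),
\end{align*}
which is the claimed identity.

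The only subtlety worth making explicit is that the feasible region $\{\pmb{\delta} : \|\pmb{\delta}\|_{\infty} \leq \epsilon\}$ is the same set over which both the left-hand minimization and the right-hand minimization range, and it is nonempty, so adding a constant to the objective commutes with taking the minimum; this is immediate. A minor point is aligning the input specification: for the $k$-UAP encoding, $\inspecset(\vect{x^{*}_1}, \dots, \vect{x^{*}_k})$ holds iff each $\vect{x^{*}_i} = \vect{x_i} + \pmb{\delta}$ for a common $\pmb{\delta}$ with $\|\pmb{\delta}\|_{\infty} \leq \epsilon$ (shown in Appendix~\ref{appendix:uapFormulation}), so the minimization over $\pmb{\delta}$ faithfully ranges over all tuples satisfying $\inspecset$. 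I do not anticipate a real obstacle here; the whole argument is a bookkeeping split of the sum plus invoking the given property of verified executions. The analogous statement for worst-case hamming distance follows immediately by the relation $\Output(\inspecset, \outspecset) = k - \min_{\pmb{\delta}} \mu(\pmb{\delta})$.
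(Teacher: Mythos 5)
Your proposal is correct and follows essentially the same route as the paper: decompose $\mu(\pmb{\delta}) = \sum_{i=1}^{k} z_i(\pmb{\delta})$ into the terms indexed by $[k]\setminus I$ and by $I$, observe that the first part is identically $k - |I|$ on the feasible region by the hypothesis on verified executions, and pull that constant out of the minimization. If anything, your version is written slightly more carefully than the paper's, which splits the minimum of the sum into a sum of two minima (a step justified only because one summand is constant); your phrasing of pulling out the constant makes that justification explicit.
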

\begin{proof}
\begin{align*}
\Output(\inspecset, \outspecset) &= \min\limits_{\pmb{\delta} \in \mathbb{R}^{n_0}, \|\pmb{\delta}\|_{\infty} \leq \epsilon}\sum\limits_{i=1}^{k} z_{i}(\pmb{\delta}) \\
& = \min\limits_{\pmb{\delta} \in \mathbb{R}^{n_0}, \|\pmb{\delta}\|_{\infty} \leq \epsilon}\sum\limits_{i \in ([k] \setminus I)} z_{i}(\pmb{\delta}) + \min\limits_{\pmb{\delta} \in \mathbb{R}^{n_0}, \|\pmb{\delta}\|_{\infty} \leq \epsilon}\sum\limits_{i \in I} z_{i}(\pmb{\delta}) \\
& = (k - |I|) + \min\limits_{\pmb{\delta} \in \mathbb{R}^{n_0}, \|\pmb{\delta}\|_{\infty} \leq \epsilon}\sum\limits_{i \in I} z_{i}(\pmb{\delta})\;\;\;\;\text{since $\forall \pmb{\delta} \in \mathbb{R}^{n_0}. (\|\pmb{\delta}\|_{\infty} \leq \epsilon) \implies (z_j(\delta) = 1)$}
\end{align*}
\end{proof}
\textbf{Soundness of MILP formulation: } For soundness, we show that the optimal value of the MILP formulation (in Eq.~\ref{eq:kUAPMILP}) $\apprxOutput(\inspecset, \outspecset)$ is always a valid lower bound of $\Output(\inspecset, \outspecset)$. The soundness of the worst-case hamming distance formulation can be proved similarly.
\begin{theorem}[Sondness of the MILP formulation in Eq.~\ref{eq:kUAPMILP}]
\label{thm:racoonSoundness}
$\apprxOutput(\inspecset, \outspecset) \leq \Output(\inspecset, \outspecset)$ where $\apprxOutput(\inspecset, \outspecset)$ is the optimal solution of the MILP in Eq.~\ref{eq:kUAPMILP} and $\Output(\inspecset, \outspecset)$ is defined in Eq.~\ref{eq:kUAPprobFormulation}.  
\end{theorem}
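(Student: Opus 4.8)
The plan is to exhibit a \emph{feasible} point of the MILP in Eq.~\ref{eq:kUAPMILP} whose objective value is at most $\Output(\inspecset, \outspecset)$; since $\apprxOutput(\inspecset, \outspecset)$ is the \emph{minimum} of that objective over the feasible region, this immediately gives $\apprxOutput(\inspecset, \outspecset) \le \Output(\inspecset, \outspecset)$. So the only real work is to take an optimal perturbation for the original problem and lift it, together with suitable values of the auxiliary variables, to the MILP.

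First I would invoke Theorem~\ref{thm:eliminationCorrect} to rewrite $\Output(\inspecset, \outspecset) = (k - |I|) + \min_{\pmb{\delta} \in \mathbb{R}^{n_0},\, \|\pmb{\delta}\|_{\infty} \le \epsilon} \sum_{i \in I} z_i(\pmb{\delta})$. Because $\mu(\pmb{\delta}) = \sum_{i=1}^{k} z_i(\pmb{\delta})$ takes values in the finite set $\{0, 1, \dots, k\}$ over the ball $\|\pmb{\delta}\|_{\infty} \le \epsilon$, this infimum is attained at some $\opt{\pmb{\delta}}$ with $\|\opt{\pmb{\delta}}\|_{\infty} \le \epsilon$; set $\opt{\vect{y_i}} = N(\vect{x_i} + \opt{\pmb{\delta}})$ for $i \in I$. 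I would then build a candidate MILP solution by keeping the same $\opt{\pmb{\delta}}$, choosing $o_{i,j} = \max_{k'}\big(\linapprx_{i,j}^{k'}(\vect{x_i} + \opt{\pmb{\delta}}) + b_{i,j}^{k'}\big)$ for every $i \in I$, $j \in [m]$ (the smallest value consistent with the linear constraints $\linapprx_{i,j}^{k'}(\vect{x_i} + \pmb{\delta}) + b_{i,j}^{k'} \le o_{i,j}$), and letting $z_i \in \{0,1\}$ be the value forced by the defining constraint $z_i = \big((\min_{j \in [m]} o_{i,j}) \ge 0\big)$. This tuple is feasible for Eq.~\ref{eq:kUAPMILP} by construction.

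The crux of the argument — and the step I expect to require the most care — is the pointwise monotonicity $z_i \le z_i(\opt{\pmb{\delta}})$ for each $i \in I$, since it relies on correctly tracking the direction of every inequality in the encoding semantics of $z_i$. It follows from the soundness of the linear approximations: each $(\linapprx_{i,j}^{k'}, b_{i,j}^{k'})$ satisfies $\linapprx_{i,j}^{k'}(\vect{x_i} + \opt{\pmb{\delta}}) + b_{i,j}^{k'} \le \vect{c_{i,j}}^{T}\opt{\vect{y_i}}$, hence $o_{i,j} \le \vect{c_{i,j}}^{T}\opt{\vect{y_i}}$ and $\min_{j} o_{i,j} \le \min_{j} \vect{c_{i,j}}^{T}\opt{\vect{y_i}}$; therefore $z_i = 1$ (i.e.\ $\min_j o_{i,j} \ge 0$) forces $\min_j \vect{c_{i,j}}^{T}\opt{\vect{y_i}} \ge 0$, which by the definition in Eq.~\ref{eq:zdef} means exactly $z_i(\opt{\pmb{\delta}}) = 1$. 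Summing over $i \in I$, the objective of the candidate solution is $M = (k - |I|) + \sum_{i \in I} z_i \le (k - |I|) + \sum_{i \in I} z_i(\opt{\pmb{\delta}}) = \Output(\inspecset, \outspecset)$, using Theorem~\ref{thm:eliminationCorrect} and the optimality of $\opt{\pmb{\delta}}$. Since $\apprxOutput(\inspecset, \outspecset) \le M$, we conclude $\apprxOutput(\inspecset, \outspecset) \le \Output(\inspecset, \outspecset)$. The worst-case hamming distance MILP is handled identically, replacing the objective by the symmetric quantity $|I| - \sum_{i\in I} z_i$ and reversing the inequality so that $\apprxOutput(\inspecset,\outspecset) \ge \Output(\inspecset,\outspecset)$.
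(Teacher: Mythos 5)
Your proof is correct and rests on the same construction as the paper's: the feasible MILP point obtained by keeping the minimizer $\opt{\pmb{\delta}}$, setting $o_{i,j} = \max_{k'}\bigl(\linapprx_{i,j}^{k'}(\vect{x_i}+\opt{\pmb{\delta}}) + b_{i,j}^{k'}\bigr)$, and using the soundness of the linear lower bounds to get $z_i \leq z_i(\opt{\pmb{\delta}})$. The paper packages this as a proof by contradiction (isolating one index $i_0$ where that monotonicity would fail), but the key inequality and the feasible point are identical, so this is essentially the same argument presented directly.
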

\begin{proof}
We prove this by contradiction. Suppose, $\apprxOutput(\inspecset, \outspecset) > \Output(\inspecset, \outspecset)$ then there exists $\opt{\pmb{\delta}} \in \mathbb{R}^{n_0}$ such that $\|\opt{\pmb{\delta}}\|_{\infty} \leq \epsilon$ and $\apprxOutput(\inspecset, \outspecset) > \mu(\opt{\pmb{\delta}})$ where $\mu(\pmb{\delta})$ defined in Eq.~\ref{eq:muDef}.

For all $i \in I$, $j \in [m]$ the linear approximation $(\linapprx_{i, j}^{k'}, b_{i, j}^{k'})$ satisfies $\linapprx_{i, j}^{k'}(\vect{x_i} + \pmb{\delta}) + b_{i, j}^{k'} \leq \vect{c_{i, j}}^{T}\vect{y_i}$ for all $\pmb{\delta} \in \mathbb{R}^{n_0}$ and $\|\pmb{\delta}\|_{\infty} \leq \epsilon$ where $k' \leq \sum_{i=1}^{k_1} \binom{k_0}{i} + 1$ and $\vect{y_i} = N(\vect{x_i} + \pmb{\delta})$. Let, $\ol{z_i}(\opt{\pmb{\delta}}) = \left(\min\limits_{j \in [m]} \ol{o_{i,j}}(\opt{\pmb{\delta}}) \geq 0\right)$ where $\ol{o_{i,j}}(\opt{\pmb{\delta}}) = \max\limits_{k'} \linapprx_{i, j}^{k'}(\vect{x_i} + \opt{\pmb{\delta}}) + b_{i, j}^{k'}$. Then $\apprxOutput(\inspecset, \outspecset) \leq \overline{k} + \sum\limits_{i \in I} \ol{z_i}(\opt{\pmb{\delta}})$ and $\mu(\opt{\pmb{\delta}}) < \overline{k} + \sum\limits_{i \in I} \ol{z_i}(\opt{\pmb{\delta}})$.
\begin{align}
    &\mu(\opt{\pmb{\delta}}) < \overline{k} + \sum\limits_{i \in I} \ol{z_i}(\opt{\pmb{\delta}}) \nonumber\\
    &\implies \sum\limits_{i \in I} z_i(\opt{\pmb{\delta}}) < \sum\limits_{i \in I} \ol{z_i}(\opt{\pmb{\delta}}) \;\;\;\;\text{where $z_i(\opt{\pmb{\delta}})$ defined in Eq.~\ref{eq:zdef}} \label{eq:intermediate}
\end{align}
Eq.~\ref{eq:intermediate} implies that there exist $i_0 \in I$ such that $z_{i_0}(\opt{\pmb{\delta}}) = 0$ and $\ol{z_{i_0}}(\opt{\pmb{\delta}}) = 1$. Since $z_{i_0}(\opt{\pmb{\delta}}) = 0$ then there exists $j_0 \in [m]$ such that $\vect{c_{i_0, j_0}}^{T}\opt{\vect{y_{i_0}}} < 0$ where $\opt{\vect{y_{i_0}}} = N(\vect{x_{i_0}} + \opt{\pmb{\delta}})$
\begin{align*}
 &\min\limits_{j \in [m]} \ol{o_{i_0,j}}(\opt{\pmb{\delta}}) \leq  \ol{o_{i_0,j_0}}(\opt{\pmb{\delta}}) \leq  \vect{c_{i_0, j_0}}^{T}\opt{\vect{y_{i_0}}}< 0 \\
 &(\min\limits_{j \in [m]} \ol{o_{i_0,j}}(\opt{\pmb{\delta}}) < 0) \implies (\ol{z_{i_0}}(\opt{\pmb{\delta}}) = 0) \;\;\;\;\text{Contradiction since $\ol{z_{i_0}}(\opt{\pmb{\delta}}) = 1$}
\end{align*}
\end{proof}

Next, we show that \Tool is always at least as precise as the current SOTA relational verifier \cite{iclrUap}. Note that \cite{iclrUap} uses the same MILP formulation (Eq.~\ref{eq:kUAPMILP}) except instead of using $k'$ linear approximations $\{(\linapprx_{i, j}^{1}, b_{i, j}^{1}), \dots, (\linapprx_{i, j}^{k'}, b_{i, j}^{k'})\}$ it uses a single statically obtained linear approximation say $\{(\linapprx_{i, j}^{1}, b_{i, j}^{1})\}$.

\begin{theorem}[\Tool is at least as precise as \cite{iclrUap}]
\label{thm:racoonBetter}
$\apprxOutput_{b}(\inspecset, \outspecset) \leq \apprxOutput(\inspecset, \outspecset)$ where $\apprxOutput(\inspecset, \outspecset)$ is the optimal solution of the MILP in Eq.~\ref{eq:kUAPMILP} and $\apprxOutput_{b}(\inspecset, \outspecset)$ is the optimal solution from the baseline \cite{iclrUap}.  
\end{theorem}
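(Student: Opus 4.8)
The plan is to observe that the baseline verifier of \cite{iclrUap} solves exactly the MILP of Eq.~\ref{eq:kUAPMILP} but with a strict subset of \Tool's linear constraints, so \Tool's MILP is a restriction of the baseline's; since the two share the same objective $M$, restricting the feasible region can only increase the optimum of a minimization, which gives $\apprxOutput_{b}(\inspecset,\outspecset) \le \apprxOutput(\inspecset,\outspecset)$.

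Concretely, I would first spell out that both MILPs range over the same variables $\pmb{\delta}, \{o_{i,j}\}_{i\in I,\,j\in[m]}, \{z_i\}_{i\in I}, M$, with the same perturbation bound $\|\pmb{\delta}\|_\infty \le \epsilon$, the same encoding $z_i = \big((\min_{j\in[m]} o_{i,j}) \ge 0\big)$, and the same objective $M = \overline{k} + \sum_{i\in I} z_i$. The only difference is the family of inequalities lower-bounding each $o_{i,j}$: \cite{iclrUap} imposes the single constraint $\linapprx_{i,j}^{1}(\vect{x_i}+\pmb{\delta}) + b_{i,j}^{1} \le o_{i,j}$ from its statically obtained approximation, whereas \Tool imposes $\linapprx_{i,j}^{l}(\vect{x_i}+\pmb{\delta}) + b_{i,j}^{l} \le o_{i,j}$ for every $l \in [k']$ — and by construction the static approximation is exactly the entry $l=1$ in \Tool's list. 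Hence every feasible point of \Tool's MILP is feasible for the baseline's MILP.

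To convert this into an inequality between optimal values, I would fix any $\pmb{\delta}$ with $\|\pmb{\delta}\|_\infty\le\epsilon$. Because $z_i$ enters the minimized objective with a nonnegative coefficient and is monotone in $\min_{j} o_{i,j}$, the solver drives each $o_{i,j}$ down to its greatest lower bound, so the smallest admissible value of $z_i$ given $\pmb{\delta}$ is $z_i^{b}(\pmb{\delta}) = \big((\min_{j\in[m]} \linapprx_{i,j}^{1}(\vect{x_i}+\pmb{\delta}) + b_{i,j}^{1}) \ge 0\big)$ for the baseline and $z_i(\pmb{\delta}) = \big((\min_{j\in[m]} \max_{l\in[k']} [\linapprx_{i,j}^{l}(\vect{x_i}+\pmb{\delta}) + b_{i,j}^{l}]) \ge 0\big)$ for \Tool. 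Since $\max_{l\in[k']}[\linapprx_{i,j}^{l}(\vect{x_i}+\pmb{\delta}) + b_{i,j}^{l}] \ge \linapprx_{i,j}^{1}(\vect{x_i}+\pmb{\delta}) + b_{i,j}^{1}$, the condition defining $z_i(\pmb{\delta})=1$ is implied by the one defining $z_i^{b}(\pmb{\delta})=1$, i.e. $z_i^{b}(\pmb{\delta}) \le z_i(\pmb{\delta})$ for all $i\in I$ and all admissible $\pmb{\delta}$. Summing over $I$ and minimizing over $\pmb{\delta}$ then yields $\apprxOutput_{b}(\inspecset,\outspecset) = \min_{\|\pmb{\delta}\|_\infty\le\epsilon}\big(\overline{k} + \sum_{i\in I} z_i^{b}(\pmb{\delta})\big) \le \min_{\|\pmb{\delta}\|_\infty\le\epsilon}\big(\overline{k} + \sum_{i\in I} z_i(\pmb{\delta})\big) = \apprxOutput(\inspecset,\outspecset)$, as claimed; the worst-case hamming distance MILP is handled identically, with the inequality reversed since it maximizes $|I| - \sum_{i\in I} z_i$, which is again the ``at least as precise'' direction. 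The step I expect to be the main obstacle is precisely this reduction from the MILP optimum to the pointwise-in-$\pmb{\delta}$ formulas for $z_i^{b}$ and $z_i$ — everything else is monotonicity — and it is worth emphasizing that correctness of the extra approximations $l\ge 2$ plays no role here; only their presence as additional constraints matters.
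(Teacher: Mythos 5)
Your proposal is correct and follows essentially the same route as the paper: the paper likewise observes that every feasible value of $o_{i,j}$ (and hence of $z_i$) in \Tool's MILP is feasible for the baseline's MILP, so the feasible set $\mathcal{Z}$ of $(z_1,\dots,z_{|I|})$ for \Tool is contained in the baseline's $\mathcal{Z}_b$, and minimizing the common objective over the smaller set can only give a larger value. Your additional pointwise-in-$\pmb{\delta}$ computation of the smallest admissible $z_i$ is a harmless elaboration of the same containment/monotonicity argument rather than a different proof.
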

\begin{proof}
Now we show that for $i \in I$, $\forall j \in [m]$ for every feasible value of the variable $o_{i,j}$ in Eq.~\ref{eq:kUAPMILP} is also a feasible value of the same variable $o_{i,j}$ in MILP of \cite{iclrUap}.  Given $\forall k'$, $\linapprx_{i, j}^{k'}(\vect{x_i} + \pmb{\delta}) + b_{i, j}^{k'} \leq o_{i, j}$ then trivally $o_{i, j}$  satisfies condition $\linapprx_{i, j}^{1}(\vect{x_i} + \pmb{\delta}) + b_{i, j}^{1} \leq o_{i, j}$ used by the baseline \cite{iclrUap}. Subsequently for all $i \in I$ every feasible value of $z_{i}$ in Eq.~\ref{eq:kUAPMILP} is also a feasible value of the same variable $z_{i}$ in the MILP of \cite{iclrUap}. Let. for all $i \in I$, $\mathcal{Z}$ and $\mathcal{Z}_{b}$ denote the sets of all feasible values of variables $(z_{1}, \dots, z_{I})$ from the MILP in Eq.~\ref{eq:kUAPMILP} and the baseline \cite{iclrUap} respectively. Then $\mathcal{Z} \subseteq \mathcal{Z}_{b}$ which implies
\begin{align*}
   \apprxOutput_{b}(\inspecset, \outspecset) &\leq k - |I| + \min\limits_{(z_{1}, \dots, z_{I}) \in \mathcal{Z}_{b}} \sum_{i \in I} z_{i} \\
   &\leq k - |I| + \min\limits_{(z_{1}, \dots, z_{I}) \in \mathcal{Z}} \sum_{i \in I} z_{i} = \apprxOutput(\inspecset, \outspecset)\;\;\;\text{Since $\mathcal{Z} \subseteq \mathcal{Z}_{b}$}
\end{align*}
\end{proof}

\section{Worst-case time complexity analysis of \Tool}
\label{appendix:WorstcaseComplexity}
Let, the total number of neurons in $N$ be $n_{t}$ and the number of layers in $N$ is $l$. Then for each execution, the worst-case cost of running the non-relational verifier \cite{auto_lirpa} is $O(l^{2} \times n_{t}^{3})$. We assume that we run $I_{t}$ number of iterations with the optimizer and the cost of each optimization step over a set of $n$ executions is $O(n \times C_{o})$. In general, $C_{o}$ is similar to the cost of the non-relational verifier i.e. $O(l^{2} \times n_{t}^{3})$. Then the total cost of cross-execution refinement is $O(T \times I_{t} \times C_{o})$ where $T = \sum_{i=1}^{k_1} \left(\binom{k_0}{i} \times i\right)$. Assuming MILP with $O(k \times n_{l})$ integer variables in the worst-case takes $C_{M}(k \times n_{l})$ time. Then the worst-case complexity of \Tool is $O(k \times l^{2} \times n_{t}^{3}) + O(T \times I_{t} \times C_{o}) + C_{M}(k \times n_{l})$.

\section{Details of DNN archietectures}
\label{appenexp:archDetails}

\begin{table*}[htb]
\centering
\captionsetup{justification=centering}
\caption{DNN architecture details}
\label{table:netArchDetails}
\resizebox{0.48\textwidth}{!}{
\begin{tabular}{@{}c c c c c c@{}}
\toprule
Dataset & Model & Type & Train & \# Layers & \# Params \\
\midrule
\text{ }   & IBPSmall & Conv & IBP & 4 & 60k \\
\text{ }   & ConvSmall & Conv & Standard & 4 & 80k \\
\text{ }   & ConvSmall & Conv & PGD & 4 & 80k \\
MNIST   & ConvSmall & Conv & DiffAI & 4 & 80k \\
\text{ }   & ConvSmall & Conv & COLT & 4 & 80k \\
\text{ } & IBPMedium & Conv & IBP & 5 &  400k \\
\text{ } & ConvBig & Conv & DiffAI & 7 &  1.8M \\
\midrule
\text{ } & IBP-Small &  Conv & IBP & 4 & 60k  \\
\text{ }  & ConvSmall & Conv & Standard & 4 & 80k \\
\text{ }  & ConvSmall & Conv & PGD & 4 & 80k \\
CIFAR10 & ConvSmall &  Conv & DiffAI & 4 & 80k \\
\text{ }  & ConvSmall & Conv & COLT & 4 & 80k \\
\text{ }& IBPMedium & Conv & IBP & 5 &  2.2M \\
\text{ }& ConvBig & Conv & DiffAI & 7 &  2.5M \\
\midrule
\end{tabular}
}
\end{table*}
\subsection{Implementation Details}
\label{appenexp:implemenDetails}
We implemented our method in Python with Pytorch V1.11 and used Gurobi V10.0.3 as an off-the-shelf MILP solver. The implementation of cross-execution bound refinement is built on top of the SOTA DNN verification tool \lirpa \cite{alphaCrown} and uses Adam \cite{kingma2014adam} for parameter learning. We run $20$ iterations of Adam on each set of executions. For each relational property, we use $k_0 = 6$ and $k_1 = 4$ for deciding which set of executions to consider for cross-execution refinement as discussed in section~\ref{sec:algorithm}. We use a single NVIDIA A100-PCI GPU with 40 GB RAM for bound refinement and an Intel(R) Xeon(R) Silver 4214R CPU @ 2.40GHz with 64 GB RAM for MILP optimization.

\clearpage
\newpage
\clearpage
\newpage
\section{UAP accuracy over data distribution}
\label{appenexp:uapDistribution}

\begin{table*}[htb]
\centering
\captionsetup{justification=centering}
\caption{Statistical estimation worst case UAP accuracy over input distribution using $k$-UAP accuracy different with different $k$ values.}
\label{table:UAPOverDataDist}
\resizebox{0.98\textwidth}{!}{
\begin{tabular}{@{}c c c c c c c cc c c c c c@{}}
\toprule
Dataset & Property & 
Network & Training &  Perturbation & \multicolumn{3}{c}{Non-relational Verifier} & \multicolumn{3}{c}{I/O Formulation} & \multicolumn{3}{c}{\Tool}  \\
\text{ } & \text{ } & 
Structure & \text{Method} \text{ } & \text{Bound ($\epsilon$) } &  \multicolumn{3}{c}{UAP Acc. (\%)} & \multicolumn{3}{c}{UAP Acc. (\%)} & \multicolumn{3}{c}{UAP Acc. (\%)} \\
\text{ } & \text{ } & \text{ } & \text{} \text{ } & \text{ } & $k=20$ & $k=30$ & $k=50$ & $k=20$ & $k=30$ & $k=50$ & $k=20$ & $k=30$ & $k=50$ \\ 
\midrule
\text{ } & UAP &  ConvSmall & Standard & 0.08 & 11.00 & 15.33 & 19.20 & 20.50 & 28.0 &33.40 & 26.50\;(\textcolor{mgreen}{+6.00}) & 31.00\;(\textcolor{mgreen}{+3.00})& 34.40\;(\textcolor{mgreen}{+1.00})\\ 
\text{ } & UAP &  ConvSmall & PGD & 0.10 & 43.00 & 46.00 & 47.80 & 44.50 & 49.00 & 53.00 & 49.50\;(\textcolor{mgreen}{+5.00}) & 54.00\;(\textcolor{mgreen}{+5.00}) &57.00\;(\textcolor{mgreen}{+4.00})\\ 
\text{MNIST} &UAP &  IBPSmall & IBP & 0.13 & 47.00 & 51.00 & 55.20 & 47.50 & 51.67 & 57.80 & 61.50\;(\textcolor{mgreen}{+14.00}) & 67.67\;(\textcolor{mgreen}{+16.00}) & 69.80\;(\textcolor{mgreen}{+12.00})\\ 
\text{ } & UAP &  ConvSmall & DiffAI & 0.13 & 28.50 & 31.00 & 35.20 & 33.50 & 38.67 & 46.20 & 40.50\;(\textcolor{mgreen}{+7.00}) & 45.00\;(\textcolor{mgreen}{+6.33}) & 48.60\;(\textcolor{mgreen}{+2.40})\\ 
\text{ } & UAP &  ConvSmall & COLT & 0.15 & 41.50 & 46.33 & 48.80 & 41.50 & 46.67 & 49.80 & 58.00\;(\textcolor{mgreen}{+16.50}) & 63.00\;(\textcolor{mgreen}{+16.33}) & 65.60\;(\textcolor{mgreen}{+15.80})\\ 
\midrule
Dataset & Property & 
Network & Training &  Perturbation & \multicolumn{3}{c}{Non-relational Verifier} & \multicolumn{3}{c}{I/O Formulation} & \multicolumn{3}{c}{\Tool}  \\
\text{ } & \text{ } & 
Structure & \text{Method} \text{ } & \text{Bound ($\epsilon$) } &  \multicolumn{3}{c}{UAP Acc. (\%)} & \multicolumn{3}{c}{UAP Acc. (\%)} & \multicolumn{3}{c}{UAP Acc. (\%)} \\
\text{ } & \text{ } & \text{ } & \text{} \text{ } & \text{ } & $k=15$ & $k=20$ & $k=25$ & $k=15$ & $k=20$ & $k=25$ & $k=15$ & $k=20$ & $k=25$ \\ 
\midrule
\text{ } & UAP &  ConvSmall & Standard & 1.0/255 & 16.93 & 19.00 & 21.90 & 22.27& 27.00 & 30.70 & 24.27\;(\textcolor{mgreen}{+2.00}) & 28.00\;(\textcolor{mgreen}{+1.00}) & 32.30\;(\textcolor{mgreen}{+1.60}) \\  
\text{ } & UAP &  ConvSmall & PGD & 2.0/255 & 19.60 & 27.50 & 30.30 & 23.60 & 33.00 & 35.50 & 24.27\;(\textcolor{mgreen}{+0.67}) & 33.50\;(\textcolor{mgreen}{+0.50}) & 37.50\;(\textcolor{mgreen}{+2.00}) \\  
CIFAR10& UAP &  IBPSmall & IBP & 3.0/255 & 23.60 &31.50  & 34.30  & 23.60 & 31.50 & 35.10 &34.27\;(\textcolor{mgreen}{+10.67}) &42.00\;(\textcolor{mgreen}{+10.50}) & 46.30\;(\textcolor{mgreen}{+11.20})\\ 
\text{ }  & UAP &  ConvSmall & DiffAI &3.0/255 & 36.27 &39.00 & 43.90 & 38.93 & 45.50 & 50.70 & 40.93\;(\textcolor{mgreen}{+2.00}) & 46.50\;(\textcolor{mgreen}{+1.00})& 51.50\;(\textcolor{mgreen}{+0.80})\\ 
\text{ } & UAP &  ConvSmall & COLT & 6.0/255 & 13.60 & 19.50 & 21.50  & 18.93 & 26.50 & 27.50  & 22.93\;(\textcolor{mgreen}{+4.00}) & 29.00\;(\textcolor{mgreen}{+2.50}) & 29.90\;(\textcolor{mgreen}{+2.40})\\ 
\midrule
\end{tabular}
}
\end{table*}
All values in Table~\ref{table:UAPOverDataDist} are computed using Theorem 2 of \cite{iclrUap} with $\xi = 0.1$.

\section{\Tool componentwise efficacy analysis on all DNNs}
\label{appenexp:additionalComponentAllNet}

\begin{table*}[htb]
\centering
\captionsetup{justification=centering}
\caption{\Tool Componentwise Efficacy Analysis}
\label{table:compareComponents}
\resizebox{0.98\textwidth}{!}{
\begin{tabular}{@{}c c c c c c c c c c@{}}
\toprule
Dataset & Network & Training &  Perturbation & Non-relational  & I/O & Individual & Individual & Cross-Execution & \Tool  \\
\text{ } & Structure & Method &  Bound ($\epsilon$) & Verifier & Formulation & Refinement & Refinement with MILP & Refinement & verifier \\
\text{ } & \text{ } & \text{ } &  \text{ } & Avg. UAP Acc. (\%) & Avg. UAP Acc. (\%) & Avg. UAP Acc. (\%) & Avg. UAP Acc. (\%) & Avg. UAP Acc. (\%) & Avg. UAP Acc. (\%) \\
\midrule
\text{ } & ConvSmall & Standard & 0.08 & 38.5 & 48.0 & 42.5 & 50.5 & 51.0 & 54.0 \\
\text{ } & ConvSmall & PGD & 0.10 & 70.5 & 72.0 & 72.5 & 74.0 & 76.5 & 77.0 \\
\text{ } & IBPSmall & IBP & 0.13 & 74.5 & 75.0 & 84.0 & 84.5 & 89.0 & 89.0 \\
MNIST & ConvSmall & DiffAI & 0.13 & 56.0 & 61.0 & 61.0 & 64.5 & 67.0 & 68.0 \\
\text{ } & ConvSmall & COLT & 0.15 & 69.0 & 69.0 & 72.5 & 72.5 & 81.5 & 85.5 \\
\text{ } & IBPMedium & IBP & 0.20 & 80.5 & 82.0 & 91.0 & 91.0 & 93.5 & 93.5 \\
\text{ } & ConvBig & DiffAI & 0.20 & 80.0 & 80.0 & 86.0 & 86.0 & 90.0 & 93.0 \\
\midrule
\text{ } & ConvSmall & Standard & 1.0/255 & 52.0 & 55.0 & 52.0 & 55.0 & 57.0 & 58.0 \\
\text{ } & ConvSmall & PGD & 3.0/255 & 21.0 & 26.0 & 22.0 & 27.0 & 29.0 & 29.0 \\
\text{ } & IBPSmall & IBP & 6.0/255 & 17.0 & 17.0 & 29.0 & 29.0 & 36.0 & 39.0 \\
CIFAR10 & ConvSmall & DiffAI & 8.0/255 & 16.0 & 20.0 & 26.0 & 28.0 & 29.0 & 30.0 \\
\text{ } & ConvSmall & COLT & 8.0/255 & 18.0 & 21.0 & 22.0 & 22.0 & 24.0 & 26.0 \\
\text{ } & IBPMedium & IBP & 3.0/255 & 46.0 & 50.0 & 63.0 & 63.0 & 69.0 & 71.0 \\
\text{ } & ConvBig & DiffAI & 3.0/255 & 17.0 & 20.0 & 24.0 & 25.0 & 25.0 & 25.0 \\
\midrule
\end{tabular}
}
\end{table*}

\clearpage
\newpage
\section{\Tool componentwise runtime analysis on all DNNs}
\label{appenexp:additionalComponentRuntime}

\begin{table*}[htb]
\centering
\captionsetup{justification=centering}
\caption{\Tool Componentwise Runtime Analysis}
\label{table:compareComponentsRuntime}
\resizebox{0.98\textwidth}{!}{
\begin{tabular}{@{}c c c c c c c c c c@{}}
\toprule
Dataset & Network & Training &  Perturbation & Non-relational  & I/O & Individual & Individual & Cross-Execution & \Tool  \\
\text{ } & Structure & Method &  Bound ($\epsilon$) & Verifier & Formulation & Refinement & Refinement with MILP & Refinement & verifier \\
\text{ } & \text{ } & \text{ } &  \text{ } & Avg. Time (sec.) & Avg. Time (sec.) & Avg. Time (sec.) & Avg. Time (sec.) & Avg. Time (sec.) & Avg. Time (sec.) \\
\midrule
\text{ } & ConvSmall & Standard & 0.08 & 0.02 & 2.66 & 0.58 & 3.21 & 3.42 & 5.21 \\
\text{ } & ConvSmall & PGD & 0.10 & 0.02 & 0.93 & 0.61 & 1.82 & 3.47 & 4.33 \\
\text{ } & IBPSmall & IBP & 0.13 & 0.02 & 1.02 & 0.48 & 1.78 & 1.58 & 2.02 \\
MNIST & ConvSmall & DiffAI & 0.13 & 0.01 & 1.10 & 0.52 & 2.11 & 2.84 & 3.99 \\
\text{ } & ConvSmall & COLT & 0.15 & 0.02 & 0.99 & 0.50 & 1.82 & 2.17 & 2.68 \\
\text{ } & IBPMedium & IBP & 0.20 &  0.07 & 0.99 & 0.90 & 2.02 & 1.91 & 2.27 \\
\text{ } & ConvBig & DiffAI & 0.20 & 1.85 & 2.23 & 3.70 & 4.07 & 7.36 & 7.60 \\
\midrule
\text{ } & ConvSmall & Standard & 1.0/255 & 0.02 & 3.46 & 0.50 & 5.48 & 2.99 & 7.22 \\
\text{ } & ConvSmall & PGD & 3.0/255 & 0.01 & 1.57 & 0.40 & 3.64 & 2.44 & 5.56 \\
\text{ } & IBPSmall & IBP & 6.0/255 & 0.02 & 2.76 & 0.56 & 3.92 & 3.32 & 6.76 \\
CIFAR10 & ConvSmall & DiffAI & 8.0/255 & 0.01 & 2.49 & 0.49 & 4.75 & 2.96 & 7.09 \\
\text{ } & ConvSmall & COLT & 8.0/255 & 0.04 & 2.41 & 0.92 & 3.95 & 6.73 & 11.02 \\
\text{ } & IBPMedium & IBP & 3.0/255 & 0.15 & 2.13 & 1.77 & 4.07 & 5.28 & 6.12 \\
\text{ } & ConvBig & DiffAI & 3.0/255 & 1.33 & 3.42 & 3.27 & 5.89 & 10.45 & 11.92 \\
\midrule
\end{tabular}
}
\end{table*}

\section{Additional plots for k-UAP for different \texorpdfstring{$\pmb{\epsilon}$}{} values}
\label{appenexp:additionalEpsilon}
\begin{figure*}[htb]
\centering
\begin{minipage}[b]{.3\textwidth}
\includegraphics[width=\textwidth]{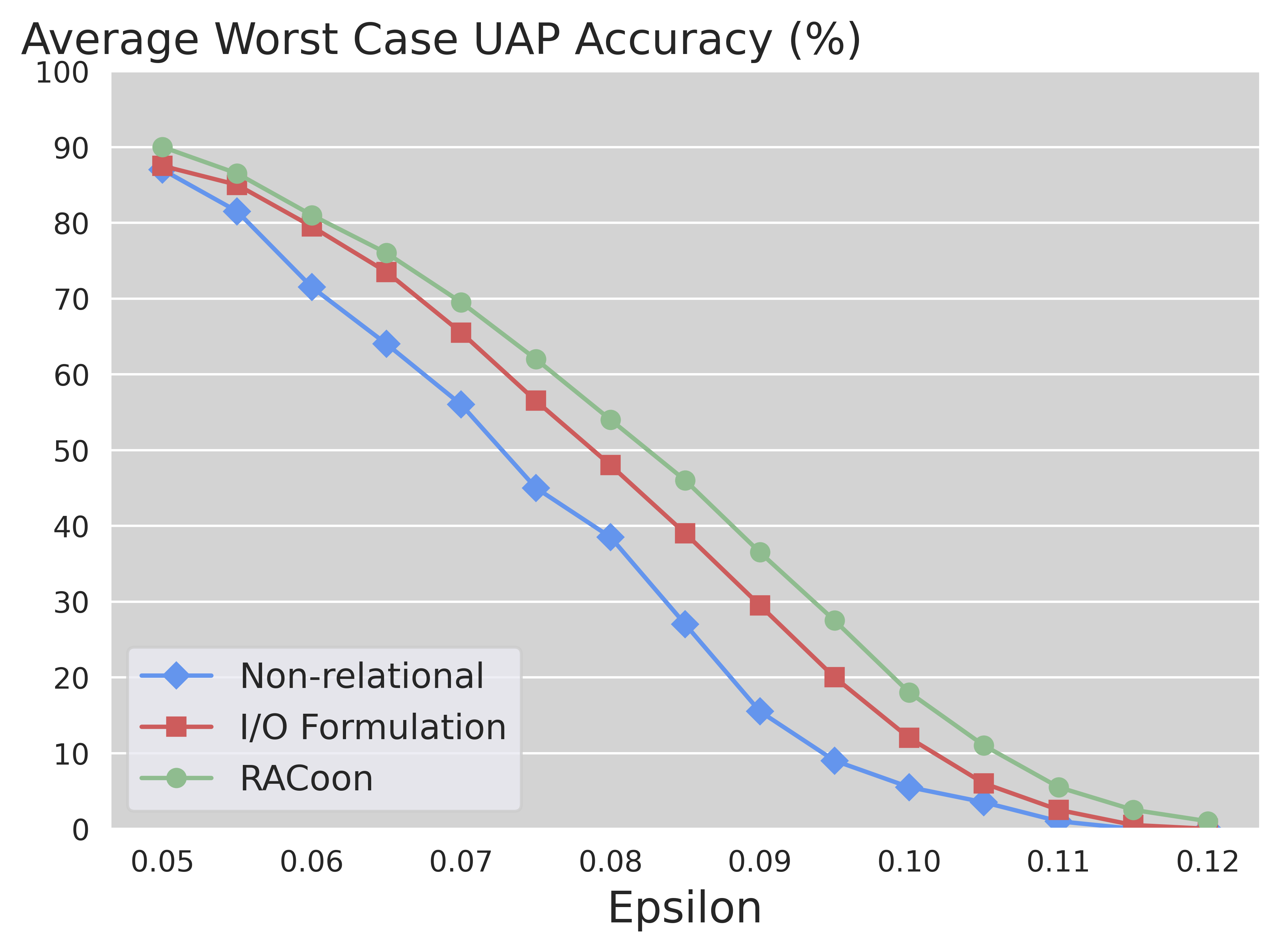}
\captionsetup{labelformat=empty}
\caption{(a) ConvSmall (Standard)}
\end{minipage}\qquad
\addtocounter{figure}{-1}
\begin{minipage}[b]{.3\textwidth}
\includegraphics[width=\textwidth]{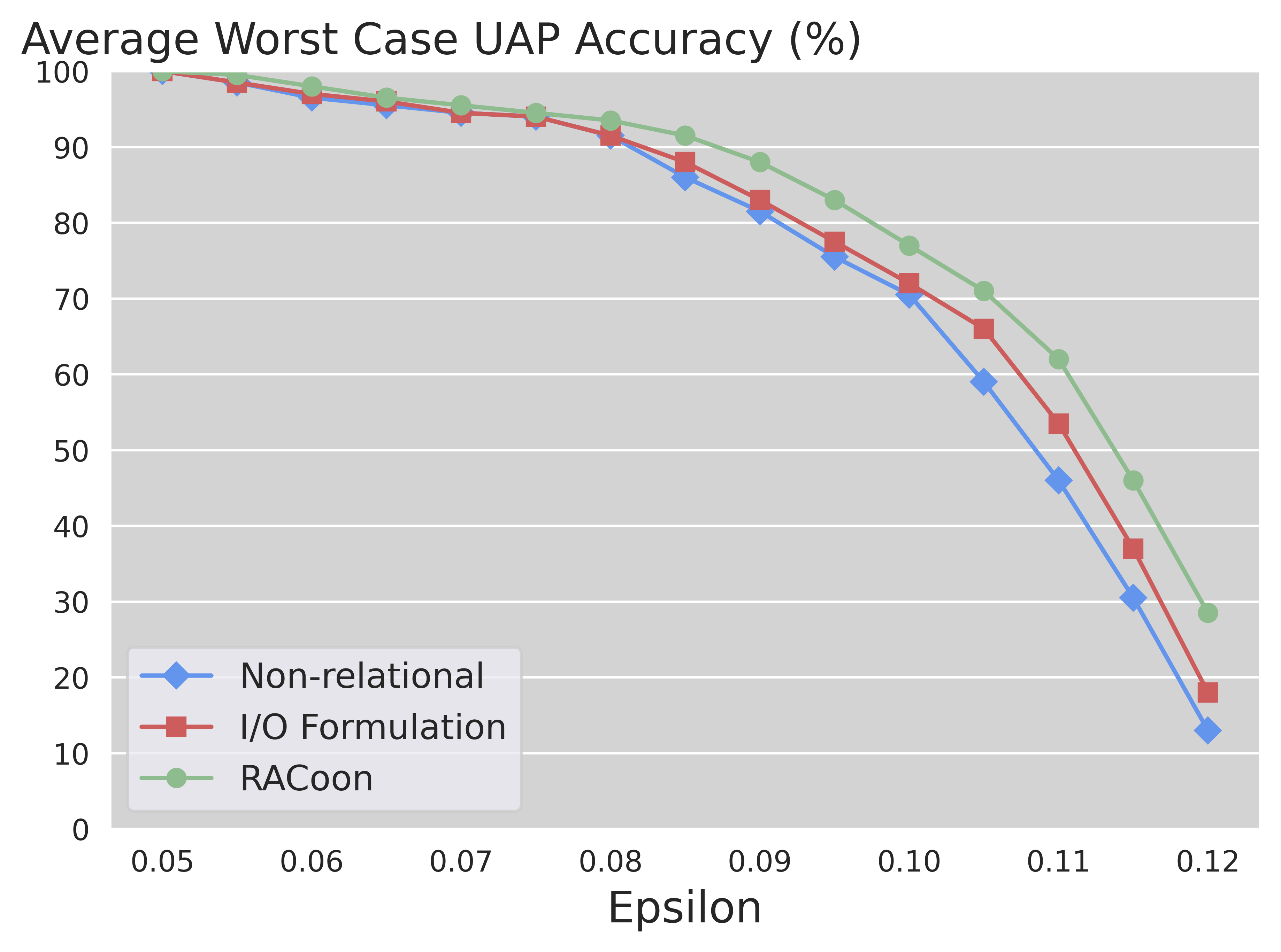}
\captionsetup{labelformat=empty}
\caption{(b) ConvSmall (PGD)}
\end{minipage}\qquad
\addtocounter{figure}{-1}
\begin{minipage}[b]{.3\textwidth}
\includegraphics[width=\textwidth]{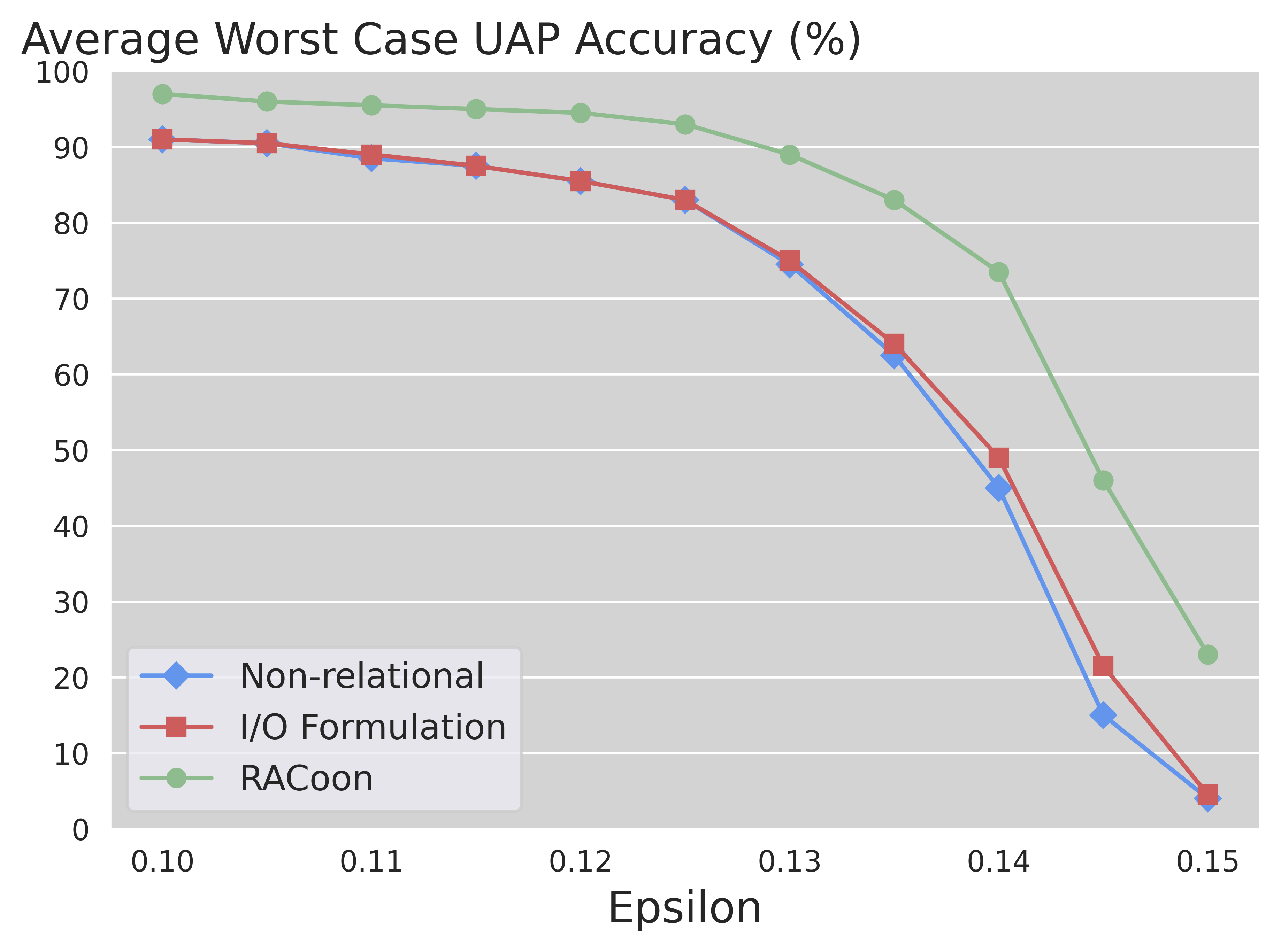}
\captionsetup{labelformat=empty}
\caption{(c) IBPSmall}
\end{minipage}\qquad
\addtocounter{figure}{-1}
\begin{minipage}[b]{.3\textwidth}
\includegraphics[width=\textwidth]{plots/mnist_0.1.onnx_20.png}
\captionsetup{labelformat=empty}
\caption{(d) ConvSmall (COLT)}
\end{minipage}\qquad
\addtocounter{figure}{-1}
\begin{minipage}[b]{.3\textwidth}
\includegraphics[width=\textwidth]{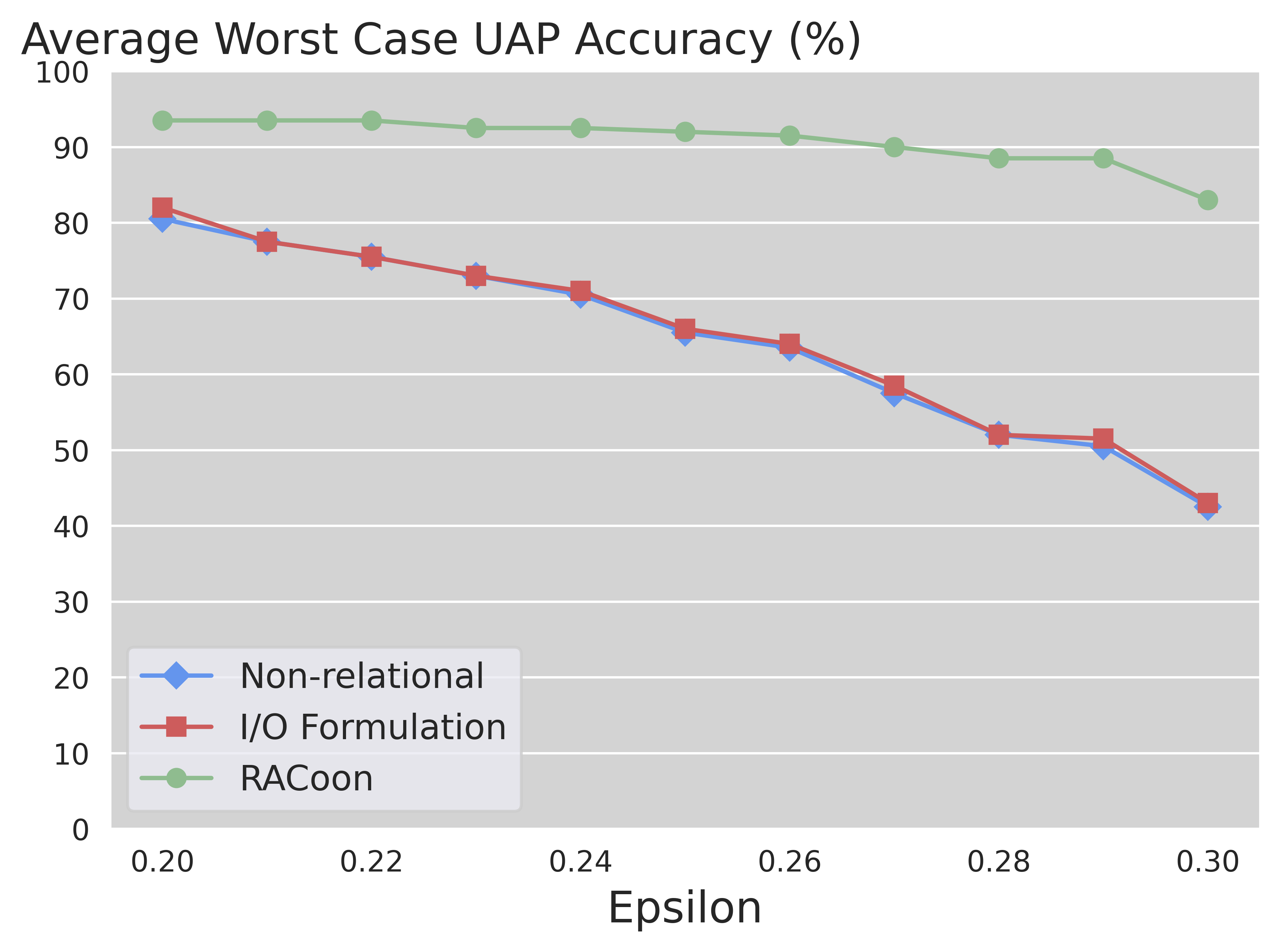}
\captionsetup{labelformat=empty}
\caption{(e) IBPMedium}
\end{minipage}\qquad
\addtocounter{figure}{-1}
\begin{minipage}[b]{.3\textwidth}
\includegraphics[width=\textwidth]{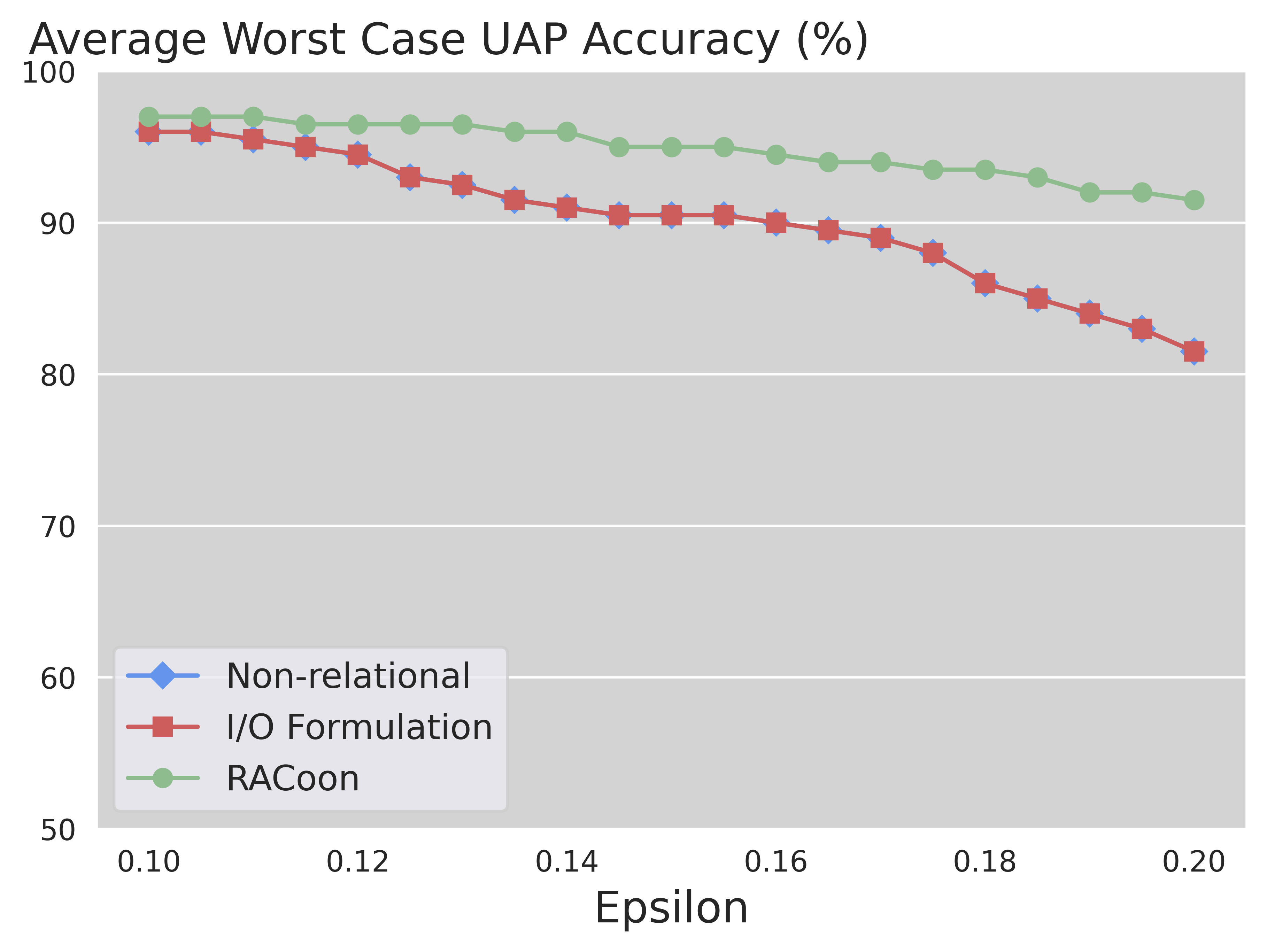}
\captionsetup{labelformat=empty}
\caption{(f) ConvBig (DiffAI)}
\end{minipage}\qquad
\addtocounter{figure}{-1}
\caption{Average worst-case UAP accuracy for different $\epsilon$ values for networks trained on MNIST.}
\label{fig:mnist_eps_appen}
\end{figure*}

\begin{figure}[htb]
\centering
\begin{minipage}[b]{.3\textwidth}
\includegraphics[width=\textwidth]{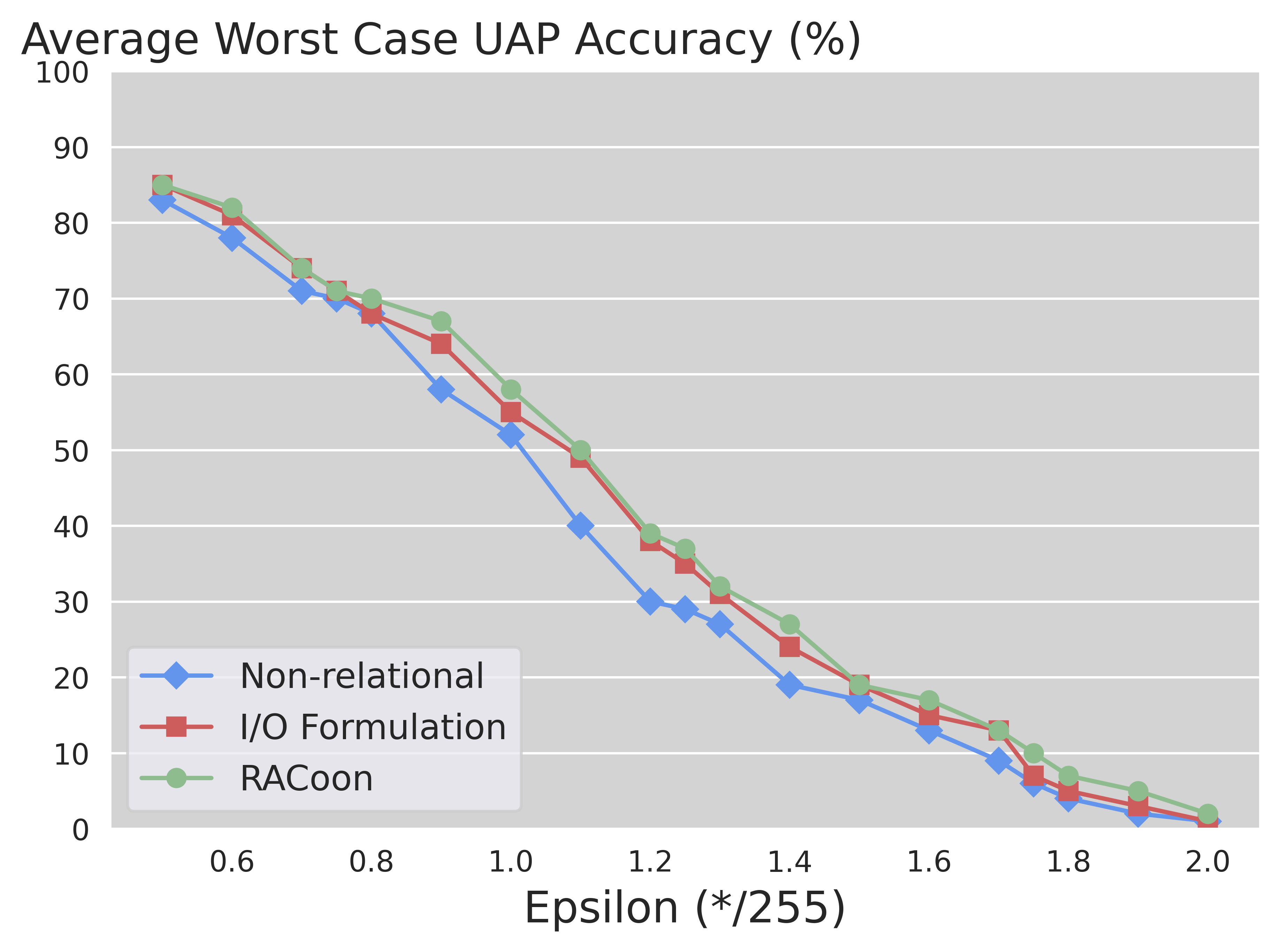}
\captionsetup{labelformat=empty}
\caption{(a) ConvSmall (Standard)}
\end{minipage}\qquad
\addtocounter{figure}{-1}
\begin{minipage}[b]{.3\textwidth}
\includegraphics[width=\textwidth]{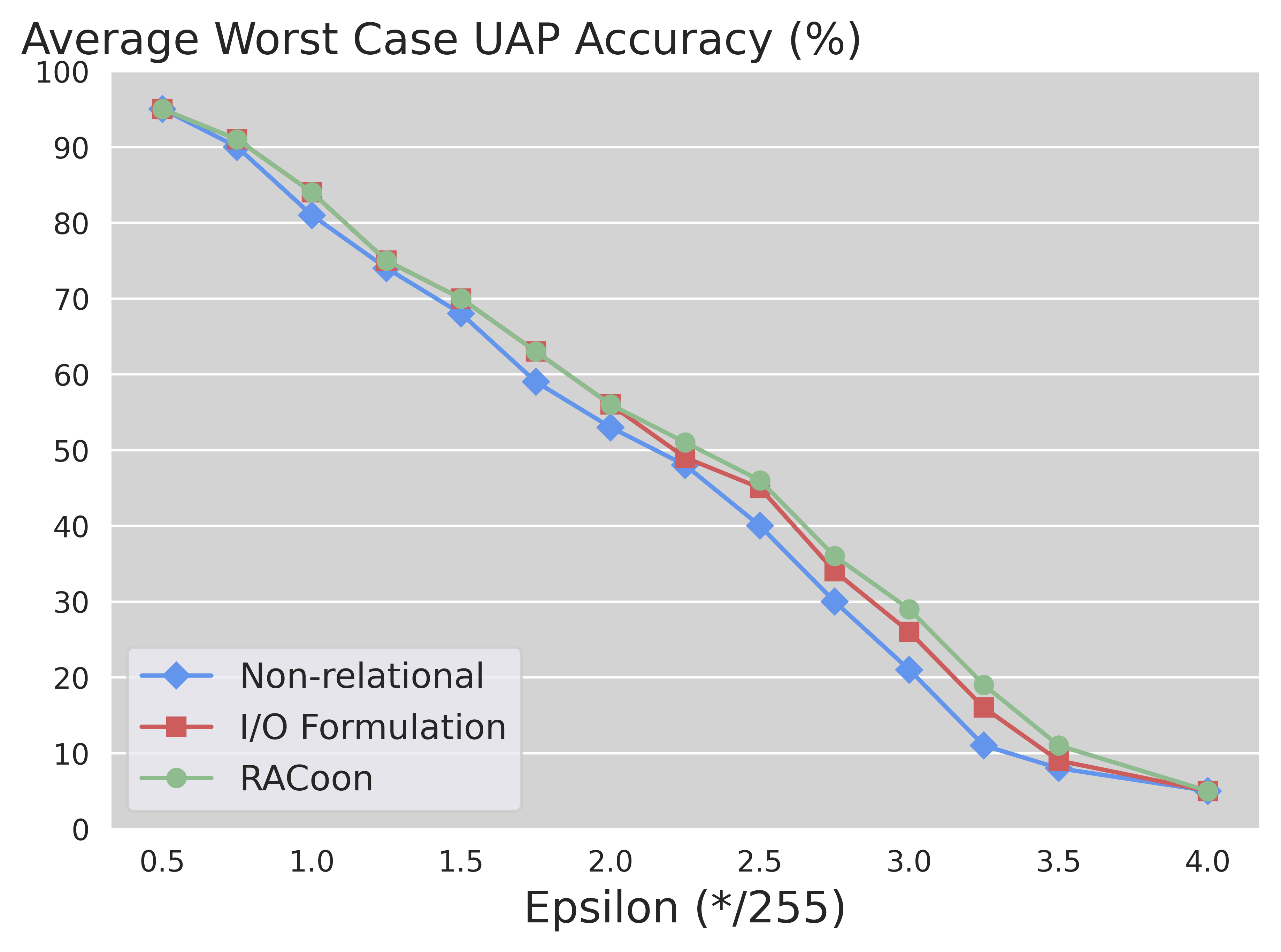}
\captionsetup{labelformat=empty}
\caption{(b) ConvSmall (PGD)}
\end{minipage}\qquad
\addtocounter{figure}{-1}
\begin{minipage}[b]{.3\textwidth}
\includegraphics[width=\textwidth]{plots/cifar_cnn_2layer_width_2_best.pth_10.png}
\captionsetup{labelformat=empty}
\caption{(c) IBPSmall}
\end{minipage}\qquad
\addtocounter{figure}{-1}
\begin{minipage}[b]{.3\textwidth}
\includegraphics[width=\textwidth]{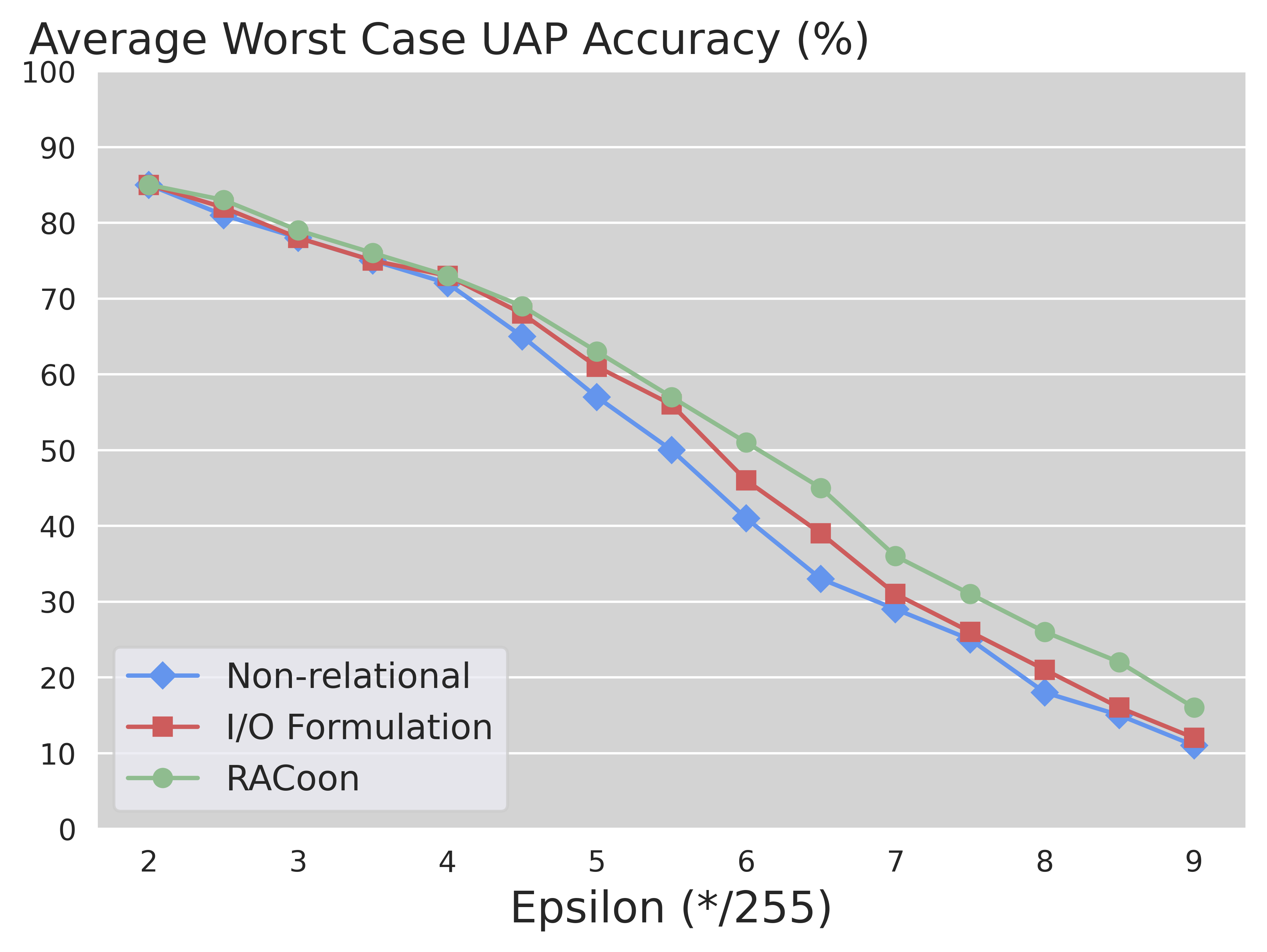}
\captionsetup{labelformat=empty}
\caption{(d) ConvSmall (COLT)}
\end{minipage}\qquad
\addtocounter{figure}{-1}
\begin{minipage}[b]{.3\textwidth}
\includegraphics[width=\textwidth]{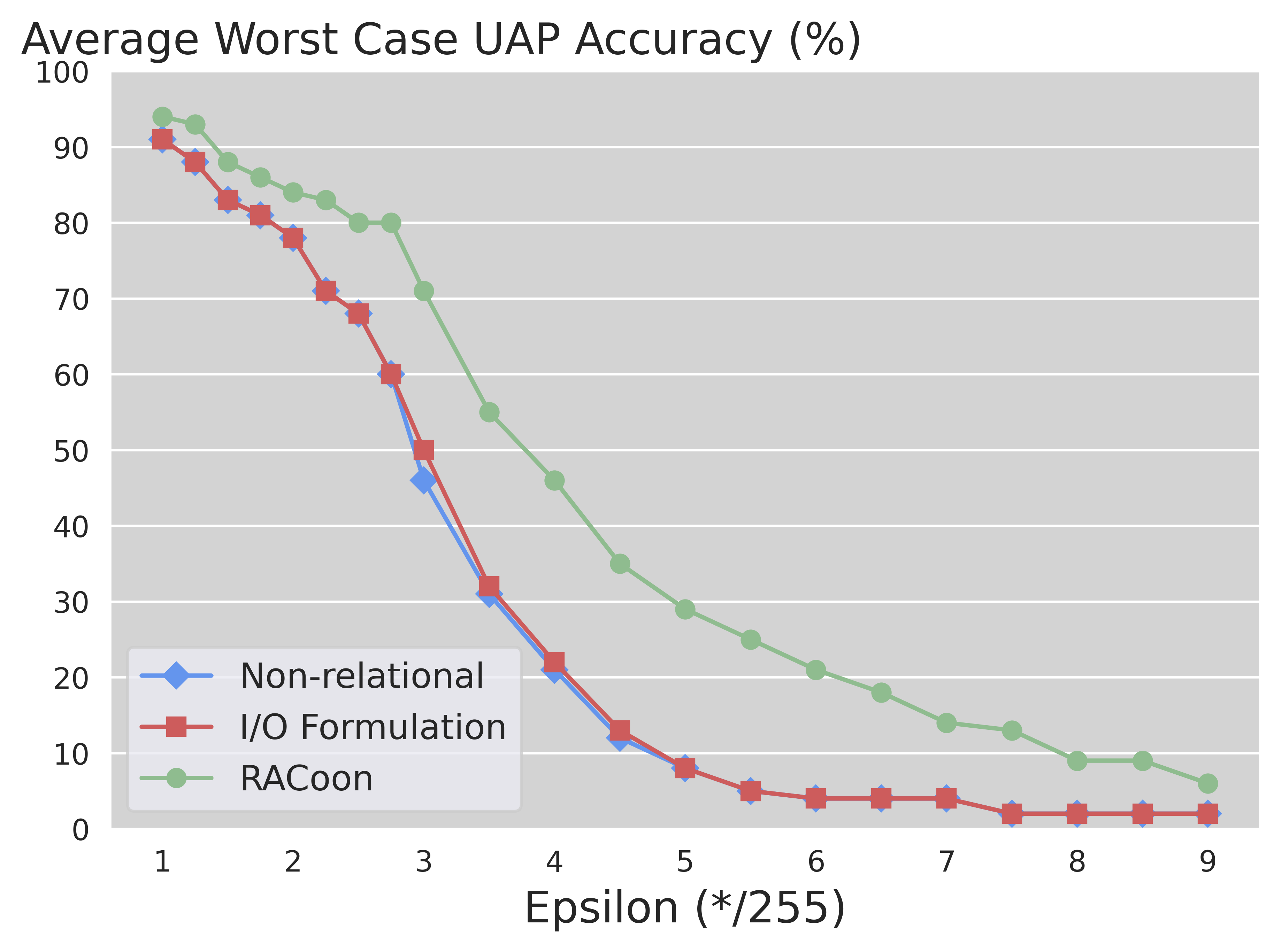}
\captionsetup{labelformat=empty}
\caption{(e) IBPMedium}
\end{minipage}\qquad
\addtocounter{figure}{-1}
\begin{minipage}[b]{.3\textwidth}
\includegraphics[width=\textwidth]{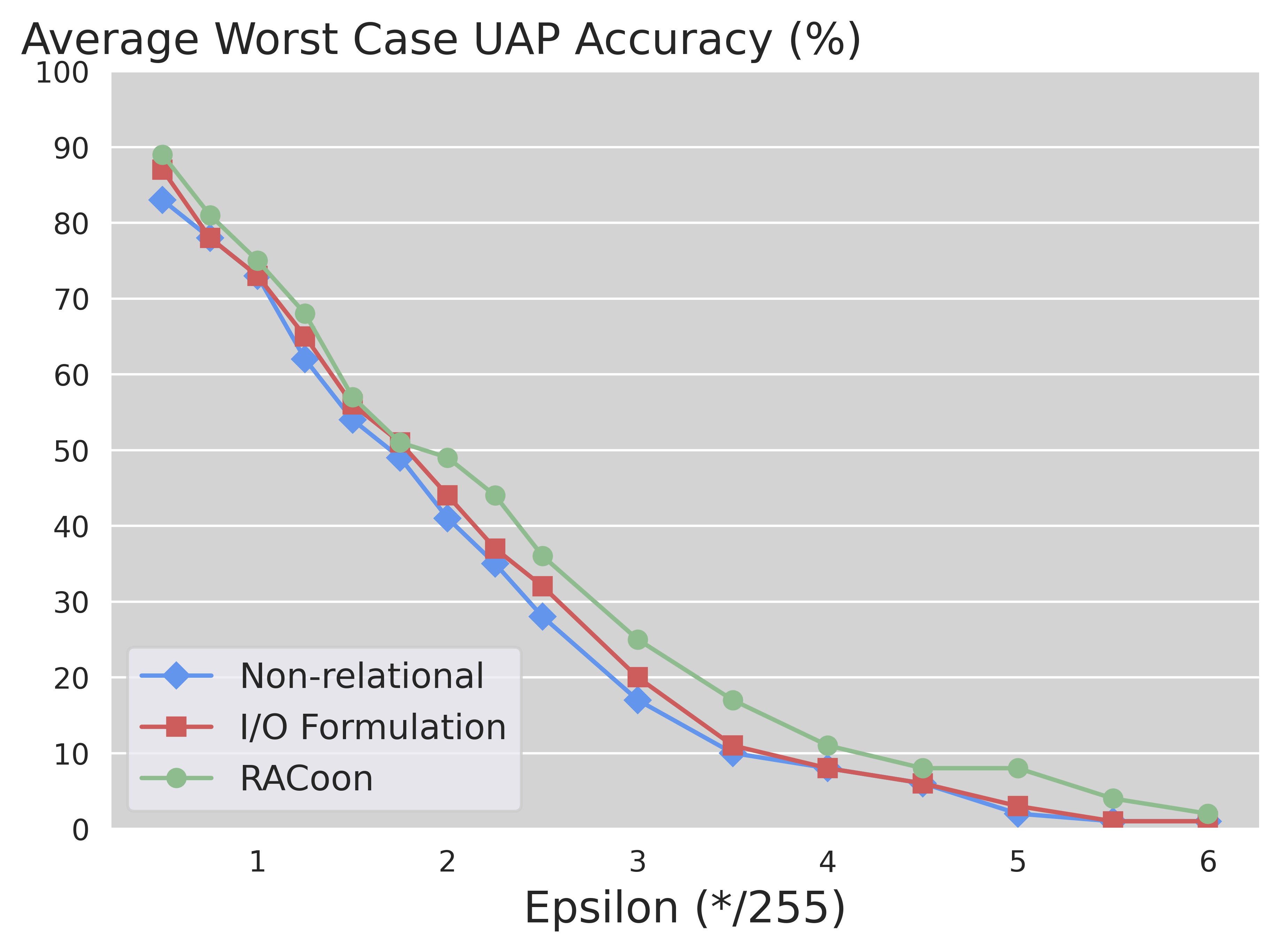}
\captionsetup{labelformat=empty}
\caption{(f) ConvBig (DiffAI)}
\end{minipage}\qquad
\addtocounter{figure}{-1}
\caption{Average worst-case UAP accuracy for different $\epsilon$ values for networks trained on CIFAR10.}
\label{fig:cifar_eps_appen}
\end{figure}

\clearpage
\newpage
\section{Plots for worst-case hamming distance for different \texorpdfstring{$\pmb{\epsilon}$}{} values}
\label{appenexp:additionalEpsilonHamming}
\begin{figure}[htb]
\centering
\begin{minipage}[b]{.3\textwidth}
\includegraphics[width=\textwidth]{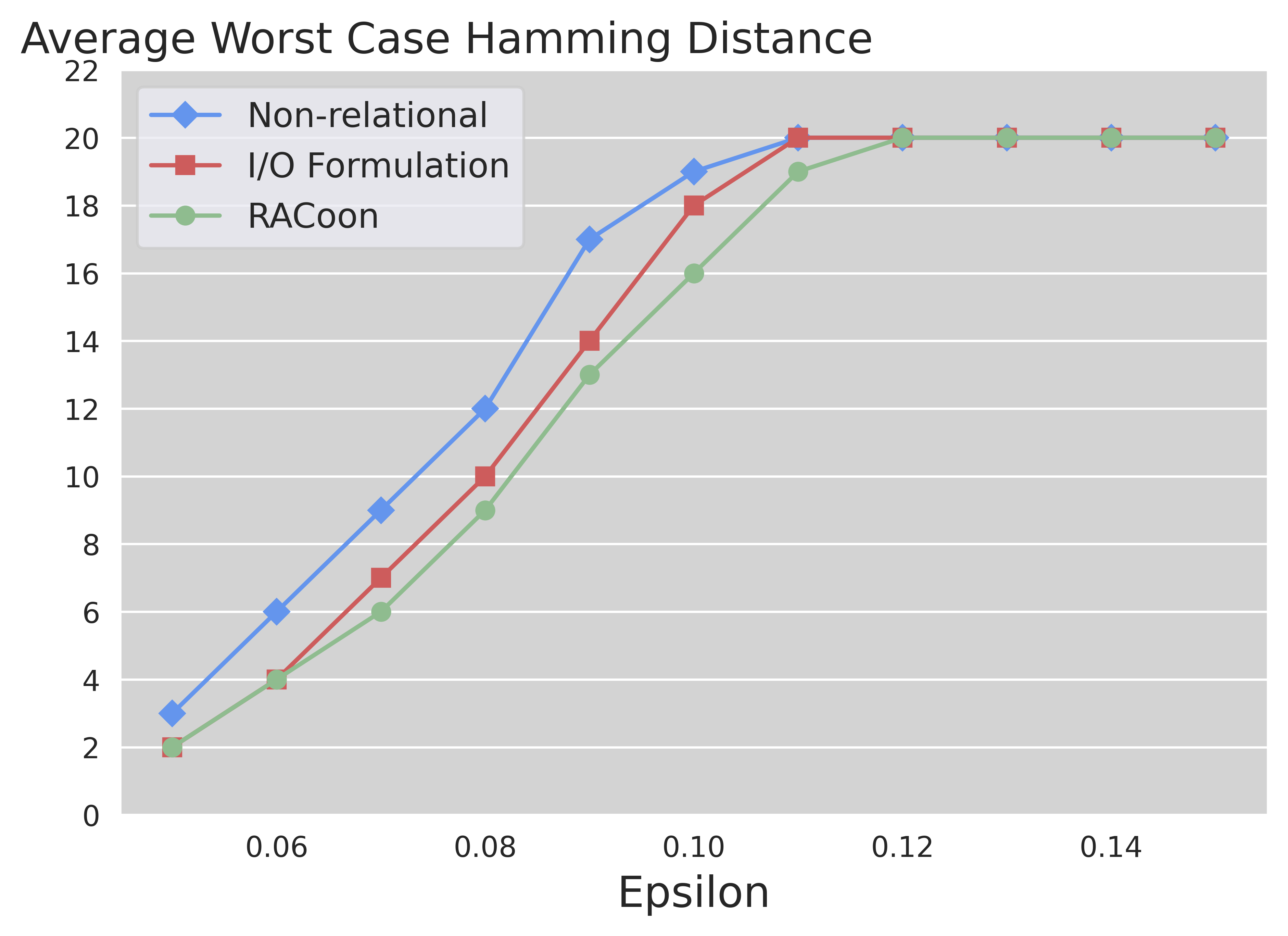}
\captionsetup{labelformat=empty}
\caption{(a) ConvSmall (Standard)}
\end{minipage}\qquad
\addtocounter{figure}{-1}
\begin{minipage}[b]{.3\textwidth}
\includegraphics[width=\textwidth]{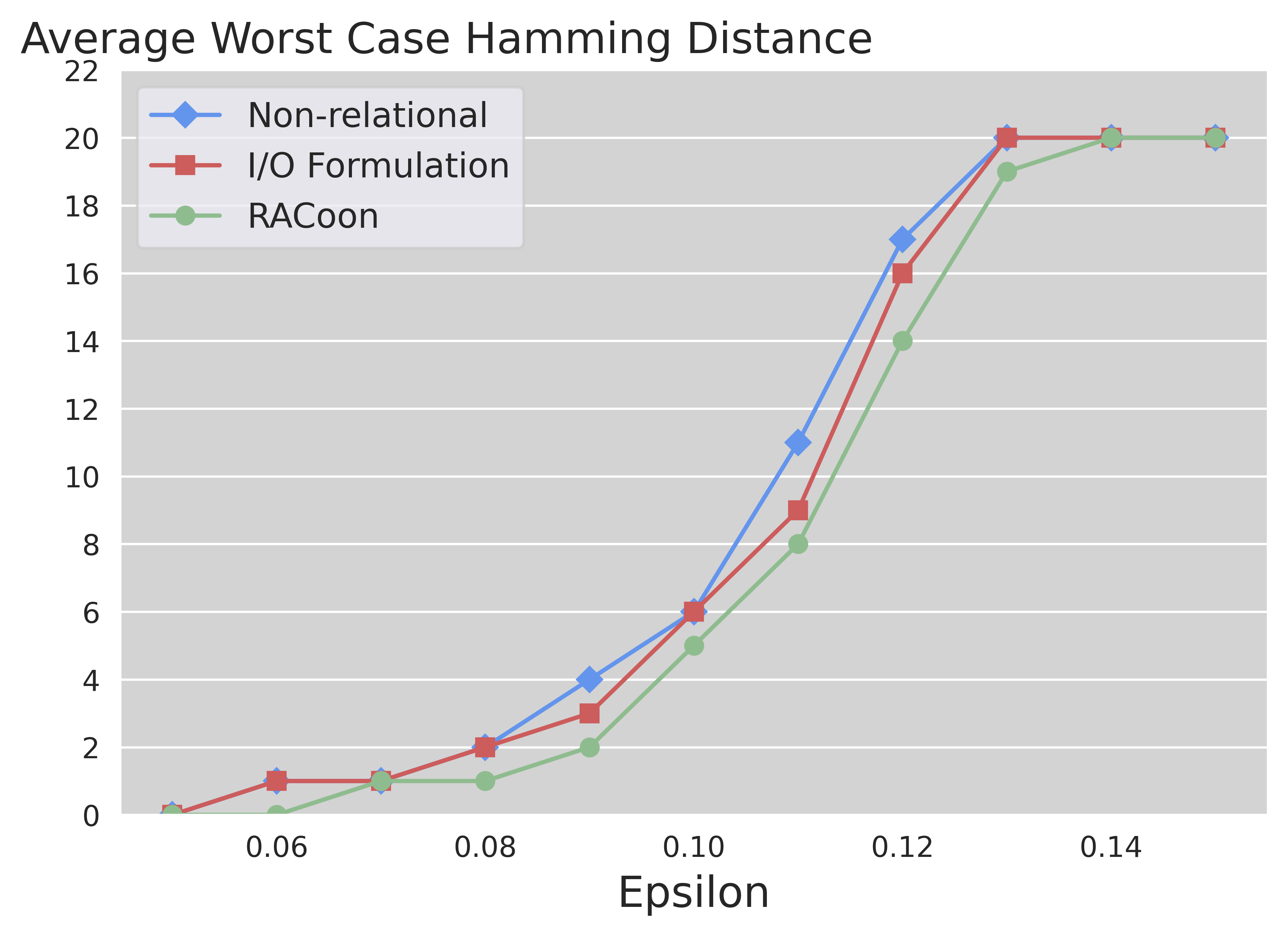}
\captionsetup{labelformat=empty}
\caption{(b) ConvSmall (PGD)}
\end{minipage}\qquad
\addtocounter{figure}{-1}
\begin{minipage}[b]{.3\textwidth}
\includegraphics[width=\textwidth]{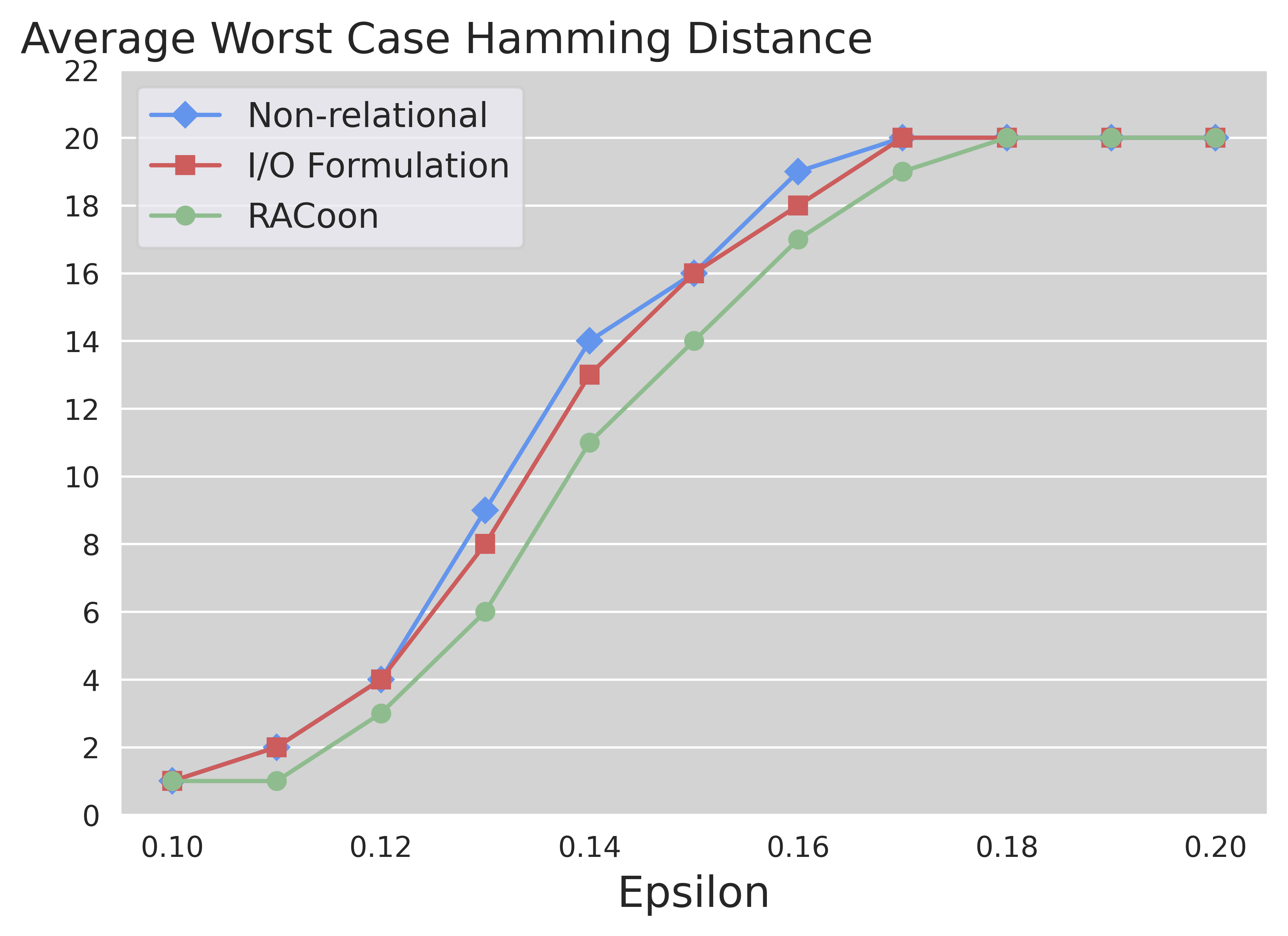}
\captionsetup{labelformat=empty}
\caption{(c) ConvBig (DiffAI)}
\end{minipage}\qquad
\addtocounter{figure}{-1}
\begin{minipage}[b]{.3\textwidth}
\includegraphics[width=\textwidth]{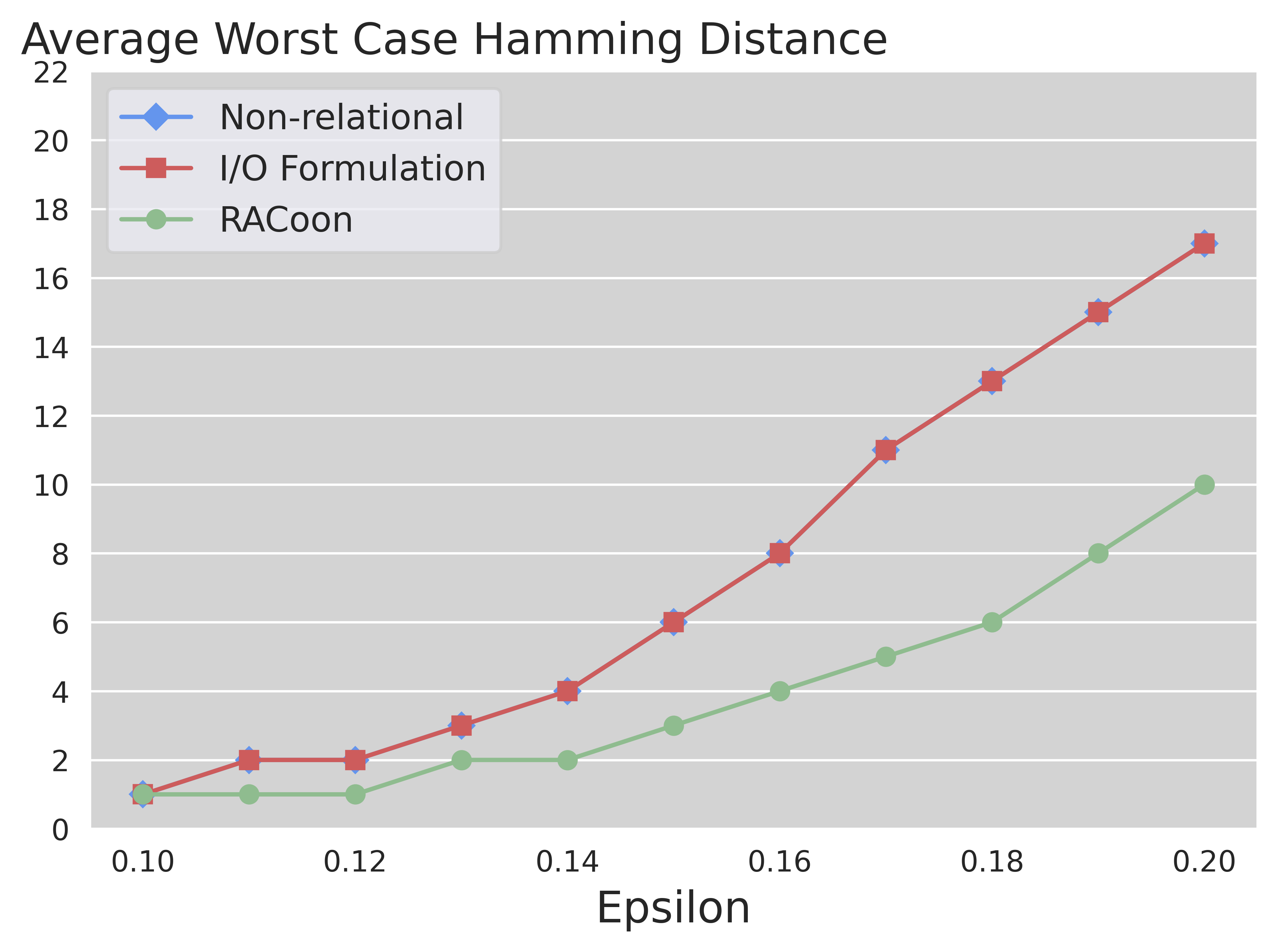}
\captionsetup{labelformat=empty}
\caption{(d) ConvSmall (COLT)}
\end{minipage}\qquad
\addtocounter{figure}{-1}
\begin{minipage}[b]{.3\textwidth}
\includegraphics[width=\textwidth]{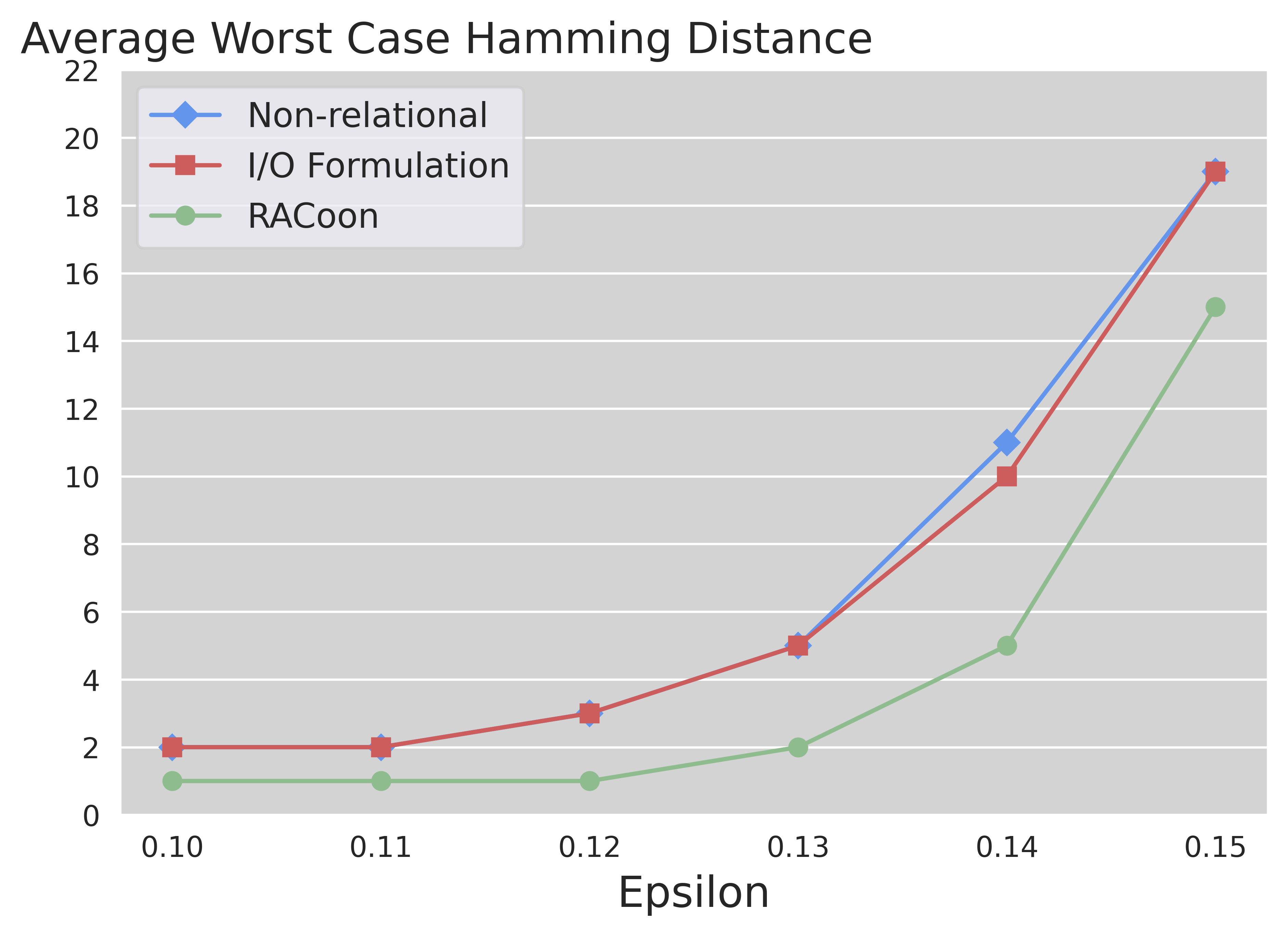}
\captionsetup{labelformat=empty}
\caption{(e) IBPSmall}
\end{minipage}\qquad
\addtocounter{figure}{-1}
\begin{minipage}[b]{.3\textwidth}
\includegraphics[width=\textwidth]{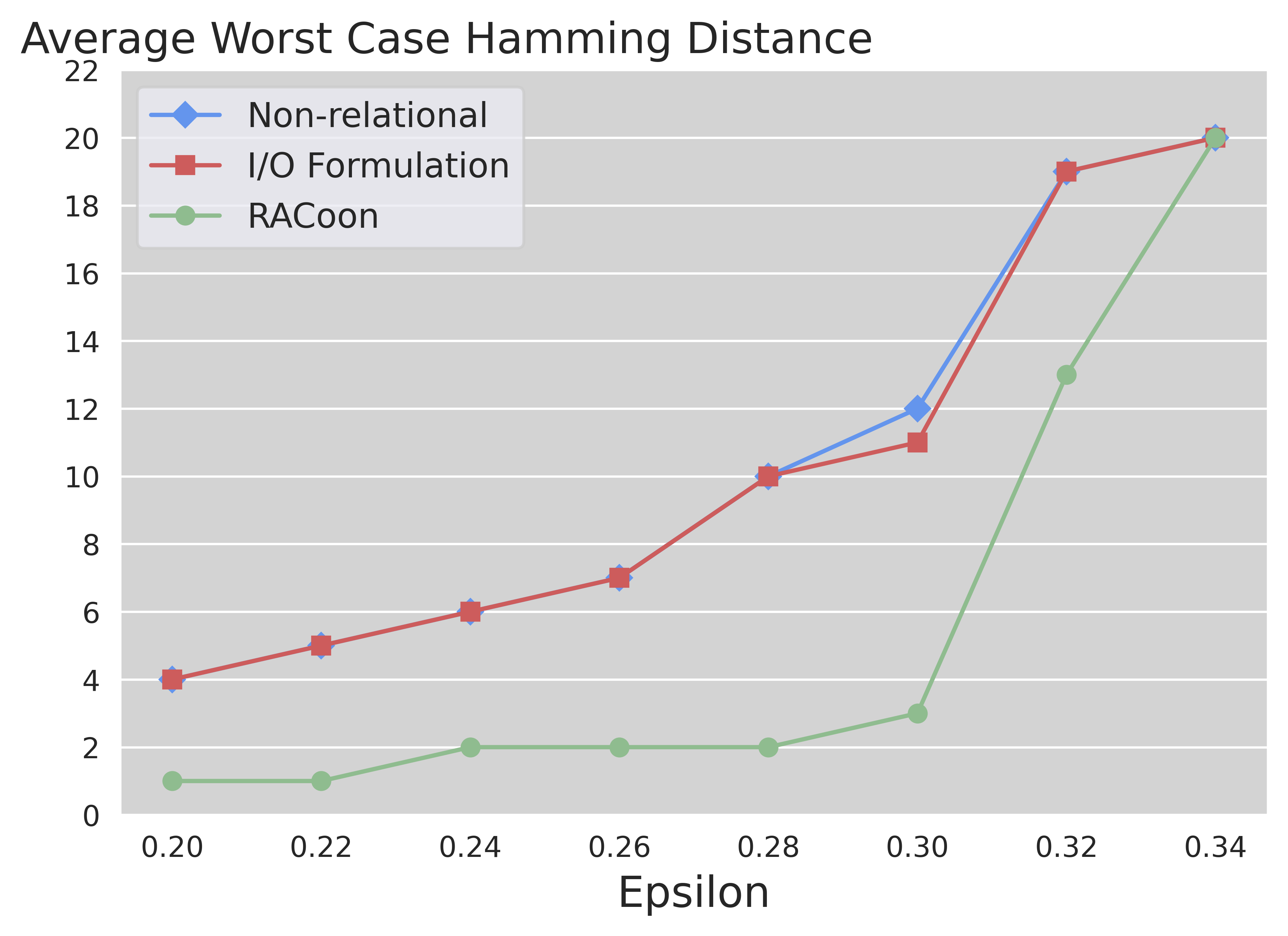}
\captionsetup{labelformat=empty}
\caption{(f) IBPMedium}
\end{minipage}\qquad
\addtocounter{figure}{-1}
\caption{Average worst-case hamming distance for different $\epsilon$ values for networks trained on MNIST on binary strings of length $k=20$.}
\label{fig:mnist_eps_appen_hamming}
\end{figure}

\section{k-UAP verification results for different \texorpdfstring{$\pmb{k}$}{} and \texorpdfstring{$\pmb{\epsilon}$}{} values}
\label{appenexp:diffKEps}
\begin{figure}[htb]
\centering
\begin{minipage}[b]{.18\textwidth}
\includegraphics[width=\textwidth]{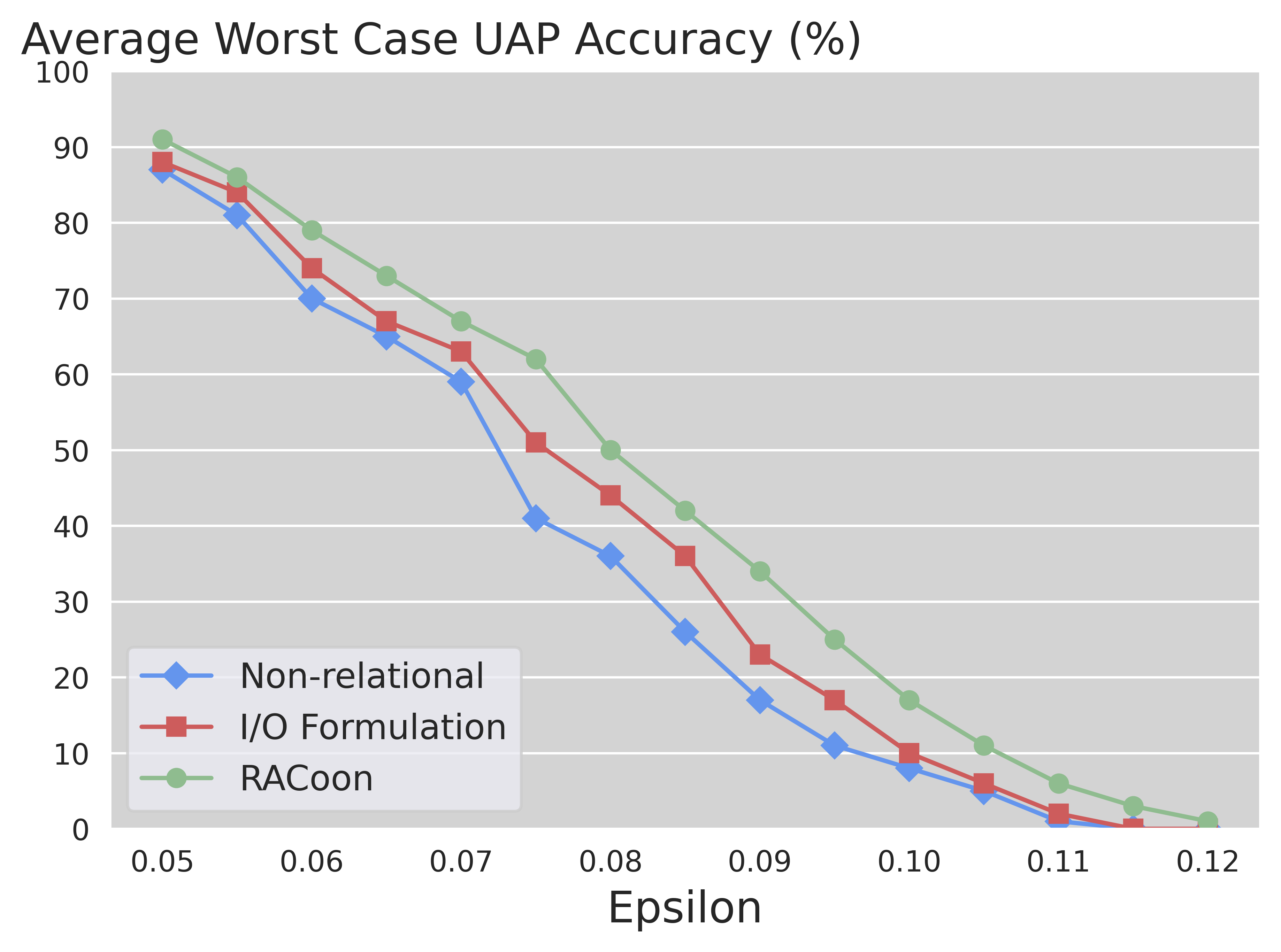}
\captionsetup{labelformat=empty}
\caption{(a) $k = 10$}
\end{minipage}
\addtocounter{figure}{-1}
\begin{minipage}[b]{.18\textwidth}
\includegraphics[width=\textwidth]{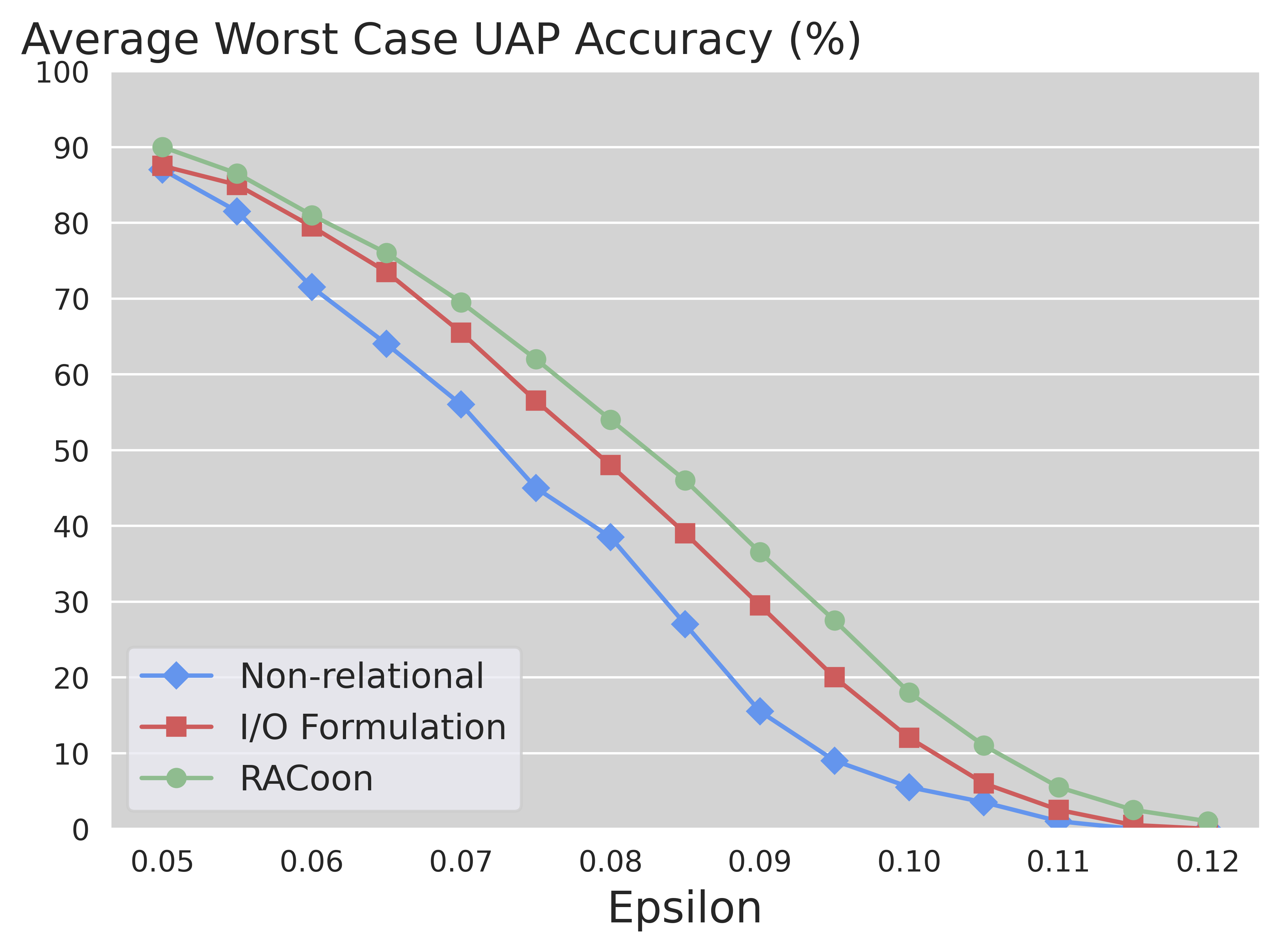}
\captionsetup{labelformat=empty}
\caption{(b) $k = 20$}
\end{minipage}
\addtocounter{figure}{-1}
\begin{minipage}[b]{.18\textwidth}
\includegraphics[width=\textwidth]{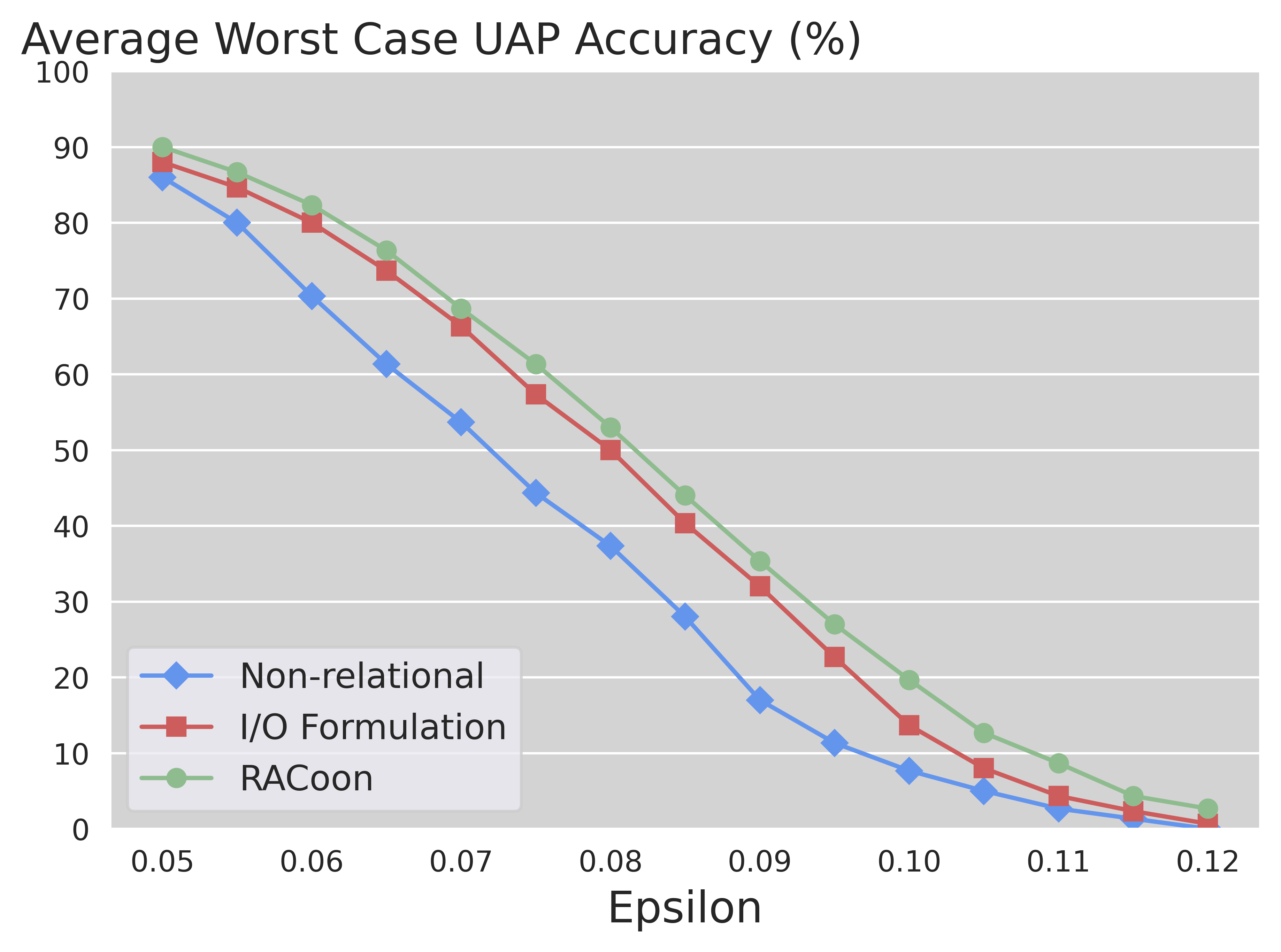}
\captionsetup{labelformat=empty}
\caption{(c) $k = 30$}
\end{minipage}
\addtocounter{figure}{-1}
\begin{minipage}[b]{.18\textwidth}
\includegraphics[width=\textwidth]{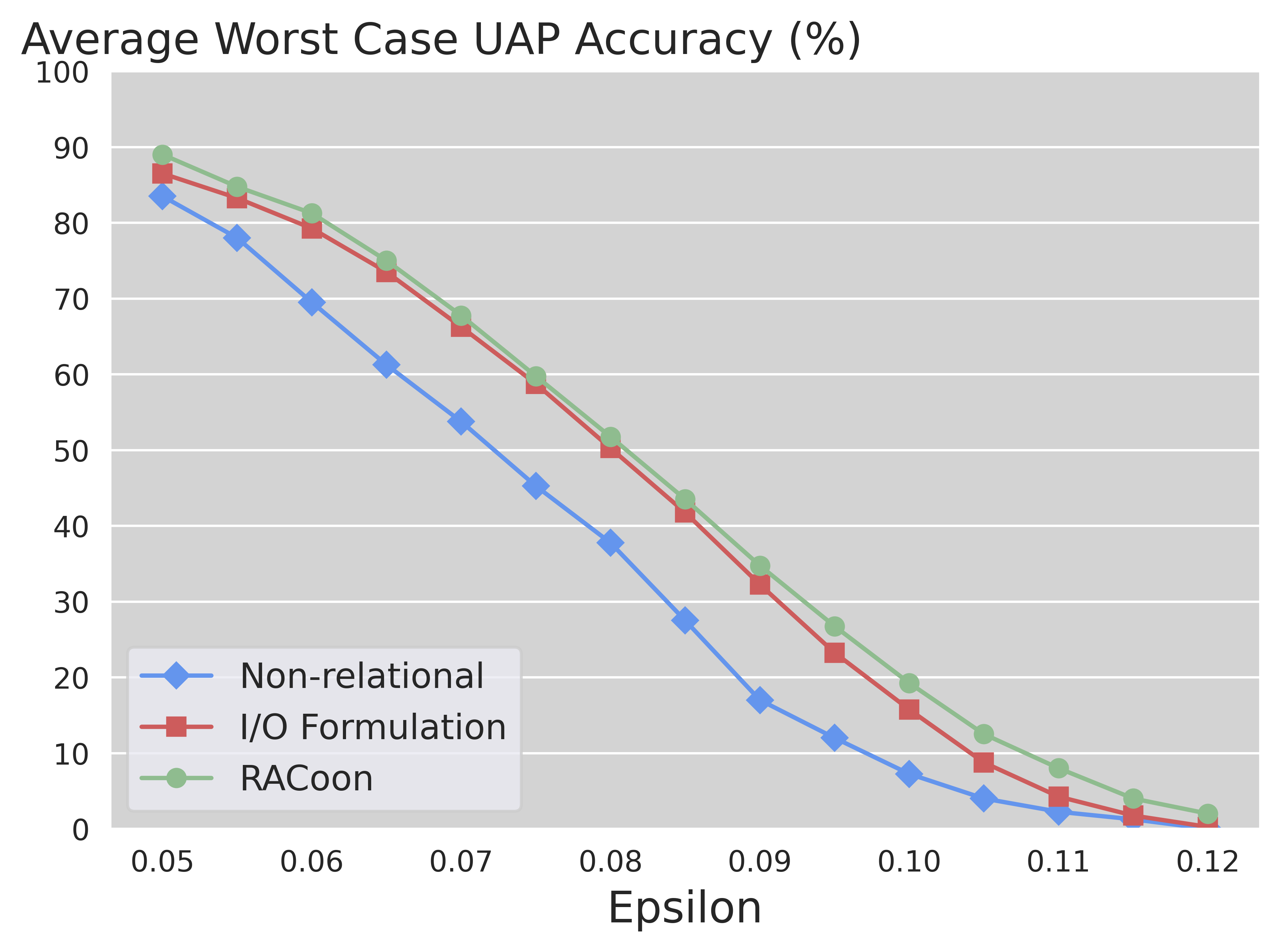}
\captionsetup{labelformat=empty}
\caption{(d) $k = 40$}
\end{minipage}
\addtocounter{figure}{-1}
\begin{minipage}[b]{.18\textwidth}
\includegraphics[width=\textwidth]{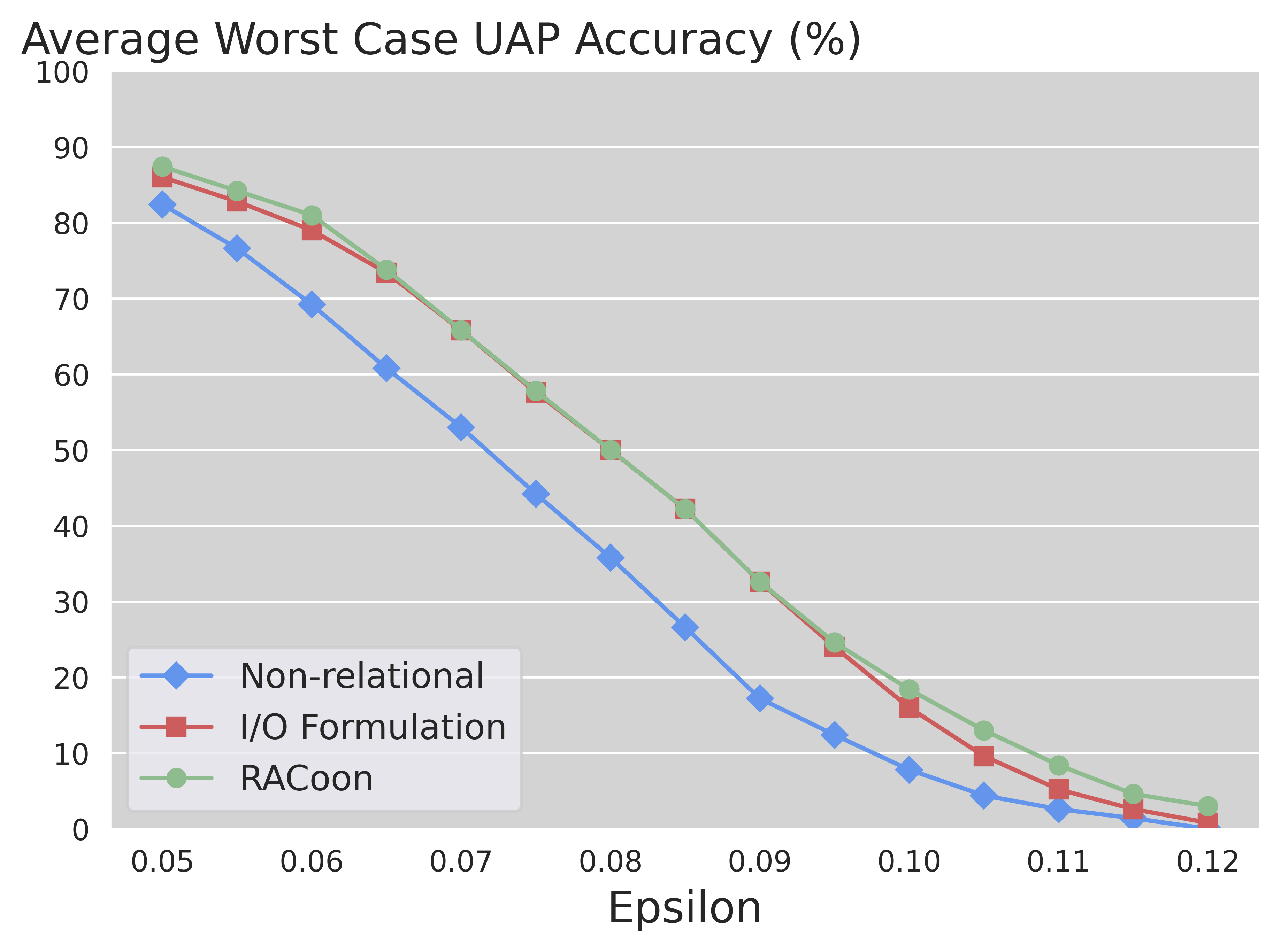}
\captionsetup{labelformat=empty}
\caption{(e) $k = 50$}
\end{minipage}
\addtocounter{figure}{-1}
\caption{Average worst-case UAP accuracy for different $k$ and $\epsilon$ values for ConvSmall Standard MNIST network.}
\label{fig:mnist_point_diff}
\end{figure}

\begin{figure}[htb]
\centering
\begin{minipage}[b]{.18\textwidth}
\includegraphics[width=\textwidth]{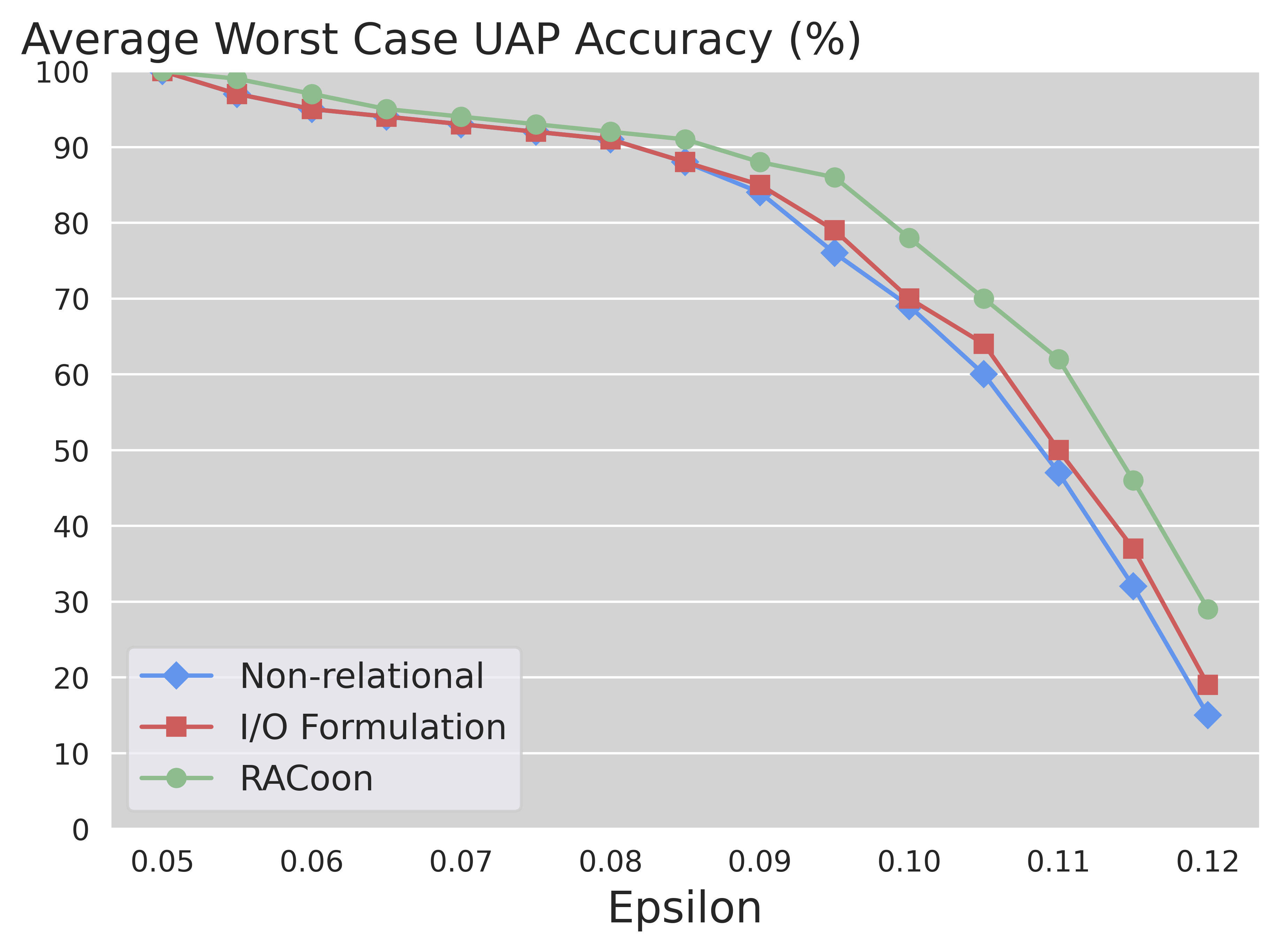}
\captionsetup{labelformat=empty}
\caption{(a) $k = 10$}
\end{minipage}
\addtocounter{figure}{-1}
\begin{minipage}[b]{.18\textwidth}
\includegraphics[width=\textwidth]{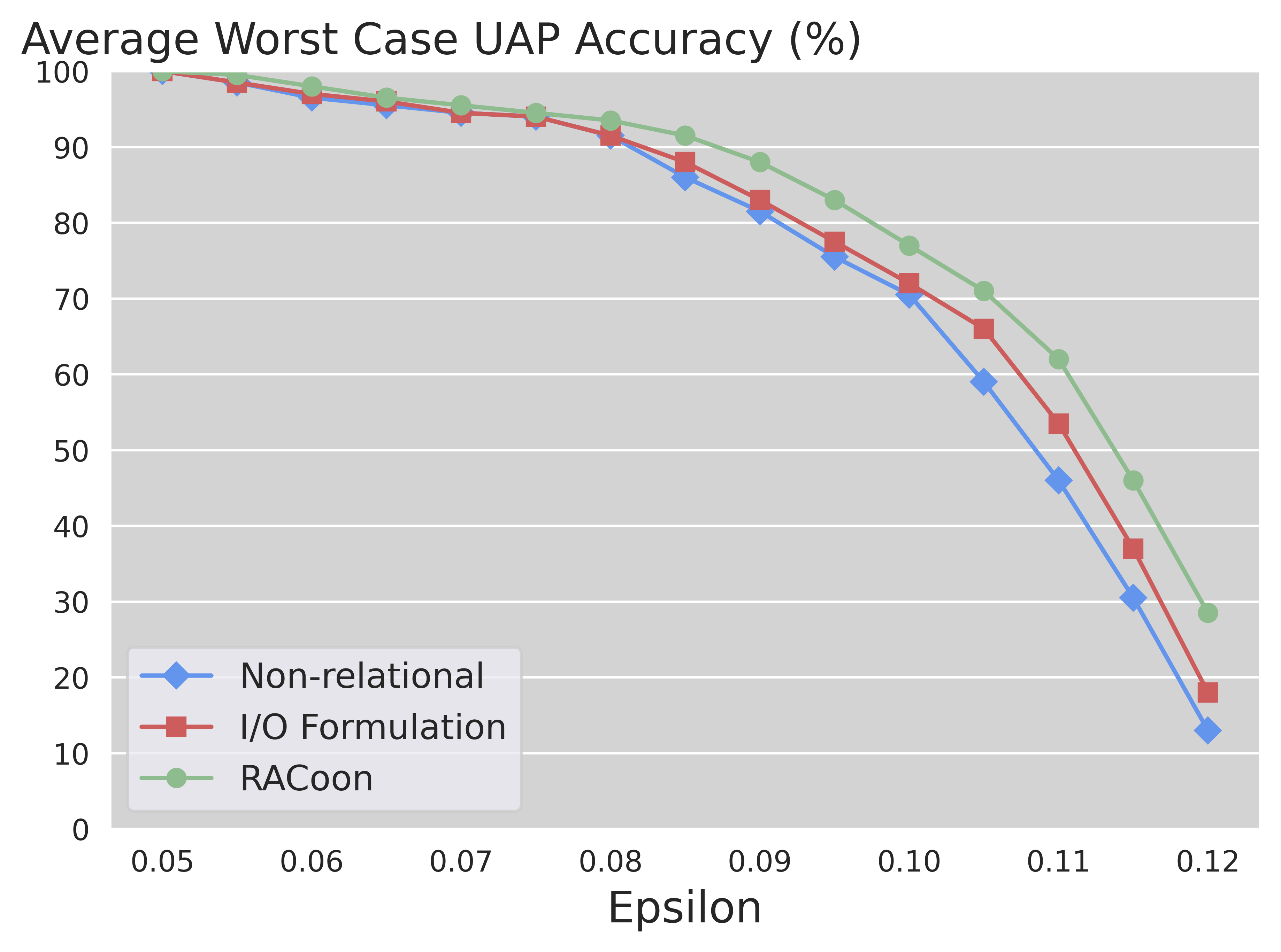}
\captionsetup{labelformat=empty}
\caption{(b) $k = 20$}
\end{minipage}
\addtocounter{figure}{-1}
\begin{minipage}[b]{.18\textwidth}
\includegraphics[width=\textwidth]{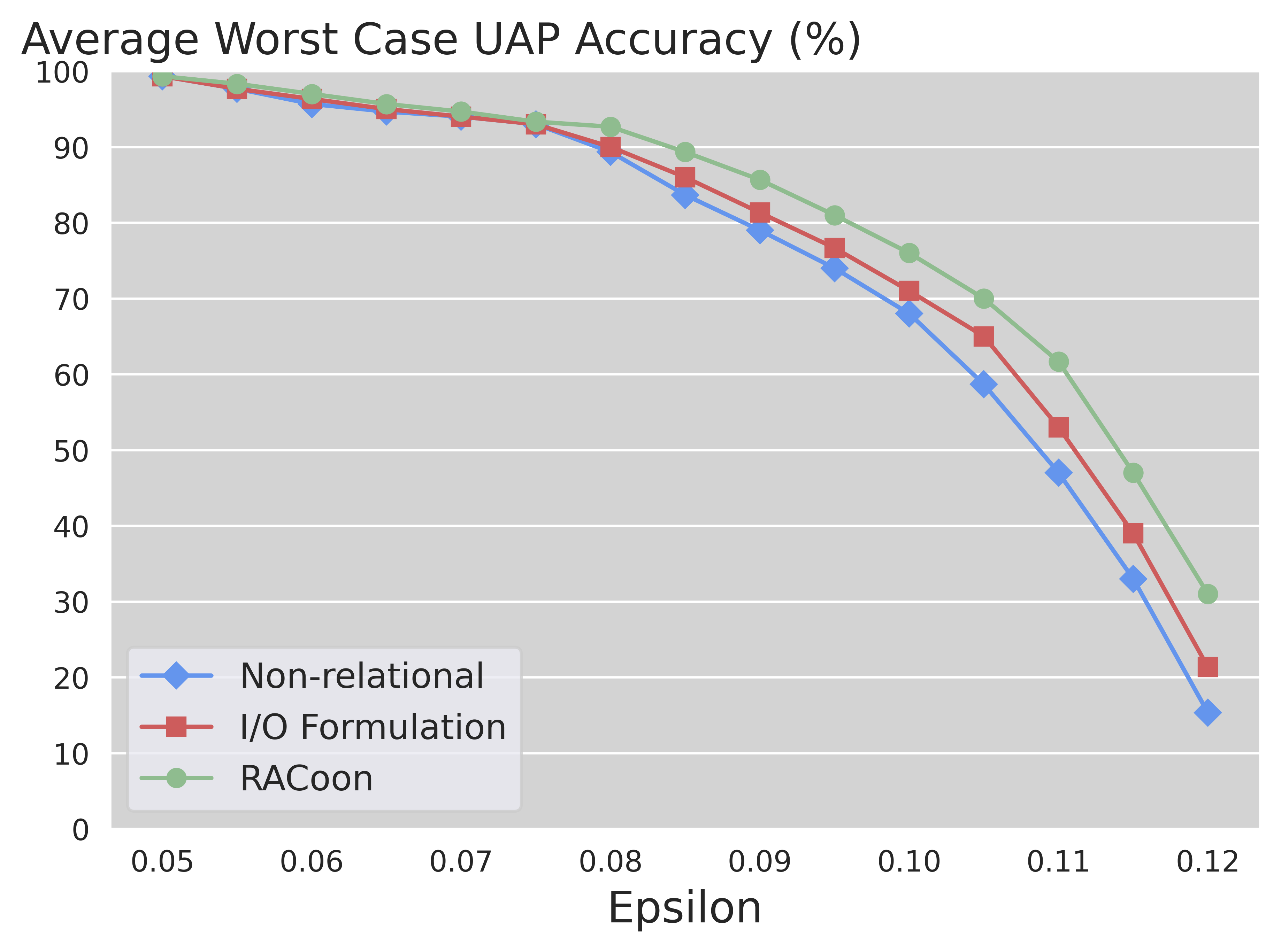}
\captionsetup{labelformat=empty}
\caption{(c) $k = 30$}
\end{minipage}
\addtocounter{figure}{-1}
\begin{minipage}[b]{.18\textwidth}
\includegraphics[width=\textwidth]{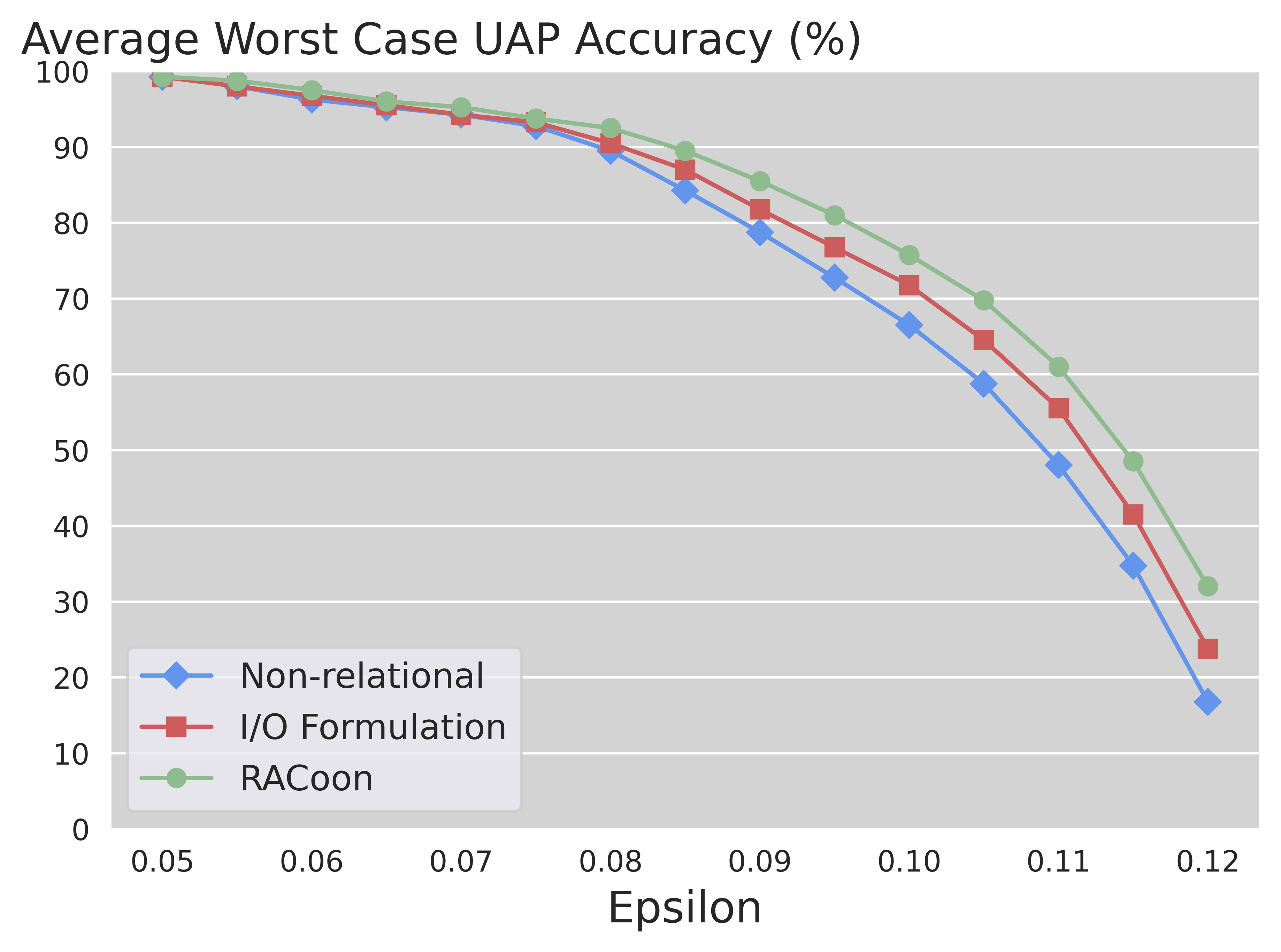}
\captionsetup{labelformat=empty}
\caption{(d) $k = 40$}
\end{minipage}
\addtocounter{figure}{-1}
\begin{minipage}[b]{.18\textwidth}
\includegraphics[width=\textwidth]{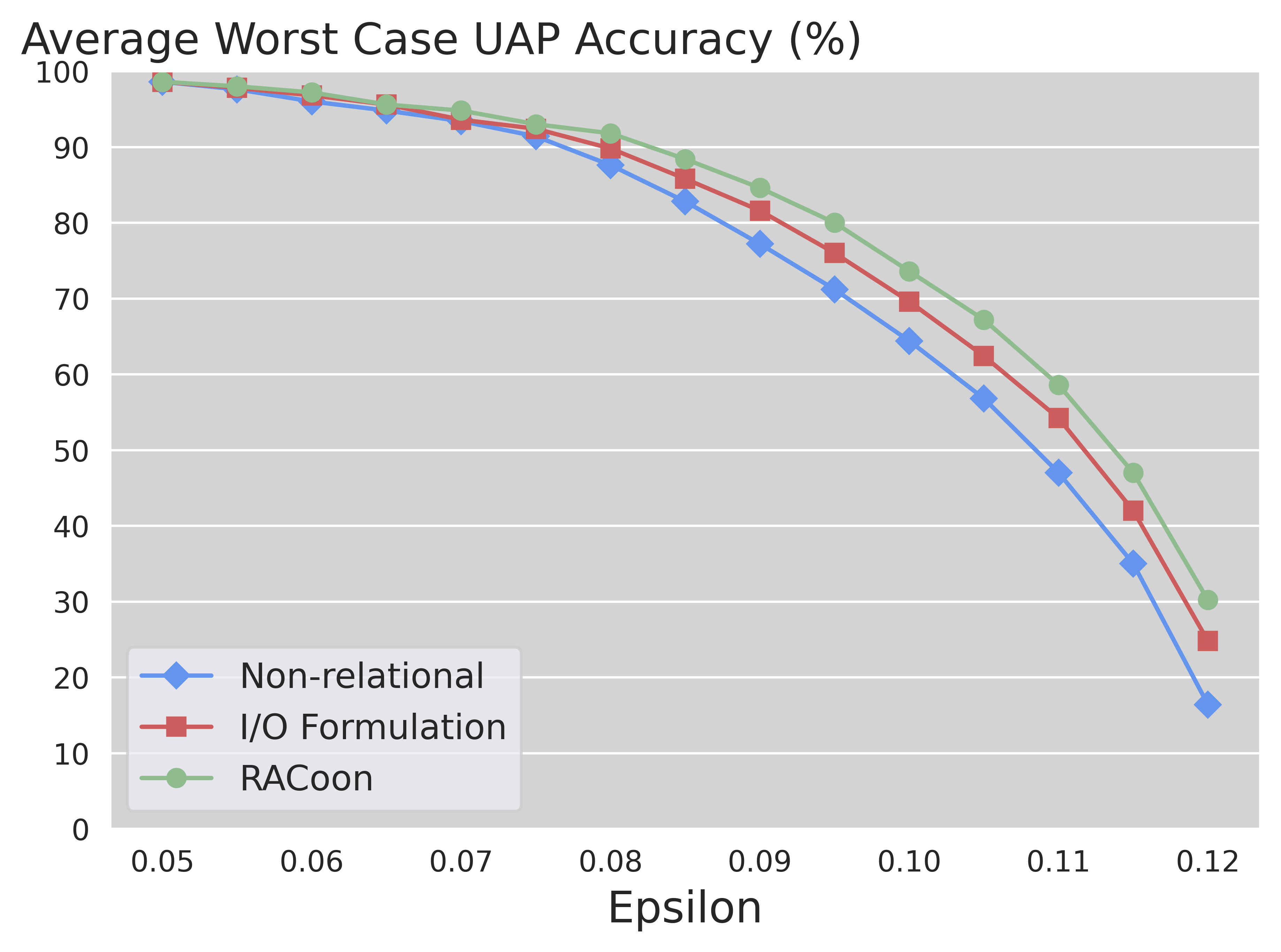}
\captionsetup{labelformat=empty}
\caption{(e) $k = 50$}
\end{minipage}
\addtocounter{figure}{-1}
\caption{Average worst-case UAP accuracy for different $k$ and $\epsilon$ values for ConvSmall PGD MNIST network.}
\label{fig:mnist_pgd_diff}
\end{figure}

\begin{figure}[htb]
\centering
\begin{minipage}[b]{.18\textwidth}
\includegraphics[width=\textwidth]{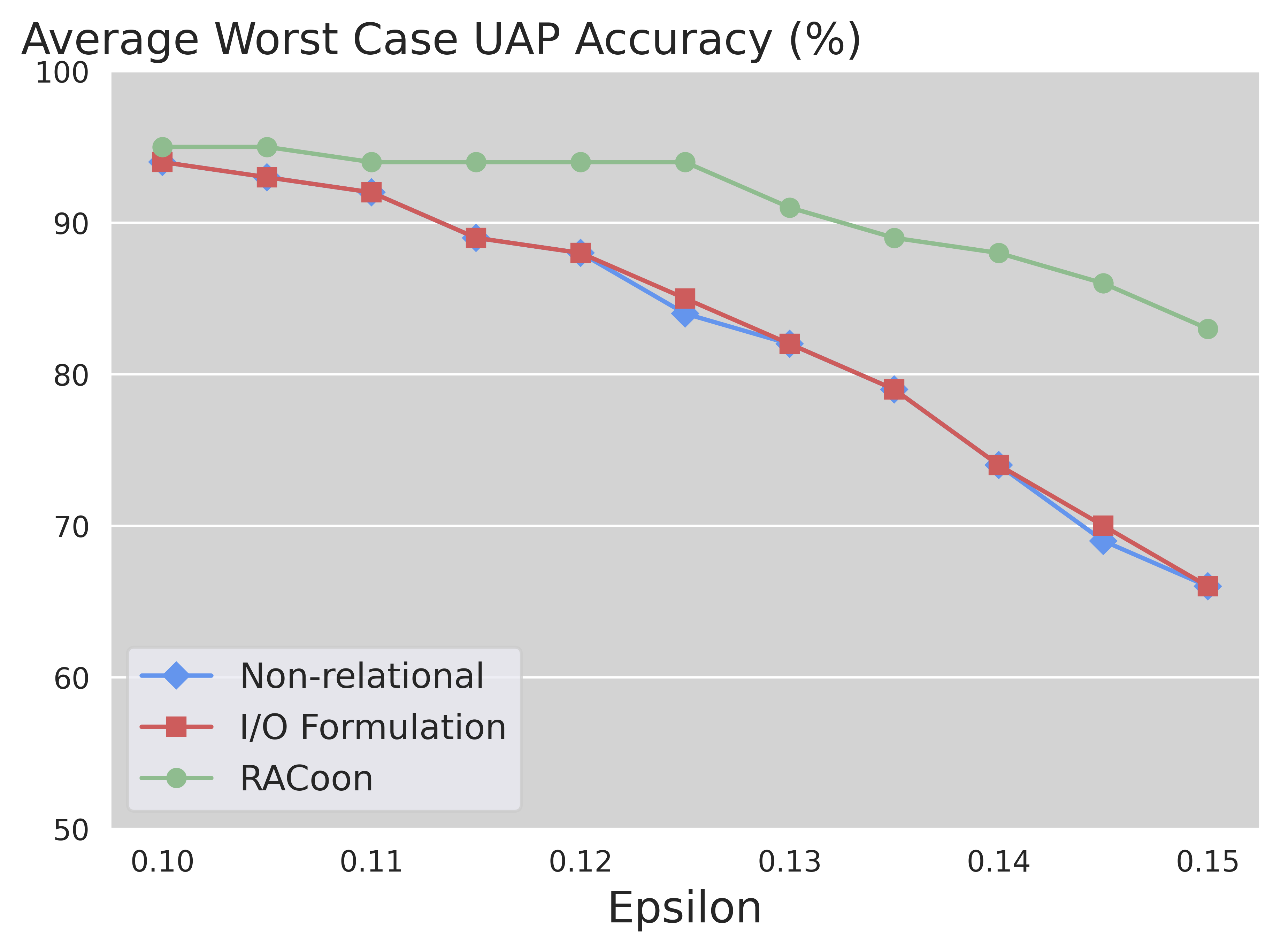}
\captionsetup{labelformat=empty}
\caption{(a) $k = 10$}
\end{minipage}
\addtocounter{figure}{-1}
\begin{minipage}[b]{.18\textwidth}
\includegraphics[width=\textwidth]{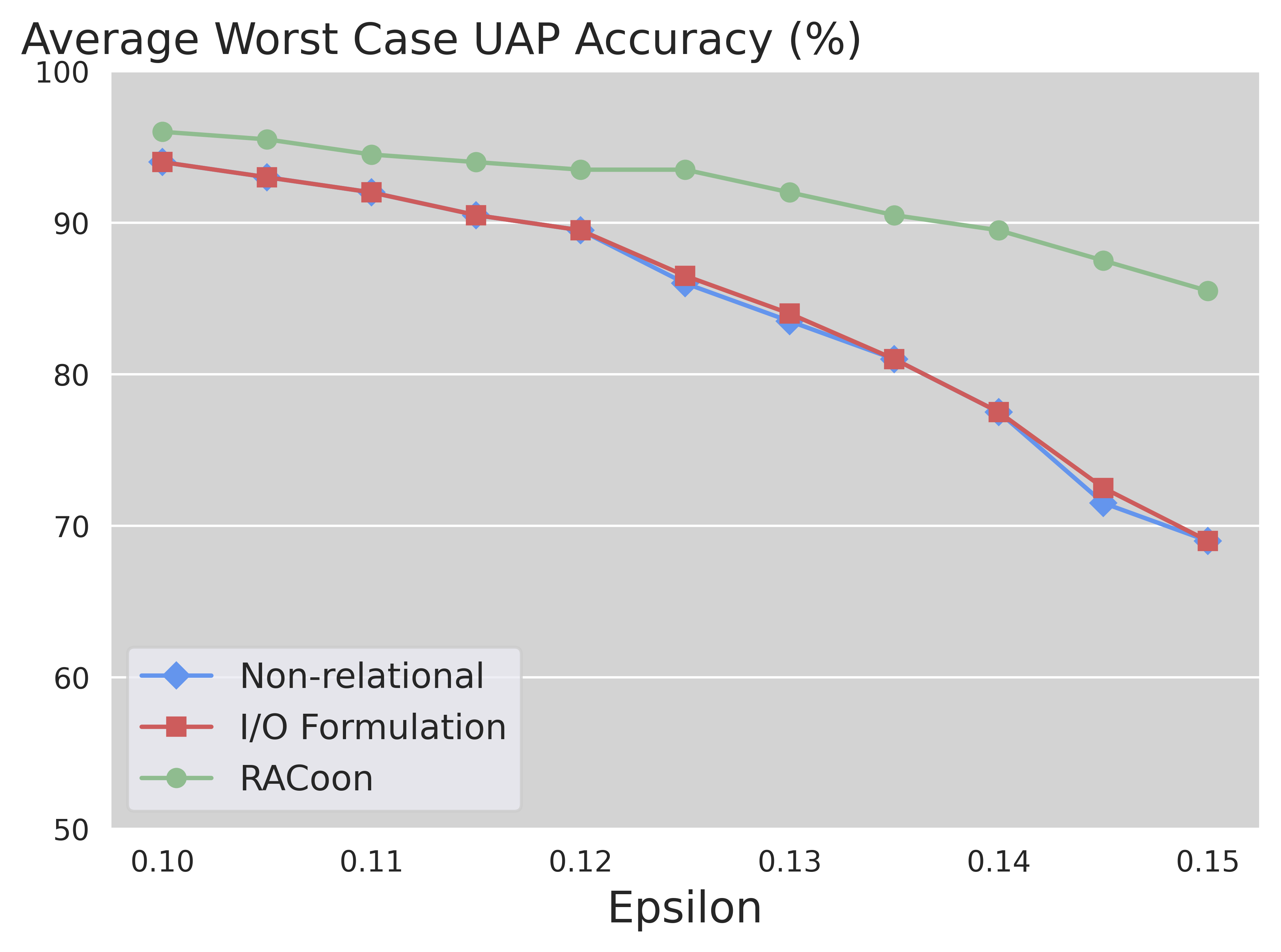}
\captionsetup{labelformat=empty}
\caption{(b) $k = 20$}
\end{minipage}
\addtocounter{figure}{-1}
\begin{minipage}[b]{.18\textwidth}
\includegraphics[width=\textwidth]{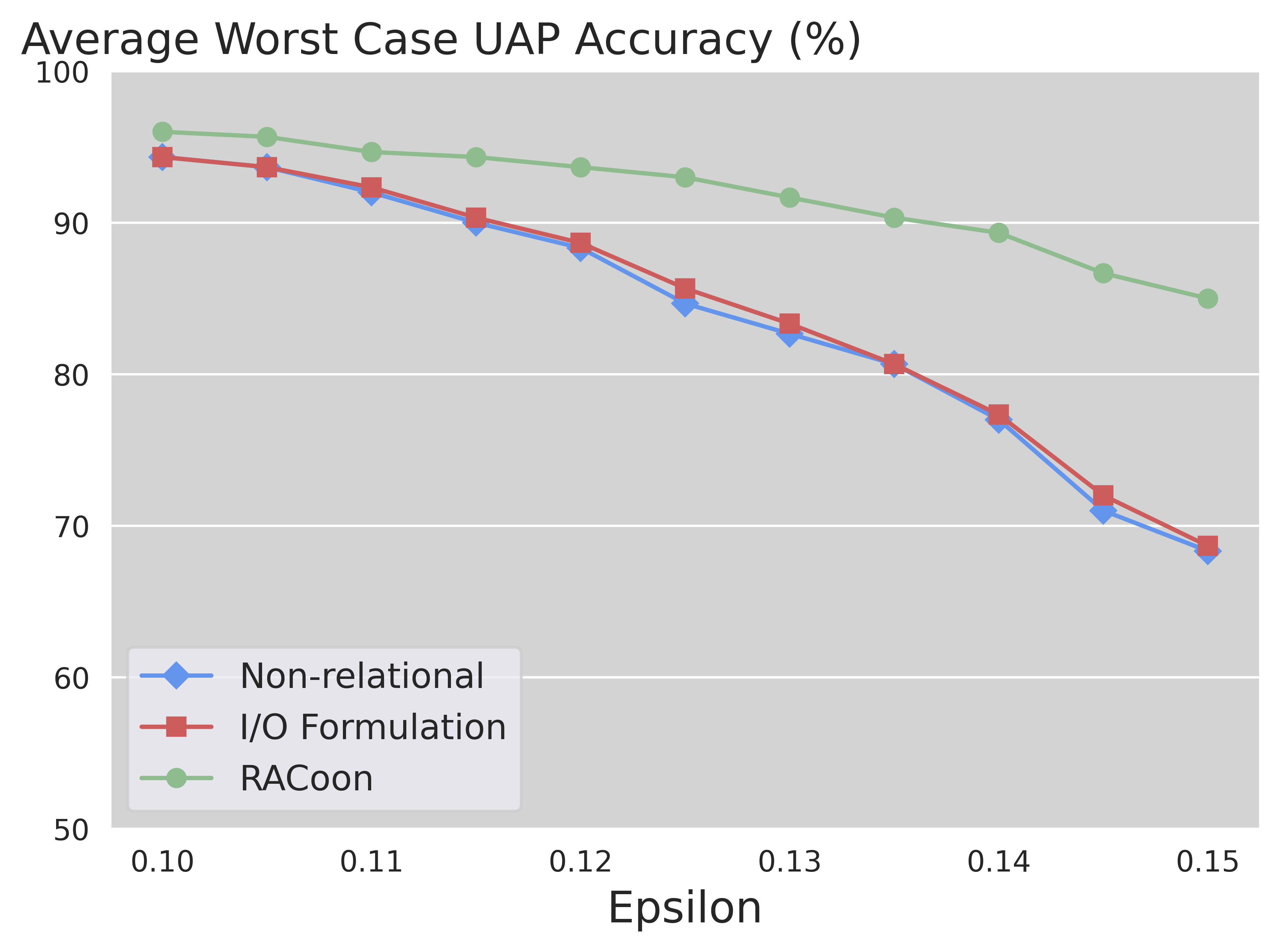}
\captionsetup{labelformat=empty}
\caption{(c) $k = 30$}
\end{minipage}
\addtocounter{figure}{-1}
\begin{minipage}[b]{.18\textwidth}
\includegraphics[width=\textwidth]{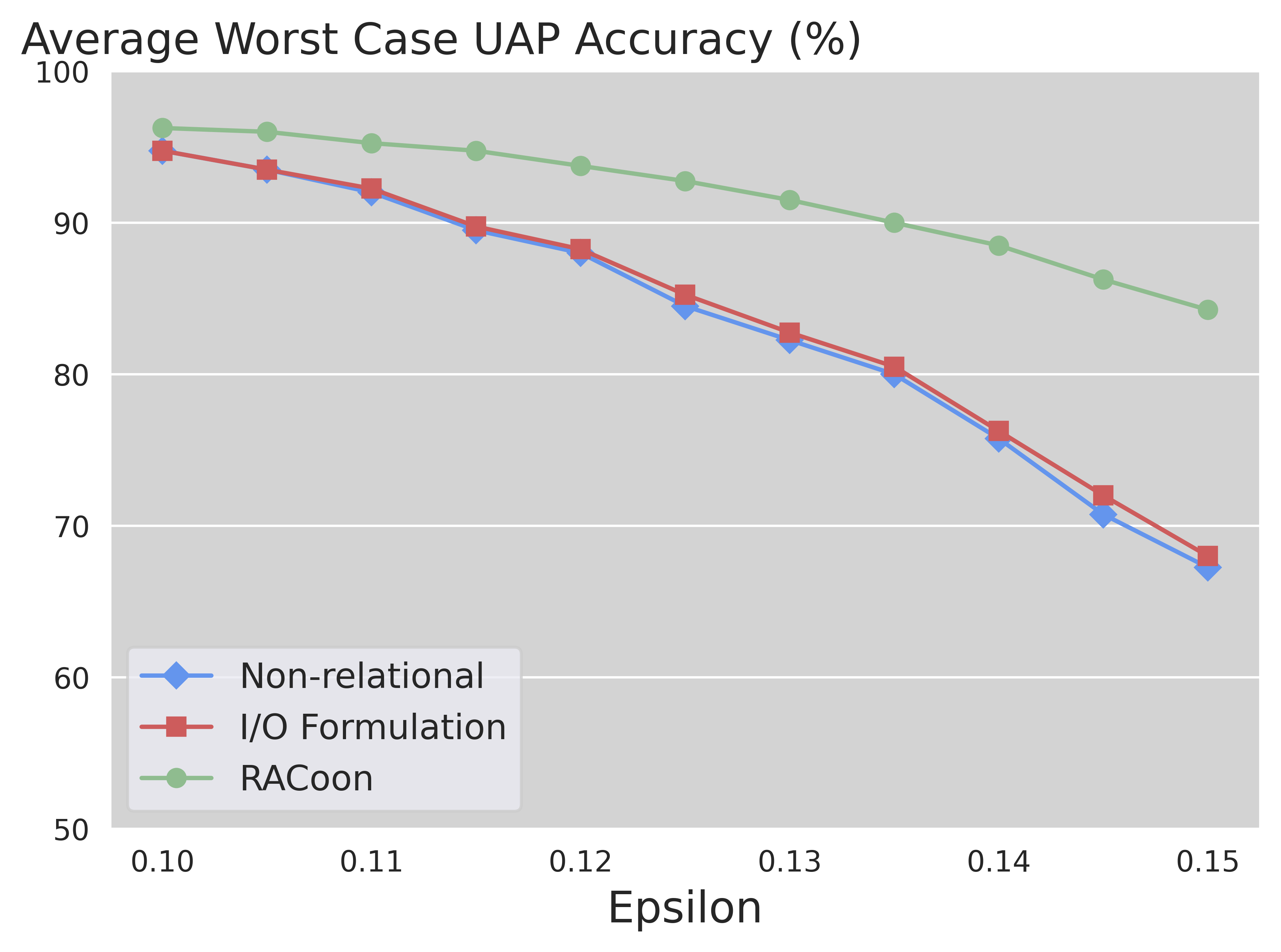}
\captionsetup{labelformat=empty}
\caption{(d) $k = 40$}
\end{minipage}
\addtocounter{figure}{-1}
\begin{minipage}[b]{.18\textwidth}
\includegraphics[width=\textwidth]{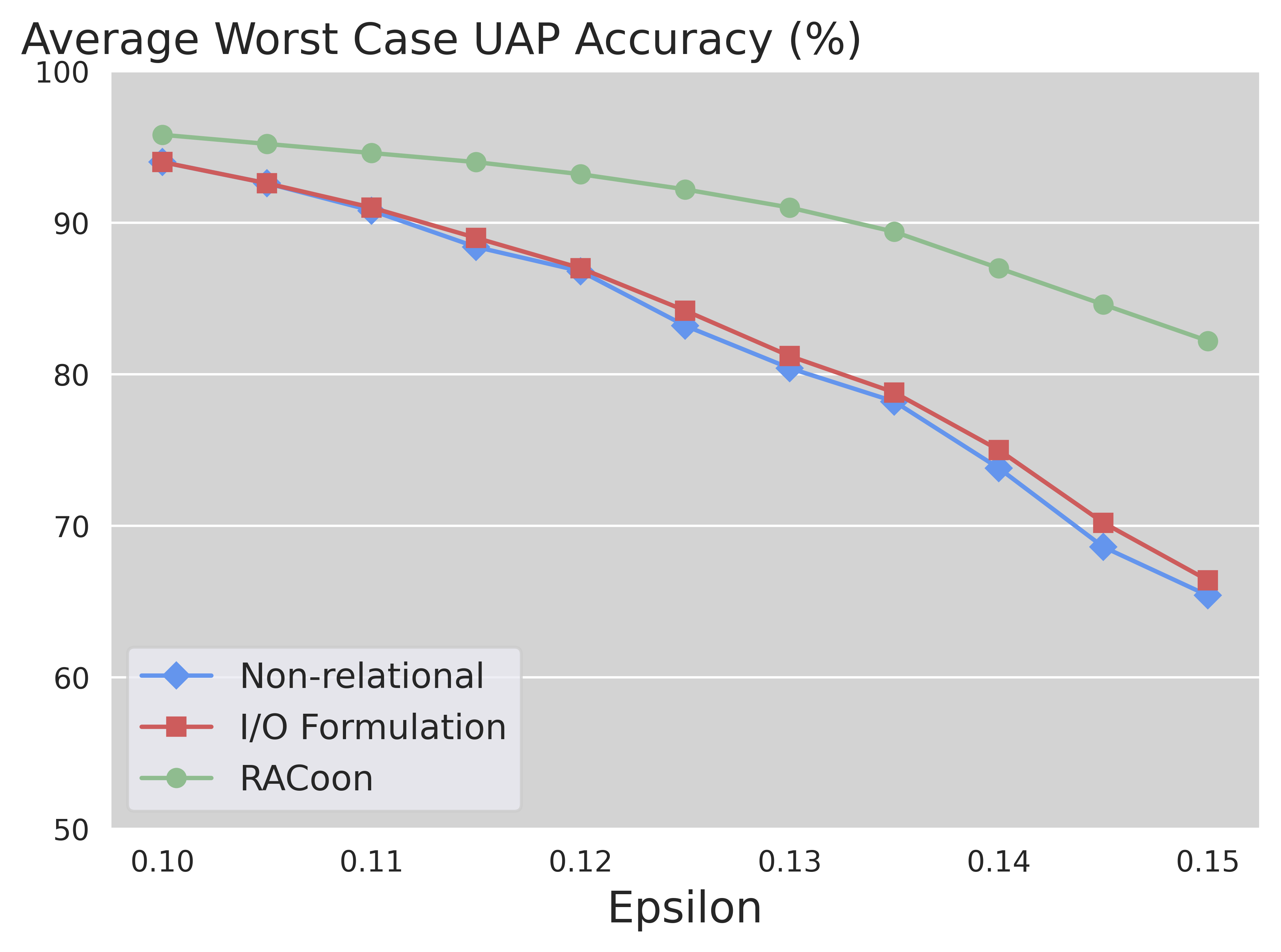}
\captionsetup{labelformat=empty}
\caption{(e) $k = 50$}
\end{minipage}
\addtocounter{figure}{-1}
\caption{Average worst-case UAP accuracy for different $k$ and $\epsilon$ values for ConvSmall COLT MNIST network.}
\label{fig:mnist_colt_diff}
\end{figure}

\begin{figure}[htb]
\centering
\begin{minipage}[b]{.18\textwidth}
\includegraphics[width=\textwidth]{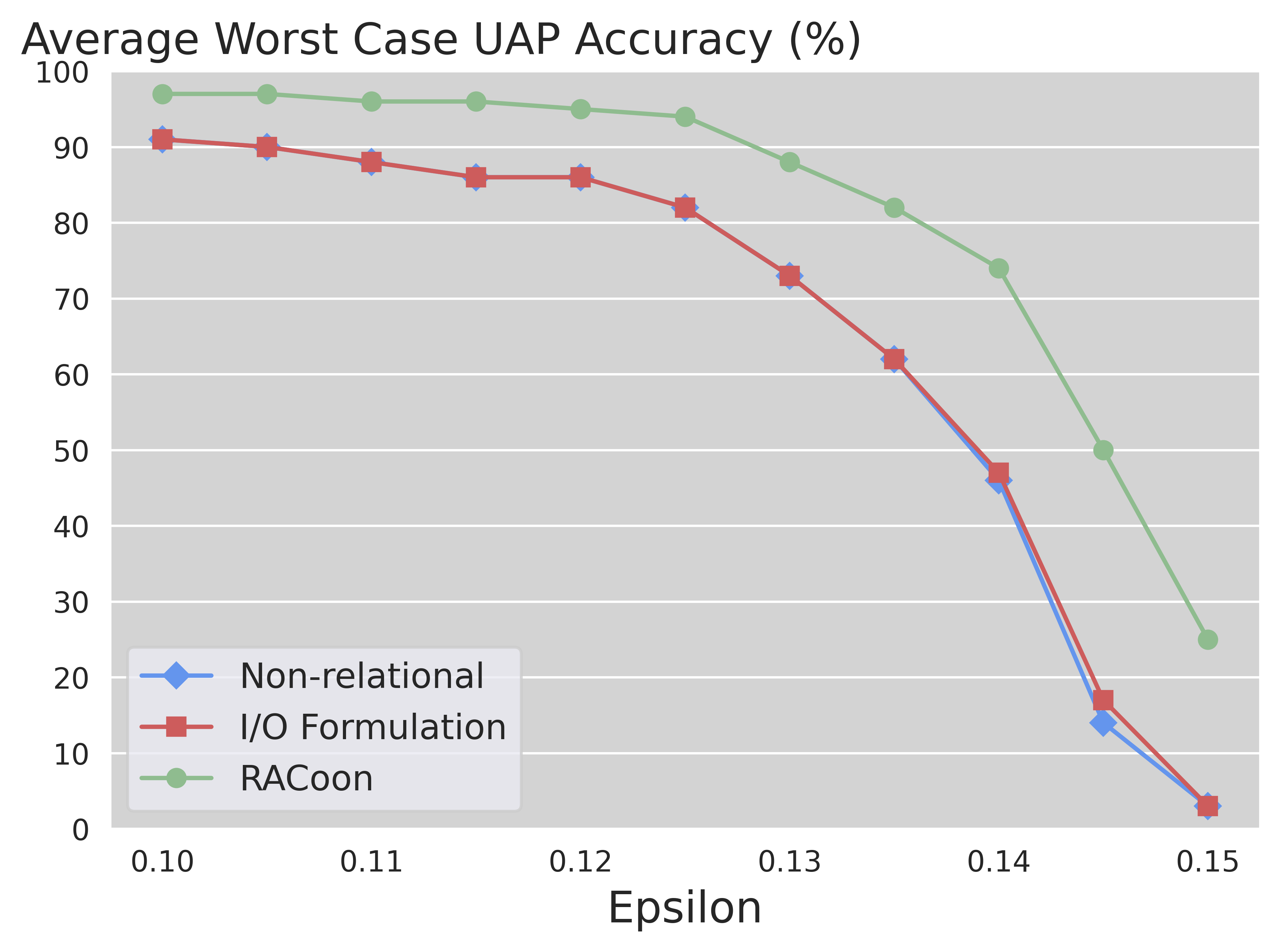}
\captionsetup{labelformat=empty}
\caption{(a) $k = 10$}
\end{minipage}
\addtocounter{figure}{-1}
\begin{minipage}[b]{.18\textwidth}
\includegraphics[width=\textwidth]{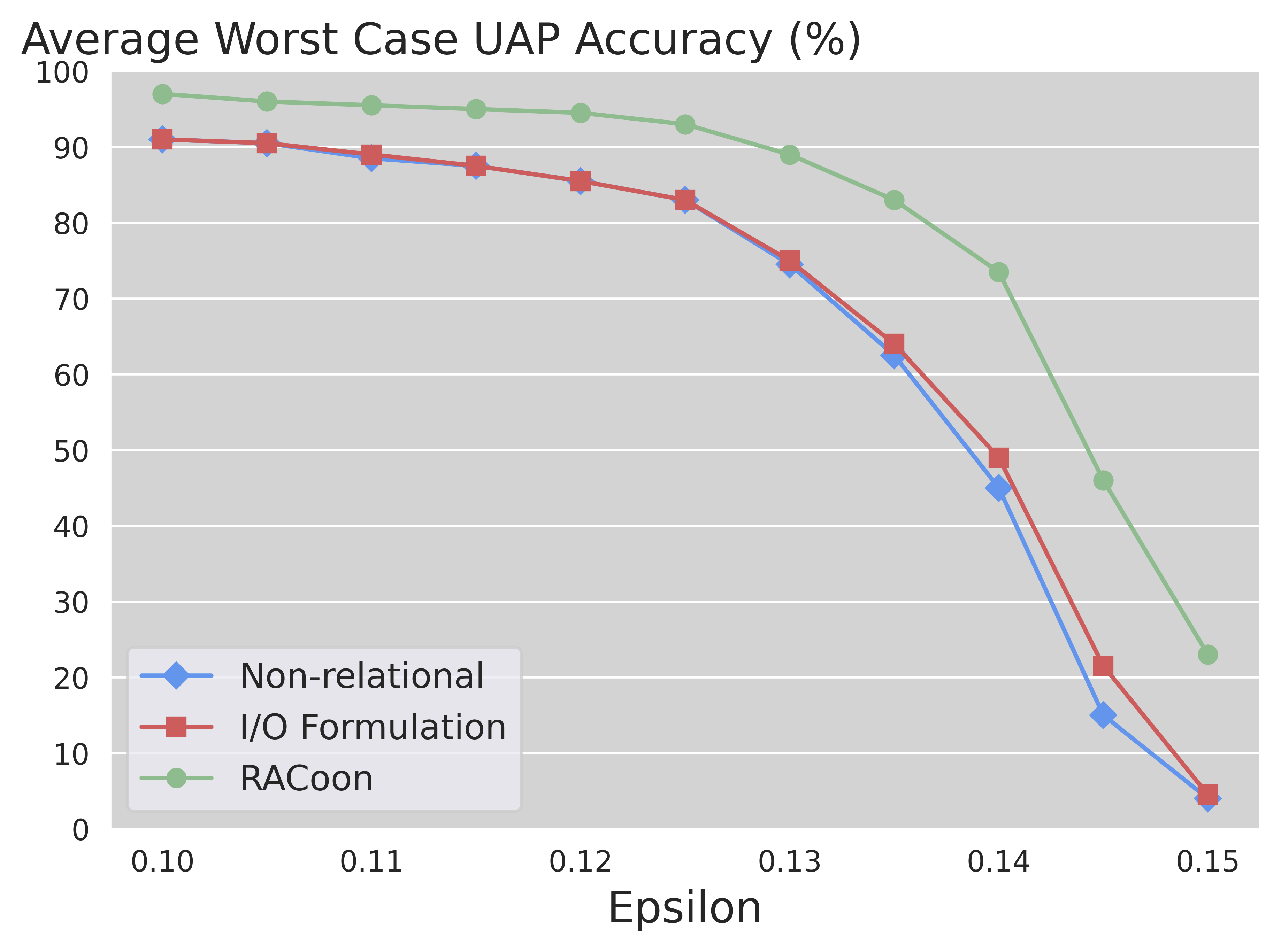}
\captionsetup{labelformat=empty}
\caption{(b) $k = 20$}
\end{minipage}
\addtocounter{figure}{-1}
\begin{minipage}[b]{.18\textwidth}
\includegraphics[width=\textwidth]{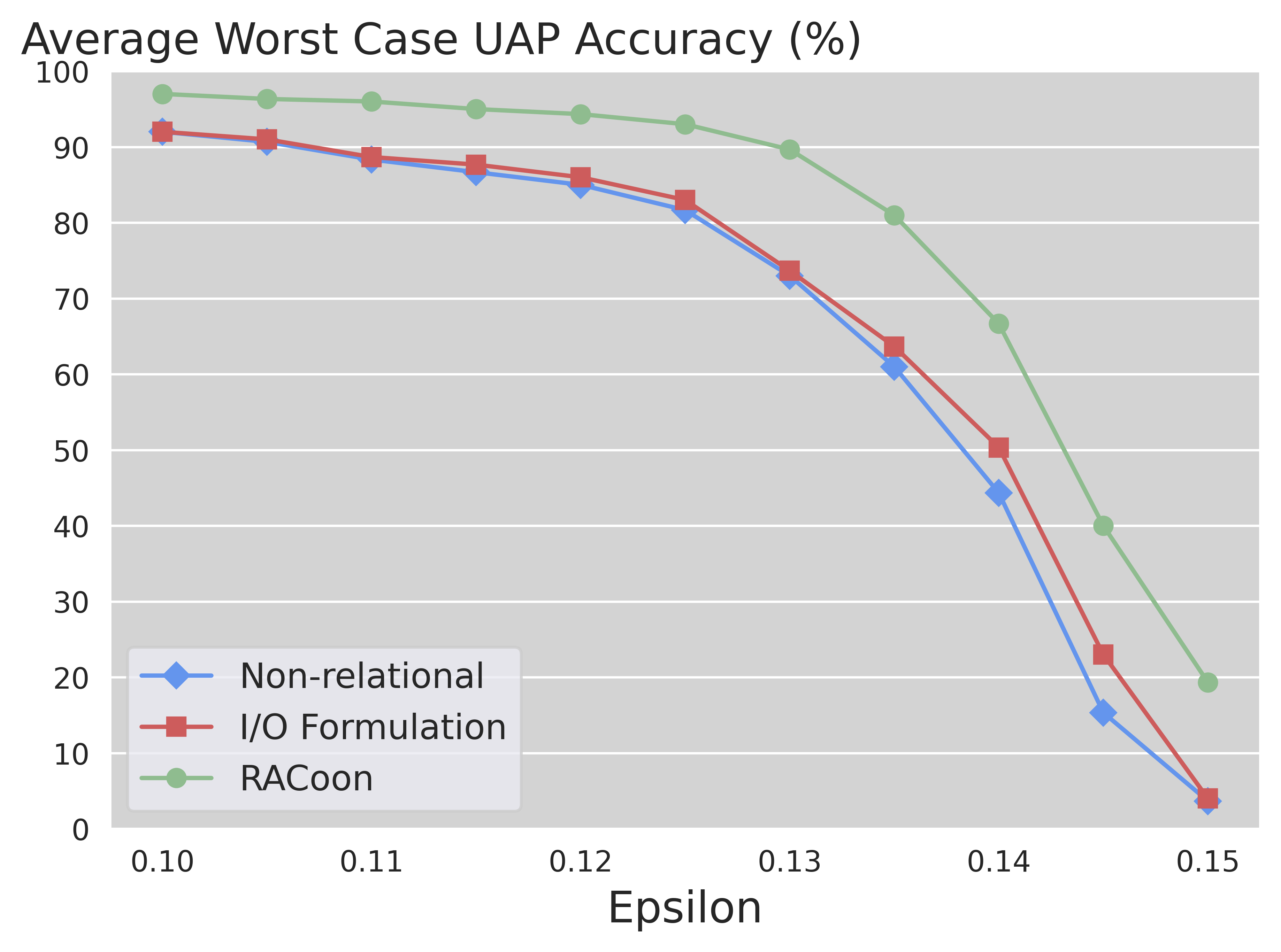}
\captionsetup{labelformat=empty}
\caption{(c) $k = 30$}
\end{minipage}
\addtocounter{figure}{-1}
\begin{minipage}[b]{.18\textwidth}
\includegraphics[width=\textwidth]{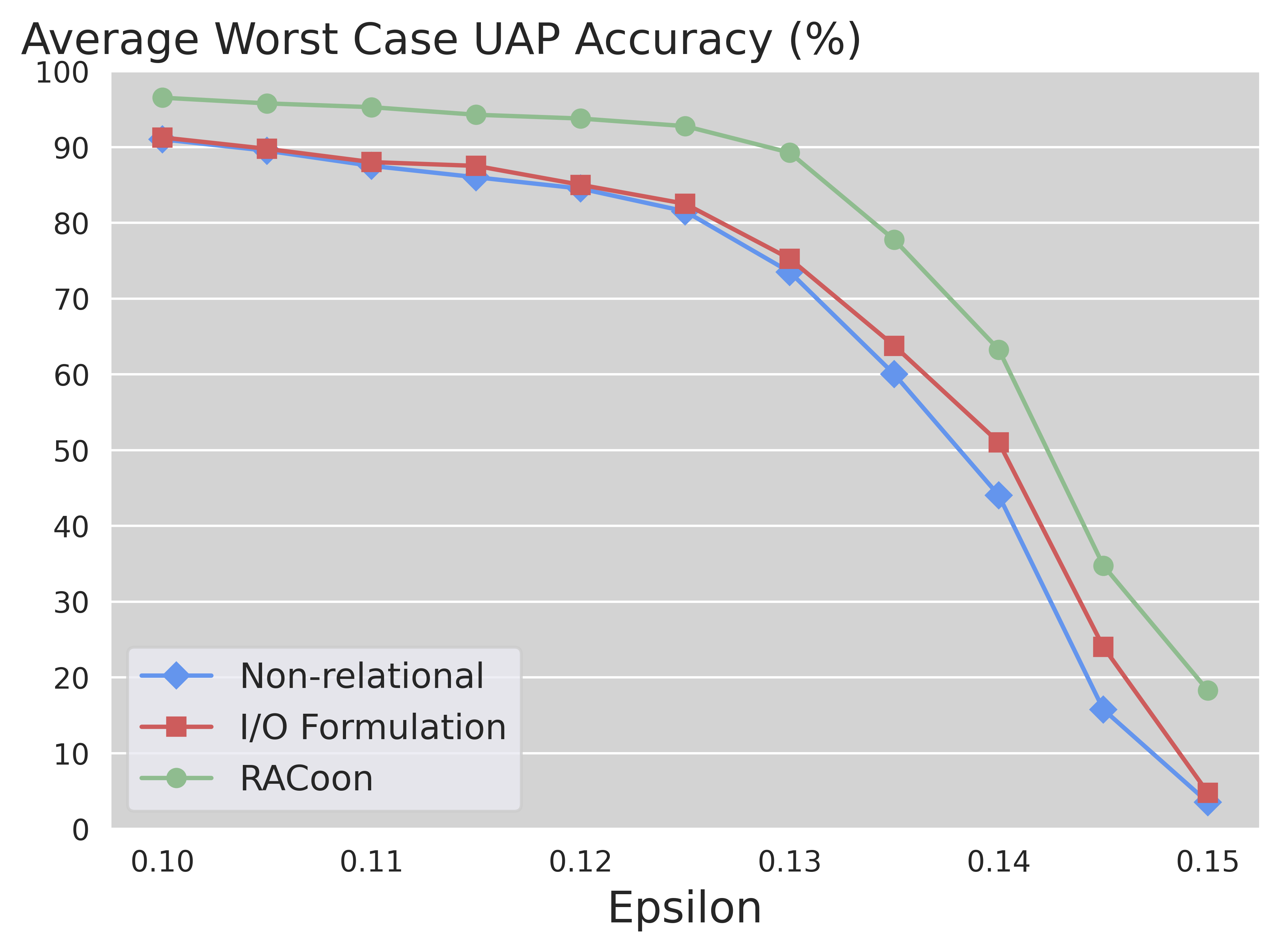}
\captionsetup{labelformat=empty}
\caption{(d) $k = 40$}
\end{minipage}
\addtocounter{figure}{-1}
\begin{minipage}[b]{.18\textwidth}
\includegraphics[width=\textwidth]{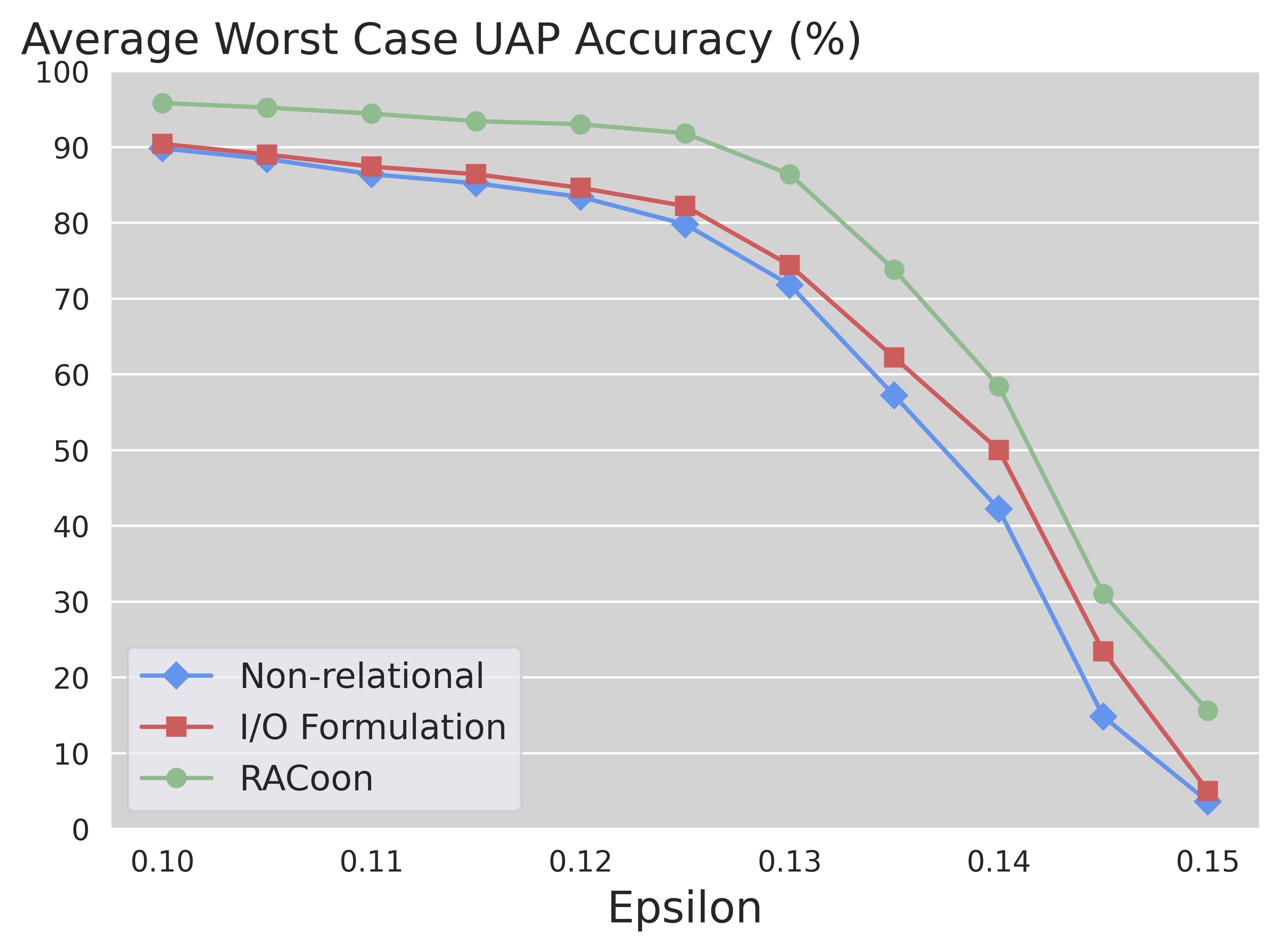}
\captionsetup{labelformat=empty}
\caption{(e) $k = 50$}
\end{minipage}
\addtocounter{figure}{-1}
\caption{Average worst-case UAP accuracy for different $k$ and $\epsilon$ values for IBPSmall MNIST network.}
\label{fig:mnist_ibp_small_diff}
\end{figure}

\begin{figure}[htb]
\centering
\begin{minipage}[b]{.18\textwidth}
\includegraphics[width=\textwidth]{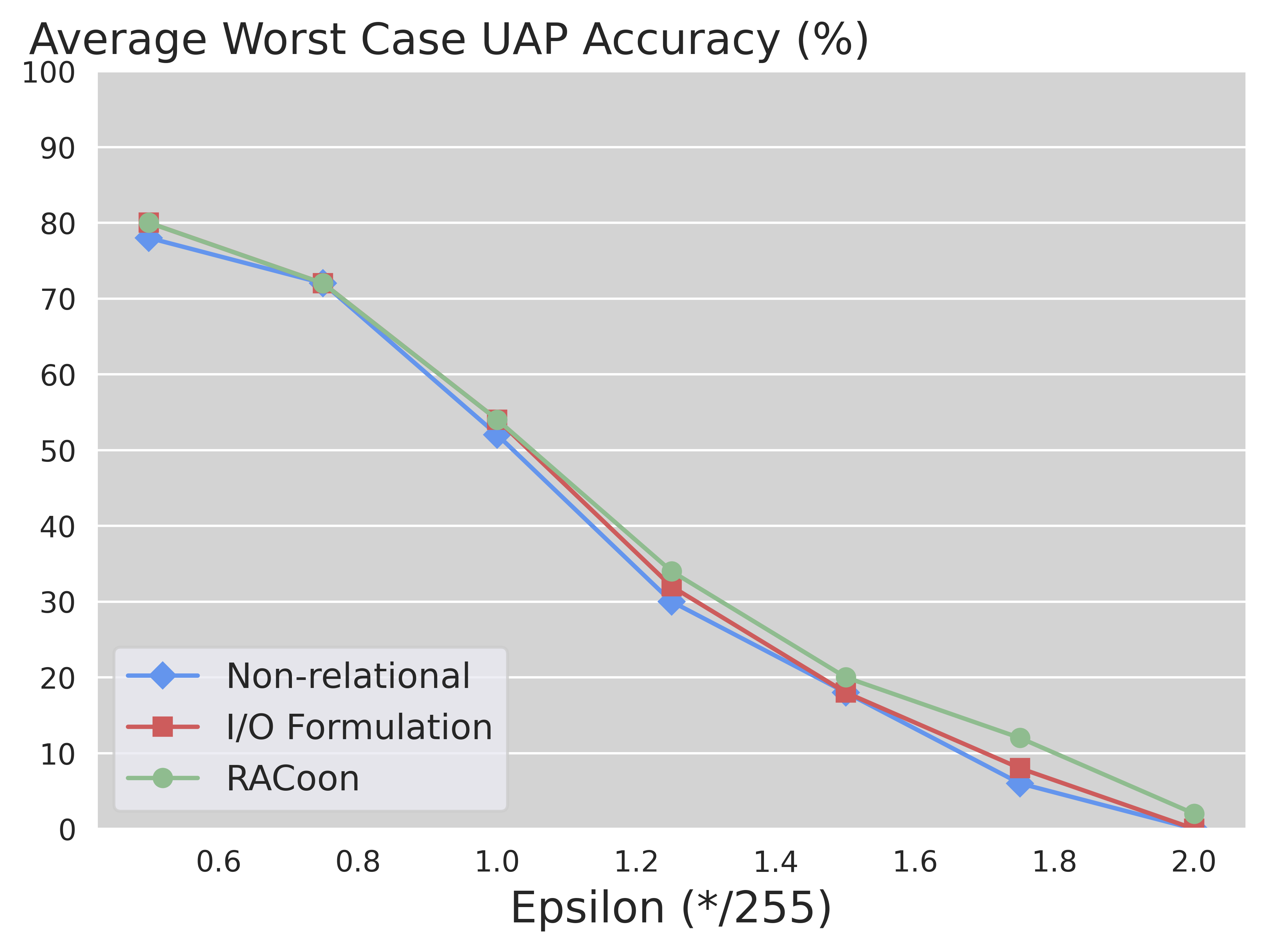}
\captionsetup{labelformat=empty}
\caption{(a) $k = 5$}
\end{minipage}
\addtocounter{figure}{-1}
\begin{minipage}[b]{.18\textwidth}
\includegraphics[width=\textwidth]{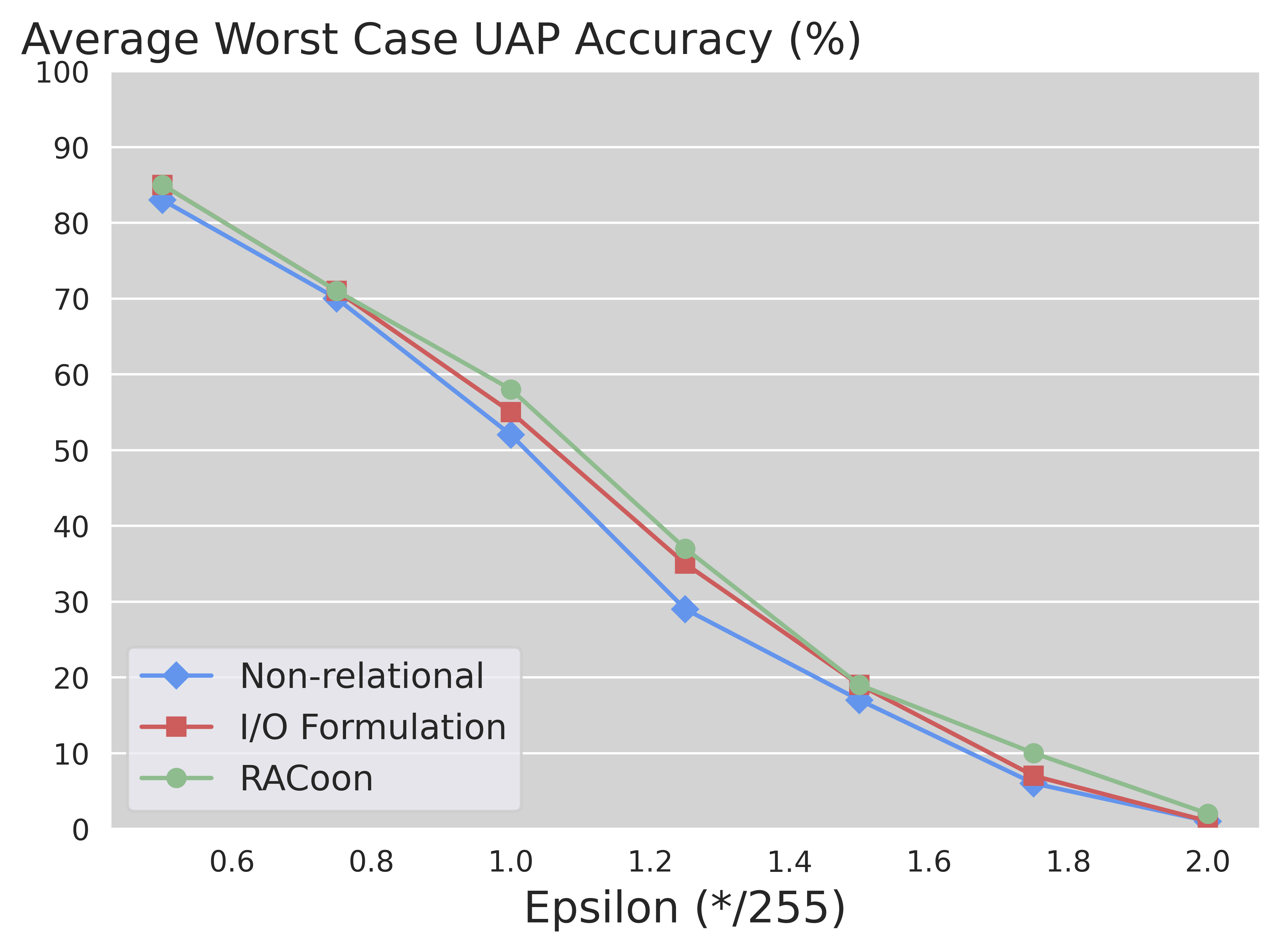}
\captionsetup{labelformat=empty}
\caption{(b) $k = 10$}
\end{minipage}
\addtocounter{figure}{-1}
\begin{minipage}[b]{.18\textwidth}
\includegraphics[width=\textwidth]{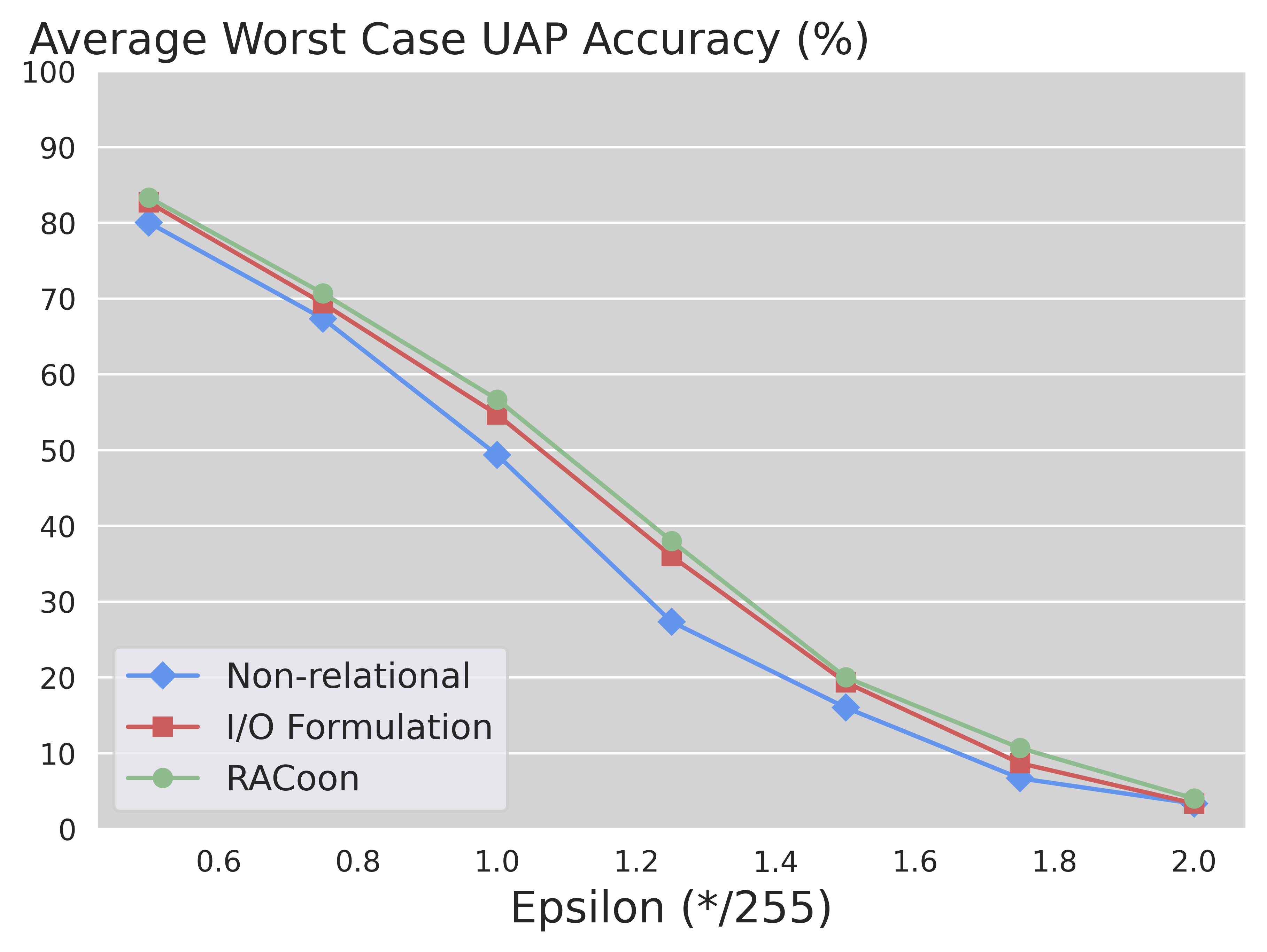}
\captionsetup{labelformat=empty}
\caption{(c) $k = 15$}
\end{minipage}
\addtocounter{figure}{-1}
\begin{minipage}[b]{.18\textwidth}
\includegraphics[width=\textwidth]{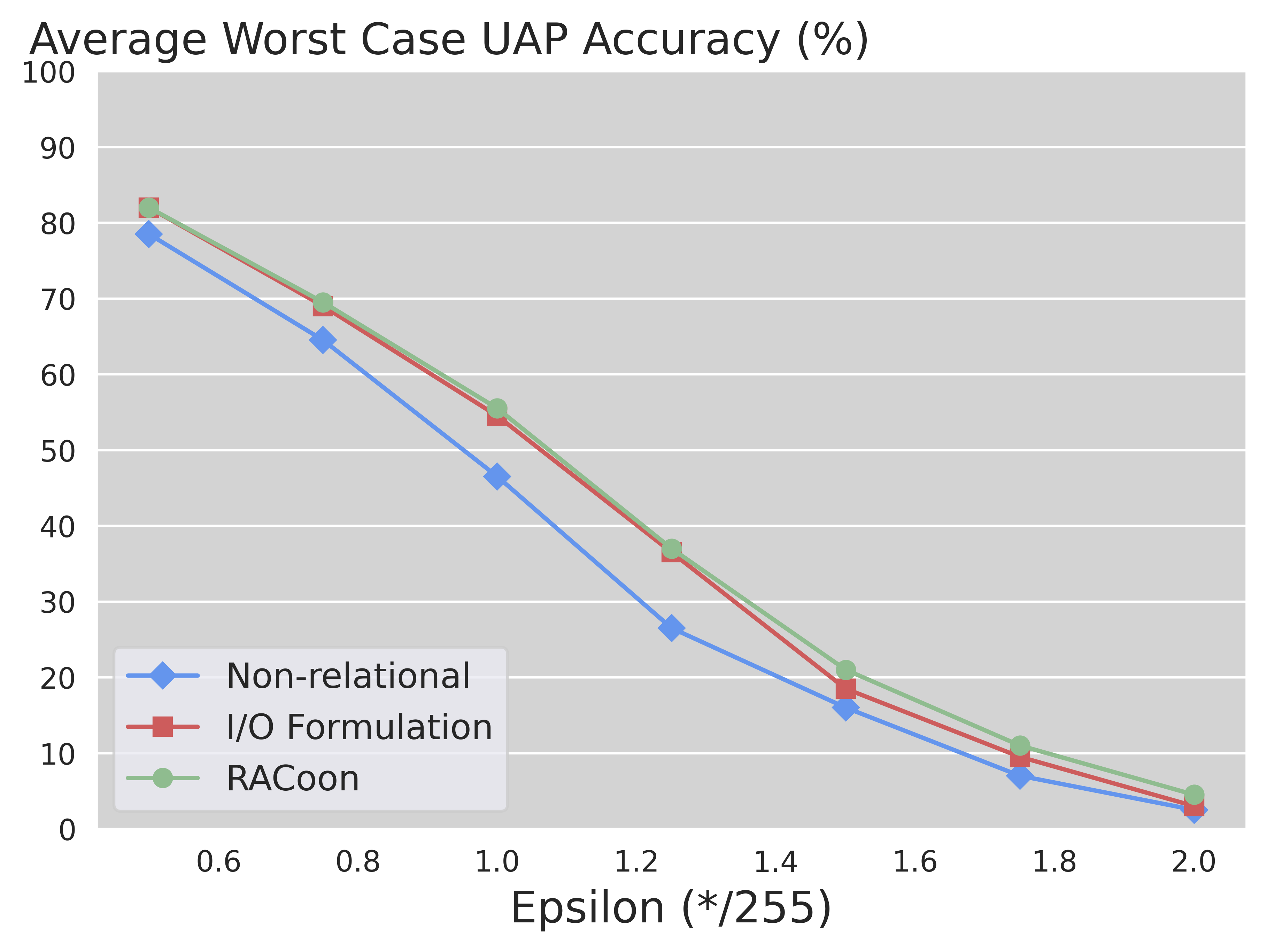}
\captionsetup{labelformat=empty}
\caption{(d) $k = 20$}
\end{minipage}
\addtocounter{figure}{-1}
\begin{minipage}[b]{.18\textwidth}
\includegraphics[width=\textwidth]{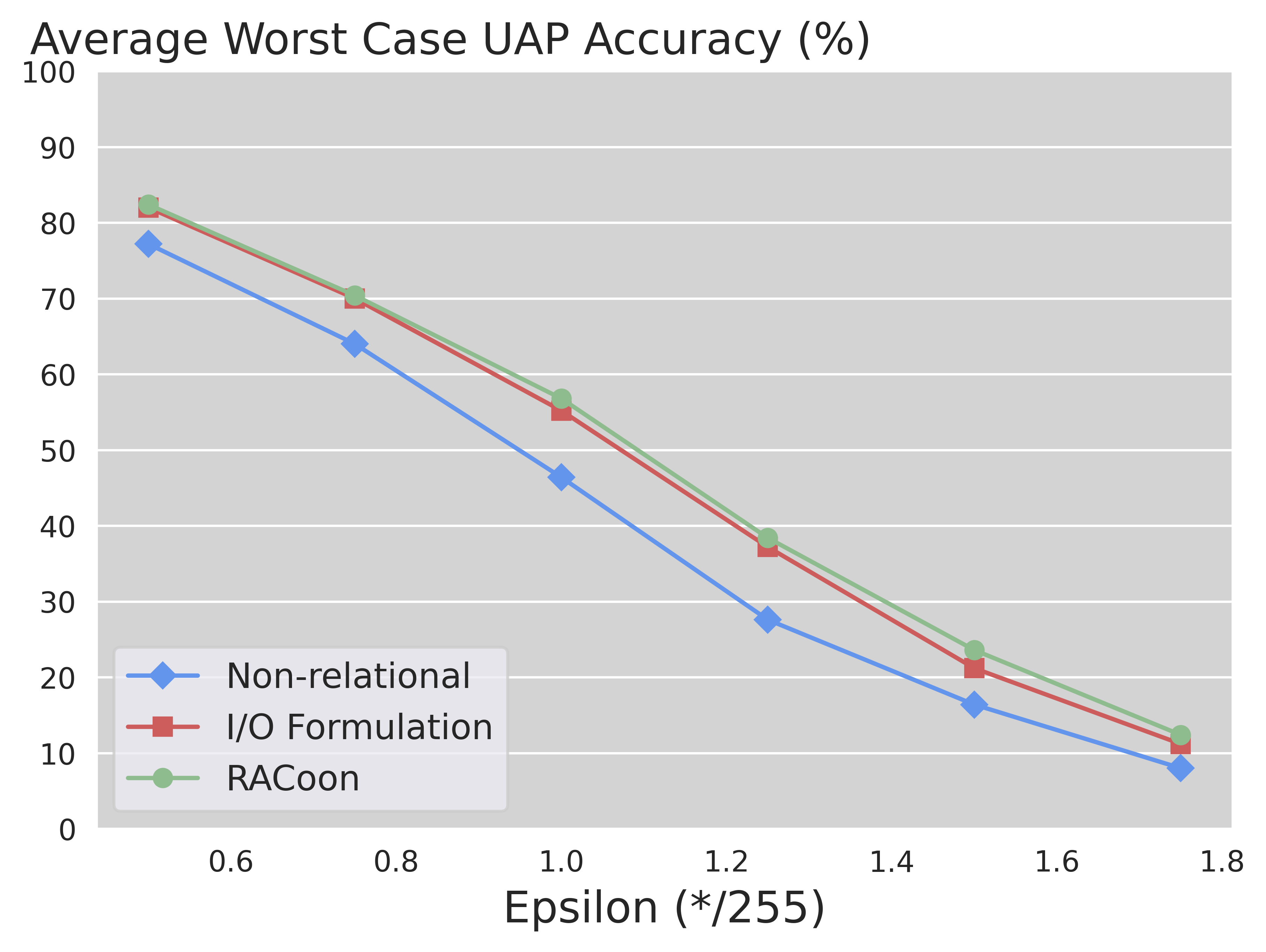}
\captionsetup{labelformat=empty}
\caption{(e) $k = 25$}
\end{minipage}
\addtocounter{figure}{-1}
\caption{Average worst-case UAP accuracy for different $k$ and $\epsilon$ values for ConvSmall Standard CIFAR10 network.}
\label{fig:cifa_point_diff}
\end{figure}

\begin{figure}[htb]
\centering
\begin{minipage}[b]{.18\textwidth}
\includegraphics[width=\textwidth]{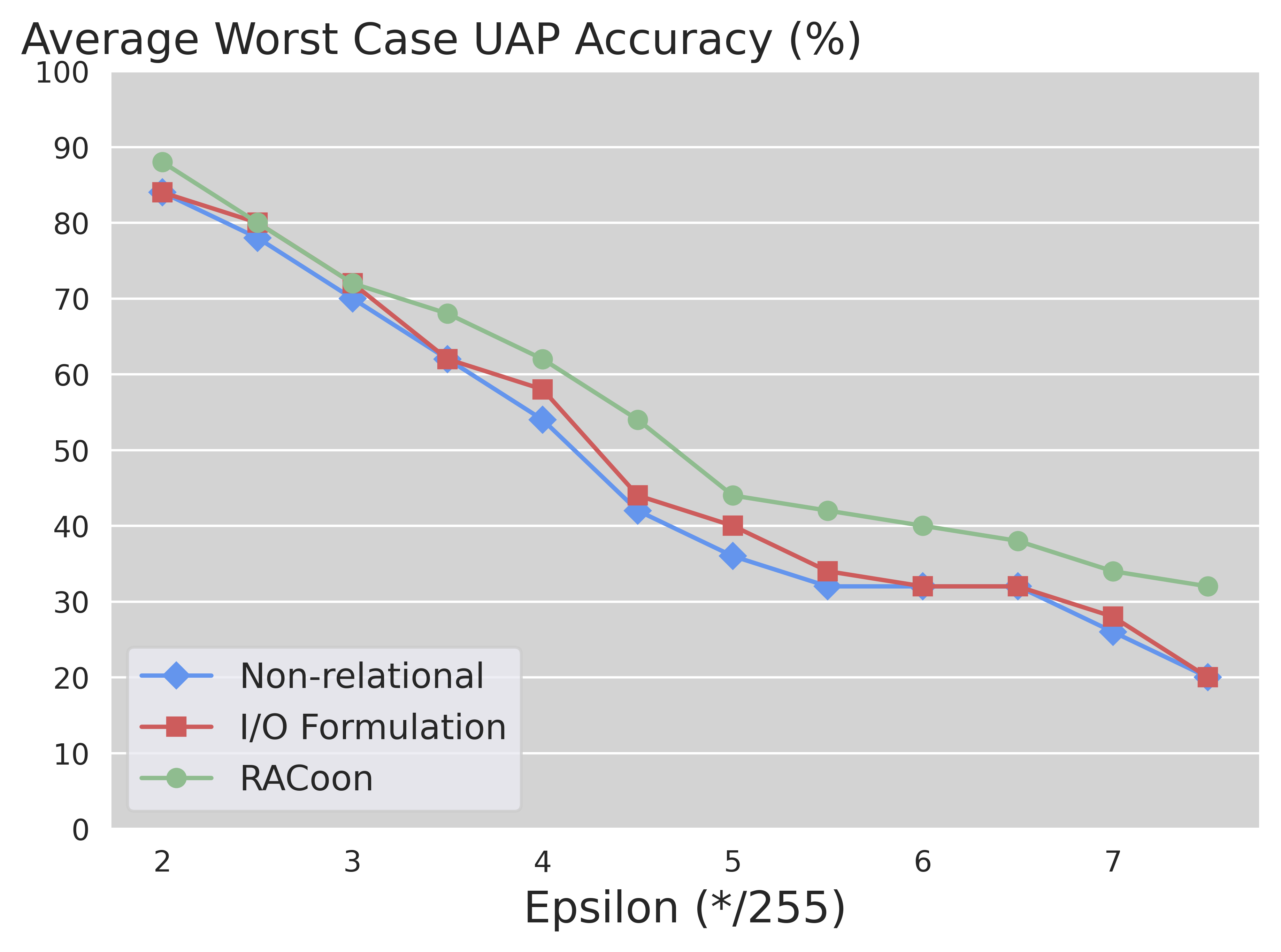}
\captionsetup{labelformat=empty}
\caption{(a) $k = 5$}
\end{minipage}
\addtocounter{figure}{-1}
\begin{minipage}[b]{.18\textwidth}
\includegraphics[width=\textwidth]{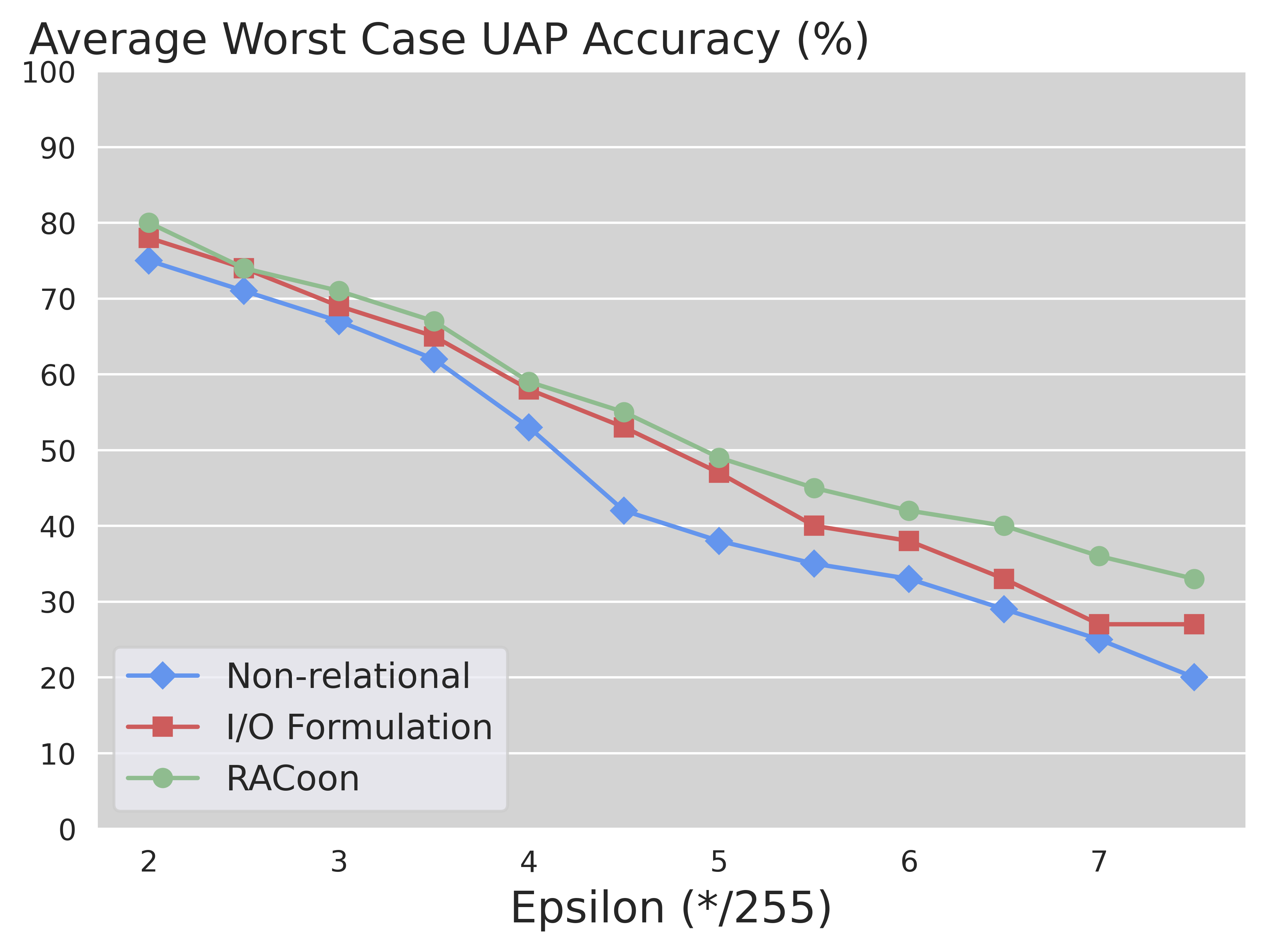}
\captionsetup{labelformat=empty}
\caption{(b) $k = 10$}
\end{minipage}
\addtocounter{figure}{-1}
\begin{minipage}[b]{.18\textwidth}
\includegraphics[width=\textwidth]{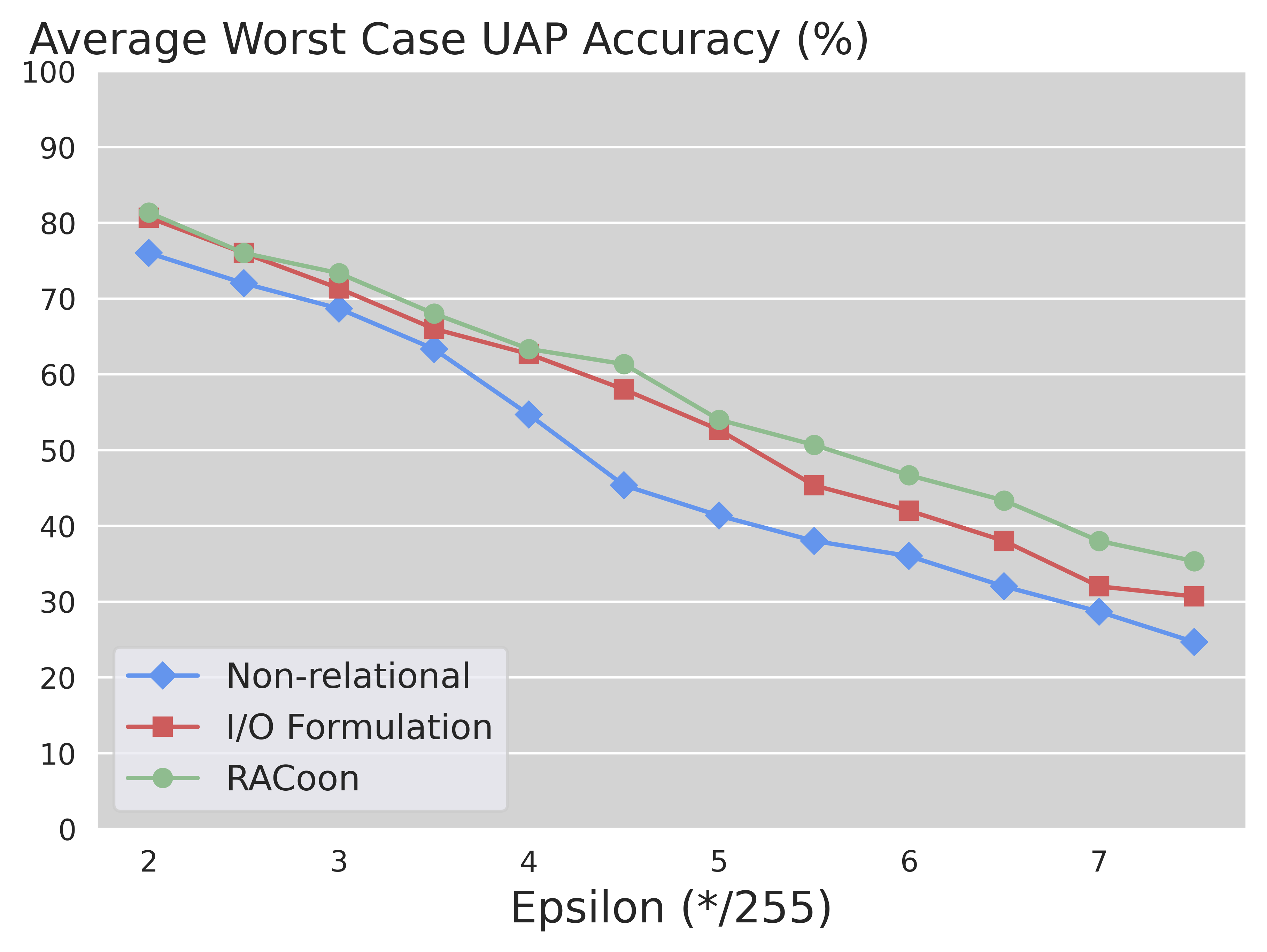}
\captionsetup{labelformat=empty}
\caption{(c) $k = 15$}
\end{minipage}
\addtocounter{figure}{-1}
\begin{minipage}[b]{.18\textwidth}
\includegraphics[width=\textwidth]{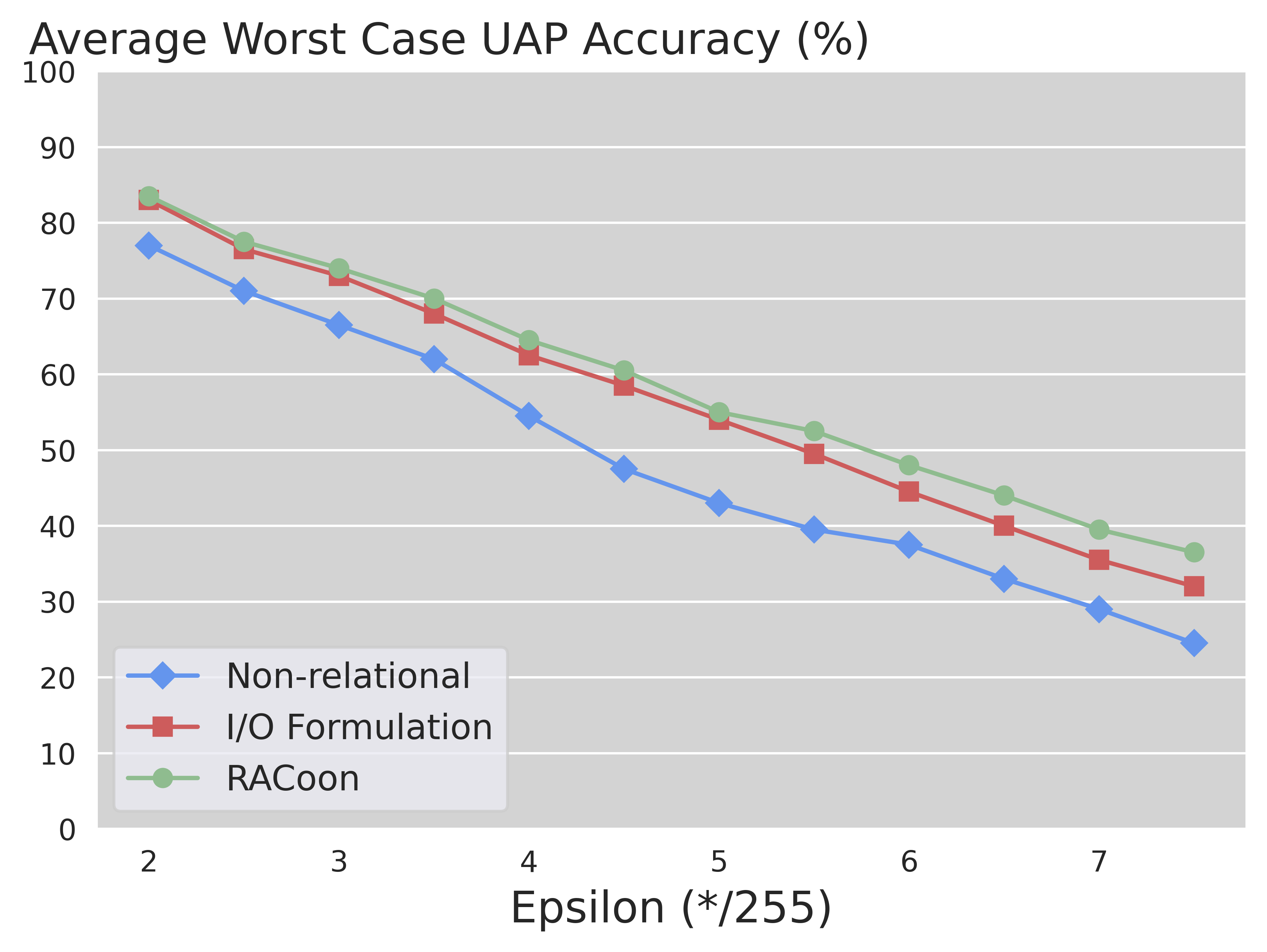}
\captionsetup{labelformat=empty}
\caption{(d) $k = 20$}
\end{minipage}
\addtocounter{figure}{-1}
\begin{minipage}[b]{.18\textwidth}
\includegraphics[width=\textwidth]{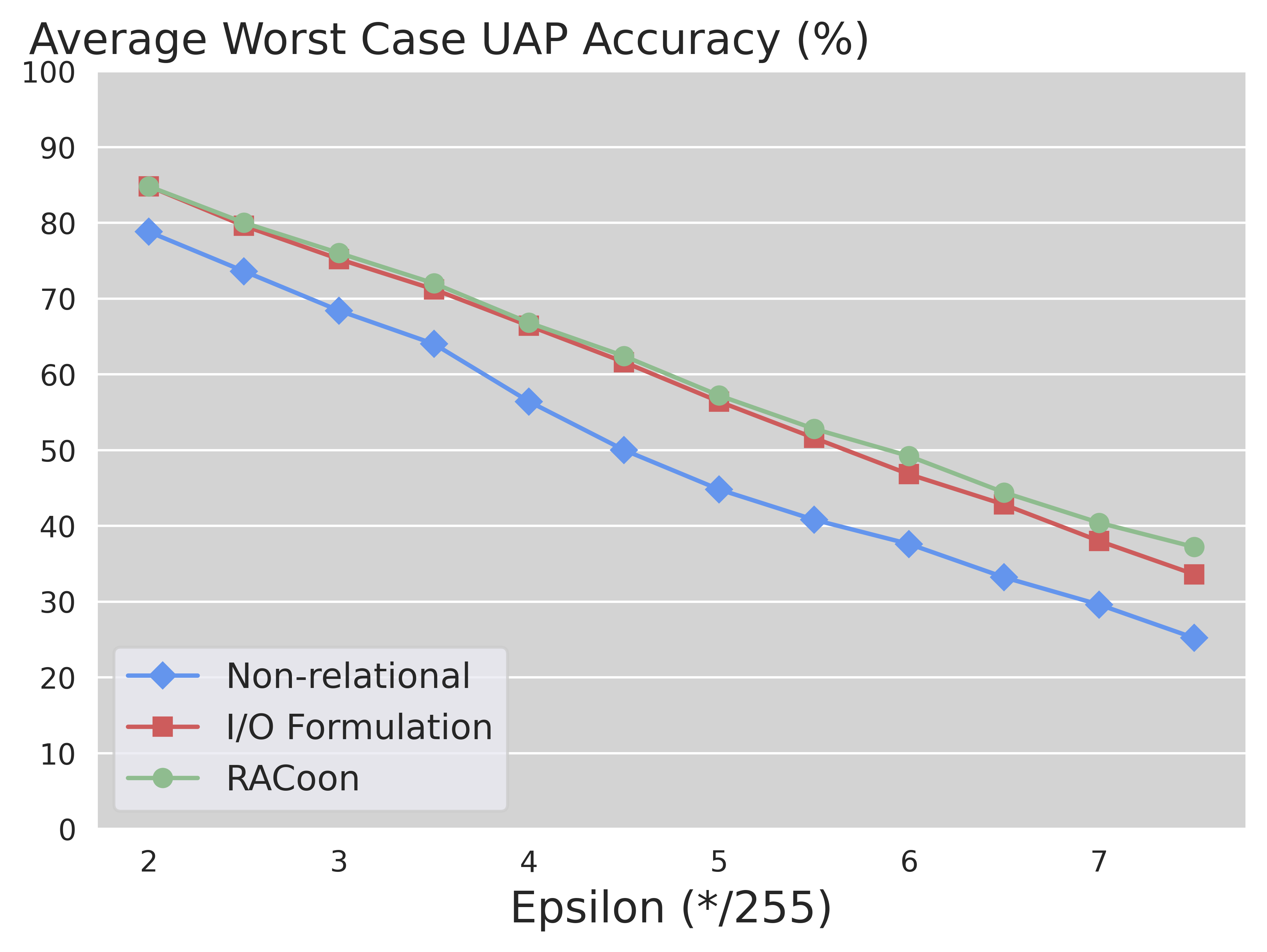}
\captionsetup{labelformat=empty}
\caption{(e) $k = 25$}
\end{minipage}
\addtocounter{figure}{-1}
\caption{Average worst-case UAP accuracy for different $k$ and $\epsilon$ values for ConvSmall DiffAI CIFAR10 network.}
\label{fig:cifar_pgd_diff}
\end{figure}

\begin{figure}[htb]
\centering
\begin{minipage}[b]{.18\textwidth}
\includegraphics[width=\textwidth]{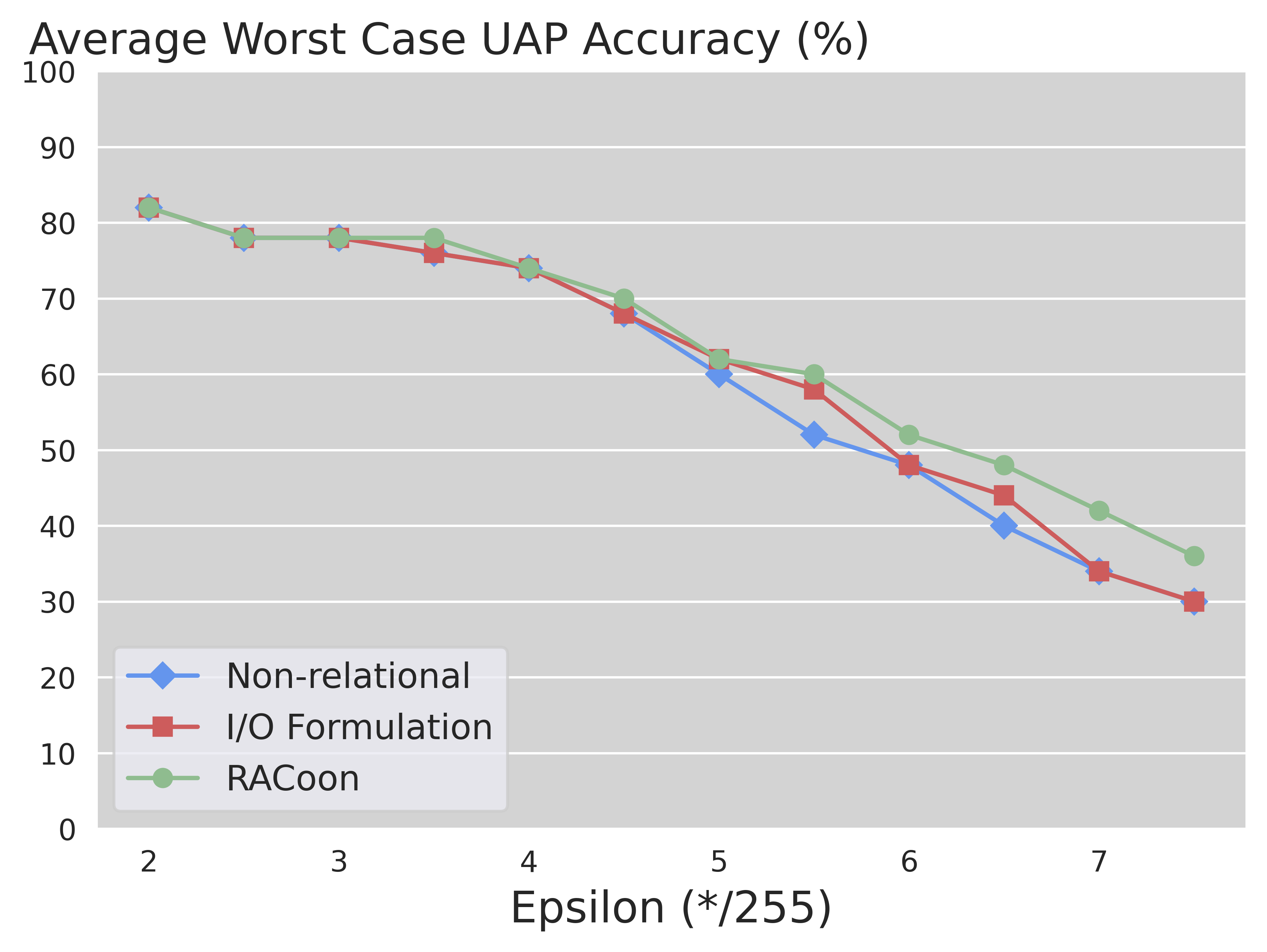}
\captionsetup{labelformat=empty}
\caption{(a) $k = 5$}
\end{minipage}
\addtocounter{figure}{-1}
\begin{minipage}[b]{.18\textwidth}
\includegraphics[width=\textwidth]{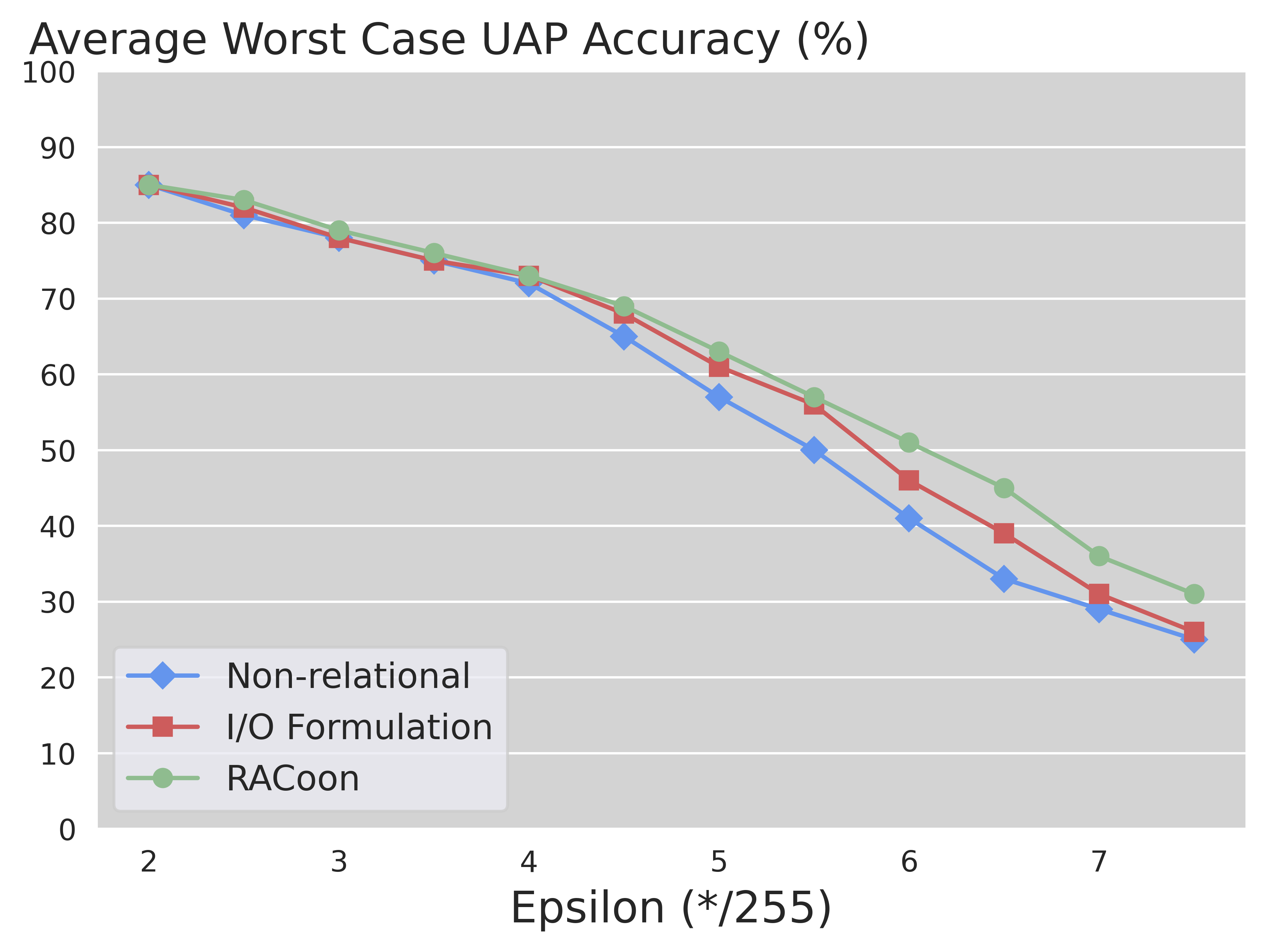}
\captionsetup{labelformat=empty}
\caption{(b) $k = 10$}
\end{minipage}
\addtocounter{figure}{-1}
\begin{minipage}[b]{.18\textwidth}
\includegraphics[width=\textwidth]{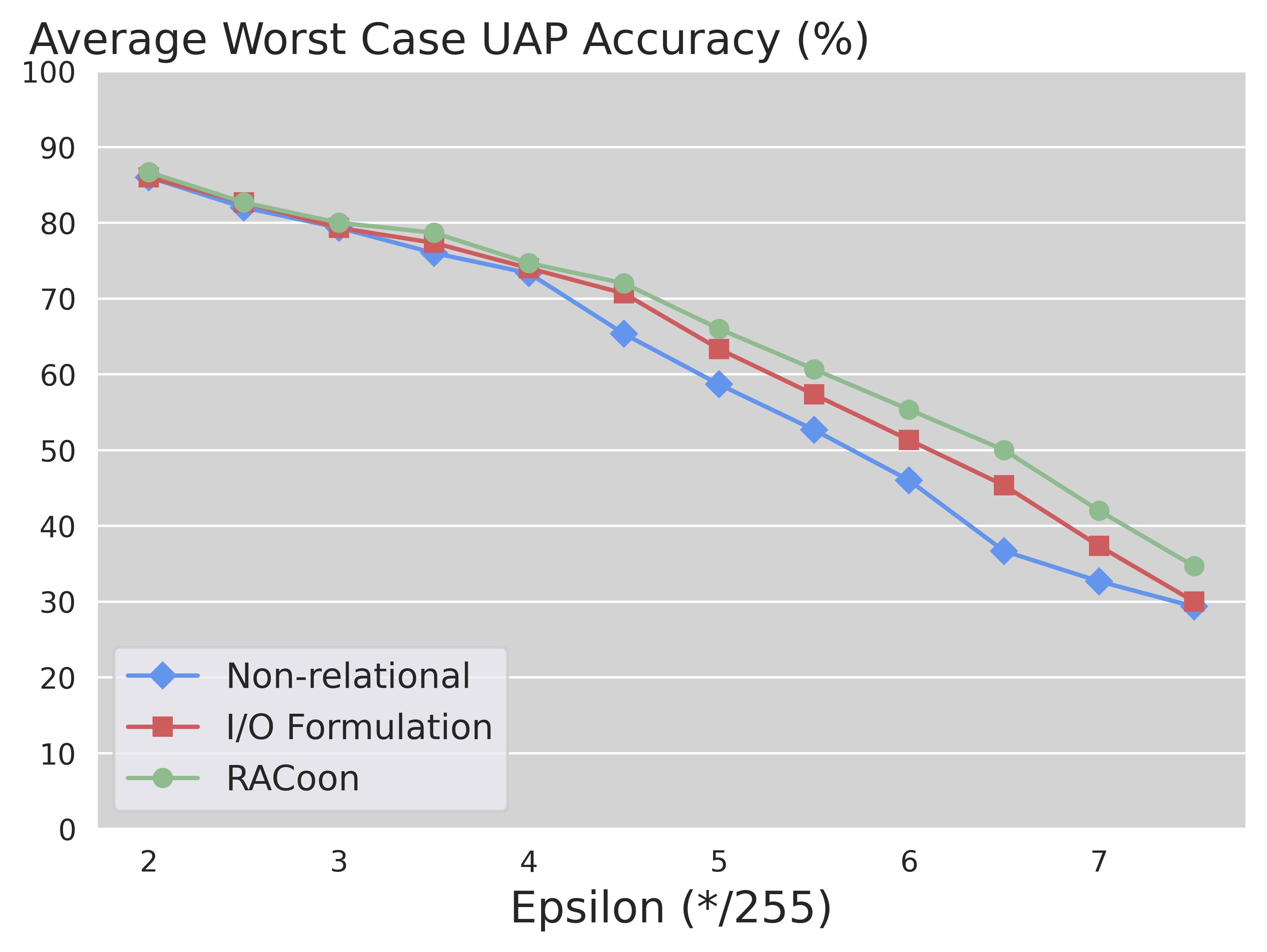}
\captionsetup{labelformat=empty}
\caption{(c) $k = 15$}
\end{minipage}
\addtocounter{figure}{-1}
\begin{minipage}[b]{.18\textwidth}
\includegraphics[width=\textwidth]{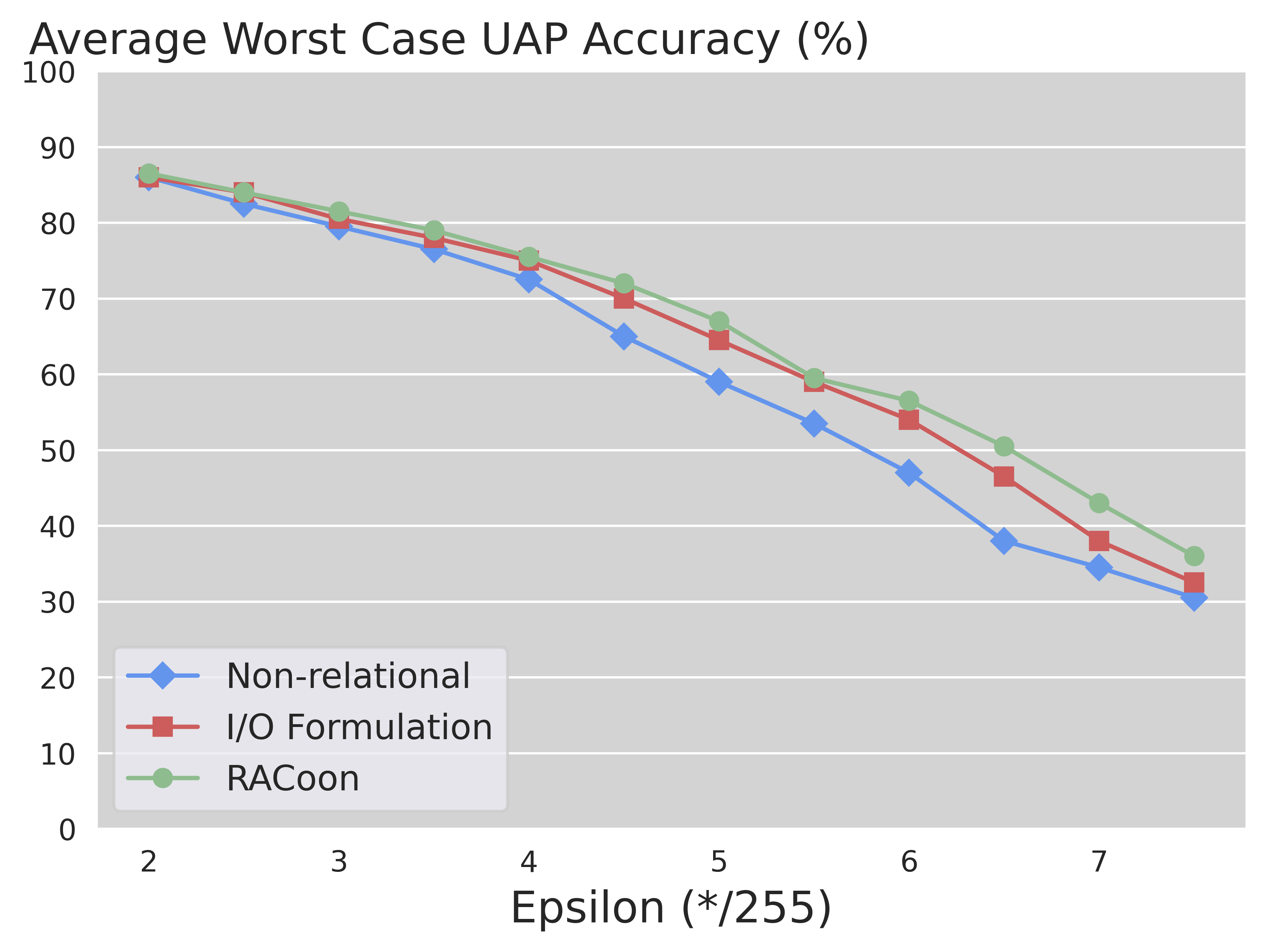}
\captionsetup{labelformat=empty}
\caption{(d) $k = 20$}
\end{minipage}
\addtocounter{figure}{-1}
\begin{minipage}[b]{.18\textwidth}
\includegraphics[width=\textwidth]{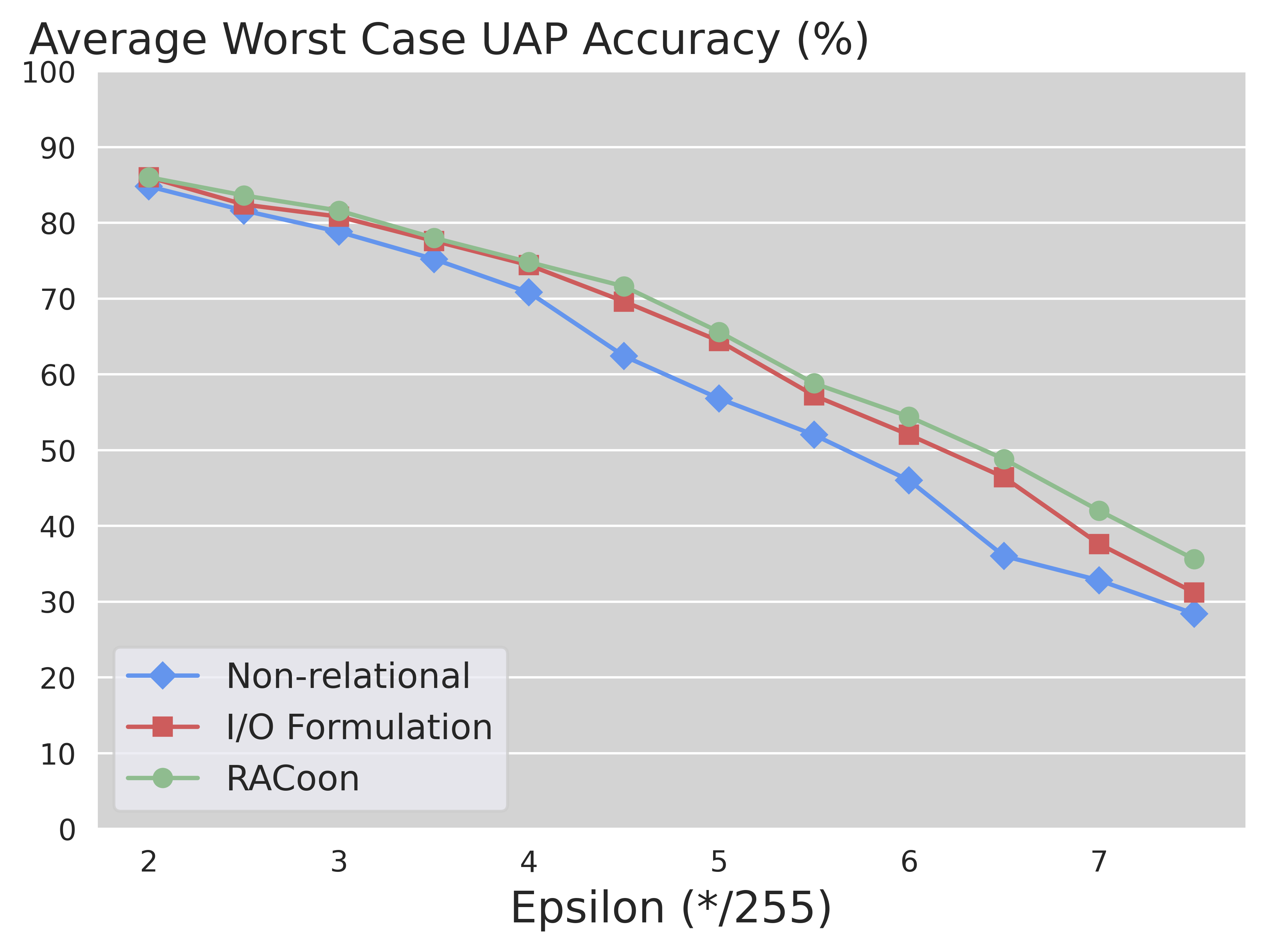}
\captionsetup{labelformat=empty}
\caption{(e) $k = 25$}
\end{minipage}
\addtocounter{figure}{-1}
\caption{Average worst-case UAP accuracy for different $k$ and $\epsilon$ values for ConvSmall COLT CIFAR10 network.}
\label{fig:cifar_colt_diff}
\end{figure}

\begin{figure}[htb]
\centering
\begin{minipage}[b]{.18\textwidth}
\includegraphics[width=\textwidth]{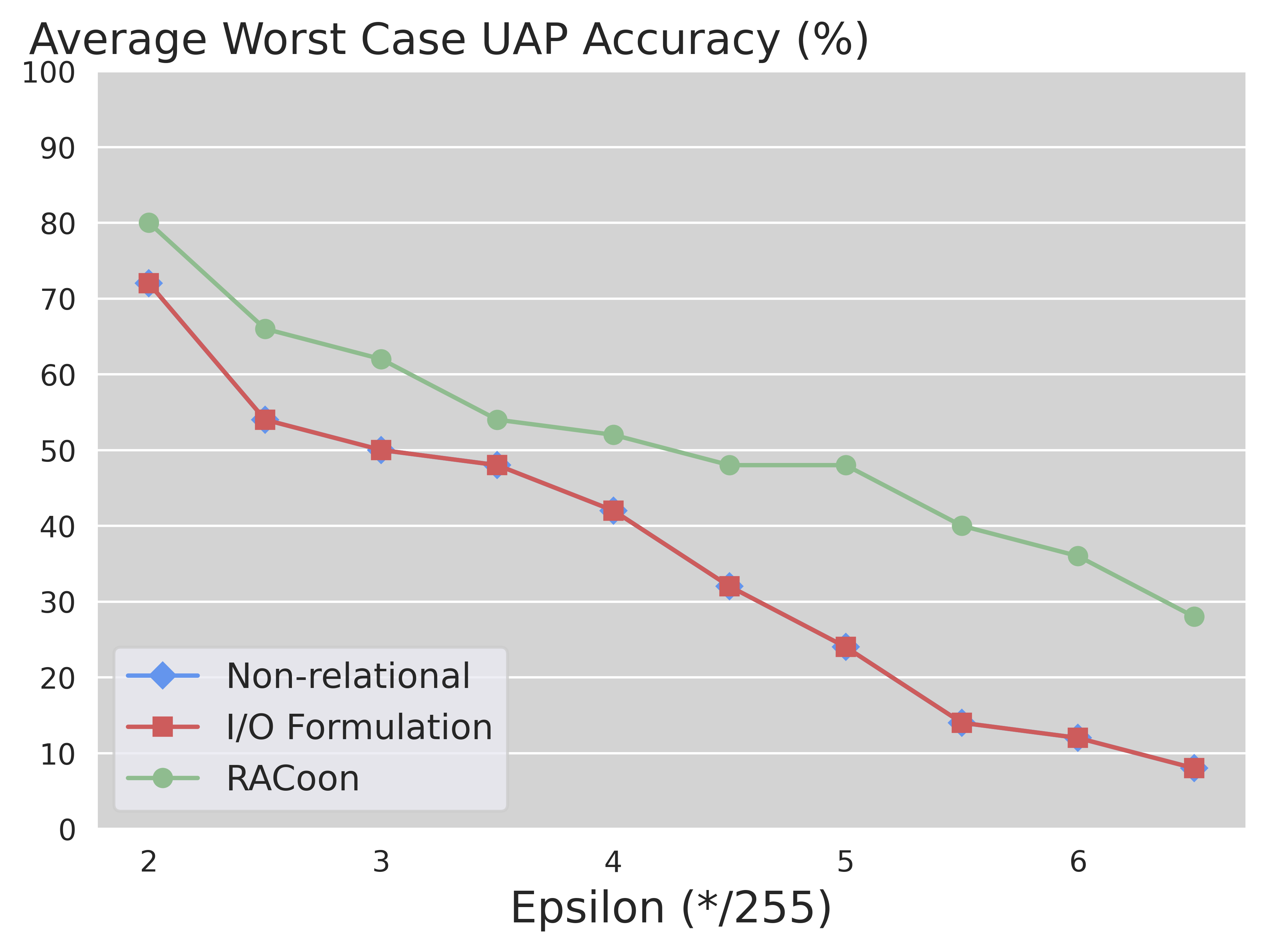}
\captionsetup{labelformat=empty}
\caption{(a) $k = 5$}
\end{minipage}
\addtocounter{figure}{-1}
\begin{minipage}[b]{.18\textwidth}
\includegraphics[width=\textwidth]{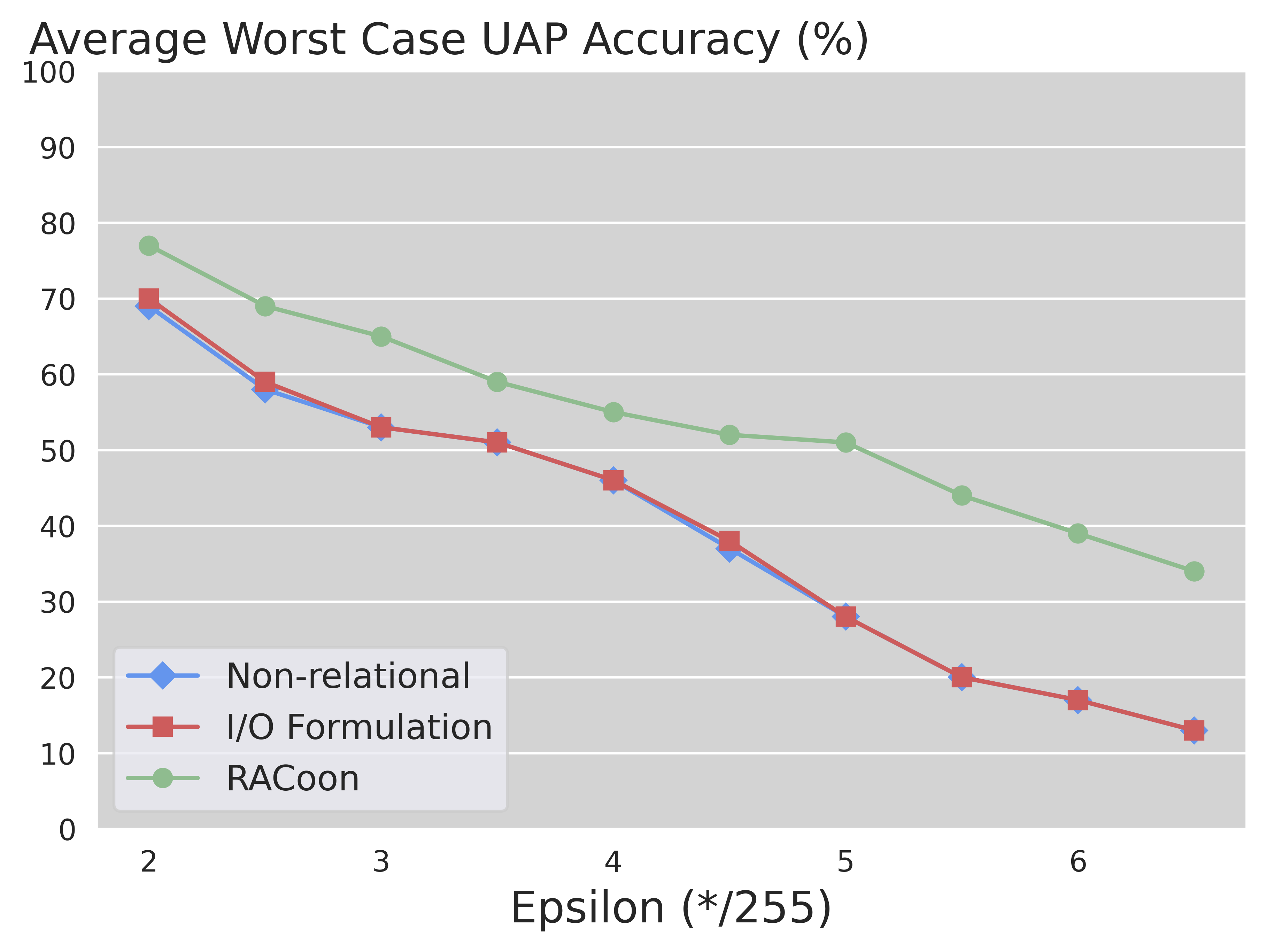}
\captionsetup{labelformat=empty}
\caption{(b) $k = 10$}
\end{minipage}
\addtocounter{figure}{-1}
\begin{minipage}[b]{.18\textwidth}
\includegraphics[width=\textwidth]{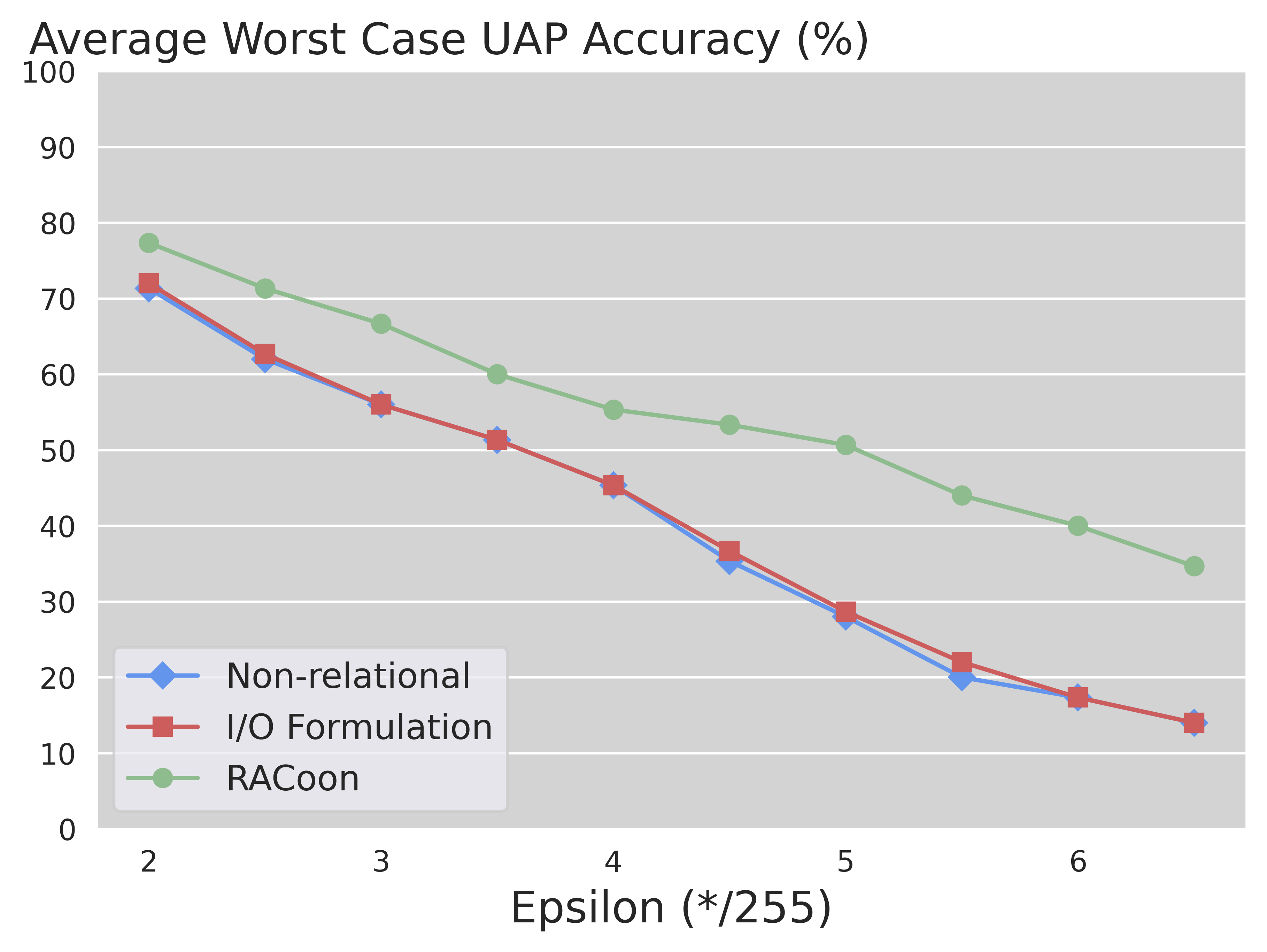}
\captionsetup{labelformat=empty}
\caption{(c) $k = 15$}
\end{minipage}
\addtocounter{figure}{-1}
\begin{minipage}[b]{.18\textwidth}
\includegraphics[width=\textwidth]{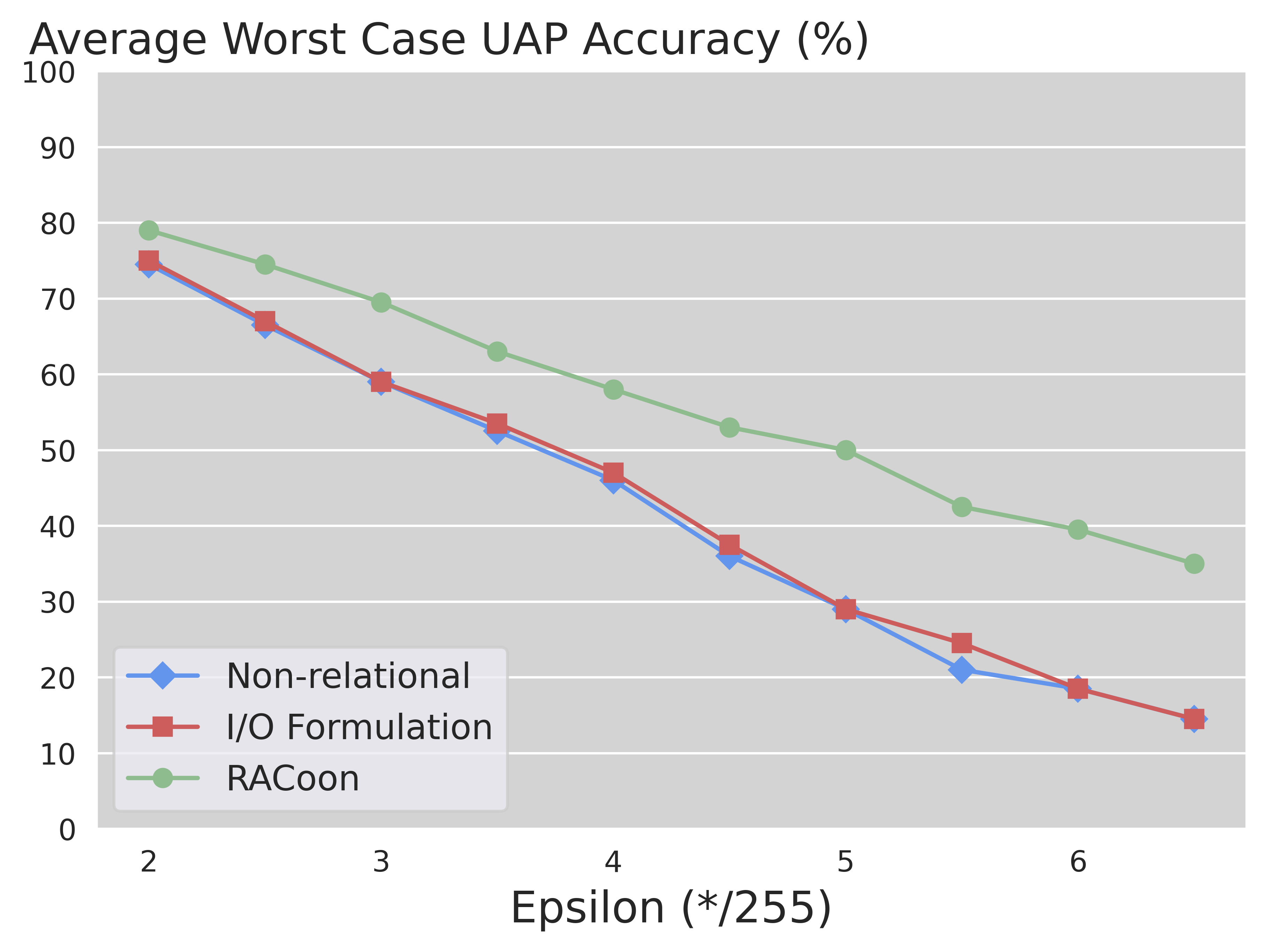}
\captionsetup{labelformat=empty}
\caption{(d) $k = 20$}
\end{minipage}
\addtocounter{figure}{-1}
\begin{minipage}[b]{.18\textwidth}
\includegraphics[width=\textwidth]{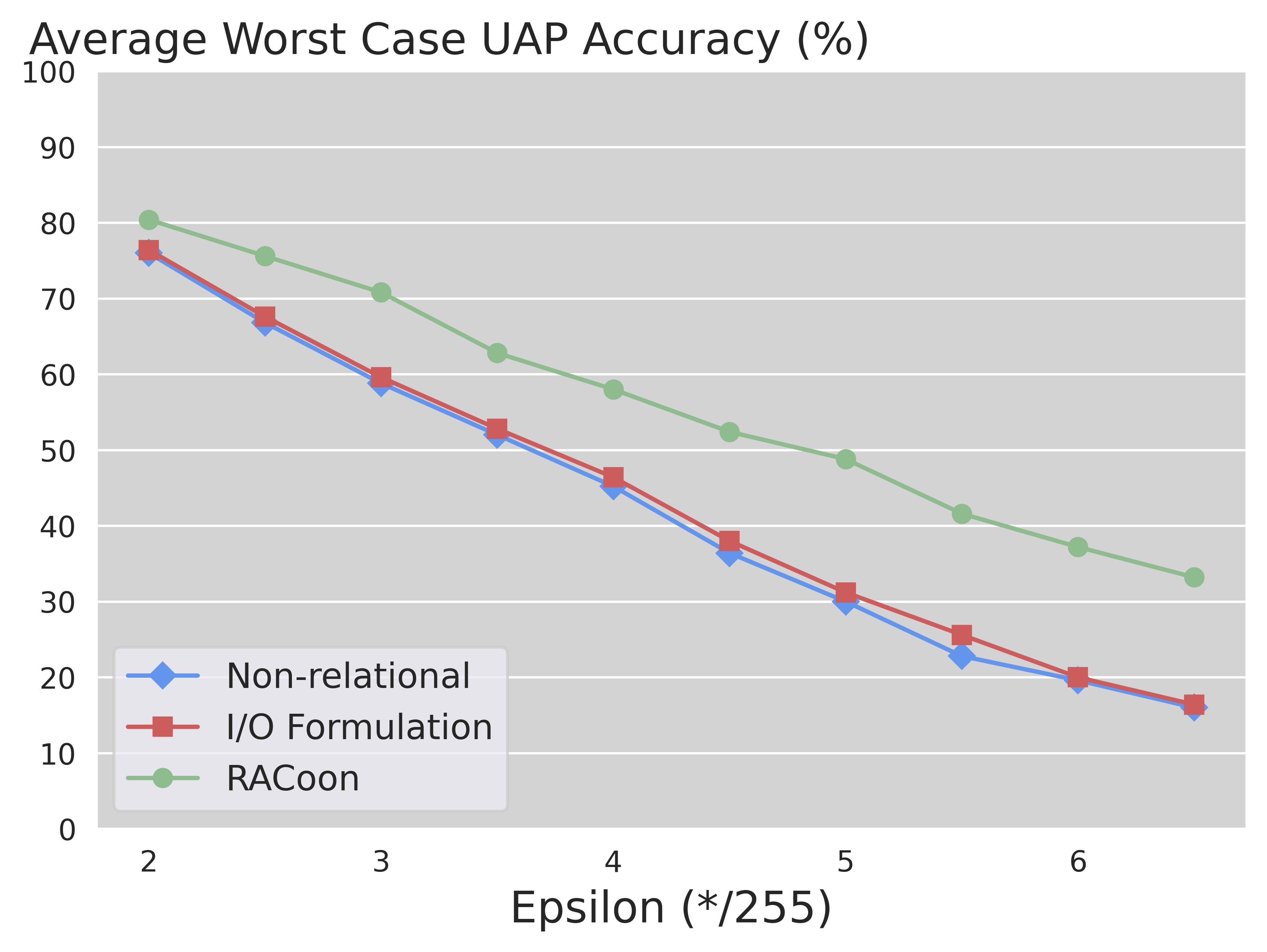}
\captionsetup{labelformat=empty}
\caption{(e) $k = 25$}
\end{minipage}
\addtocounter{figure}{-1}
\caption{Average worst-case UAP accuracy for different $k$ and $\epsilon$ values for IBPSmall CIFAR10 network.}
\label{fig:cifar_ibp_small_diff}
\end{figure}

\begin{table*}[htb]
\centering
\captionsetup{justification=centering}
\caption{\Tool Componentwise Runtime Analysis for $k=50$ for MNIST and $k=25$ for CIFAR10 networks}
\label{table:compareComponentsRuntimeLarger}
\resizebox{0.98\textwidth}{!}{
\begin{tabular}{@{}c c c c c c c c c c@{}}
\toprule
Dataset & Network & Training &  Perturbation & Non-relational  & I/O & Individual & Individual & Cross-Execution & \Tool  \\
\text{ } & Structure & Method &  Bound ($\epsilon$) & Verifier & Formulation & Refinement & Refinement with MILP & Refinement & verifier \\
\text{ } & \text{ } & \text{ } &  \text{ } & Avg. Time (sec.) & Avg. Time (sec.) & Avg. Time (sec.) & Avg. Time (sec.) & Avg. Time (sec.) & Avg. Time (sec.) \\
\midrule
\text{ } & ConvSmall & Standard & 0.08 & 0.01 & 31.76 \textcolor{red}{*}  & 0.40 & 24.11 & 2.25 & 4.16 \\
\text{ } & ConvSmall & PGD & 0.10 & 0.02 & 5.13 & 0.50 & 5.36 & 3.00 & 4.59 \\
\text{ } & IBPSmall & IBP & 0.13 & 0.01 & 3.62 & 0.39 & 3.38 & 1.53 & 2.27 \\
MNIST & ConvSmall & DiffAI & 0.13 & 0.02 & 18.32 \textcolor{red}{*} & 0.59 & 9.89 & 3.34 & 5.38 \\
\text{ } & ConvSmall & COLT & 0.15 & 0.04 & 2.53 & 0.43 & 3.41 & 1.07 & 1.64 \\
\text{ } & ConvBig & DiffAI & 0.20 & 2.14 & 5.23 & 9.70 & 11.07 & 9.36 & 14.30 \\
\midrule
\text{ } & ConvSmall & Standard & 1.0/255 & 0.04 & 91.74 \textcolor{red}{*}  & 0.86 & 101.17 & 4.54 & 19.39 \\
\text{ } & ConvSmall & PGD & 3.0/255 & 0.03 & 8.28 & 0.73 & 12.53 & 4.24 & 11.47 \\
\text{ } & IBPSmall & IBP & 6.0/255 & 0.02 & 2.76 & 0.56 & 3.92 & 3.32 & 6.76 \\
CIFAR10 & ConvSmall & DiffAI & 7.0/255 & 0.01 & 12.22 \textcolor{red}{*}  & 0.46 & 13.62 & 2.69 & 9.68 \\
\text{ } & ConvSmall & COLT & 7.0/255 & 0.07 & 15.74 & 0.95 & 25.82 & 5.20 & 24.54 \\
\text{ } & ConvBig & DiffAI & 3.0/255 & 2.01 & 13.42 & 7.27 & 22.89 & 16.45 & 21.92 \\
\midrule
\end{tabular}
}
\resizebox{0.98\textwidth}{!}{
\textcolor{red}{*} I/O formulation does not filter out executions verified by the non-relational verifier making MILP optimization for large $k$ expensive.

}
\end{table*}

\clearpage
\newpage
\section{Ablation study of the hyperparameters $k_0$ and $k_1$ on different MNIST networks}
\label{appenexp:ablation_with_k0_k1}

\begin{table*}[htb]
\centering
\captionsetup{justification=centering}
\caption{\Tool Average worst-case UAP accuracy on ConvSmall Standard MNIST net with different $k_0$ and $k_1$ on $\epsilon = 0.1$.}
\label{table:ablationAccStandard}
\resizebox{0.4\textwidth}{!}{
\begin{tabular}{@{}c| c c c c c c@{}}
\midrule
\diagbox[width=15mm, height=10mm]{ $k_1$ }{$k_0$} & 2 & 3 & 4 & 5 & 6 & 7  \\ 
\midrule
2  & 26.50 & 28.50 & 31.50 & 32.00 & 33.00 & 34.50 \\ 
3 & $\times$ & 28.50 & 31.50 & 32.00 & 33.00 & 34.50 \\ 
4 & $\times$& $\times$ & 32.00 & 33.00 & 33.00 & 34.50\\
\midrule

\end{tabular}
}
\end{table*}

\begin{table*}[htb]
\centering
\captionsetup{justification=centering}
\caption{\Tool Average runtime on ConvSmall Standard MNIST net with different $k_0$ and $k_1$ on $\epsilon = 0.1$.}
\label{table:ablationTimeStandard}
\resizebox{0.4\textwidth}{!}{
\begin{tabular}{@{}c | c c c c c c@{}}
\midrule
\diagbox[width=15mm, height=10mm]{ $k_1$ }{$k_0$} & 2 & 3 & 4 & 5 & 6 & 7  \\ 
\midrule
2  & 1.35 & 1.45 & 2.35 & 2.75 & 3.71 & 5.06 \\ 
3 & $\times$ & 1.53 & 2.12 & 3.79 & 4.78 & 8.68 \\ 
4 & $\times$& $\times$ & 2.16 & 3.94 & 6.09 & 8.15 \\ 
\midrule

\end{tabular}
}
\end{table*}

\begin{table*}[htb]
\centering
\captionsetup{justification=centering}
\caption{\Tool Average worst-case UAP accuracy on ConvSmall PGD MNIST net with different $k_0$ and $k_1$ on $\epsilon = 0.1$.}
\label{table:ablationAccPGD}
\resizebox{0.4\textwidth}{!}{
\begin{tabular}{@{}c | c c c c c c@{}}
\midrule
\diagbox[width=15mm, height=10mm]{ $k_1$ }{$k_0$} & 2 & 3 & 4 & 5 & 6 & 7  \\ 
\midrule
2  & 54.50 & 56.50 & 58.00 & 59.50 & 60.50 & 61.00  \\ 
3 & $\times$ & 56.50 & 58.50 & 60.00 & 61.50 & 62.00  \\ 
4 & $\times$& $\times$ & 58.50 & 60.00 & 61.50 & 62.00  \\
\midrule
\end{tabular}
}
\end{table*}

\begin{table*}[htb]
\centering
\captionsetup{justification=centering}
\caption{\Tool Average runtime on ConvSmall PGD MNIST net with different $k_0$ and $k_1$ on $\epsilon = 0.1$.}
\label{table:ablationTimePGD}
\resizebox{0.4\textwidth}{!}{
\begin{tabular}{@{}c | c c c c c c@{}}
\midrule
\diagbox[width=15mm, height=10mm]{ $k_1$ }{$k_0$} & 2 & 3 & 4 & 5 & 6 & 7  \\ 
\midrule
2  & 1.03 & 1.20 & 1.67 & 2.20 & 2.83 & 4.32 \\ 
3 & $\times$ & 1.44 & 2.24 & 3.71 & 5.13 & 7.39 \\ 
4 & $\times$& $\times$ & 2.22 & 4.28 & 6.67 & 8.41 \\ 
\midrule

\end{tabular}
}
\end{table*}

\begin{table*}[htb]
\centering
\captionsetup{justification=centering}
\caption{\Tool Average worst-case UAP accuracy on ConvSmall DiffAI MNIST net with different $k_0$ and $k_1$ on $\epsilon = 0.12$.}
\label{table:ablationAccDiffAI}
\resizebox{0.4\textwidth}{!}{
\begin{tabular}{@{}c | c c c c c c@{}}
\midrule
\diagbox[width=15mm, height=10mm]{ $k_1$ }{$k_0$} & 2 & 3 & 4 & 5 & 6 & 7  \\ 
\midrule
2  & 83.50 & 84.50 & 85.00 & 85.00 & 85.00 & 85.00 \\ 
3 & $\times$ & 84.50 & 85.00 & 85.00 & 85.00 & 85.00 \\ 
4 & $\times$& $\times$ & 85.00 & 85.00 & 85.00 & 85.00 \\ 
\midrule
\end{tabular}
}
\end{table*}

\begin{table*}[htb]
\centering
\captionsetup{justification=centering}
\caption{\Tool Average runtime on ConvSmall DiffAI MNIST net with different $k_0$ and $k_1$ on $\epsilon = 0.12$.}
\label{table:ablationTimeDiffAI}
\resizebox{0.4\textwidth}{!}{
\begin{tabular}{@{}c | c c c c c c@{}}
\midrule
\diagbox[width=15mm, height=10mm]{ $k_1$ }{$k_0$} & 2 & 3 & 4 & 5 & 6 & 7  \\ 
\midrule
2  & 0.95 & 1.29 & 2.27 & 2.70 & 1.90 & 1.89 \\ 
3 & $\times$ & 1.29 & 2.79 & 3.55 & 2.89 & 2.70 \\ 
4 & $\times$& $\times$ & 3.07 & 3.25 & 2.93 & 3.11 \\  
\midrule

\end{tabular}
}
\end{table*}

\begin{table*}[htb]
\centering
\captionsetup{justification=centering}
\caption{\Tool Average worst-case UAP accuracy on IBPSmall MNIST net with different $k_0$ and $k_1$ on $\epsilon = 0.15$.}
\label{table:ablationAccIBP}
\resizebox{0.4\textwidth}{!}{
\begin{tabular}{@{}c | c c c c c c@{}}
\midrule
\diagbox[width=15mm, height=10mm]{ $k_1$ }{$k_0$} & 2 & 3 & 4 & 5 & 6 & 7  \\ 
\midrule
2  & 55.50 & 59.00 & 63.00 & 65.50 & 68.00 & 69.50 \\ 
3 & $\times$ & 59.00 & 63.00 & 65.50 & 68.00 & 69.50 \\ 
4 & $\times$& $\times$ & 64.00 & 66.50 & 68.00 & 69.50 \\ 
\midrule
\end{tabular}
}
\end{table*}

\begin{table*}[htb]
\centering
\captionsetup{justification=centering}
\caption{\Tool Average runtime on IBPSmall MNIST net with different $k_0$ and $k_1$ on $\epsilon = 0.15$.}
\label{table:ablationTimeIBP}
\resizebox{0.4\textwidth}{!}{
\begin{tabular}{@{}c | c c c c c c@{}}
\midrule
\diagbox[width=15mm, height=10mm]{ $k_1$ }{$k_0$} & 2 & 3 & 4 & 5 & 6 & 7  \\ 
\midrule
2  & 0.91 & 0.95 & 0.91 & 0.98 & 1.57 & 1.90 \\ 
3 & $\times$ & 0.89 & 0.87 & 0.95 & 1.25 & 1.76 \\ 
4 & $\times$& $\times$ & 0.94 & 1.16 & 1.50 & 1.67 \\ 
\midrule

\end{tabular}
}
\end{table*}
\end{document}